\setlist{nolistsep}
\newtheorem{theorem}{Theorem}
\newtheorem{lemma}{Lemma}
\newtheorem{proposition}{Proposition}
\newtheorem{corollary}{Corollary}
\newtheorem{remark}{Remark}
\def\cite{\citep}
\titlespacing{\paragraph}{%
  0pt}{%              left margin
  0.5\baselineskip}{% space before (vertical)
  1em}%               space after (horizontal)
\DeclareMathOperator{\prox}{\bf prox}
\DeclareMathOperator{\gra}{\bf gra}
\DeclareMathOperator{\zer}{\bf zer}
\DeclareMathOperator{\Fix}{\bf Fix}
\DeclareMathOperator{\dom}{\bf dom}
\DeclareMathOperator*{\argmin}{arg\,min}
\DeclareMathOperator{\expect}{\mathbb{E}}
\DeclareMathOperator{\prob}{\mathbf{Pr}}
\newcommand{\appendixpagenumbering}{
  \break
  \pagenumbering{arabic}
  \renewcommand{\thepage}{S-\arabic{page}}
}
\titlespacing*{\subsection} {0pt}{3.25ex minus 2.5ex}{1.5ex minus 1.2ex}
\newcommand{\blind}{0}
\begin{document}

\def\spacingset#1{\renewcommand{\baselinestretch}%
{#1}\small\normalsize} \spacingset{1}

%%%%%%%%%%%%%%%%%%%%%%%%%%%%%%%%%%%%%%%%%%%%%%%%%%%%%%%%%%%%%%%%%%%%%%%%%%%%%%

\if0\blind
{
  %\title{\bf A Continuum of Optimal Primal-Dual Algorithms \\ for Convex Composite Minimization Problems \\with Applications to Structured Sparsity}
  \title{\bf Easily parallelizable and distributable \\class of algorithms
		  for structured sparsity,\\ with optimal acceleration}
  \author{Seyoon Ko\\
    Department of Statistics, Seoul National University\\
    and \\
    Donghyeon Yu \\
    Department of Statistics, Inha University\\
    and \\
    Joong-Ho Won\thanks{To whom correspondences should be addressed. E-mail: \href{mailto:wonj@stats.snu.ac.kr}{wonj@stats.snu.ac.kr}
    %This work was supported by the National Research Founcation of Korea (NRF) grants funded by the Korea Government (MSIP) (No. 2014R1A4A1007895)
    }\hspace{.2cm} \\
    Department of Statistics, Seoul National University}
  \maketitle
} \fi

\if1\blind
{
  \bigskip
  \bigskip
  \bigskip
  \begin{center}
    {\LARGE\bf Title}
\end{center}
  \medskip
} \fi

\bigskip
\begin{abstract}
Many statistical learning problems can be posed as minimization of a sum of two
convex functions, one typically a composition of non-smooth and linear
functions. Examples include regression under structured sparsity assumptions.
Popular algorithms for solving such problems, e.g., ADMM, often involve
non-trivial optimization subproblems or smoothing approximation. We consider
two classes of primal-dual algorithms that do not incur these difficulties,
and unify them from a perspective of monotone operator theory. From this
unification we propose a continuum of preconditioned forward-backward operator
splitting algorithms amenable to parallel and distributed computing. For the
entire region of convergence of the whole continuum of algorithms, we
establish its rates of convergence. For some known instances of this
continuum, our analysis closes the gap in theory. We further exploit the
unification to propose a continuum of accelerated algorithms. We show that the
whole continuum attains the theoretically optimal rate of convergence. 
The scalability of the proposed algorithms, as well as their convergence 
behavior, is demonstrated up to 1.2 million variables with a distributed implementation.
%We demonstrate the scalability, as well as the convergence behavior, of our
%proposed algorithms up to 1.2 million variables with a distributed implementation.

\end{abstract}

\noindent%
{\it Keywords:}  monotone operator theory;
non-smooth optimization; operator splitting; sparsity; distributed computing; GPU 
\vfill
\newpage
\spacingset{1.45} % DON'T change the spacing!
%\spacingset{1}
\setlength{\belowdisplayskip}{8pt} \setlength{\belowdisplayshortskip}{8pt}
\setlength{\abovedisplayskip}{8pt} \setlength{\abovedisplayshortskip}{8pt}
\section{Introduction}\label{sec:intro}
Many statistical learning problems can be formulated as an optimization problem of the form
\begin{align}\label{eqn:primal}
		\min_{x \in \mathbb{R}^p} ~  f(x) + h(Kx),
\end{align}
where $K \in \mathbb{R}^{l \times p}$,
and both $f$ and $h$ are closed, proper, and convex.
In this paper, we assume $f$ is differentiable and its gradient $\nabla f$ is Lipschitz continuous with modulus $L_f$; 
$h$ is not necessarily smooth. 
Under this setting, we show how to solve \eqref{eqn:primal} in a 
fashion that is easy to parallelize or distribute on modern high-performance computing environment such as workstations equipped with multiple graphics processing units (GPUs).
%matrix-inversion-free fashion.

A pinnacle instance of \eqref{eqn:primal} is high-dimensional penalized regression with structured sparsity: \deleted{assumptions, which solves the following optimization problem:}
\begin{align}\label{eqn:objective}
	\min_{x \in \mathbb{R}^p} \quad
\sum_{i=1}^n l_i(a_i^T x, b_i) + H(Dx),
\end{align}
with direct identification $f(x)=\sum_{i=1}^n l_i(a_i^Tx;b_i)$, $H(u)=h(u)$, and $K=D$,
where the set $\{(a_i, b_i): a_i \in \mathbb{R}^p, b_i \in \mathbb{R}, i = 1, \dots, n\}$ constitutes a training sample, $l_i: \mathbb{R}^2 \to \mathbb{R}$ is the loss function that may depend on the sample index, $D \in \mathbb{R}^{l \times p}$ is the structure-inducing matrix, and $H$ is the penalty function, which is typically non-smooth. 
Loss functions with Lipschitz gradients arise in many important problems:  
in linear regression we have $f(x)=(1/2)\|\mathsf{A}x-b\|_2^2$
and the gradient $\nabla f(x)=\mathsf{A}^T(\mathsf{A}x-b)$ is
$\|\mathsf{A}^T\mathsf{A}\|_2$-Lipschitz, %(with respect to the $\ell_2$ norm), 
where $\mathsf{A}=[a_1,\dotsc,a_n]^T$ denotes the data matrix
and $\|\mathsf{A}\|_2$ is the standard operator norm %(maximum singular value);
with respect to the vector $\ell_2$ norm $\|v\|_2$;
in logistic regression 
$f(x)=-\sum_{i=1}^n\big( b_i(a_i^Tx)+\log(1+e^{a_i^Tx}) \big)$ has 
$(1/4)\|\mathsf{A}^T\mathsf{A}\|_2$-Lipschitz gradients.
%
%An example of nonsmooth penalty functions is the generalized lasso \citep{Tibshirani:TheAnnalsOfStatistics:2011} where $H(z)=\lambda\|z\|_1$ for $z \in \mathbb{R}^l$ and $\lambda > 0$.
Choosing the $\ell_1$-penalty $H(z)=\lambda\|z\|_1$ for some $\lambda > 0$
yields the generalized lasso \citep{Tibshirani:TheAnnalsOfStatistics:2011},
which includes 
%In particular, choosing $D$ as a finite difference matrix on a graph yields 
the fused lasso \cite{Tibshirani:JournalOfTheRoyalStatisticalSocietySeries:2005} as a special case.
For the group lasso \cite{yuan2006model} with $\mathcal{G}$ possibly \emph{overlapping} groups, we can choose
$H(y) = \lambda_1\|y_{[1]}\|_q + \dotsb + \lambda_{\mathcal{G}}\|y_{[\mathcal{G}]}\|_q$ 
%($\lambda_i > 0$) 
for
$y = (y_{[1]}^T, \dotsc, y_{[\mathcal{G}]}^T)^T$, 
where 
$[g] \subset \{1,2,\dotsc,p\}$ is a given set of group indexes
and
$y_{[g]} \in \mathbb{R}^{|[g]|}$ 
for each $g=1,2,\dotsc,\mathcal{G}$;
$\|\cdot\|_q$ denotes the $\ell_q$ norm with $q>1$.
Now set $D$ as a $(|[1]|+\dotsb+|[\mathcal{G}]|)\times p$ binary matrix with a single one (\textit{1}) in each row; the 1 corresponds to the group membership. 
Then,
	$H(D x ) \allowbreak= \lambda_1 \| x_{[1]} \|_q + \dotsb + \lambda_{\mathcal{G}}\|x_{[\mathcal{G}]}\|_q$
as desired;
$D$ has a column with more than a single nonzero entry if and only if there is an overlapping group.
Judicious choices of $f$, $h$, and $K$ in \eqref{eqn:primal} allow more flexibility in solving \eqref{eqn:objective}.
In particular, non-smooth loss functions, such as the hinge loss, can also be handled.  
More complex penalty functions such as the latent group lasso \citep{jacob2009group} are also allowed in \eqref{eqn:objective}
(See Appendix \ref{sec:examples} for details). 
Therefore ability to solve \eqref{eqn:primal} efficiently provides a versatile tool for many important statistical learning problems. 

%It is worth noting that non-smooth loss functions, such as the hinge loss, are also allowed in \eqref{eqn:objective} by a careful choice of $f$, $h$, and $K$ in \eqref{eqn:saddlepoint}; also see Section \ref{sec:main}.   

In spite of its importance, solving \eqref{eqn:primal} is challenging 
because the non-separability of the non-smooth part hampers use of efficient methods. 
If $K=I$ and $h$ is separable, e.g., $h(y)=\lambda\|y\|_1$, then the proximal gradient method \citep{combettes2005signal} is arguably the method of choice, which provides a simple gradient-descent-like iteration
\[
x^{k+1} = \argmin_{x} f(x^{k}) + \langle \nabla f(x^{k}), x-x^{k} \rangle + \frac{1}{2t}\|x-x^{k}\|_2^2 + h(x)
= \prox_{th}(x^{k}-t\nabla f(x^{k})) 
\]
for $0 < t < 2/L_f$,
where $\prox_{\phi}(z) \allowbreak := \allowbreak \argmin_{z' \in \mathbb{R}^n} \phi(z') + \frac{1}{2}\| z' - z \|_2^2$ is the proximity operator for a convex function $\phi$;
$\langle u,v \rangle$ denotes the standard inner product $u^Tv$.
If $h(y)=\lambda\|y\|_1$, then $\prox_{th}$ is an element-wise soft-thresholding operator \citep{Beck:SiamJournalOnImagingSciences:2009}. 
%(This method converges for $0 < t < 2/L_f$.)
However, for general $K$ and other choices of $h$, e.g., group lasso, 
proximal gradient involves
evaluating $\prox_{t h\circ K}(\cdot)$, 
%or the proximity operator of the composite function $h\circ K$,
which is nontrivial even for tractable cases
\citep{Friedman:TheAnnalsOfAppliedStatistics:2007,liu2010efficient,xin2014efficient,yu2015high}.
While approximating $h$ by a smooth function has been considered \cite{Nesterov:MathematicalProgramming:2004,Chen:TheAnnalsOfAppliedStatistics:2012},
this approach introduces an additional smoothing parameter that is difficult to choose in practice.
The popular alternating directions method of multipliers \citep[ADMM; see, e.g.,][]{Boyd:FoundationsAndTrendsInMachineLearning:2010} can be applied to solve \eqref{eqn:primal} as well, which yields an iteration
\begin{subequations}\label{eqn:ADMM}
\begin{align}
	x^{k+1} &= \argmin_x f(x) + (t/2)\|Kx-\tilde{x}^{k}+(1/t)y^{k}\|_2^2 \label{eqn:ADMM:x} \\
	\tilde{x}^{k+1} &= \prox_{(1/t)h}(K x^{k+1}+(1/t)y^{k})\label{eqn:ADMM:b}\noeqref{eqn:ADMM:b} \\
	y^{k+1} &= y^{k} + t (Kx^{k+1}-\tilde{x}^{k+1})\label{eqn:ADMM:c}\noeqref{eqn:ADMM:c}
\end{align}
\end{subequations}
The $x$-update \eqref{eqn:ADMM:x} is an \emph{inner minimization subproblem} and is potentially expensive to compute. For example, if $f$ is a loss function for a generalized linear model, 
then the corresponding update involves solving a linear equation of the form $(\mathsf{A}^TW\mathsf{A} + t K^TK)x=r$, $W$ diagonal, \emph{iteratively}.
While $K$ is structured and known \textit{a priori}, the data matrix $\mathsf{A}$ is hardly structured.
A similar problem arises in medical imaging reconstruction problems, such as undersampled multi-coil MRI reconstruction \citep{nien2015fast} or sparse-view CT reconstruction \citep{sidky2012convex} using the total variation penalty \cite{rudin1992nonlinear,goldstein2009split}. 
In this case the ``measurement matrix'' $\mathsf{A}$ is large and unstructured.
Hence avoiding inner minimization subproblem
is crucial in both statistical learning and imaging problems where the problem dimensions are ever increasing. 
Primal-dual hybrid gradient method \citep[PDHG;][]{zhu2008efficient,Esser:SiamJournalOnImagingSciences:2010,chambolle2011first,He:SiamJournalOnImagingSciences:2012,Chambolle:MathematicalProgramming:2015,zhu2015augmented} and linearized alternating directions method \citep[LADM;][]{lin2011linearized} add an additional regularization term to \eqref{eqn:ADMM:x} in order to avoid the costly inner minimization subproblem.
However, these methods often involve evaluating $\prox_f(\cdot)$, 
which may lead to another inner minimization subproblem in the presence of
$\mathsf{A}$.

The goal of this paper is to introduce to the statistical community a class of algorithms that does require neither smoothing nor quadratic minimization. This class of algorithms only involve evaluation of the gradient $\nabla f(x)$, matrix-vector multiplications and simple proximity operators. Thus it is simple to implement and attractive for parallel and distributed computation.
We begin with introducing two known algorithms. 
One is due to  \citet{loris2011generalization}, later studied by \citet{Chen:InverseProblems:2013}, and \citet{drori2015simple}: 
\begin{align*}\tag{Algorithm LV}
\begin{split}\label{eqn:LV}\noeqref{eqn:LV}
    \tilde{x}^{k+1} &= x^k - \tau \left(\nabla f(x^k) + K^T y^k\right) \\
	y^{k+1} &= (1-\rho_k) y^k + \rho_k \prox_{\sigma h^{*}}(y^k + \sigma K \tilde{x}^{k+1}) \\
	x^{k+1} &= (1-\rho_k) x^k + \rho_k (\tilde{x}^{k+1} - \tau K^T(y^{k+1}-y^{k})),
\end{split}
\end{align*}
and the other is due to \citet{Condat:JournalOfOptimizationTheoryAndApplications:2012} and \citet{Vu2013}:
\begin{align*}\tag{Algorithm CV}
\begin{split}\label{eqn:CV}\noeqref{eqn:CV}
    \bar{x}^{k+1} &= x^k - \tau(\nabla f (x^k) + K^T y^k)\\
    \tilde{x}^{k+1} &= 2x^{k+1} - \bar{x}^{k+1} \\
    x^{k+1} &= (1-\rho_k) x^k + \rho_k \bar{x}^{k+1} \\
    y^{k+1} &= (1-\rho_k) y^k + \rho_k \prox_{\sigma h^{*}} (y^k + \sigma K \tilde{x}^{k+1}),
\end{split}
\end{align*}
where
$h^{*}(v) = \sup_{u \in \mathbb{R}^l} \langle u, v \rangle - h(u)$
is the convex conjugate of $h$.
Choices of the sequence $\{\rho_k\}$ and the step size parameters $(\sigma,\tau)$ for convergence of these algorithms are discussed in Section \ref{sec:unify}.
As can be seen, 
the proximity operator employed by both algorithms depends only on $h^*$ but not $K$.
Moreover, $\prox_{\sigma h^*}(\cdot)$ can be evaluated by using Moreau's decomposition
$\prox_{\sigma h^*}(y) = y - \sigma \prox_{\sigma^{-1}h} (\sigma^{-1}y)$.
Thus they are simple to implement and attractive for parallel and distributed computation
as long as either $\prox_{h^*}(\cdot)$ or $\prox_{h}(\cdot)$ is simple (``proximable'').
Table \ref{tab:separable} illustrates the proximity operators for
popular choices of $h$. 
Once the conditions for convergence is understood, the rate of convergence and acceleration of the algorithm are the next interest.

%These two algorithms have been regarded as separate in the literature, and their unified understanding has been limited. Furthermore, there has been a gap between the understanding of the conditions under which the algorithms converge and those for which the rate of convergence is established. 
%
%On the other hand, both classes of algorithms can be accelerated to promote faster convergence with a slight modification.  Whether such acceleration can achieve the theoretically optimal rate has an important practical implication.  While it is known that one of the two classes can be optimally accelerated \added{\citep{chen2014optimal}}, for the other it has remained an open problem.

In this regard,
the contributions of this paper are as follows.
First, we connect Algorithms LV and CV from a perspective of monotone operator theory to show that they are essentially the \emph{same} preconditioned forward-backward splitting algorithm \citep[see, e.g.,][]{combettes2005signal} sharing a common preconditioner. 
Second, from this connection we propose a new, broader family of preconditioners that generates an entire \emph{continuum} of forward-backward algorithms.
Third, by a unified analysis, we show that this continuum of algorithms enjoys common ergodic and non-ergodic rates of convergence over the entire region of convergence. 
Prior to our connection the rates of the above two algorithms have been available under much more stringent conditions than that for convergence; we close this gap. 
Fourth, we proceed further to \emph{accelerate the whole continuum of algorithms to achieve the theoretically optimal rate of convergence.} 
Only an optimal acceleration of Algorithm CV has been known \citep{chen2014optimal}, and acceleration of LV has remained an open problem. 
%We study the convergence behavior of these algorithms through numerical experiments. 
Finally, we demonstrate the scalability of the studied algorithms by implementing them on a distributed computing environment in case that data do not fit in the memory of a single device.

\paragraph{\textit{Organization.}}
In Section \ref{sec:unify}, we examine the relation between Algorithms LV and CV and unify them to propose a broader class of algorithms. The rates of convergence of this class of algorithms is also analyzed. 
In Section \ref{sec:optimal}, we develop an accelerated variant of the new class of algorithms achieving the optimal rate. 
Its stochastic counterpart, also possessing the optimal rate, is discussed in Section \ref{sec:stoc}. 
Section \ref{sec:numerical} demonstrates the convergence behavior and scalability of the new algorithms through their multi-GPU implementations. 
Discussion and conclusion follow thereafter in Section \ref{sec:conclusion}. 
All the proofs of our results can be found in the supplementary material. %(Appendix \ref{sec:proofs}).

%The supplementary material includes a demonstration of flexibility of formulation \eqref{eqn:primal} (Appendix \ref{sec:examples}), additional numerical experiments on stochastic settings and on latent group lasso problem (Appendix \ref{sec:numerical:appendix}). A brief summary on monotone operator theory follows (Appendix \ref{sec:theory}), along with the proofs of our results (Appendix \ref{sec:proofs}).

\paragraph{\textit{Notation.}}
%We restrict our attention to finite-dimensional Euclidean spaces.
%We denote the standard inner product $x^Ty$ by $\langle x,y \rangle$, and the associated norm by $\|\cdot\|_2$.
%We denote the norm associated with the standard inner product by $\|\cdot\|_2$.
%The spectral norm, or the maximum singular value, of matrix $M$ is also denoted by $\|M\|_2$.
That a symmetric matrix $M$ is positive (semi)definite is denoted by $M \succ 0$ ($M \succeq 0$); $L \succ M$ refers to $L-M \succ 0$, etc.
For $M \succ 0$, we define its associated inner product and norm by $\langle x,x' \rangle_M=\langle Mx,x'\rangle$ and $\|x\|_M=\sqrt{\langle x,x\rangle_M}$, respectively.
For a symmetric matrix $M$, $\lambda_{\max}(M)$ and $\lambda_{\min}(M)$ respectively denote the maximum and minimum eigenvalues.

\begin{table}[!htb]
\caption{Convex conjugates and proximity operators for selected choices of $h$.
Function $\delta_S$ denotes the indicator function for set $S$ so that $\delta_S(u)=0$ if $u\in S$ and $\delta_S(u)=+\infty$ otherwise;
$P_S$ denotes the projection onto set $S$, which is unique if $S$ is closed and convex;
$\sigma_j(M)$ denotes the $j$th largest singular value of matrix $M$.
All $\min$, $\max$ operations are elementwise. 
In $\ell_{1,q}$-norm, ${1}/{q}+{1}/{s}=1$.
}
\label{tab:separable}
\medskip
\centering
%\fontsize{11}{12}\selectfont
\footnotesize
\begin{tabular}{llll} \hline
Name  & $ h(y) $ & $h^*(z)$ & ${\bf prox}_{h^*}(z)$  \\ \hline
$\ell_1$-norm & $\lambda \|y\|_1$ & $\delta_{\mathcal{B}_{\infty}}(z)$, $\mathcal{B}_{\infty}=\{z:\|z\|_{\infty}\le\lambda\}$  & $\min\{\max\{z, -\lambda\}, \lambda\}$ \\
$\ell_2$-norm & $\lambda \|y\|_\infty$ & $\delta_{\mathcal{B}_2}(z)$, $\mathcal{B}_2=\{z:\|z\|_2 \le \lambda\}$ & $P_{\mathcal{B}_2}(z)$ \\ 
$\ell_\infty$-norm & $\lambda \|y\|_\infty$ & $\delta_{\mathcal{B}_1}(z)$, $\mathcal{B}_1=\{z:\|z\|_1 \le \lambda\}$ & $P_{\mathcal{B}_1}(z)$ \\ 
$\ell_{1,q}$-norm &
$\sum_{g=1}^{\mathcal{G}} \lambda_g \|y_{[g]}\|_q$ 
& $\delta_{\mathcal{B}_s^1\times\dotsb \times\mathcal{B}_s^{\mathcal{G}}}(z)$, $\mathcal{B}_s^g=\{z:\|z_{[g]}\|_s \le \lambda_g\}$
& $\big(P_{\mathcal{B}_s^1}(z_{[1]}), \dotsc, P_{\mathcal{B}_s^{\mathcal{G}}}(z_{[\mathcal{G}]})\big)$
\\ 
nuclear norm & $\lambda \sum_{i=1}^{\text{rank}{(Y)}} \sigma_i (Y)$ & 
$\delta_{\mathcal{B}_{*}}(Z)$, $\mathcal{B}_{*}=\{Z:\|Z\|_2\le\lambda\}$ &
$U\min\{\Sigma,\lambda I\}V^T$, $Z=U\Sigma V^T$ \\
hinge loss & $\sum_{i=1}^l\max\{1 - y_i, 0\}$ & $\sum_{i=1}^l\big(z_i - \delta_{[0,1]}(-z_i)\big)$ & $\min\{z+1, \max\{z, 1\} \}$ 
\\
\hline
\end{tabular} 
\end{table}

\section{Unification}\label{sec:unify}
In this section we provide a unified treatment to Algorithms LV and CV from the perspective of monotone operator theory. 
For a brief summary of monotone operator theory, see Appendix \ref{sec:theory}.

\subsection{Relation between Algorithms LV and CV}\label{sec:unify:relation}
It can be shown that both Algorithms LV and CV are instances of preconditioned forward-backward splitting. 
To be specific, 
note the first-order optimality condition for \eqref{eqn:primal} is given by
\begin{subequations}\label{eqn:optimality}
\begin{align}
	0 &= \nabla f(x^{\star}) + K^T y^{\star}, \label{eqn:optimality1}\noeqref{eqn:optimality1} \\
	y^{\star} &\in \partial h(Kx^{\star}). \label{eqn:optimality2}
\end{align}
\end{subequations}
where 
$\partial h(y) = \{ g \in \mathbb{R}^l: h(y') \ge h(y) + \langle g, y'-y \rangle, ~ \forall y' \in \mathbb{R}^l \}$
is the subdifferential of the convex function $h$ at $y$, which is a set-valued operator.
Since $h$ is closed and proper, condition \eqref{eqn:optimality2} is equivalent to $Kx^{\star} \in (\partial h)^{-1}(y^{\star}) = \partial h^{*}(y^{\star})$ \citep{Bertsekas:ConvexOptimizationTheory:2009},
thus \eqref{eqn:optimality} can be equivalently written as an inclusion problem
\begin{align}\label{eqn:inclusion}
	 \begin{bmatrix} 0 \\ 0 \end{bmatrix} \in
	\begin{bmatrix} \nabla f  & K^T \\ -K & \partial h^{*} \end{bmatrix}
	\begin{bmatrix} x^{\star} \\ y^{\star} \end{bmatrix}
	=: T(z^{\star}),
	\quad
	z^{\star} = (x^{\star},y^{\star}).
\end{align}
%If $(x^\star,y^\star)$ is a solution to \eqref{eqn:inclusion}, 
Under a mild condition 
\citep[Theorem 19.1 and Proposition 19.18]{Bauschke:ConvexAnalysisAndMonotoneOperatorTheoryIn:2011}; see also \citet{Condat:JournalOfOptimizationTheoryAndApplications:2012},
\eqref{eqn:inclusion} has a solution. 
If $(x^\star,y^\star)$ is solution, then it is a saddle point for 
the saddle point formulation of \eqref{eqn:primal}:
\begin{align}\label{eqn:saddlepoint}
		\min_{x\in\mathbb{R}^p} \max_{y\in\mathbb{R}^l} \mathcal{L}(x,y)
\end{align}
where
$\mathcal{L}(x,y) = f(x) + \langle Kx, y \rangle - h^{*}(y)$
is the saddle function.
Also the strong duality holds: 
$x^\star$ is a primal solution to \eqref{eqn:primal}, 
and $y^\star$ is a solution to the associated dual 
\begin{align}\label{eqn:dual}
	\max_{y\in\mathcal{Y}}\left(-f^*(-K^Ty)-h^*(y)\right)
\end{align}
\citep[Theorem 19.1 and Proposition 19.18]{Bauschke:ConvexAnalysisAndMonotoneOperatorTheoryIn:2011}; see also \citet{Condat:JournalOfOptimizationTheoryAndApplications:2012}.
In the sequel, we assume that \eqref{eqn:inclusion} has a solution.

The set-valued operator $T$ is split into $T=F+G$, where
\begin{align}\label{eqn:operatorsplit}
	F = \begin{bmatrix} 0 & K^T \\ -K & \partial h^{*} \end{bmatrix}
	\quad \text{and} \quad
	G = \begin{bmatrix} \nabla f & 0 \\ 0 & 0 \end{bmatrix}.
\end{align}
The operator $F$ is maximally monotone and $G$ is $1/L_f$-cocoercive
\citep{Bauschke:ConvexAnalysisAndMonotoneOperatorTheoryIn:2011}.
A preconditioned forward-backward splitting for solving \eqref{eqn:inclusion} is
\begin{align}\label{eqn:preconditionedforwardbackward_relax}
\begin{split}
	\tilde{z}^{k} &= (I+M^{-1}F)^{-1}(I-M^{-1}G)(z^k) \\
    z^{k+1} &= (1-\rho_k)z^k + \rho_k \tilde{z}^{k},
\end{split}
\end{align}
for $z^k=(x^k,y^k)$, $\tilde{z}^k=(\tilde{x}^k,\tilde{y}^k)$, and $M \succ 0$.
If the modulus of cocoercivity of $M^{-1}G$ denoted by $\gamma$
\citep[cocoercivity of $G$ is preserved; see][]{Davis:SiamJOptim:2015},
then \eqref{eqn:preconditionedforwardbackward_relax}
converges if $\gamma>1/2$ and for a sequence $\{\rho_k\}\subset[0,\delta]$ such that $\sum_{k=0}^{\infty}\rho_k(\delta-\rho_k)=\infty$ with $\delta \allowbreak = \allowbreak 2-1/(2\gamma)$. 
Note $\rho_k\equiv 1$ is allowed which yields a simple iteration $z^{k+1}=(I+M^{-1}F)^{-1}(I-M^{-1}G)z^k$. 
The inverse operator $(I+M^{-1}F)^{-1}$ is single-valued due to maximal monotonicity of $M^{-1}F$ \citep[Theorems 25.8 and 24.5]{Bauschke:ConvexAnalysisAndMonotoneOperatorTheoryIn:2011}.
(For instance, $(I+\partial \phi)^{-1}(z) = \argmin_{z' \in \mathbb{R}^n} \phi(z') + \frac{1}{2}\| z' - z \|_2^2= \prox_{\phi}(z)$.)
In particular,
the preconditioners for Algorithms LV and CV are respectively given by
\citet{combettes2014forward, Condat:JournalOfOptimizationTheoryAndApplications:2012, Vu2013}:
\[
	M = M_{\mathsf{LV}} := \begin{bmatrix} \frac{1}{\tau}I &  \\  & \frac{1}{\sigma}I-\tau KK^T \end{bmatrix}
	\quad \text{and} \quad
	M = M_{\mathsf{CV}} := \begin{bmatrix} \frac{1}{\tau}I & -K^T \\ -K  & \frac{1}{\sigma}I \end{bmatrix}
	.
\]
%We provide a simple derivation in Appendix \ref{sec:derivation}. 

Now we are ready to see that Algorithms LV and CV are essentially the same algorithm.
The ``LDL'' decomposition of $M_{\mathsf{CV}}$ reveals that
\begin{align*}
	M_{\mathsf{CV}} %&= \begin{bmatrix} \frac{1}{\tau}I & -K^T \\ -K  & \frac{1}{\sigma}I \end{bmatrix} \\
	&= \begin{bmatrix} I &   \\ -\tau K & I \end{bmatrix}
	  \begin{bmatrix} \frac{1}{\tau}I &  \\  & \frac{1}{\sigma}I-\tau KK^T \end{bmatrix}
	  \begin{bmatrix} I & -\tau K^T  \\  & I \end{bmatrix} 
	= L M_{\mathsf{LV}} L^T .  \stepcounter{equation}\tag{\theequation}\label{eqn:M1M2}
\end{align*}
It is clear both $M_{\sf LV}$ and $M_{\sf CV}$ are positive definite if and only if $1/(\tau\sigma) > \|K\|_2^2$.
Also it is easy to see that Algorithm CV, i.e., \eqref{eqn:preconditionedforwardbackward_relax} with $M=M_{\mathsf{CV}}$,
is equivalent to
\begin{align}\label{eqn:CPasLV}
	 L^T z^{k+1} = (1-\rho_k)L^Tz^k + \rho_k(I + M_{\mathsf{LV}}^{-1}\tilde{F})^{-1}(I - M_{\mathsf{LV}}^{-1}\tilde{G})(L^Tz^k),
\end{align}
where $\tilde{F} = L^{-1}FL^{-T}$ and $\tilde{G} = L^{-1} G L^{-T}$.
Letting $w=L^Tz$, \emph{we see that Algorithm CV is in fact Algorithm LV applied to the linearly transformed variable $w$ by
splitting the similarly transformed operator $L^{-1}TL^{-T}$ into $\tilde{F}$ and $\tilde{G}$.}
The cocoercivity constant of $M_{\sf{LV}}^{-1}\tilde{G}$ is found by the following proposition.
\begin{proposition}\label{prop:Gtildecocoercive}
 $M_{\mathsf{LV}}^{-1}\tilde{G}$ is $(1/\tau-\sigma\|K\|_2^2)/L_f$-cocoercive with respect to $\|\cdot\|_{M_{\mathsf{LV}}}$.
\end{proposition}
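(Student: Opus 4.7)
The plan is to compute the increment of $\tilde{G}$ explicitly, extract a single vector $\Delta$ that carries all the nonlinearity, apply the Baillon--Haddad theorem to $\nabla f$ on a lifted argument, and then bound the $M_{\mathsf{LV}}$-norm term using a spectral argument on $K$.

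First, using $L^{-1}=\bigl[\begin{smallmatrix}I & 0\\ \tau K & I\end{smallmatrix}\bigr]$ and $L^{-T}=\bigl[\begin{smallmatrix}I & \tau K^T\\ 0 & I\end{smallmatrix}\bigr]$, I would compute
\[
\tilde{G}(z)-\tilde{G}(z') \;=\; \begin{bmatrix} \Delta \\ \tau K\Delta \end{bmatrix},\qquad \Delta := \nabla f(x+\tau K^Ty) - \nabla f(x'+\tau K^Ty'),
\]
for $z=(x,y)$, $z'=(x',y')$. Taking the inner product with $z-z'$ collapses to $\langle \Delta,\, (x-x')+\tau K^T(y-y')\rangle$, which by the Baillon--Haddad cocoercivity of $\nabla f$ (Lipschitz modulus $L_f$) is at least $(1/L_f)\|\Delta\|^2$. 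Since the left side equals $\langle M_{\mathsf{LV}}^{-1}\tilde{G}(z)-M_{\mathsf{LV}}^{-1}\tilde{G}(z'),\,z-z'\rangle_{M_{\mathsf{LV}}}$, I have the desired lower bound on the "inner product side" of cocoercivity.

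Next I would upper-bound the "norm-squared side"
\[
\|M_{\mathsf{LV}}^{-1}[\tilde{G}(z)-\tilde{G}(z')]\|_{M_{\mathsf{LV}}}^2 \;=\; \langle v,\, M_{\mathsf{LV}}^{-1} v\rangle,\qquad v=\begin{bmatrix}\Delta\\ \tau K\Delta\end{bmatrix},
\]
using the block-diagonal form of $M_{\mathsf{LV}}^{-1}$. This gives $\tau\|\Delta\|^2 + \tau^2\langle K\Delta,\,(\tfrac1\sigma I-\tau KK^T)^{-1}K\Delta\rangle$. This is the step where I expect the main obstacle: controlling that second inverse. My approach is to diagonalize via the SVD $K=U\Sigma V^T$, note that $(\tfrac1\sigma I - \tau KK^T)^{-1} = U(\tfrac1\sigma I-\tau\Sigma^2)^{-1}U^T$ is positive definite precisely when $1/(\tau\sigma)>\|K\|_2^2$ (the same condition that makes $M_{\mathsf{LV}}\succ 0$), and then bound the eigenvalues by $(\tfrac1\sigma - \tau\|K\|_2^2)^{-1}$. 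Combined with $\|K\Delta\|^2\le\|K\|_2^2\|\Delta\|^2$ this yields
\[
\langle v, M_{\mathsf{LV}}^{-1}v\rangle \;\le\; \tau\|\Delta\|^2 + \frac{\tau^2\sigma\|K\|_2^2}{1-\tau\sigma\|K\|_2^2}\|\Delta\|^2 \;=\; \frac{\tau}{1-\tau\sigma\|K\|_2^2}\|\Delta\|^2,
\]
after combining the two terms over a common denominator.

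Finally, chaining the two bounds gives
\[
\langle M_{\mathsf{LV}}^{-1}\tilde{G}(z)-M_{\mathsf{LV}}^{-1}\tilde{G}(z'),\,z-z'\rangle_{M_{\mathsf{LV}}} \;\ge\; \frac{1}{L_f}\|\Delta\|^2 \;\ge\; \frac{1/\tau-\sigma\|K\|_2^2}{L_f}\,\|M_{\mathsf{LV}}^{-1}[\tilde{G}(z)-\tilde{G}(z')]\|_{M_{\mathsf{LV}}}^2,
\]
which is exactly the claimed cocoercivity modulus. The crux is the spectral bound on $(\tfrac1\sigma I-\tau KK^T)^{-1}$; everything else is algebraic manipulation and one invocation of Baillon--Haddad.
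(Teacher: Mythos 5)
Your proof is correct, but it takes a genuinely different route from the paper's. The paper never touches $\tilde{G}$ directly: it observes that $\langle L^T\cdot,L^T\cdot\rangle_{M_{\mathsf{LV}}}=\langle\cdot,\cdot\rangle_{M_{\mathsf{CV}}}$, so the $M_{\mathsf{LV}}$-cocoercivity of $M_{\mathsf{LV}}^{-1}\tilde{G}$ is \emph{equivalent} to the $M_{\mathsf{CV}}$-cocoercivity of $M_{\mathsf{CV}}^{-1}G$, and then invokes a general lemma (Proposition \ref{prop:preconditionedcocoercivity}: if $\|(x,0)\|_{M^{-1}}^2\le(1/\mu)\|x\|_2^2$ then $M^{-1}G$ is $\mu/L_f$-cocoercive in the $M$-metric), identifying $\mu$ as the reciprocal of $\lambda_{\max}$ of the $(1,1)$ block of $M_{\mathsf{CV}}^{-1}$, namely $\tau I+\tau^2K^T(\tfrac{1}{\sigma}I-\tau KK^T)^{-1}K$. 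You instead compute $\tilde{G}(z)-\tilde{G}(z')=(\Delta,\tau K\Delta)$ explicitly, apply Baillon--Haddad once to the lifted argument $x+\tau K^Ty$, and bound $\langle v,M_{\mathsf{LV}}^{-1}v\rangle$ spectrally. It is worth noting that your quadratic form $\tau\|\Delta\|_2^2+\tau^2\langle K\Delta,(\tfrac{1}{\sigma}I-\tau KK^T)^{-1}K\Delta\rangle$ is exactly $\langle\Delta,\bigl(\tau I+\tau^2K^T(\tfrac{1}{\sigma}I-\tau KK^T)^{-1}K\bigr)\Delta\rangle$, so the decisive eigenvalue computation coincides with the paper's --- the two arguments are dual presentations of the same bound. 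What each buys: your version is self-contained and more elementary (no detour through $M_{\mathsf{CV}}$, no auxiliary lemma; indeed your inverse bound follows even more simply from $\tfrac{1}{\sigma}I-\tau KK^T\succeq(\tfrac{1}{\sigma}-\tau\|K\|_2^2)I$, avoiding the SVD, which also sidesteps the minor issue that your reduced-SVD identity for the inverse requires a \emph{full} SVD when $K$ is rank-deficient); the paper's version buys reusability --- the general lemma and the ``$(1,1)$-block of $M^{-1}$'' mechanism are reused for the whole continuum of preconditioners \eqref{eqn:generalM} (Proposition \ref{prop:unifiedconvergence}) and for the equivalences in Proposition \ref{prop:Mcondition}, and it makes transparent the statement, used immediately afterward in the text, that $M_{\mathsf{CV}}^{-1}G$ has the same cocoercivity constant in its own metric.
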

Thus from the discussion below \eqref{eqn:preconditionedforwardbackward_relax}
we have
$\gamma=(1/\tau-\sigma\|K\|_2^2)/L_f$
and
$\delta = 2 - \frac{L_f}{2}\cdot\frac{1}{1/\tau - \sigma\|K\|_2^2}$.
Then Algorithm CV converges if
\begin{align}\label{eqn:CPparamrange2}
 	\frac{1}{\tau} > \frac{L_f}{2}
 	\quad \text{and} \quad
 	\left(\frac{1}{\tau}-\frac{L_f}{2}\right)\frac{1}{\sigma} > \|K\|_2^2
\end{align}
With respect to the untransformed sequence $\{z^{k}\}$,
 observe that $M_{\mathsf{CV}}^{-1}G$ is also $(1/\tau-\sigma\|K\|_2^2)/L_f$-cocoercive (with respect to $\|\cdot\|_{M_{\mathsf{CV}}}$).
 In light of \eqref{eqn:CPasLV}, it is natural to measure convergence using the metric $\|L^T\cdot\|_{M_{\mathsf{LV}}}$, %for the cited sequence,
 and this metric coincides with $\|\cdot\|_{M_{\mathsf{CV}}}$.
On the other hand, it is easy to see $M_{\sf LV}^{-1}G$ is $1/(\tau L_f)$-cocoercive with respect to $\|\cdot\|_{M_{\sf LV}}$, hence Algorithm LV has $\gamma=1/(\tau L_f)$ and $\delta=2-\tau L_f/2$. 
It converges if
\begin{align}\label{eqn:LVparamrange}
	1/\tau > L_f/2 \quad \text{and} \quad 1/(\tau\sigma) > \|K\|_2^2.
\end{align}
Both \eqref{eqn:CPparamrange2} and \eqref{eqn:LVparamrange} recover the known convergence regions in the literature \citep{Condat:JournalOfOptimizationTheoryAndApplications:2012,Chen:InverseProblems:2013}.

\subsection{Unified algorithm class}
The relation between the two algorithms suggests a more general family of preconditioners, namely
\begin{align}\label{eqn:generalM}
	M = \tilde{L} M_{\mathsf{LV}} \tilde{L}^T = \arraycolsep=4pt\begin{bmatrix} \frac{1}{\tau}I & C^T \\ C & \frac{1}{\sigma} I + \tau (C C^T - K K^T)\end{bmatrix},
	%&= \begin{bmatrix} \frac{1}{\tau}I & C^T \\ C  & \frac{1}{\sigma}I + \tau(CC^T-KK^T) \end{bmatrix} \nonumber
\end{align}
where $\tilde{L}$ replaces $(2,1)$ block of $L$ in \eqref{eqn:M1M2} by $\tau C$.
In particular, if $CK^T=KC^T$, then %$\tilde{L}^{-1}F\tilde{L}^{-T}=F$ and
\eqref{eqn:preconditionedforwardbackward_relax} yields the following iteration (for simplicity we set $\rho_k\equiv 1$):
\begin{align}\label{eqn:generaliteration}
\begin{split}
	y^{k+1} &= \prox_{\sigma h^{*}}( \sigma K x^{k} + \sigma\tau(C-K)\nabla f(x^{k}) + (I+\sigma \tau K(C-K)^T)y^{k}) \\
	x^{k+1} &= x^{k} - \tau(\nabla f(x^{k}) - C^Ty^{k} + (C+K)^T y^{k+1} ). %\nonumber
\end{split}
\end{align}
Condition $CK^T=KC^T$ is satisfied %if $C=KS$ for any symmetric matrix $S$.
if and only if $C=US\Sigma^{-1}V^T + N\bar{V}^T$, where $U$, $V$, and $\Sigma$ are from the reduced singular value decomposition of $K=U\Sigma V^T$ so that $\Sigma$ is an $r \times r$ positive diagonal matrix where $r=\textrm{rank}(K)$;
$\bar{V}$ is %the orthogonal complement of $V$
such that $\tilde{V}=[V, \bar{V}]$ is orthogonal;
$S$ is symmetric, and $N$ is arbitrary.
A simple choice is $S=\kappa \Sigma^2$ for some $\kappa \in \mathbb{R}$ and $N=0$, yielding $C=\kappa K$.
Choosing $\kappa=0$ and $-1$ respectively recovers Algorithms LV and CV; for $\kappa=1$, we have
\begin{align*}
	y^{k+1} &= \prox_{\sigma h^{*}}( \sigma K x^{k} + y^{k} ) \\
	x^{k+1} &= x^{k} - \tau\nabla f(x^{k}) - \tau K^T(2y^{k+1}-y^{k}),
\end{align*}
which is the dual version of Algorithm CV \citep[][Algorithm 3.2]{Condat:JournalOfOptimizationTheoryAndApplications:2012}.
%Another choice is to decompose $\Sigma=\frac{1}{2}(\Lambda_1+\Lambda_2)$ (both $\Lambda_1$ and $\Lambda_2$ are diagonal) and set $S=\frac{1}{2}(\Lambda_2-\Lambda_1)\Sigma$ so that $C-K=N\bar{V}^T-U\Lambda_1 V^T$ and $C+K=N\bar{V}^T+U\Lambda_2 V^T$, if either $U\Lambda_1 V^T$ or $U\Lambda_2 V^T$ results in a simple form.
%Choosing $N=\bar{V}$ may be useful
Another choice is to set $S=\pm\Sigma^2$ and $N$ so that $NN^T$ is diagonal.
In this case $CC^T-KK^T$ reduces to a diagonal matrix, $C=[\bar{K}, N]\tilde{V}$ where $\bar{K}$ is the first $r$ columns of $K\tilde{V}$.
If the eigenspace of $K^TK$ is well-known and multiplication with $\bar{V}$ can be computed fast, e.g., the discrete cosine transform matrix for the fused lasso on a regular grid \cite{lee2017large}, %where the fast DCT can be employed.
this choice can be useful.

\subsection{Convergence analysis}

\subsubsection*{Region of convergence}
A condition for \eqref{eqn:preconditionedforwardbackward_relax} with general $M$ to converge is 
\begin{align}\label{eqn:fullrange}
M \succ \begin{bmatrix} \frac{L_f}{2}I &  \\  & 0 \end{bmatrix},
\end{align}
which follows from Theorem \ref{thm:nonergodic} and Proposition \ref{prop:Mcondition} later in this section. Thus with $M$ in \eqref{eqn:generalM} the following region of convergence is obtained.
\begin{proposition}\label{prop:unifiedconvergence}
%The same reasoning as leading to \eqref{eqn:CPparamrange2} yields the range of parameters for convergence 
Algorithm \eqref{eqn:generaliteration} converges for $(\sigma,\tau)$ such that
\begin{align}\label{eqn:generalparamrange}
	\frac{1}{\tau} > \frac{L_f}{2} 
	\quad \text{and} \quad
	\left(\frac{1}{\tau}-\frac{L_f}{2}\right)\left(\frac{1}{\sigma} - \tau \|K\|_2^2\right) > \frac{\tau L_f}{2}\|C\|_2^2.
\end{align}
\end{proposition}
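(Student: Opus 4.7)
The plan is to deduce the result by specializing the general sufficient condition \eqref{eqn:fullrange} to the preconditioner $M$ of \eqref{eqn:generalM}. Setting
\[
M_0 := M - \begin{bmatrix} (L_f/2) I & 0 \\ 0 & 0 \end{bmatrix}
= \begin{bmatrix} (1/\tau - L_f/2)\, I & C^T \\ C & (1/\sigma)\, I + \tau(CC^T - KK^T) \end{bmatrix},
\]
the task reduces to characterizing all $(\sigma, \tau)$ for which $M_0 \succ 0$. I would invoke the standard Schur complement criterion about the $(1,1)$-block: $M_0 \succ 0$ iff $(1/\tau - L_f/2)\,I \succ 0$ together with positive definiteness of the Schur complement of that block in $M_0$.

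The first condition is exactly $1/\tau > L_f/2$, matching the first half of \eqref{eqn:generalparamrange}. For the second, since the $(1,1)$-block is a scalar multiple of the identity, the Schur complement simplifies to
\[
\frac{1}{\sigma} I + \tau(CC^T - KK^T) - \frac{1}{\,1/\tau - L_f/2\,}\, CC^T = \frac{1}{\sigma} I - \tau KK^T - \frac{\tau^2 L_f}{2 - \tau L_f}\, CC^T,
\]
using the identity $\tau - 1/(1/\tau - L_f/2) = -\tau^2 L_f/(2 - \tau L_f)$, whose denominator is positive by the first condition. A sufficient condition for this symmetric matrix to be positive definite is, by subadditivity of $\lambda_{\max}$ on sums of positive semidefinite matrices,
\[
\frac{1}{\sigma} > \tau \|K\|_2^2 + \frac{\tau^2 L_f}{2-\tau L_f}\|C\|_2^2,
\]
and multiplying through by $1/\tau - L_f/2 > 0$ recovers the second half of \eqref{eqn:generalparamrange}.

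I do not foresee a serious obstacle: the whole argument is a Schur complement reduction followed by a norm bound. The only place a slip is likely is in simplifying the coefficient of $CC^T$, where a sign error is easy to make; but once $1/\tau > L_f/2$ is assumed, $2 - \tau L_f > 0$ and the sign works out as stated. One minor point worth recording is that the step $\lambda_{\max}(A+B) \le \lambda_{\max}(A) + \lambda_{\max}(B)$ is applied in only one direction, so \eqref{eqn:generalparamrange} is sufficient but not necessarily necessary; this is consistent with how the proposition is phrased.
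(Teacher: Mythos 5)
Your proof is correct, and it takes a genuinely different route from the paper's. You work directly with $M$: you form $M_0 = M - \bigl[\begin{smallmatrix} (L_f/2)I & 0 \\ 0 & 0\end{smallmatrix}\bigr]$, characterize $M_0 \succ 0$ by a Schur complement about the $(1,1)$ block, and your algebra checks out (the identity $\tau - (1/\tau - L_f/2)^{-1} = -\tau^2 L_f/(2-\tau L_f)$ and the final multiplication by $1/\tau - L_f/2 > 0$, using $\tau^2 L_f/(2-\tau L_f) = \tau L_f/\bigl(2(1/\tau - L_f/2)\bigr)$, do recover \eqref{eqn:generalparamrange}); invoking \eqref{eqn:fullrange} as the master sufficient condition is legitimate, since the paper states exactly this (via Theorem \ref{thm:nonergodic} and Proposition \ref{prop:Mcondition}) immediately before the proposition. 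The paper's own proof instead works with $M^{-1}$: from the factorization $M = \tilde{L} M_{\mathsf{LV}} \tilde{L}^T$ it computes $M^{-1}$ explicitly, bounds the largest eigenvalue of its $(1,1)$ block by $\tau + \tau^2\|C\|_2^2/(1/\sigma - \tau\|K\|_2^2)$, and thereby verifies hypothesis \eqref{eqn:normupperbound} of Proposition \ref{prop:preconditionedcocoercivity} with $\mu = (1/\tau - \sigma\|K\|_2^2)/\bigl(1 - \sigma\tau(\|K\|_2^2 - \|C\|_2^2)\bigr)$; the cocoercivity requirement $\gamma = \mu/L_f > 1/2$, together with $M \succ 0$, then rearranges to the same region \eqref{eqn:generalparamrange}. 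The two arguments are logically equivalent---Proposition \ref{prop:Mcondition} is precisely the bridge between the $\mu$-condition on the $(1,1)$ block of $M^{-1}$ and the block inequality \eqref{eqn:fullrange} you verify---but they buy different things: your version avoids inverting $M$ and is shorter, while the paper's computation delivers the explicit cocoercivity modulus $\gamma$, which is reused elsewhere (e.g., to set the relaxation bound $\delta = 2 - 1/(2\gamma)$ and the $\rho_k$ values in Section \ref{sec:numerical}). One further point in your favor: the paper asserts its eigenvalue computation as an equality, which for general admissible $C$ (with $CK^T = KC^T$ but $N \neq 0$) is only an upper bound unless the relevant singular spaces align; like your subadditivity step $\lambda_{\max}(A+B) \le \lambda_{\max}(A) + \lambda_{\max}(B)$, only the inequality direction is actually needed, so both proofs establish sufficiency, and your explicit acknowledgment of this is the more careful phrasing.
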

%Note that condition \eqref{eqn:generalparamrange} (hence \eqref{eqn:LVparamrange} and \eqref{eqn:CPparamrange2}) is equivalent to \eqref{eqn:fullrange},
%\begin{align}\label{eqn:fullrange}
%	M \succ \frac{L_f}{2}\begin{bmatrix} I &  \\   & 0 \end{bmatrix},
%\end{align}
%which again is equivalent to requiring $\mu > L_f/2$ for \eqref{eqn:normupperbound} by Proposition \ref{prop:Mcondition}.
Note that \eqref{eqn:generalparamrange} reduces to  \eqref{eqn:LVparamrange} for Algorithm LV and to \eqref{eqn:CPparamrange2} for CV.
In general for $C=\kappa K$, $\kappa \in [-1,1]$,
the region of convergence shrinks gradually from $|\kappa|=0$ (LV) to $1$ (CV); see Figure \ref{fig:paramrange_}.
This extends the observation made in Section \ref{sec:unify:relation} regarding convergence conditions \eqref{eqn:LVparamrange} and \eqref{eqn:CPparamrange2} to a continuum of algorithms between LV and CV.%:
%a liberal choice of step sizes at the cost of an additional vector multiplication with $K^T$.
%can be used to trade-off the range of convergent step size parameters
%and an additional vector multiplication with $K^T$. 

\begin{figure}[h]
\vskip -0.1in
\begin{center}
\includegraphics[width=0.35\columnwidth]{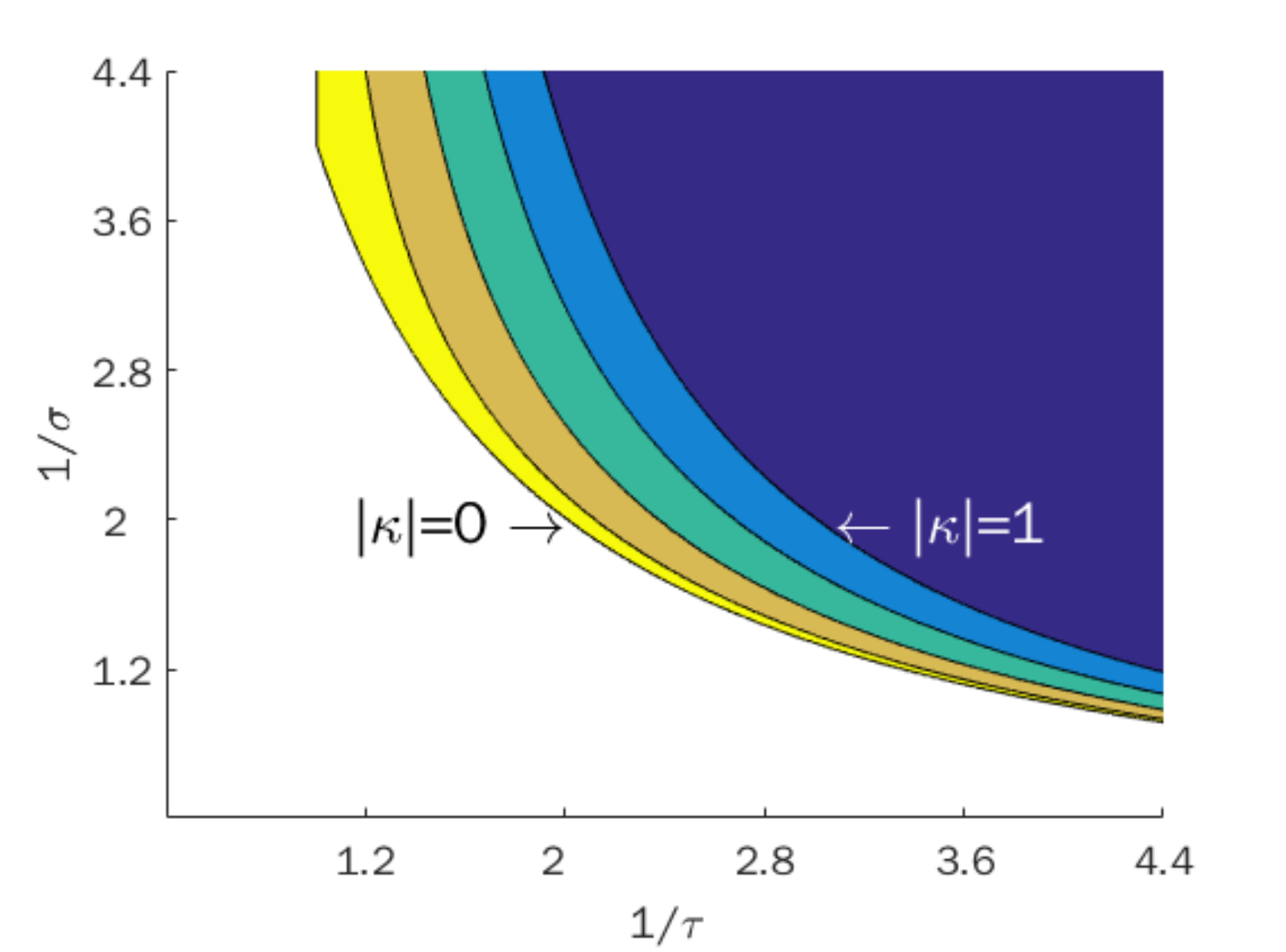}
\caption{Region of convergence in $(1/\sigma, 1/\tau)$. Boundaries correspond to $|\kappa|$ = 0, 0.25, 0.5, 0.75, 1.}
\label{fig:paramrange_}
\end{center}
\vskip -0.1in
\end{figure}

\begin{remark}\label{remark:CV+}
	\citet{Condat:JournalOfOptimizationTheoryAndApplications:2012} also considers an extension of \eqref{eqn:primal}, which minimizes the three-function sum $f(x)+g(x)+h(Kx)$, with $g$ convex closed proper (not necessarily smooth). In this case, the second term of the first line of Algorithm CV is replaced by $\prox_{\tau g}(x^{k}-\tau(\nabla f(x^k)+K^Ty^k))$.
	We call this extension Algorithm CV+. This algorithm is still a preconditioned forward-backward splitting one with preconditioner $M_{\mathsf{CV}}$, where the zero in the (1,1) block of operator $F$ is replaced by $\partial g$, and converges under \eqref{eqn:CPparamrange2}. For this extended $F$, \eqref{eqn:generaliteration} generates a feasible algorithm only when $C=\pm K$, i.e., Algorithm CV+ or its dual.
   Nevertheless, for Algorithm LV, there is a three-function extension \cite{Chen:FixedPointTheoryAndApplications:2016}. % for which we have not found an operator-splitting interpretation.
\end{remark}

% \begin{remark}
% For related preconditioners in which $\frac{1}{\tau} I$, $\frac{1}{\sigma} I$ in $M_{\mathsf{LV}}$ and $M_{\mathsf{CV}}$ are replaced by $\Sigma_1, \Sigma_2 \succ 0$ \cite{pock2011diagonal,combettes2014forward}, a similar relation to \eqref{eqn:M1M2} holds. Consequently, a family of preconditioners like \eqref{eqn:generalM} can be constructed to generate feasible forward-backward splitting algorithms.
% %Again, the splitting algorithm using $\tilde{M}_2$ can be understood as that using $\tilde{M}_1$ in the spirit of \eqref{eqn:CPasLV}.
% \end{remark}

\subsubsection*{Rates of convergence}

We now analyze the rates of convergence of the preconditioned forward-backward splitting algorithm \eqref{eqn:preconditionedforwardbackward_relax}
for the preconditioner matrices $M$ of \eqref{eqn:generalM}. 
A pre-duality gap function $\mathcal{G}(\tilde{z},z):=\mathcal{L}(\tilde{x},y)-\mathcal{L}(x,\tilde{y})$, where $z=(x,y)$ and $\tilde{z}=(\tilde{x},\tilde{y})$, is used to measure the convergence of
the objective value, because the duality gap $\mathcal{G}^{\star}(\tilde{z}):=\sup_{z\in Z}\mathcal{G}(\tilde{z},z)$, $Z\subset\mathbb{R}^p\times\mathbb{R}^l$, guarantees that the pair  $\tilde{z}=(\tilde{x},\tilde{y})$ is a primal-dual solution to \eqref{eqn:saddlepoint} if $\mathcal{G}^{\star}(\tilde{z})\le 0$.
The rate of convergence of a gap function is typically analyzed in terms of an averaged solution sequence $\bar{z}^N=\sum_{k=0}^N\alpha_k z^k/\sum_{k=0}^N\alpha_k$ for some positive sequence $\{\alpha_k\}$, yielding an \emph{ergodic rate}.
Ergodic rates are widely studied in the literature \cite{loris2011generalization,Chen:InverseProblems:2013,Bot2015b,chambolle2011first,Chambolle:MathematicalProgramming:2015},
partly due to ease of analysis.
%However, the ergodic iterates result in dense iterates even when the actual solution is sparse. Non-ergodic iterates tend to share the sparsity structure with the solution.
%
Sometimes the unaveraged (last) solution sequence $\{z_k\}$ or $\{\tilde{z}_k\}$ is preferred as it tends to preserve the desired structural properties better than the ergodic counterpart. 
Analysis based on the unaveraged sequence yields the \emph{non-ergodic rate}
\citep{Davis:SiamJOptim:2015}.
%Thus it is worth investigating the worst-case performance of the non-ergodic sequence.

%\paragraph{\textit{Ergodic convergence.}}
First we establish an $O(1/N)$ ergodic convergence rate of the pre-duality gap evaluated for
an average of the first $N$ terms of the sequence 
$\{(\tilde{x}^{k},\tilde{y}^{k})\}$:
\begin{theorem}\label{thm:ergodic}
	In iteration \eqref{eqn:preconditionedforwardbackward_relax},
	let $\mu$ be a constant such that 
		$\| (x,0) \|_{M^{-1}}^2 \le (1/\mu)\|x\|_2^2$, for all $x \in \mathbb{R}^p$.
	Let $\alpha=(2\mu)/(4\mu-L_f)$
	and denote $z^{k}=(x^{k},y^{k})$,
	$\tilde{z}^{k}=(\tilde{x}^{k},\tilde{y}^{k})$.
	Define $\bar{z}^N=(\bar{x}^N,\bar{y}^N)$ with $\bar{x}^{N}=\sum_{k=0}^N\rho_k\tilde{x}^{k}/\sum_{k=0}^N\rho_k$ and $\bar{y}^{N}=\sum_{k=0}^N\rho_k\tilde{y}^{k}/\sum_{k=0}^N\rho_k$.
	Also let $\bar\rho = \sup_{k\ge 0}\rho_k$. 
	If $\mu > L_f/2$ and $\{\rho_k\}$ is chosen so that $0 < \rho_k < 1/\alpha$ for all $k$, %and  $\sum_{k=0}^{\infty} \rho_k(1-\alpha\rho_k) = +\infty$,
	then the following holds for all $z=(x,y) \in \mathbb{R}^{p}\times\mathbb{R}^{l}$:
	%TODO: alignment of this equation.
	\begin{align*}
		\mathcal{G}(\bar{z}^N,z)
		\le \textstyle  \frac{1}{2\sum_{k=0}^N\rho_k}\left( \|z^{0}-z\|_M^2 + \frac{\alpha L_f}{(1-\alpha\bar{\rho})\lambda_{\min}(M)}\|z^{0}-z^{\star}\|_M^2\right),
	\end{align*}
	where $z^{\star}=(x^{\star},y^{\star})$ is a solution to \eqref{eqn:saddlepoint}. %a point in $\zer{(F+G)}$.
\end{theorem}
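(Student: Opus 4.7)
The plan is to follow the variational-inequality recipe for preconditioned forward--backward splitting, specialized to the split \eqref{eqn:operatorsplit}. First I rewrite the update in \eqref{eqn:preconditionedforwardbackward_relax} as the inclusion $M(z^k-\tilde z^k) - G(z^k) \in F(\tilde z^k)$. Since $F$ is the (maximal monotone) saddle-subdifferential of the convex--concave function $\tilde{\mathcal L}(x,y)= \langle Kx,y\rangle - h^*(y)$, this inclusion furnishes, for every $z=(x,y)$, the generalized gradient inequality
\[
	\tilde{\mathcal L}(\tilde x^k,y) - \tilde{\mathcal L}(x,\tilde y^k) \le \langle M(z^k-\tilde z^k) - G(z^k),\; \tilde z^k - z\rangle,
\]
which I would verify by picking $\tilde\xi^k\in\partial h^*(\tilde y^k)$ from the dual block of the inclusion and invoking convexity of $h^*$.

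Next I handle the smooth part: by convexity of $f$ and the descent lemma,
\[
	f(\tilde x^k) - f(x) \le \langle\nabla f(x^k),\,\tilde x^k-x\rangle + \tfrac{L_f}{2}\|\tilde x^k-x^k\|^2 = \langle G(z^k),\,\tilde z^k-z\rangle + \tfrac{L_f}{2}\|\tilde x^k-x^k\|^2,
\]
since $G = (\nabla f,0)$. Adding the two estimates cancels the $G(z^k)$ contribution and yields
\[
	\mathcal L(\tilde x^k,y) - \mathcal L(x,\tilde y^k) \le \langle z^k-\tilde z^k,\, \tilde z^k-z\rangle_M + \tfrac{L_f}{2}\|\tilde x^k-x^k\|^2.
\]
The $M$-polarization identity $2\langle a,b\rangle_M = \|a+b\|_M^2 - \|a\|_M^2 - \|b\|_M^2$ applied with $a = z^k - \tilde z^k$, $b = \tilde z^k - z$ converts the inner product into $\tfrac12(\|z^k-z\|_M^2 - \|\tilde z^k-z\|_M^2 - \|z^k-\tilde z^k\|_M^2)$, and the crude bound $\|\tilde x^k-x^k\|^2 \le \lambda_{\min}(M)^{-1}\|\tilde z^k - z^k\|_M^2$ lifts the Lipschitz residual to the $M$-norm.

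To accommodate the relaxation I expand
\[
	\|z^{k+1}-z\|_M^2 = (1-\rho_k)\|z^k-z\|_M^2 + \rho_k\|\tilde z^k-z\|_M^2 - \rho_k(1-\rho_k)\|z^k-\tilde z^k\|_M^2,
\]
multiply the previous bound by $\rho_k$, and substitute. Since $\rho_k < 1/\alpha \le 2$, the term $-\tfrac{\rho_k(2-\rho_k)}{2}\|z^k-\tilde z^k\|_M^2$ that appears is nonpositive and may be discarded, leaving the one-step estimate
\[
	\rho_k\!\left[\mathcal L(\tilde x^k,y)-\mathcal L(x,\tilde y^k)\right] \le \tfrac12\|z^k-z\|_M^2 - \tfrac12\|z^{k+1}-z\|_M^2 + \tfrac{\rho_k L_f}{2\lambda_{\min}(M)}\|z^k-\tilde z^k\|_M^2.
\]

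Finally, the residual sum is tamed by Fej\'er monotonicity: mimicking Proposition~\ref{prop:Gtildecocoercive}, the assumption $\|(x,0)\|_{M^{-1}}^2\le\mu^{-1}\|x\|^2$ together with cocoercivity of $\nabla f$ shows that $M^{-1}G$ is $\mu/L_f$-cocoercive in the $M$-metric, so the forward--backward operator $T=(I+M^{-1}F)^{-1}(I-M^{-1}G)$ is $\alpha$-averaged in that metric with $\alpha = 2\mu/(4\mu-L_f)$. The standard averaged-operator estimate then gives
\[
	\|z^{k+1}-z^\star\|_M^2 \le \|z^k-z^\star\|_M^2 - \rho_k(1/\alpha - \rho_k)\|z^k-\tilde z^k\|_M^2,
\]
and, using $\rho_k\le\bar\rho<1/\alpha$, telescoping yields $\sum_{k=0}^N \rho_k\|z^k-\tilde z^k\|_M^2 \le \tfrac{\alpha}{1-\alpha\bar\rho}\|z^0-z^\star\|_M^2$. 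Summing the one-step estimate over $k=0,\ldots,N$ telescopes the $\|z^k-z\|_M^2$ differences; inserting the Fej\'er bound, dividing by $\sum_{k=0}^N\rho_k$, and applying Jensen's inequality via convexity of $\mathcal L(\cdot,y)$ and concavity of $\mathcal L(x,\cdot)$ delivers the claimed bound on $\mathcal G(\bar z^N,z)$. The delicate step, in my view, is this last Fej\'er estimate in the $M$-metric: it is the step where the cocoercivity constant $\mu/L_f$, and hence both the threshold $\mu>L_f/2$ and the precise value $\alpha=2\mu/(4\mu-L_f)$, enter the argument.
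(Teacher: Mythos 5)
Your proposal is correct and takes essentially the same route as the paper: your one-step bound, obtained from the inclusion $M(z^k-\tilde z^k)-G(z^k)\in F(\tilde z^k)$, the subgradient inequality for $h^*$, convexity of $f$ with the descent lemma, and $M$-polarization plus the relaxation identity, is exactly the content of Lemma \ref{lemma:pregap} (your discarded term $-\tfrac{\rho_k(2-\rho_k)}{2}\|z^k-\tilde z^k\|_M^2$ is the paper's $(1-2/\rho_k)\|z^k-z^{k+1}\|_M^2$ term). Your concluding Fej\'er estimate also matches the paper's argument, since the $\mu/L_f$-cocoercivity of $M^{-1}G$ in the $M$-metric (Proposition \ref{prop:preconditionedcocoercivity}) yields the same $\alpha$-averagedness and the same square-summability bound $\sum_k\rho_k\|z^k-\tilde z^k\|_M^2\le\tfrac{\alpha}{1-\alpha\bar\rho}\|z^0-z^\star\|_M^2$ used in the paper, after noting $z^{k+1}-z^k=\rho_k(\tilde z^k-z^k)$, with the identical Jensen averaging step to finish.
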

The key observation in proving Theorem \ref{thm:ergodic} is the following lemma, also used in the proof of Theorem \ref{thm:nonergodic}.
\begin{lemma}\label{lemma:pregap}
	For $\rho \in (0,2)$,
	consider a relation $z^{+}=(I+M^{-1}F)^{-1}(I-M^{-1}G)z^{-}$, $z_{\rho} = (1-\rho)z^{-}+\rho z^{+}$.
	Write $z_{\rho}=(x_{\rho},y_{\rho})$, $z^{+}=(x^{+},y^{+})$, $z^{-}=(x^{-},y^{-})$, all in $\mathbb{R}^p\times\mathbb{R}^l$.
	Then, 
	%\begin{align*}
	$$
		%\MoveEqLeft[2] 
		2\rho \; \mathcal{G}(z^{+},z)
		\le  \|z^{-}-z\|_M^2 - \|z_{\rho}-z\|_M^2
		 + (1-2/\rho)\|z^{-}-z_{\rho}\|_M^2  
		 + (L_f/\rho) \|x^{-}-x_{\rho}\|_2^2, 
		 %~ ~ \forall z = (x,y) \in \mathbb{R}^p \times \mathbb{R}^l.
		 \quad \forall z=(x,y).
	$$
	%\end{align*}
\end{lemma}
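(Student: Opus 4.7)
The plan is to convert the fixed-point relation defining $z^+$ into a variational inequality for the pre-duality gap, and then pass to the relaxed iterate $z_\rho$ by purely algebraic manipulation.

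First, I would rewrite $z^+=(I+M^{-1}F)^{-1}(I-M^{-1}G)z^-$ as the inclusion $M(z^--z^+)-G(z^-)\in F(z^+)$. Using the block forms of $F$ and $G$ in \eqref{eqn:operatorsplit}, this unpacks into $[M(z^--z^+)]_1=\nabla f(x^-)+K^Ty^+$ together with the existence of some $w\in\partial h^*(y^+)$ such that $[M(z^--z^+)]_2=-Kx^++w$. These two identities package all of the first-order information encoded in the iterate.

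Second, I would bound $\mathcal{G}(z^+,z)=f(x^+)-f(x)+\langle Kx^+,y\rangle-\langle Kx,y^+\rangle+h^*(y^+)-h^*(y)$ termwise. The subgradient inequality for $h^*$ at $y^+$ with the selected $w$ gives $h^*(y^+)-h^*(y)\le\langle w,y^+-y\rangle$, while convexity of $f$ at $x^-$ combined with the $L_f$-descent lemma yields $f(x^+)-f(x)\le\langle\nabla f(x^-),x^+-x\rangle+(L_f/2)\|x^+-x^-\|_2^2$. Substituting these and using the two identities from Step~1 to eliminate $\nabla f(x^-)+K^Ty^+$ and $w-Kx^+$, all surviving inner products collapse into a single $M$-weighted pairing, giving
\begin{equation*}
\mathcal{G}(z^+,z)\le (L_f/2)\|x^+-x^-\|_2^2+\langle z^+-z,z^--z^+\rangle_M.
\end{equation*}
The polarization identity $2\langle z^+-z,z^--z^+\rangle_M=\|z^--z\|_M^2-\|z^+-z\|_M^2-\|z^--z^+\|_M^2$ then produces the unrelaxed analogue of the claim.

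Third, I would pass to the relaxed variable. The convex-combination identity
\begin{equation*}
\|z_\rho-z\|_M^2=(1-\rho)\|z^--z\|_M^2+\rho\|z^+-z\|_M^2-\rho(1-\rho)\|z^--z^+\|_M^2
\end{equation*}
eliminates $\|z^+-z\|_M^2$, and the scaling $z^+-z^-=\rho^{-1}(z_\rho-z^-)$ gives $\|z^--z^+\|_M^2=\rho^{-2}\|z^--z_\rho\|_M^2$ and $\|x^+-x^-\|_2^2=\rho^{-2}\|x^--x_\rho\|_2^2$. Multiplying through by $\rho$ and collecting coefficients yields the asserted inequality. The only real obstacle is bookkeeping: verifying that the $\rho$-dependent coefficients combine exactly to produce the factors $(1-2/\rho)$ on the $\|z^--z_\rho\|_M^2$ term and $L_f/\rho$ on the $\|x^--x_\rho\|_2^2$ term. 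The conceptual core—that under the forward-backward inclusion, convexity and smoothness absorb every first-order contribution into a single $M$-weighted pairing—is precisely what will let the same calculation serve every preconditioner in the family \eqref{eqn:generalM}.
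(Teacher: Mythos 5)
Your proof is correct and rests on the same ingredients as the paper's: the resolvent inclusion $M(z^{-}-z^{+})-G(z^{-})\in F(z^{+})$, the subgradient inequality for $h^{*}$ at $y^{+}$, convexity of $f$ at $x^{-}$ combined with the $L_f$-descent lemma, and the cancellation of the $\langle Kx^{+},y^{+}\rangle$ terms that collapses everything into the single pairing $\langle z^{-}-z^{+},z^{+}-z\rangle_M$. The only difference is organizational---you prove the unrelaxed bound at $z^{+}$ first and then relax via the convex-combination identity, whereas the paper carries $z_{\rho}$ through the computation from the start---and your bookkeeping does close: substituting $\|z^{-}-z^{+}\|_M^2=\rho^{-2}\|z^{-}-z_{\rho}\|_M^2$ and $\|x^{-}-x^{+}\|_2^2=\rho^{-2}\|x^{-}-x_{\rho}\|_2^2$ yields exactly the coefficients $(1-2/\rho)$ and $L_f/\rho$.
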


Now let $\mathcal{F}(x)=f(x)+h(Kx)$ be the primal objective function and $\mathcal{F}^{\star}$ be the primal optimal value.
For an important class of penalty functions $h$ including those for the generalized and group lasso, the following rate for primal suboptimality holds.
\begin{corollary}\label{cor:ergodic}
Assume the conditions for Theorem \ref{thm:ergodic}.
If $\dom(h)=\mathbb{R}^l$, i.e., $h$ does not take the value $+\infty$, then there exists a constant $C_1$ independent of $N$ such that for all $N$,
\begin{align*}
0 \le \mathcal{F}(\bar{x}^N) - \mathcal{F}^{\star} \le C_1/(\textstyle\sum_{k=0}^N\rho_k). 
\end{align*}
\end{corollary}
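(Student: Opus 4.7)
The plan is to convert the pre-duality gap bound of Theorem \ref{thm:ergodic} into a primal suboptimality bound by choosing $y$ cleverly. The lower bound $\mathcal{F}(\bar{x}^N) \ge \mathcal{F}^\star$ is immediate from $\dom(h) = \mathbb{R}^l$: we have $\mathcal{F}(\bar{x}^N) = f(\bar{x}^N) + h(K\bar{x}^N) < +\infty$, so $\bar{x}^N$ is primal feasible.

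For the upper bound, the key identity is $\mathcal{F}(x) = \sup_y \mathcal{L}(x,y)$, with the supremum attained at any element of $\partial h(Kx)$. Since $\dom(h) = \mathbb{R}^l$, $h$ is a finite convex function, hence continuous and subdifferentiable everywhere, so I can pick any $\hat{y}^N \in \partial h(K\bar{x}^N)$; the Fenchel--Young equality then yields $\mathcal{F}(\bar{x}^N) = \mathcal{L}(\bar{x}^N, \hat{y}^N)$. Combining with the saddle-point inequality $\mathcal{L}(x^\star, \bar{y}^N) \le \mathcal{L}(x^\star, y^\star) = \mathcal{F}^\star$ yields
\begin{align*}
\mathcal{F}(\bar{x}^N) - \mathcal{F}^\star \le \mathcal{L}(\bar{x}^N, \hat{y}^N) - \mathcal{L}(x^\star, \bar{y}^N) = \mathcal{G}(\bar{z}^N, (x^\star, \hat{y}^N)).
\end{align*}
Applying Theorem \ref{thm:ergodic} with $z = (x^\star, \hat{y}^N)$ gives
\begin{align*}
\mathcal{F}(\bar{x}^N) - \mathcal{F}^\star \le \frac{1}{2\sum_{k=0}^N \rho_k}\left(\|z^0 - (x^\star, \hat{y}^N)\|_M^2 + C_2\right),
\end{align*}
where $C_2 = \frac{\alpha L_f}{(1-\alpha\bar\rho)\lambda_{\min}(M)}\|z^0 - z^\star\|_M^2$ does not depend on $N$. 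The remaining task is to bound the first term uniformly in $N$.

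Since $\|\cdot\|_M$ and $\|\cdot\|_2$ are equivalent norms on $\mathbb{R}^{p+l}$, it suffices to bound $\|\hat{y}^N\|_2$ uniformly. I would argue this in three substeps: (i) under condition \eqref{eqn:fullrange}, the sequence $\{z^k\}$ is Fej\'er monotone with respect to the solution set of \eqref{eqn:inclusion} in the $M$-norm, hence bounded; (ii) nonexpansiveness of the resolvent $(I+M^{-1}F)^{-1}$ in the $M$-norm, together with the (preserved) cocoercivity of $I - M^{-1}G$, implies $\{\tilde{z}^k\}$ is bounded, and since $\bar{x}^N$ is a convex combination of the $\tilde{x}^k$, the set $\{K\bar{x}^N : N \ge 0\}$ lies in a bounded subset of $\mathbb{R}^l$; (iii) every finite convex function on $\mathbb{R}^l$ is locally Lipschitz, and its subdifferential maps bounded sets to bounded sets, so there exists $R$ with $\|\hat{y}^N\|_2 \le R$ for all $N$. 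Assembling the uniform bound on $\|z^0-(x^\star,\hat y^N)\|_M^2$ with $C_2$ produces the constant $C_1$.

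The main obstacle will be making the boundedness argument in steps (i)--(ii) fully rigorous under the general setting (arbitrary preconditioner $M$ satisfying \eqref{eqn:fullrange}, relaxation sequence $\{\rho_k\}$ that may approach $0$); in particular the contraction constants must not deteriorate with $k$, which is ensured by the uniform bound $\rho_k < 1/\alpha$ in the hypothesis of Theorem \ref{thm:ergodic}. The hypothesis $\dom(h) = \mathbb{R}^l$ is used crucially in step (iii): without it, $\partial h$ may fail to be locally bounded (e.g., when $h$ is an indicator function), and the conversion from pre-duality gap to primal gap would break down.
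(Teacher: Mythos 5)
Your proposal is correct and follows essentially the same route as the paper's proof: both select $\hat{y}^N \in \partial h(K\bar{x}^N)$ so that $\mathcal{F}(\bar{x}^N)=\mathcal{L}(\bar{x}^N,\hat{y}^N)$, use local Lipschitz continuity of the real-valued $h$ to bound $\|\hat{y}^N\|_2$ uniformly, and then apply Theorem \ref{thm:ergodic} with $y$ ranging over the resulting bounded dual ball. The only cosmetic difference is in how boundedness of $\{K\bar{x}^N\}$ is obtained --- the paper invokes convergence of $\bar{z}^N$ to $z^{\star}$, while you invoke Fej\'{e}r monotonicity plus nonexpansiveness of the $\alpha$-averaged forward-backward map (note it is $M^{-1}G$ that is cocoercive, $I-M^{-1}G$ being averaged) --- and these rest on the same underlying facts about the preconditioned iteration.
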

Thus if $\{\rho_k\}$ is chosen so that  $\inf_{k \ge 0} \rho_k > 0$, we obtain $O(1/N)$ convergence of the primal suboptimality.

%\paragraph{\textit{Non-ergodic convergence.}}
% Sometimes a non-ergodic solution sequence $\{z_k\}$ or $\{\tilde{z}_k\}$ is preferred as it tends to preserve the desired structural properties better than the ergodic counterpart. Thus it is worth investigating the worst-case performance of the non-ergodic sequence.

The following theorem establishes the non-ergodic counterpart of Theorem \ref{thm:ergodic}.
\begin{theorem}\label{thm:nonergodic}
	For some $\nu > L_f/2$ and $\epsilon > 0$,
	suppose $M$ in iteration \eqref{eqn:preconditionedforwardbackward_relax} satisfies
	\begin{align}\label{eqn:Mbound}
		M \succeq \begin{bmatrix} \nu I &  \\   & \epsilon I \end{bmatrix}
	\end{align}
	Let $\alpha=2\nu/(4\nu-L_f)$ and write $z^{k}=(x^{k},y^{k})$, $\tilde{z}^{k}=(\tilde{x}^{k},\tilde{y}^{k})$.
	If $\{\rho_k\}$ is chosen so that $0 < \rho_k < 1/\alpha$ for all $k$ and  
	$\tau=\inf_{k\ge0}\rho_k(1-\alpha\rho_k) > 0$,
	then the following holds:
	\begin{align*}
		 \mathcal{G}(\tilde{z}^k,z) 
		 \le ~ {\|z^{0}-z^{\star}\|_M(\|z^{0}-z^{\star}\|_M+\|z^{\star}-z\|_M)}/({\sqrt{\tau(k+1)}}),
		 \quad
		 \forall z=(x,y)\in \mathbb{R}^{p}\times\mathbb{R}^{l},
	\end{align*}
	and additionally, 
	$\mathcal{G}(\tilde{z}^k,z) = o(1/\sqrt{k+1})$.
	Furthermore, if $\dom(h)=\mathbb{R}^l$, 
	then there exists a constant $C_2$ independent of $k$ such that $0 \le \mathcal{F}(\tilde{x}^k) - \mathcal{F}^{\star} \le C_2/\sqrt{k+1}$ for all $k$ and $\mathcal{F}(\tilde{x}^k) - \mathcal{F}^{\star}=o(1/\sqrt{k+1})$.
\end{theorem}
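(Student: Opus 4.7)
The plan combines two ingredients: the averagedness of the preconditioned forward-backward operator, which supplies Fej\'er monotonicity, summability, and non-increase of fixed-point residuals; and Lemma \ref{lemma:pregap}, which converts residual decay into a pre-gap bound.

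First, since $M\succeq\begin{bmatrix}\nu I & 0 \\ 0 & \epsilon I\end{bmatrix}$ implies $M^{-1}\preceq\begin{bmatrix}\nu^{-1}I & 0\\ 0 & \epsilon^{-1}I\end{bmatrix}$, the $(1/L_f)$-cocoercivity of $\nabla f$ transfers to $(\nu/L_f)$-cocoercivity of $M^{-1}G$ in the $\|\cdot\|_M$ geometry, so $I-M^{-1}G$ is $L_f/(2\nu)$-averaged and $T:=(I+M^{-1}F)^{-1}(I-M^{-1}G)$ is $\alpha$-averaged in $\|\cdot\|_M$ with exactly $\alpha=2\nu/(4\nu-L_f)$ by the composition rule. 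Standard Krasnoselskii--Mann theory with $\rho_k\in(0,1/\alpha)$ then gives
\[
\|z^{k+1}-z^\star\|_M^2\le\|z^k-z^\star\|_M^2-\alpha^{-1}\rho_k(1-\alpha\rho_k)\|\tilde z^k-z^k\|_M^2,
\]
along with non-increase of $\|\tilde z^k-z^k\|_M$. Because $\alpha<1$ for $\nu>L_f/2$, we have $\alpha^{-1}\ge 1$; combining with $\rho_k(1-\alpha\rho_k)\ge\tau$ and telescoping yields Fej\'er monotonicity $\|z^k-z^\star\|_M\le\|z^0-z^\star\|_M$ and $(k+1)\|\tilde z^k-z^k\|_M^2\le\tau^{-1}\|z^0-z^\star\|_M^2$. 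The $o(1/\sqrt{k+1})$ refinement follows from the elementary fact that any summable non-increasing sequence $a_k$ satisfies $(k+1)a_k\to 0$.

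To bound the pre-gap, I would apply Lemma \ref{lemma:pregap} with $z^-=z^k$, $z^+=\tilde z^k$, $z_\rho=z^{k+1}$, $\rho=\rho_k$, and arbitrary $z$; using \eqref{eqn:Mbound} to replace $\|x^k-x^{k+1}\|_2^2$ by $\nu^{-1}\|z^k-z^{k+1}\|_M^2=\nu^{-1}\rho_k^2\|\tilde z^k-z^k\|_M^2$, the lemma becomes
\[
2\rho_k\mathcal{G}(\tilde z^k,z)\le\|z^k-z\|_M^2-\|z^{k+1}-z\|_M^2+\rho_k(\rho_k-2+L_f/\nu)\|\tilde z^k-z^k\|_M^2.
\]
Writing the telescoping difference as $\langle z^k-z^{k+1},z^k+z^{k+1}-2z\rangle_M$ and applying Cauchy--Schwarz together with Fej\'er monotonicity bounds it by $2\rho_k\|\tilde z^k-z^k\|_M(\|z^0-z^\star\|_M+\|z-z^\star\|_M)$. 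Substituting the residual bound yields the advertised $O(1/\sqrt{k+1})$ rate; the residual-squared correction is $O(1/(k+1))$ and can be absorbed into constants, while the $o(1/\sqrt{k+1})$ refinement is inherited from $\|\tilde z^k-z^k\|_M$. Primal suboptimality under $\dom h=\mathbb{R}^l$ then follows exactly as in Corollary \ref{cor:ergodic}: finiteness of $h$ forces $\dom h^*$ bounded, so $\mathcal{F}(\tilde x^k)-\mathcal{F}^\star\le\sup_z\mathcal{G}(\tilde z^k,z)$ ranges over a compact set and inherits the same rate with some $C_2$ independent of $k$.

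The subtle step is reconciling the averagedness-derived admissible range $\rho_k<1/\alpha=2-L_f/(2\nu)$ with the narrower range $\rho_k<2-L_f/\nu$ under which the coefficient $(\rho_k-2+L_f/\nu)$ in the gap bound would be outright non-positive: in the wider range this coefficient multiplies only $\|\tilde z^k-z^k\|_M^2$, a second-order correction dominated by the first-order $\|\tilde z^k-z^k\|_M$ term coming from the telescoping difference, so the $O(1/\sqrt{k+1})$ rate is preserved; the non-increase of the residual supplied by averagedness theory is the essential tool that ties everything together.
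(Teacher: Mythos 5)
Your overall skeleton matches the paper's: $(\nu/L_f)$-cocoercivity of $M^{-1}G$ in the $\|\cdot\|_M$ metric, $\alpha$-averagedness of $T=(I+M^{-1}F)^{-1}(I-M^{-1}G)$ with exactly $\alpha=2\nu/(4\nu-L_f)$, Fej\'er monotonicity, the residual bound $\|\tilde z^k-z^k\|_M\le \|z^0-z^\star\|_M/\sqrt{\tau(k+1)}$ with its little-$o$ refinement (Lemma \ref{lemma:sequencerate}), and Lemma \ref{lemma:pregap} to convert residual decay into a gap bound. But there is a genuine gap at precisely the step you flag yourself: applying Lemma \ref{lemma:pregap} with the \emph{actual} relaxation $\rho_k$ leaves the coefficient $\rho_k-2+L_f/\nu$, which is positive whenever $\rho_k\in(2-L_f/\nu,\,2-L_f/(2\nu))$, and the resulting positive $O(1/(k+1))$ remainder cannot be ``absorbed into constants'' without weakening the inequality the theorem asserts — the claimed bound is exactly $\|z^0-z^\star\|_M(\|z^0-z^\star\|_M+\|z^\star-z\|_M)/\sqrt{\tau(k+1)}$, with no slack. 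The paper's device, missing from your sketch, is to re-apply Lemma \ref{lemma:pregap} not to the realized step $z^{k+1}$ but to a \emph{virtual} relaxation $z_{\tilde\rho}=(1-\tilde\rho)z^k+\tilde\rho\,\tilde z^k$ with $\tilde\rho\in(0,1/\alpha)$ free: by the cosine rule, $\|z^k-z\|_M^2-\|z_{\tilde\rho}-z\|_M^2 = 2\langle z^k-z_{\tilde\rho},z_{\tilde\rho}-z\rangle_M+\|z_{\tilde\rho}-z^k\|_M^2$, and choosing $\tilde\rho$ small enough that $\lambda_{\max}(M)+\epsilon\le 2\epsilon/\tilde\rho$ and $\lambda_{\max}(M)+\nu\le(2\nu-L_f)/\tilde\rho$ makes every residual-squared term non-positive, leaving $\mathcal{G}(\tilde z^k,z)\le\tilde\rho^{-1}\langle z^k-z_{\tilde\rho},z_{\tilde\rho}-z\rangle_M=\langle z^k-\tilde z^k,z_{\tilde\rho}-z\rangle_M$, where the factor $\tilde\rho$ cancels; Cauchy--Schwarz and Fej\'er monotonicity of the virtual point then give the stated constant exactly. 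Note also that this sign argument requires the refined form \eqref{eqn:pregapimp}, which keeps the $\epsilon$-weighted $y$-block separate, rather than your coarser substitution $\|x^k-x^{k+1}\|_2^2\le\nu^{-1}\|z^k-z^{k+1}\|_M^2$.

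A second, smaller defect: your justification of the primal bound — that finiteness of $h$ forces $\dom(h^*)$ to be bounded — is false in general (take $h(y)=\|y\|_2^2/2$, finite everywhere with $\dom(h^*)=\mathbb{R}^l$). The paper instead uses that $\dom(h)=\mathbb{R}^l$ makes $h$ locally Lipschitz, so the maximizing dual point $\hat y^k\in\partial h(K\tilde x^k)$ is bounded by the local Lipschitz constant on a neighborhood of the convergent sequence $\{K\tilde x^k\}$; this restricts the supremum to a compact ball $\|y\|_2\le Q$ and the gap bound is then applied with $z=(x^\star,y)$ over that ball, as in the proof of Corollary \ref{cor:ergodic}. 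Your argument does deliver the $O(1/\sqrt{k+1})$ rate and the $o(1/\sqrt{k+1})$ refinement, but both the theorem's exact inequality and the primal-suboptimality step need these repairs.
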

\begin{remark}
The little-o result suggests that the non-asymptotic upper bound of the gap function may be conservative and the gap may diminish faster than the $1/\sqrt{k+1}$ rate. 
The outcomes of the numerical experiments in Section \ref{sec:numerical} also suggest that the bound is not tight.
\end{remark}

\subsubsection*{Closing the gap}
Here we describe how our results close the gap in the literature between the conditions for convergence and those for the rate.
The following fact helps
understanding the conditions for Theorems \ref{thm:ergodic} and \ref{thm:nonergodic}:
\begin{proposition}\label{prop:Mcondition}
	For $M \succ 0$ and a given $L_f>0$, the following are equivalent.
	\begin{enumerate}
	\item\label{prop:Mcondition:1} For all $x \in \mathbb{R}^p$, there exists $\mu>L_f/2$ such that
		$\| (x,0) \|_{M^{-1}}^2 \le (1/\mu)\|x\|_2^2$.
	\item\label{prop:Mcondition:2} The condition \eqref{eqn:fullrange} holds.
	\item\label{prop:Mcondition:3} There exist $\nu>L_f/2$ and $\epsilon>0$ such that $M \succeq \begin{bmatrix} \nu I &  \\  & \epsilon I \end{bmatrix}.$
	\end{enumerate}
\end{proposition}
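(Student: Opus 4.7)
The plan is to reduce all three conditions to statements about a Schur complement of $M$. Write $M$ in block form as $M = \begin{bmatrix} A & B^T \\ B & D \end{bmatrix}$ with $A \in \mathbb{R}^{p \times p}$ and $D \in \mathbb{R}^{l \times l}$. Because $M \succ 0$, one has $D \succ 0$ and the Schur complement $S := A - B^T D^{-1} B$ is also positive definite. I would then show (1) $\iff$ (2) via Schur complement reformulations and (2) $\iff$ (3) via a finite-dimensional strict-positivity argument.

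First I would use the standard block-inversion formula to observe that the $(1,1)$-block of $M^{-1}$ equals $S^{-1}$, so $\|(x,0)\|_{M^{-1}}^2 = x^T S^{-1} x$. Reading the quantifiers in item (1) as in Theorem \ref{thm:ergodic} (a single constant $\mu$ that works uniformly in $x$), condition (1) becomes $S^{-1} \preceq (1/\mu) I$ for some $\mu > L_f/2$, which is equivalent to $S \succ (L_f/2) I$. For the equivalence (1) $\iff$ (2) I would apply the Schur complement again to $M_0 := M - \begin{bmatrix} (L_f/2) I & \\ & 0 \end{bmatrix}$: its $(2,2)$-block is still $D \succ 0$ and its off-diagonal blocks coincide with those of $M$, so $M_0 \succ 0$ iff $A - (L_f/2) I - B^T D^{-1} B = S - (L_f/2) I \succ 0$, which is exactly the reformulation of (1) above.

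Finally I would handle (2) $\iff$ (3). The direction (3) $\Rightarrow$ (2) is immediate because $\begin{bmatrix} \nu I & \\ & \epsilon I \end{bmatrix} \succ \begin{bmatrix} (L_f/2) I & \\ & 0 \end{bmatrix}$ whenever $\nu > L_f/2$ and $\epsilon > 0$. For the converse, finite dimensionality supplies a uniform lower bound: set $\eta := \lambda_{\min}(M_0) > 0$; then $M \succeq \begin{bmatrix} (L_f/2) I & \\ & 0 \end{bmatrix} + \eta I = \begin{bmatrix} (L_f/2 + \eta) I & \\ & \eta I \end{bmatrix}$, so $\nu = L_f/2 + \eta$ and $\epsilon = \eta$ witness (3). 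No step is genuinely hard; the only subtlety worth flagging is the quantifier reading of (1), which must be ``there exists $\mu > L_f/2$ uniform in $x$'' (as is in fact forced by its use in Theorem \ref{thm:ergodic}), otherwise the statement would be trivially true and the equivalence would break.
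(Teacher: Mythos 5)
Your proof is correct, but it takes a genuinely different route from the paper's. The paper proves the equivalence of \eqref{eqn:fullrange} with the other two conditions by a convex-conjugacy argument: it sets $g_1(z)=\frac{1}{2}z^TMz$ and $g_2(z)=\frac{L_f}{4}\|x\|_2^2$, uses the order-reversal of conjugation to translate $M \succ \bigl[\begin{smallmatrix}\frac{L_f}{2}I & \\ & 0\end{smallmatrix}\bigr]$ into the strict bound $\bar{M}_{11}\prec \frac{2}{L_f}I$ on the $(1,1)$ block of $M^{-1}$, and then handles (2)$\iff$(3) by block manipulations of $M^{-1}$ together with the positive-semidefiniteness criteria from Boyd's Appendix A. You instead work directly on $M$: the block-inversion identity $\|(x,0)\|_{M^{-1}}^2=x^TS^{-1}x$ with $S=A-B^TD^{-1}B$, the Schur-complement criterion applied to $M_0=M-\bigl[\begin{smallmatrix}\frac{L_f}{2}I & \\ & 0\end{smallmatrix}\bigr]$ (valid since its $(2,2)$ block $D\succ 0$ is unchanged), and the eigenvalue bound $M_0\succeq \lambda_{\min}(M_0)I$ for (2)$\Rightarrow$(3). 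This reduces all three conditions to the single statement $S\succ \frac{L_f}{2}I$, eliminates the conjugate-duality machinery and the case analysis entirely, and is noticeably shorter; what the paper's route buys is that the same conjugacy mechanics make the ``$M^{-1}\prec\bigl[\begin{smallmatrix}\frac{2}{L_f}I&\\ &\infty\end{smallmatrix}\bigr]$'' picture explicit, which dovetails with the cocoercivity computations (Proposition \ref{prop:preconditionedcocoercivity}) used elsewhere in the convergence analysis, whereas your $\lambda_{\min}$ step leans on finite dimensionality.

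One small correction to your closing remark: the pointwise reading of condition \ref{prop:Mcondition:1} (a $\mu_x>L_f/2$ for each $x$) would not make the statement trivially true, nor would it break the equivalence. Since $\mu_x>L_f/2$ forces $x^TS^{-1}x<\frac{2}{L_f}\|x\|_2^2$ for every $x\neq 0$, compactness of the unit sphere upgrades this to $\lambda_{\max}(S^{-1})<\frac{2}{L_f}$, i.e., to the uniform reading with $\mu=\lambda_{\min}(S)$. So in finite dimensions the two readings coincide; your proof, which uses the uniform reading consistent with Theorem \ref{thm:ergodic}, is unaffected either way.
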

That is, the conditions for Theorems \ref{thm:ergodic} and \ref{thm:nonergodic} are both equivalent to \eqref{eqn:fullrange}.
	%Hence the rates of convergence results in this section hold for \emph{any} $M$ satisfying \eqref{eqn:fullrange}.
	%This condition coincides with that of \citet{Lorenz:JournalOfMathematicalImagingAndVision:2015} for \eqref{eqn:preconditionedforwardbackward_} to converge, with no rate given. %for $F$ and $G$ given in \eqref{eqn:operatorsplit}.
	%In particular, for $M$ in \eqref{eqn:generalM}, this condition reduces to \eqref{eqn:generalparamrange}.
This implies that the rates of convergence results in this section hold 
for $M$ in \eqref{eqn:generalM} satisfying \eqref{eqn:generalparamrange}.
	\emph{Thus, for the entire range of $(\sigma, \tau)$ for which \eqref{eqn:generaliteration} converges, we have established an $O(1/N)$ ergodic and an $o(1/\sqrt{k+1})$ non-ergodic convergence rates for the objective values.}

%TODO: Check reference above.
For Algorithm LV ($M=M_{\mathsf{LV}}$),
	\citet{loris2011generalization} obtain an $O(1/N)$ ergodic convergence rate for $f(x)=\frac{1}{2}\|\mathsf{A}x-b\|_2^2$.
	For general $f$, \citet{Chen:InverseProblems:2013} show that Algorithm LV converges under \eqref{eqn:LVparamrange}, but the rate is given only for strongly convex $f$ and full row rank $K$. %no rate was given.
	This special case is not very interesting in statistical learning applications in which $f$ is almost always not strongly convex. %due to lack of samples.
	To the best of our knowledge, our result for the rates of convergence for Algorithm LV and its variants (including the optimal accelerated one in the next subsection) without this impractical assumption is novel.
	For Algorithm CV ($M=M_{\mathsf{CV}}$),
our result extends the region of parameters for which ergodic converge rate is known from $(1/\tau-\sigma\|K\|^2)/L_f \ge 1$ \cite[][Theorems 1 and 2]{Chambolle:MathematicalProgramming:2015} to the full range $(1/\tau-\sigma\|K\|^2)/L_f \ge 1/2$ of \eqref{eqn:CPparamrange2}. 
	Therefore we close the gap between the conditions for convergence and those for the rate.

%\subsubsection*{Comments}

\begin{remark}
An inspection of the proof of Lemma \ref{lemma:pregap} asserts that the results of this section also holds for the extended $F$ (see Remark \ref{remark:CV+}). Thus we close the gap for Algorithm CV+, the three-function extension, as well.
\end{remark}
\begin{remark}
\citet[Proposition 5.3]{Davis:SiamJOptim:2015} analyzes both ergodic and non-ergodic rates for general $F$ and $G$, under the condition $M \succeq \underline{\lambda} I$ for some $\underline{\lambda}>0$. When applied to \eqref{eqn:preconditionedforwardbackward_relax}, this analysis results in a convergence region
smaller than that is allowed by \eqref{eqn:fullrange}. 
Here we exploit the special structure of $G$ in \eqref{eqn:operatorsplit}. 
%as exploited in Propositions \ref{prop:preconditionedcocoercivity} and \ref{prop:Mcondition}.
\end{remark}
%
%\newpage

% \begin{remark}
% Theorem \ref{thm:nonergodic} justifies using the error $\|{z}^{k+1}-z^k\|_M$ as a stopping criterion; see \citet{Davis:SiamJOptim:2015}.
% 		%and hence the importance of Lemma \ref{lemma:sequencerate}.
% \end{remark}

%\end{enumerate}

\section{Optimal acceleration}\label{sec:optimal}
It is well known that first-order methods can be accelerated by introducing some ``inertia'' \cite{nesterov2004introductory,Beck:SiamJournalOnImagingSciences:2009,Chen:TheAnnalsOfAppliedStatistics:2012}. For the saddle-point problem of the form \eqref{eqn:saddlepoint}, the optimal rate of convergence is known to be $O(L_f/N^2 + \|K\|_2/N)$ in terms of the duality gap $\mathcal{G}^\star$, where $N$ is the total number of iterations \cite{Nesterov:MathematicalProgramming:2004,chen2014optimal}. A natural question arises regarding whether the same optimal rate can be attained for the entire continuum \eqref{eqn:generaliteration} of algorithms. In this section, we show that the answer is affirmative. 

\subsection{Algorithms}

%\paragraph{Deterministic Algorithm}
\citet{chen2014optimal} devise an accelerated variant of Algorithm CV that achieves the theoretically optimal rate of convergence $O(L_f/N^2+\|K\|_2/N)$, where $N$ is the total number of iterations:
\begin{subequations}\label{eqn:chen}
\begin{align}
\bar{x}^k &= \tilde{x}^k + \theta_k (\tilde{x}^k - \tilde{x}^{k-1}) \label{eqn:chen:inertia} \\
x_{md}^k &= (1-\rho_k) x^k + \rho_k \tilde{x}^k \label{eqn:chen:mid} \\
\tilde{y}^{k+1} &= \prox_{\sigma_k h^*} (\tilde{y} + \sigma_k K \bar{x}^k) \label{eqn:chen:c}\noeqref{eqn:chen:c}\\
\tilde{x}^{k+1} &= \tilde{x}^k - \tau_k (\nabla f (x_{md}^k) + K^T \tilde{y}^{k+1}) \label{eqn:chen:d}\noeqref{eqn:chen:d} \\
x^{k+1} &= (1 - \rho_{k}) x^{k} + \rho_{k}\tilde{x}^{k+1} \label{eqn:chen:e}\noeqref{eqn:chen:e}\\ 
y^{k+1} &= (1 - \rho_{k}) y^{k} + \rho_{k}\tilde{y}^{k+1}. \label{eqn:chen:f}\noeqref{eqn:chen:f}
\end{align}
\end{subequations}
Note an extrapolation step \eqref{eqn:chen:inertia} with a parameter $\theta_k$,
and a ``middle'' relaxation step \eqref{eqn:chen:mid} are introduced.
For \eqref{eqn:generaliteration}, we consider the following generalization:%of \eqref{eqn:generaliteration}:
\begin{subequations}\label{eqn:accgen}
\begin{align}
\bar{u}^{k} &= K\tilde{x}^{k} - \theta_{k} A (\tilde{x}^{k} - \tilde{x}^{k-1}) \label{eqn:accgen:a}\noeqref{eqn:accgen:a} \\
\bar{v}^{k} &= K^T\tilde{y}^{k} + \theta_{k} \left(\tau_{k}^{-1}\tau_{k-1} (K+B)^T - B^T \right)(\tilde{y}^{k} - \tilde{y}^{k-1}) \label{eqn:accgen:b}\noeqref{eqn:accgen:b} \\
x_{md}^{k} &= (1-\rho_{k})x^{k} +\rho_{k}\tilde{x}^{k} \label{eqn:mid} \\
\tilde{u}^{k+1} &= \bar{u}^{k} - \tau_{k} (K+A)(\nabla f(x_{md}^{k}) + \bar{v}^{k}) \label{eqn:accgen:d}\noeqref{eqn:accgen:d}\\
\tilde{y}^{k+1} &= \prox_{\sigma_{k} h^*} (\tilde{y}^{k} + \sigma_{k}  \tilde{u}^{k+1}) \label{eqn:accgen:e}\noeqref{eqn:accgen:e}\\
\tilde{v}^{k+1} &= K^T \tilde{y}^{k+1} + B^T (\tilde{y}^{k+1} - \tilde{y}^{k})- \theta_{k} B^T(\tilde{y}^{k}-\tilde{y}^{k-1}) \label{eqn:accgen:f}\noeqref{eqn:accgen:f} \\
\tilde{x}^{k+1} &= \tilde{x}^{k} - \tau_{k} (\nabla f(x_{md}^{k}) +  \tilde{v}^{k+1}) \label{eqn:accgen:g}\noeqref{eqn:accgen:g}\\
x^{k+1} &= (1 - \rho_{k}) x^{k} + \rho_{k}\tilde{x}^{k+1} \label{eqn:xag} \\ 
y^{k+1} &= (1 - \rho_{k}) y^{k} + \rho_{k}\tilde{y}^{k+1}. \label{eqn:yag}
\end{align}
\end{subequations}
Step sizes $(\sigma_k,\tau_k)$ are allowed to depend on the iteration count $k$.
%On the above iteration, the learning rates $\sigma_k$ and $\tau_k$ can depend on the iteration count $k$. $\theta_{k}$ is an extrapolation parameter. 
%Inverse of $\beta_{k}$ represents the weight of updated iterates on the aggregated results, denoted by $x^{k+1}$ and $y^{k+1}$. The subscripts ``ag'' and ``md'' is short for ``aggregated'' and ``middle'', respectively. 
This algorithm reduces to \eqref{eqn:generaliteration} (hence to Algorithms LV, CV, and in between) if $A=-C$, $B=C$, $\rho_k \equiv 1$, $\theta_k \equiv 0$, $\sigma_k\equiv\sigma$, and $\tau_k\equiv\tau$,
%The asymmetry is on purpose: \eqref{eqn:accgen} 
and  to \citet{chen2014optimal} for $A=-K$ and $B=0$. 
%%Thus Algorithm CV is recovered in another way by putting $A = -K$, $B=0$, $\beta_k \equiv 1$ and $\theta_k \equiv 1$, $\sigma_k\equiv\sigma$, and $\tau_k\equiv\tau$.
The optimal rate of convergence of \eqref{eqn:accgen} is established in Section \ref{sec:accconv}.
In particular, the optimal acceleration of Algorithm LV is new.

\subsection{Convergence analysis}\label{sec:accconv}

We first consider the case in which the bounds for $\{x^k\}$, $\{y^k\}$ is known \emph{a priori}. In this case we can assume that the search space is $Z=X\times Y$, where $X\subset \mathbb{R}^p$, $Y\subset \mathbb{R}^l$ are both closed and bounded. Under this assumption, we have the following bound for the duality gap:
\begin{theorem}\label{thm:bddgen}
Let $\{z^k\}=\{(x^k,y^k)\}$ be the sequence generated by \eqref{eqn:accgen}.
Assume for some $\Omega_X$, $\Omega_Y >0$, 
\begin{align}\label{eqn:bdd}
	\textstyle \sup_{x, x' \in X} \|x - x'\|_2^2 \le 2\Omega_X^2, \;\; \sup_{y, y' \in Y} \|y - y' \|_2^2 \le 2\Omega_Y^2,
\end{align}
and the parameter sequences $\{\rho_k\}$, $\{\theta_k\}$, $\{\tau_k\}$, and $\{\sigma_k\}$ satisfy $\rho_1 = 1$ and
\begin{subequations}\label{eqn:bddcondgen}
\begin{align}
\textstyle &\rho^{-1}_{k+1}-1 = \rho^{-1}_k \theta_{k+1}, \label{eqn:rhocondgen}\\
\textstyle &\frac{1-q}{\tau_k} - L_f \rho_k - \frac{1}{r}\|A\|_2^2 \sigma_k \ge 0, \label{eqn:condgen1} \\
\textstyle &\frac{1-r}{\sigma_k} - \tau_k \left(2\|K+A\|_2\|K+B\|_2 + \frac{1}{q}\|B\|_2^2 \right) \ge 0\label{eqn:condgen2}
\end{align}
\end{subequations}
for some $q \in (0,1)$, $r \in (0,1)$.
Further suppose that
\begin{align}
0 < \theta_{k} \le \min({\tau_{k-1}}/{\tau_k}, {\sigma_{k-1}}/{\sigma_k}), & \max({\tau_{k-1}}/{\tau_k}, {\sigma_{k-1}}/{\sigma_k}) \le 1. \label{eqn:thetacondbddgen} 
\end{align}
\deleted{for some {$r \in (0,1)$, $s \in (0,1)$},
and $\rho_1 = 1$} 
Then for all $k \ge 1$, 
\begin{align}\label{eqn:bddupper}
		\mathcal{G}^{\star}(z^{k+1}) & \le 
		\textstyle \frac{\rho_k}{\tau_k}\Omega_X^2 + \frac{\rho_k}{\sigma_k} \Omega_Y^2.
\end{align}
\end{theorem}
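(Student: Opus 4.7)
My plan is to follow the accelerated primal-dual template pioneered by Nesterov and refined by Chen, Lan, and Ouyang: derive a one-step inequality bounding $\mathcal{L}(\tilde x^{k+1}, y) - \mathcal{L}(x, \tilde y^{k+1})$ by a telescoping Lyapunov difference plus extrapolation residuals, multiply through by a weight scheme consistent with \eqref{eqn:rhocondgen}, and sum over $k$. The relations \eqref{eqn:xag}-\eqref{eqn:yag} together with $\rho_1 = 1$ and $\rho_{k+1}^{-1} - 1 = \rho_k^{-1}\theta_{k+1}$ are exactly what ensures that, after Jensen's inequality in the arguments of $\mathcal{L}$, the weighted sum collapses to a bound on $\mathcal{G}(z^{k+1}, z)$ for every $z = (x,y)\in X\times Y$; the boundedness hypothesis \eqref{eqn:bdd} then converts the surviving squared-distance remainders into the $\Omega_X^2, \Omega_Y^2$ factors of \eqref{eqn:bddupper}.

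The first step is to write down three-point inequalities for the inner updates. The dual prox step \eqref{eqn:accgen:e} gives, for every $y$,
\begin{align*}
\langle \tilde u^{k+1}, \tilde y^{k+1} - y\rangle + h^*(\tilde y^{k+1}) - h^*(y) \le \tfrac{1}{2\sigma_k}\bigl(\|y-\tilde y^k\|_2^2 - \|y - \tilde y^{k+1}\|_2^2 - \|\tilde y^{k+1}-\tilde y^k\|_2^2\bigr),
\end{align*}
while the primal step \eqref{eqn:accgen:g} gives, for every $x$,
\begin{align*}
\langle \nabla f(x_{md}^k) + \tilde v^{k+1}, \tilde x^{k+1} - x\rangle = \tfrac{1}{2\tau_k}\bigl(\|x - \tilde x^k\|_2^2 - \|x - \tilde x^{k+1}\|_2^2 - \|\tilde x^{k+1} - \tilde x^k\|_2^2\bigr).
\end{align*}
A convexity-plus-$L_f$-smoothness argument for $f$ at the midpoint $x_{md}^k$, combined with the identity $x^{k+1} - x_{md}^k = \rho_k(\tilde x^{k+1} - \tilde x^k)$ from \eqref{eqn:mid} and \eqref{eqn:xag}, then controls $f(x^{k+1}) - f(x)$ by $\rho_k$ times $\langle \nabla f(x_{md}^k), \tilde x^{k+1} - x\rangle$ plus a positive $\|\tilde x^{k+1}-\tilde x^k\|_2^2$ penalty whose coefficient scales as $L_f\rho_k$ after normalization by the Nesterov-type weights. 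Substituting the explicit formulas \eqref{eqn:accgen:a}-\eqref{eqn:accgen:f} for $\bar u^k, \bar v^k, \tilde u^{k+1}, \tilde v^{k+1}$, the terms involving $K$ combine to reconstruct the bilinear coupling $\langle K\tilde x^{k+1}, y\rangle - \langle K x, \tilde y^{k+1}\rangle$ of $\mathcal{L}$, while the terms involving $A$ and $B$ produce extrapolation cross terms of the form $\theta_k\langle A(\tilde x^k - \tilde x^{k-1}), \tilde y^{k+1} - y\rangle$, $\theta_k\langle B^T(\tilde y^k - \tilde y^{k-1}), \tilde x^{k+1} - x\rangle$, together with a $(K+A)$-$(K+B)$ bilinear residual coming from $\tilde u^{k+1}$ via \eqref{eqn:accgen:d}.

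The main obstacle is taming these cross terms. My strategy is to split each inner product via the midpoint decomposition $\tilde y^{k+1} - y = (\tilde y^{k+1}-\tilde y^k) + (\tilde y^k - y)$ (and its primal analogue), and apply Young's inequality to the ``instantaneous'' pieces with weights tuned to $q, r \in (0,1)$. The resulting positive multiples of $\|\tilde x^{k+1} - \tilde x^k\|_2^2$ and $\|\tilde y^{k+1} - \tilde y^k\|_2^2$, together with the $L_f$-smoothness penalty, must be absorbed by the negative three-point terms $-\tfrac{1}{2\tau_k}\|\tilde x^{k+1} - \tilde x^k\|_2^2$ and $-\tfrac{1}{2\sigma_k}\|\tilde y^{k+1} - \tilde y^k\|_2^2$; the non-negativity conditions \eqref{eqn:condgen1}-\eqref{eqn:condgen2} are precisely what this absorption requires (with the factor $1/r$ multiplying $\|A\|_2^2\sigma_k$ and $1/q$ multiplying $\|B\|_2^2$ arising from the Young's constants). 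The ``shifted'' pieces $\theta_k\langle\cdot, \tilde y^k - y\rangle$ and their primal counterparts are retained as per-iteration residuals, and here \eqref{eqn:thetacondbddgen} enters: the inequalities $\theta_k\tau_{k-1}/\tau_k \le 1$ and $\theta_k\sigma_{k-1}/\sigma_k \le 1$ are precisely the rates at which these shifted residuals telescope across consecutive iterations in conjunction with \eqref{eqn:rhocondgen}.

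Finally, summing the weighted one-step inequalities from $k = 1$, the initialization $\rho_1 = 1$ eliminates any boundary defect at the first step, and Jensen's inequality applied to the convex-concave functional $(x,y)\mapsto \mathcal{L}(x, y') - \mathcal{L}(x', y)$ along the averages \eqref{eqn:xag}-\eqref{eqn:yag} produces
\begin{align*}
\mathcal{G}(z^{k+1}, z) \le \tfrac{\rho_k}{2\tau_k}\|x - \tilde x^0\|_2^2 + \tfrac{\rho_k}{2\sigma_k}\|y - \tilde y^0\|_2^2.
\end{align*}
Taking the supremum over $z = (x,y) \in X \times Y$ and invoking the diameter bounds \eqref{eqn:bdd} yields \eqref{eqn:bddupper}.
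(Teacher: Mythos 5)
Your architecture is essentially the paper's: your two three-point inequalities are Lemmas \ref{lem:prox} and \ref{lem:cos}, your midpoint smoothness-plus-convexity step is Proposition \ref{prop:chen41}, and your Young-inequality absorption with weights $q,r$ into the negative $\|\tilde{x}^{k+1}-\tilde{x}^k\|_2^2$ and $\|\tilde{y}^{k+1}-\tilde{y}^k\|_2^2$ terms, telescoped with the weights $\gamma_k$ generated by \eqref{eqn:rhocondgen}, is exactly Lemma \ref{lem:Qboundgen}. The gap is in your endgame. Under the bounded-case condition \eqref{eqn:thetacondbddgen} the ratios are only inequalities, so $\gamma_i/\tau_i$ and $\gamma_i/\sigma_i$ are merely nondecreasing, not constant, and the Lyapunov sum $\mathcal{D}_k(z,\tilde{z}^{[k]})=\sum_{i=1}^k\bigl[\tfrac{\gamma_i}{2\tau_i}(\|x-\tilde{x}^i\|_2^2-\|x-\tilde{x}^{i+1}\|_2^2)+\tfrac{\gamma_i}{2\sigma_i}(\|y-\tilde{y}^i\|_2^2-\|y-\tilde{y}^{i+1}\|_2^2)\bigr]$ does \emph{not} collapse to $\tfrac{\gamma_k}{2\tau_k}\|x-\tilde{x}^1\|_2^2+\tfrac{\gamma_k}{2\sigma_k}\|y-\tilde{y}^1\|_2^2$: Abel summation leaves interior remainders $\tfrac{1}{2}\bigl(\tfrac{\gamma_{i+1}}{\tau_{i+1}}-\tfrac{\gamma_i}{\tau_i}\bigr)\|x-\tilde{x}^{i+1}\|_2^2\ge 0$ that cannot be discarded. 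Your penultimate display is therefore false in general; in particular it fails for the parameters \eqref{eqn:bddparams} used in Corollary \ref{cor:bddgen}, where $\sigma_k$ is constant so $\theta_k=(k-1)/k<\sigma_{k-1}/\sigma_k=1$. The clean collapse you invoke holds only under the equality condition \eqref{eqn:thetacondunbdd}, i.e., in the \emph{unbounded}-case Theorem \ref{thm:unbddgen} — this distinction is precisely what separates the two theorems. The correct bounded-case move is to bound every interior remainder by the diameter \eqref{eqn:bdd} \emph{inside} the sum, which recombines to $\tfrac{\gamma_k}{\tau_k}\Omega_X^2+\tfrac{\gamma_k}{\sigma_k}\Omega_Y^2$ (the paper's \eqref{eqn:Bbound}); note the coefficient is $1$, not $1/2$, so your final constant matches only by the coincidence $\tfrac{1}{2}\cdot 2\Omega_X^2=\Omega_X^2$.

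A second, related omission: after telescoping, the cross terms at the last index that involve the test point — $-\gamma_k\langle\tilde{x}^{k+1}-x, B^T(\tilde{y}^{k+1}-\tilde{y}^k)\rangle$, $\gamma_k\langle A(\tilde{x}^{k+1}-\tilde{x}^k),\tilde{y}^{k+1}-y\rangle$, and the $\tau_k\gamma_k\langle(K+B)^T(\tilde{y}^{k+1}-\tilde{y}^k),(K+A)^T(\tilde{y}^{k+1}-y)\rangle$ residual — have no $(k+1)$st iteration to cancel against and cannot be ``retained and telescoped'' as your sketch suggests. They must be absorbed by Young's inequality into the negative boundary terms $-\tfrac{\gamma_k}{2\tau_k}\|x-\tilde{x}^{k+1}\|_2^2$ and $-\tfrac{\gamma_k}{2\sigma_k}\|y-\tilde{y}^{k+1}\|_2^2$ retained in the bound on $\mathcal{D}_k$; this is the paper's \eqref{eqn:bddinner}, it consumes the $q$- and $r$-fractions of those distances, and it is a second place where \eqref{eqn:condgen1}--\eqref{eqn:condgen2} are needed. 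Your final display keeps neither the negative boundary distances nor these cross terms, so the step from the one-step inequalities to \eqref{eqn:bddupper} does not go through as written; the repair is the paper's two-stage bounding of $\mathcal{D}_k$ and the last-index products.
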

For the following choice of the algorithm parameters, we obtain the claimed optimal convergence rate. 
\begin{corollary}\label{cor:bddgen}
If $\|A\|_2 \le a\|K\|_2$, $\|B\|_2 \le b\|K\|_2$, $\|K+A\|_2 \le c\|K\|_2$, and $\|K+B\|_2 \le d\|K\|_2$ for some $a$, $b$, $c$, and $d >0$, and the parameters
%$\rho^{-1}_t$, $\theta_t$, $\tau_t$, and $\sigma_t$ 
are set to 
\begin{align}\label{eqn:bddparams}
	\textstyle \rho_k = \frac{2}{k+1}, \;\; \theta_k = \frac{k-1}{k}, \;\; \tau_k = \frac{k}{2P L_f + k Q \|K\|_2 \Omega_Y/\Omega_X}, \;\; \sigma_k = \frac{\Omega_Y}{\|K\|_2 \Omega_X}, \text{where}
\end{align}
\begin{align}\label{eqn:bddparamcond}
P = \textstyle\frac{1}{1-q} \quad \text{and} \quad Q = \max\left\{\frac{1}{(1-q)r}a^2, \frac{1}{1-r}(2cd+b^2/q)\right\},
\end{align}
then
\begin{align}\label{eqn:bddoptimal}
		\mathcal{G}^{\star}(z^k) \le \textstyle \frac{4 P \Omega_X^2}{k(k-1)} L_f + \frac{2\Omega_X \Omega_Y(Q+1)}{k}\|K\|_2 , %\left(Q + \frac{1}{R} \right), 
		\quad \forall k \ge 2.
\end{align}
\end{corollary}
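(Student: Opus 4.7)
The plan is to verify that the specific schedule (\ref{eqn:bddparams}) satisfies every hypothesis of Theorem \ref{thm:bddgen} and then to evaluate the bound (\ref{eqn:bddupper}) at these parameters. The constants $P$ and $Q$ in (\ref{eqn:bddparamcond}) are engineered precisely so that the coupled constraints (\ref{eqn:condgen1})--(\ref{eqn:condgen2}) are tight, so no new analytic idea is required beyond careful bookkeeping. The argument is a routine verification that the schedule saturates the constraints, followed by arithmetic simplification.

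First I would confirm the relaxation relation (\ref{eqn:rhocondgen}). For $\rho_k = 2/(k+1)$ and $\theta_{k+1} = k/(k+1)$, direct substitution gives $\rho_{k+1}^{-1}-1 = k/2 = \rho_k^{-1}\theta_{k+1}$, and $\rho_1 = 1$ is immediate. To check the monotonicity bounds (\ref{eqn:thetacondbddgen}), I would write $\tau_k = k/(\alpha + k\beta)$ with $\alpha = 2PL_f$ and $\beta = Q\|K\|_2\Omega_Y/\Omega_X$; a short computation gives $\tau_{k-1}/\tau_k = ((k-1)/k)\cdot(\alpha+k\beta)/(\alpha+(k-1)\beta)$, which lies in the interval $[(k-1)/k,\,1] = [\theta_k, 1]$. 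Since $\sigma_k$ is constant, the ratio $\sigma_{k-1}/\sigma_k = 1$ trivially satisfies both halves of (\ref{eqn:thetacondbddgen}).

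The heart of the verification is the pair (\ref{eqn:condgen1})--(\ref{eqn:condgen2}). For the first, the choice $\alpha = 2L_f/(1-q)$ gives $(1-q)/\tau_k = 2L_f/k + (1-q)\beta$; the term $2L_f/k$ dominates $L_f\rho_k = 2L_f/(k+1)$, while the lower bound $Q \ge a^2/((1-q)r)$ in (\ref{eqn:bddparamcond}), combined with $\|A\|_2 \le a\|K\|_2$ and the explicit form of $\sigma_k$, yields $(1-q)\beta \ge \|A\|_2^2\sigma_k/r$. So (\ref{eqn:condgen1}) holds. For (\ref{eqn:condgen2}), I would use $\tau_k \le 1/\beta = \Omega_X/(Q\|K\|_2\Omega_Y)$ together with $\|K+A\|_2\|K+B\|_2 \le cd\|K\|_2^2$ and $\|B\|_2^2 \le b^2\|K\|_2^2$ to bound $\tau_k(2\|K+A\|_2\|K+B\|_2+\|B\|_2^2/q)$ by $(2cd+b^2/q)\|K\|_2\Omega_X/(Q\Omega_Y)$, which is at most $(1-r)\|K\|_2\Omega_X/\Omega_Y = (1-r)/\sigma_k$ thanks to the second clause of (\ref{eqn:bddparamcond}).

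Having established the hypotheses, I would apply (\ref{eqn:bddupper}) with $k$ replaced by $k-1$ to obtain $\mathcal{G}^{\star}(z^k) \le (\rho_{k-1}/\tau_{k-1})\Omega_X^2 + (\rho_{k-1}/\sigma_{k-1})\Omega_Y^2$, then compute $\rho_{k-1}/\tau_{k-1} = 2\alpha/(k(k-1)) + 2\beta/k = 4PL_f/(k(k-1)) + 2Q\|K\|_2\Omega_Y/(k\Omega_X)$ and $\rho_{k-1}/\sigma_{k-1} = 2\|K\|_2\Omega_X/(k\Omega_Y)$. Substituting and collecting terms yields (\ref{eqn:bddoptimal}) exactly. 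The only delicate part of the whole argument is keeping the $k$ versus $k+1$ indexing straight in (\ref{eqn:bddupper}) and organising the many operator-norm constants $a, b, c, d$; once this is done, (\ref{eqn:bddparamcond}) makes both constraint inequalities fall out with room to spare.
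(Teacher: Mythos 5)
Your proposal is correct and takes essentially the same route as the paper's proof: check that the schedule \eqref{eqn:bddparams} with constants \eqref{eqn:bddparamcond} satisfies the hypotheses of Theorem \ref{thm:bddgen} (with the same bounds $\|A\|_2^2\sigma_k \le a^2\|K\|_2\Omega_Y/\Omega_X$ and $\tau_k \le \Omega_X/(Q\|K\|_2\Omega_Y)$ used to verify \eqref{eqn:condgen1}--\eqref{eqn:condgen2}), then apply \eqref{eqn:bddupper} at index $k-1$ and simplify $\rho_{k-1}/\tau_{k-1}$ and $\rho_{k-1}/\sigma_{k-1}$ to obtain \eqref{eqn:bddoptimal}. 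If anything, you are more thorough than the paper, which verifies only \eqref{eqn:condgen1}--\eqref{eqn:condgen2} explicitly and leaves the routine checks of \eqref{eqn:rhocondgen} and \eqref{eqn:thetacondbddgen} implicit.
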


\begin{remark}
For $A=-K$, $B=0$, \eqref{eqn:bddcondgen} recovers the condition for \citet[Theorem 2.1]{chen2014optimal} by putting $r \rightarrow 1$ and $q \rightarrow 0$. 
%For Algorithm CV ($a=-1$, $b=-1$), we obtain the conditions $\frac{1-r}{\tau_k} - L_f \rho_k - \frac{\|K\|_2^2 \sigma_k}{s} \ge 0$ and $\frac{1-s}{\sigma_k} - \frac{\|K\|_2^2 \tau_k}{r} \ge 0$. 
For 
%$A=-\kappa K$, $B=\kappa K$, 
$A=-\kappa K=-B$, 
%the continuum of the LV and CV Algorithms, 
we obtain $(1-|\kappa|q)/\tau_k \ge L_f \rho_k + |\kappa| \|K\|_2^2 \sigma_k/r$ and $(1-|\kappa| r)/\sigma_k \ge \|K\|_2^2 \tau_k \left(2(1-\kappa^2) + |\kappa|/q\right)$. 
In particular for Algorithm LV ($\kappa=0$), we have $1/\tau_k \ge L_f \rho_k$ and $1/(\tau_k\sigma_k) \ge 2\|K\|_2^2$ regardless of $q$ and $r$; 
this condition resembles \eqref{eqn:LVparamrange}.
\end{remark}

Now suppose the bounds for $\{x^k\}$, $\{y^k\}$ are unavailable. In this case the duality gap $\sup_{z \in Z} \mathcal{G}(\tilde{z}, z)$, $Z = \mathbb{R}^p \times \mathbb{R}^l$, may be unbounded above. 
Instead, we define a perturbed gap function: 
\begin{align}\label{eqn:pgap}
		\tilde{\mathcal{G}}(\tilde{z}, v) := \sup_{z \in Z} \mathcal{G}(\tilde{z}, z) - \langle v, \tilde{z}-z \rangle.
\end{align}
There always exists a perturbation vector $v$ such that \eqref{eqn:pgap} is \replaced[remark={Reviewer 1, Point 11}]{finite}{well-defined} \citep{monteiro2011complexity}. Thus we want to find a sequence of perturbation vector\added{s} $\{v^k\}$ that makes $\tilde{\mathcal{G}}(\tilde{z}^k, v^k)$ small. 

\begin{theorem}\label{thm:unbddgen}
Suppose that $\{z^k\} = \{(x^k, y^k)\}$ are generated by Algorithm \eqref{eqn:accgen}. 
If the parameter sequences $\{\rho_k\}$, $\{\theta_k\}$, $\{\tau_k\}$, and $\{\sigma_k\}$ satisfy \eqref{eqn:bddcondgen}
and 
\begin{align}\label{eqn:thetacondunbdd}
\theta_{k} = {\tau_{k-1}}/{\tau_k}={\sigma_{k-1}}/{\sigma_k} \le 1
\end{align}
for some $0<q<1$, $0<r<1/2$.  
Then there exists a vector $v^{k+1}$ such that for any $k \ge 1$, 
\begin{align}\label{eqn:unbddupper}
\tilde{\mathcal{G}}(z^{k+1}, v^{k+1}) & \le 
	\textstyle\frac{\rho_k}{\tau_k} \left(2 +\frac{q}{1-q} + \frac{2r+1}{1-2r}\right) R^2 =: \epsilon_{k+1},\text{ and}
\end{align}
\begin{align}
\|v^{k+1}\|_2 & \le \textstyle \left(\frac{\rho_k}{\tau_k} \|\hat{x}-\tilde{x}^1\|_2 + \frac{\rho_k}{\sigma_k} \|\hat{y}-\tilde{y}^1\|_2 \right) \label{eqn:vbound}\\ 
& \quad + \textstyle \left(\frac{\rho_k}{\tau_k} (\mu +\frac{\tau_1}{\sigma_1}\nu) + 2\rho_k(\mu\|A\|_2 + \nu\|B\|_2) + 2\tau_k\rho_k\nu \|K+A\|_2\|K+B\|_2 \right) R, \nonumber
\end{align}
where $(\hat{x}, \hat{y})$ is a pair of solutions to problem \eqref{eqn:saddlepoint}, and
\begin{equation}\label{eqn:Rdefgen}  
	\textstyle R = \sqrt{\|\hat{x}-\tilde{x}^1\|_2^2 + \frac{\tau_1}{\sigma_1}\|\hat{y}-\tilde{y}^1\|_2^2}, \quad %\label{eqn:Rdefgen} \\ 
\mu = \sqrt{\frac{1}{1-q}}, \quad % \label{eqn:mu} \\ 
\nu = \sqrt{\frac{2\sigma_1}{\tau_1 (1-2r)}}. %\label{eqn:nu}
\end{equation}
\end{theorem}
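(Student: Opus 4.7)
My approach would be to first revisit the telescoping argument underlying Theorem \ref{thm:bddgen}, then adapt it so that the supremum-over-$z$ step is split into a piece that remains bounded without using \eqref{eqn:bdd} and a piece that is absorbed into a perturbation vector $v^{k+1}$. The backbone of the proof of Theorem \ref{thm:bddgen} is a one-step inequality of the schematic form
\begin{equation*}
\tfrac{\rho_k}{\tau_k}\bigl[\mathcal{L}(\tilde{x}^{k+1},y)-\mathcal{L}(x,\tilde{y}^{k+1})\bigr]
\;\le\; \Delta_k(x,y) - \Delta_{k+1}(x,y) + R_k(x,y),
\end{equation*}
where $\Delta_k(x,y)$ is a Lyapunov quantity combining $\|\tilde{x}^k-x\|_2^2/\tau_{k-1}$ and $\|\tilde{y}^k-y\|_2^2/\sigma_{k-1}$ (together with extrapolation cross-terms produced by \eqref{eqn:accgen:a}--\eqref{eqn:accgen:b}) and $R_k$ collects all residual inner-product terms that are \emph{linear} in $(x,y)$. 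Summing over $k$, convexity of $\mathcal{L}(\cdot,y)$ and concavity of $\mathcal{L}(x,\cdot)$, together with the weighting choice \eqref{eqn:rhocondgen}, will convert the left-hand side into $\mathcal{G}(z^{k+1},z)$ times $\rho_k/\tau_k$, the $\Delta$ terms into $\Delta_1(x,y)-\Delta_{k+1}(x,y)$, and the residuals into a cumulative linear-in-$(x,y)$ term $\langle v^{k+1},\tilde{z}^{k+1}-z\rangle$ for a specific vector $v^{k+1}$ assembled from iterate differences such as $(\tilde{x}^{k+1}-\tilde{x}^k)$ and $(\tilde{y}^{k+1}-\tilde{y}^k)$ multiplied by $\tau_k$, $\sigma_k$ and the operator norms of $A$, $B$, $K+A$, $K+B$.

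With this identification, the perturbed gap becomes
\begin{equation*}
\tilde{\mathcal{G}}(z^{k+1},v^{k+1})
\;\le\; \sup_{x,y}\Bigl\{\tfrac{\tau_k}{\rho_k}\bigl[\Delta_1(x,y)-\Delta_{k+1}(x,y)\bigr]\Bigr\},
\end{equation*}
and the equality condition \eqref{eqn:thetacondunbdd} is crucial here: combined with \eqref{eqn:rhocondgen} it gives $\rho_k\tau_{k-1}/(\rho_{k-1}\tau_k)=1$ and $\rho_k\sigma_{k-1}/(\rho_{k-1}\sigma_k)=1$, so that the Lyapunov coefficients across consecutive iterations match exactly and the cross terms in $\Delta_k$ cancel telescopically without needing diameter bounds. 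Specialising the supremum to $z=(\hat x,\hat y)$ and using $\Delta_{k+1}\ge 0$ (which follows from \eqref{eqn:condgen1}--\eqref{eqn:condgen2} with the specified $q,r$ after completing squares; note that $r<1/2$ rather than $r<1$ is needed here to retain a strictly positive quadratic form after the extrapolation cross terms are folded in) yields \eqref{eqn:unbddupper} with $R^2 = \|\hat x-\tilde x^1\|_2^2 + (\tau_1/\sigma_1)\|\hat y-\tilde y^1\|_2^2$ and the constant $2 + q/(1-q) + (2r+1)/(1-2r)$ arising from the explicit coefficients of $\|x-\tilde x^1\|_2^2$ and $\|y-\tilde y^1\|_2^2$ inside $\Delta_1$.

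For the bound \eqref{eqn:vbound} on $\|v^{k+1}\|_2$, I would first read off from the construction of $v^{k+1}$ that it is an affine combination of the two endpoint differences $(\tilde{x}^{k+1}-\tilde{x}^1)$, $(\tilde{y}^{k+1}-\tilde{y}^1)$ (producing the first line of \eqref{eqn:vbound}) and of the incremental pairs $(\tilde{x}^{k+1}-\tilde{x}^k)$, $(\tilde{y}^{k+1}-\tilde{y}^k)$ multiplied by the operator norms $\|A\|_2,\|B\|_2,\|K+A\|_2\|K+B\|_2$ (producing the second line). Each incremental pair is then bounded by $R$ using the \emph{same} Lyapunov inequality applied between steps $k$ and $k+1$, which shows that $\|\tilde x^{k+1}-\tilde x^k\|_2^2/\tau_k$ and $\|\tilde y^{k+1}-\tilde y^k\|_2^2/\sigma_k$ are controlled by $\Delta_1(\hat x,\hat y)\le R^2/\tau_1$ up to the constants $\mu^2=1/(1-q)$ and $\nu^2=2\sigma_1/(\tau_1(1-2r))$, explaining the precise form of $\mu,\nu$ in \eqref{eqn:Rdefgen}. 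Finally, $\|\tilde x^{k+1}-\tilde x^1\|_2$ and $\|\tilde y^{k+1}-\tilde y^1\|_2$ are controlled by a triangle inequality through $(\hat x,\hat y)$, producing the first parenthesis in \eqref{eqn:vbound}.

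The hard part, and the step I would spend the most care on, is the explicit identification of $v^{k+1}$ and the verification that the residual terms $R_k(x,y)$ genuinely assemble into a single linear functional $\langle v^{k+1},\tilde z^{k+1}-z\rangle$ after summation. This is delicate because the extrapolation updates \eqref{eqn:accgen:a}--\eqref{eqn:accgen:b} introduce cross-iteration terms whose telescoping only closes exactly under \eqref{eqn:thetacondunbdd}; any slack there would leave uncancelled terms depending linearly on $z$ and spoil the perturbation representation. All other steps are, once the right Lyapunov functional is in place, a matter of careful bookkeeping of the quadratic form induced by \eqref{eqn:condgen1}--\eqref{eqn:condgen2} and a Cauchy--Schwarz estimate to pass from operator-norm coefficients to the factors appearing in \eqref{eqn:vbound}.
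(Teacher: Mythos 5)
Your plan is correct and follows essentially the same route as the paper's own proof: the same telescoped one-step bound (the paper's Lemma \ref{lem:Qboundgen}, built on Proposition \ref{prop:chen41} with the prox and cosine lemmas), exact cancellation of the Lyapunov coefficients under the equality condition \eqref{eqn:thetacondunbdd}, absorption of the surviving cross terms---made linear in $z$ via the identity \eqref{eqn:normid2}---into the explicit perturbation $v^{k+1}$ of \eqref{eqn:vdefgen}, the role of $r<1/2$ in keeping the $y$-quadratic coefficient positive after the Young splits, and the saddle-point Lyapunov bound \eqref{eqn:lem:unbddgen:cond1} delivering $\mu$, $\nu$, the constant $2+\frac{q}{1-q}+\frac{2r+1}{1-2r}$, and the triangle-inequality estimate of $\|v^{k+1}\|_2$. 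The one piece of bookkeeping to tighten in execution is that the cross terms at \emph{intermediate} iterations are eliminated by Young's inequality inside the telescoping (this is exactly where \eqref{eqn:condgen1}--\eqref{eqn:condgen2} are consumed), so only the $k$-th iteration's residuals---not a cumulative sum---constitute $v^{k+1}$, precisely as your ``delicate step'' paragraph anticipates.
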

For the following choice of the algorithm parameters, we obtain the claimed optimal convergence rate. 
\begin{corollary}\label{cor:unbddgen}
If $\|A\|_2 \le a\|K\|_2$, $\|B\|_2 \le b \|K\|_2$, $\|K+A\|_2 \le c\|K\|_2$, and $\|K+B\|_2 \le d \|K\|_2$ for some $a, b, c, d > 0$, 
$N$ is given, and the parameters are set to 
\begin{align}\label{eqn:unbddparams}
	\textstyle \rho_k = \frac{2}{k+1},\;\; \theta_k = \frac{k-1}{k},\;\;\tau_k = \frac{k}{2PL_f + Q N \|K\|_2}, \;\; \sigma_k = \frac{k}{N\|K\|_2}, \text{ where}
\end{align}
\begin{align}\label{eqn:unbddparamcond}
	\textstyle P = \frac{1}{1-q}, \;\; Q = \max\left\{\frac{a^2}{(1-q)r}, \frac{2cd + b^2/q}{1-r}, 1 \right\},
\end{align}
then
\begin{align}
    \textstyle \epsilon_{N+1} & \textstyle \le \left(\frac{4PL_f}{N^2} + \frac{2Q\|K\|_2}{N}\right) \left[2+\frac{q}{1-q} + \frac{r+1/2}{1/2-r}\right] R^2, \text{ and } \label{eqn:epsfbound}\\
    \textstyle \|v^{N+1}\|_2 & \textstyle \le \frac{4PL_f}{N^2} \left[\left( \|\hat{x} - \tilde{x}^1\|_2 + \|\hat{y} - \tilde{y}^1\|_2\right) + R\left(\mu + \frac{\tau_1}{\sigma_1} \nu\right) \right] \label{eqn:vfbound}\\
\omit\span \textstyle\; + \frac{\|K\|_2}{N} \left[2Q\left(\left( \|\hat{x} - \tilde{x}^1\|_2 + \|\hat{y} - \tilde{y}^1\|_2\right) + R\left(\mu +\frac{\tau_1}{\sigma_1}\nu\right)\right) + 4R(a\mu + b\nu) + \frac{4Rcd\nu}{Q} \right]. \nonumber
\end{align}% $\epsilon_{N+1}$ and $\|v^{N+1}\|$ are both $O(L_f/N^2+\|K\|_2/N)$.
\end{corollary}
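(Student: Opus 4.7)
The plan is to specialize Theorem \ref{thm:unbddgen} to the parameter choices prescribed in \eqref{eqn:unbddparams}--\eqref{eqn:unbddparamcond}, which reduces the corollary to two tasks: (i) verify that the prescribed $\{\rho_k, \theta_k, \tau_k, \sigma_k\}$ satisfy the hypotheses \eqref{eqn:bddcondgen} and \eqref{eqn:thetacondunbdd} of the theorem; and (ii) substitute these parameters into \eqref{eqn:unbddupper}--\eqref{eqn:vbound} at $k = N$ and simplify to obtain \eqref{eqn:epsfbound}--\eqref{eqn:vfbound}.

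For task (i), the combinatorial conditions are immediate: $\rho_1 = 2/2 = 1$; one computes $\rho_{k+1}^{-1} - 1 = (k+2)/2 - 1 = k/2 = ((k+1)/2)(k/(k+1)) = \rho_k^{-1}\theta_{k+1}$, confirming \eqref{eqn:rhocondgen}; and since the denominators of $\tau_k$ and $\sigma_k$ are constant in $k$, both ratios $\tau_{k-1}/\tau_k$ and $\sigma_{k-1}/\sigma_k$ equal $(k-1)/k = \theta_k \le 1$, which is \eqref{eqn:thetacondunbdd}. The two structural inequalities \eqref{eqn:condgen1}--\eqref{eqn:condgen2} require a short calculation. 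Substituting the parameters, \eqref{eqn:condgen1} splits into an $L_f$-part and a $\|K\|_2$-part: using $(1-q)P = 1$, the $L_f$-part becomes $2L_f/k - 2L_f/(k+1) \ge 0$, while the $\|K\|_2$-part reduces to $(1-q)QrN^2 \ge a^2 k^2$, which holds for all $k \le N$ provided $Q \ge a^2/((1-q)r)$. Likewise, dropping the nonnegative $2PL_f$ summand from the denominator of $\tau_k$ in \eqref{eqn:condgen2} yields $(1-r)QN^2 \ge k^2(2cd + b^2/q)$, valid for $k \le N$ when $Q \ge (2cd + b^2/q)/(1-r)$. Both bounds appear in the definition of $Q$ in \eqref{eqn:unbddparamcond}.

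For task (ii), the crucial arithmetic identities are $\rho_N/\tau_N = 2(2PL_f + QN\|K\|_2)/(N(N+1)) \le 4PL_f/N^2 + 2Q\|K\|_2/N$, $\rho_N/\sigma_N = 2\|K\|_2/(N+1) \le 2\|K\|_2/N$, $\rho_N \le 2/N$, and $\tau_N\|K\|_2 \le 1/Q$. Inserting the first of these into \eqref{eqn:unbddupper}, together with the identity $(2r+1)/(1-2r) = (r+1/2)/(1/2-r)$, delivers \eqref{eqn:epsfbound}. For \eqref{eqn:vfbound} I would group the four summands on the right-hand side of \eqref{eqn:vbound} according to how they scale: the $\rho_N/\tau_N$ summand contributes the full $4PL_f/N^2 + 2Q\|K\|_2/N$ prefactor multiplied by $\|\hat{x}-\tilde{x}^1\|_2 + R(\mu + (\tau_1/\sigma_1)\nu)$; the $\rho_N/\sigma_N$ summand is absorbed into the $2Q\|K\|_2/N$ coefficient by invoking $Q \ge 1$, the third branch of the $\max$ in \eqref{eqn:unbddparamcond}; the $2\rho_N(\mu\|A\|_2 + \nu\|B\|_2)R$ summand yields $4R(a\mu + b\nu)\|K\|_2/N$; and the $2\tau_N\rho_N\nu\|K+A\|_2\|K+B\|_2 R$ summand yields $4Rcd\nu\|K\|_2/(NQ)$ via $\tau_N\|K\|_2 \le 1/Q$. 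Replacing the individual distances $\|\hat{x}-\tilde{x}^1\|_2$ and $\|\hat{y}-\tilde{y}^1\|_2$ by their (upper-bounding) sum produces the compact form stated.

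The argument is thus essentially bookkeeping; the main obstacle is keeping track of which summand scales like $L_f/N^2$, like $\|K\|_2/N$, or like $\|K\|_2/(NQ)$, and confirming that the worst case $k = N$ dominates all the intermediate iteration counts appearing in the hypotheses. A subtle but important point worth emphasizing is the inclusion of $1$ in the $\max$ defining $Q$: it is not needed for the hypotheses of Theorem \ref{thm:unbddgen}, but is used here precisely to absorb the $\rho_N/\sigma_N$ contribution into the $2Q\|K\|_2/N$ coefficient, so that \eqref{eqn:vfbound} can be displayed as a single combined bound.
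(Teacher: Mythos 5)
Your proposal is correct and follows essentially the same route as the paper's proof: verify \eqref{eqn:rhocondgen}, \eqref{eqn:condgen1}, \eqref{eqn:condgen2}, and \eqref{eqn:thetacondunbdd} by the same substitutions (dropping the $2PL_f$ term and using $k \le N$), then derive the bounds $\rho_N/\tau_N \le 4PL_f/N^2 + 2Q\|K\|_2/N$, $\rho_N\|K\|_2 \le 2\|K\|_2/N$, and $\tau_N\rho_N\|K\|_2^2 \le 2\|K\|_2/(QN)$ and substitute into \eqref{eqn:unbddupper}--\eqref{eqn:vbound}. The only cosmetic difference is that the paper absorbs the $\rho_k/\sigma_k$ term via $\tau_k \le \sigma_k$ (which also rests on $Q \ge 1$), whereas you bound $\rho_N/\sigma_N \le 2Q\|K\|_2/N$ directly; your explicit remark that the third branch of the $\max$ in \eqref{eqn:unbddparamcond} exists precisely for this absorption is accurate.
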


This result can be interpreted as follows.
Theorem \ref{thm:unbddgen} and Corollary \ref{cor:unbddgen} state that for every pair of positive scalars $(\rho,\varepsilon)$, Algorithm \eqref{eqn:accgen} generates $(v^N,\epsilon_N)$ such that $\|v^N\|\le\rho$ and $\epsilon_N \le\varepsilon$ (see \eqref{eqn:unbddupper}, \eqref{eqn:vbound}, \eqref{eqn:epsfbound}, and \eqref{eqn:vfbound}) for a sufficiently large $N$. The associated pair $(x^N,y^N)$ is called a $(\rho,\varepsilon)$-saddle point of the unperturbed saddle point problem \eqref{eqn:saddlepoint} \citep[][Definition 3.10]{monteiro2011complexity}. 
With this notion, the following proposition can be stated.
\begin{proposition}\label{prop:approxopt}
Under the assumptions of Theorem \ref{thm:unbddgen} and Corollary \ref{cor:unbddgen}, there exists a vector $w^N=(w_x^N,w_y^N)$ such that $w^N \in T_{\epsilon^N}(x^N,y^N)$ and $\|w^N\|\le\rho+\sqrt{4L\varepsilon}$ for some constant $L>0$,  
where 
\[
T_{\varepsilon} = \begin{bmatrix}
		\nabla f & K^T \\
		-K  & \partial_{\varepsilon} h^*
	\end{bmatrix}.
\] 
Here, $\partial_{\varepsilon}h^*$ is the $\varepsilon$-subgradient of $h^*$ defined as
$\partial_{\varepsilon}h^*(y) = \{ g: h^*(y') \ge h^*(y) + 
		\langle y' - y, g \rangle - \varepsilon, \forall y' \in \mathbb{R}^l \}, \quad \forall y \in \mathbb{R}^l$.
\end{proposition}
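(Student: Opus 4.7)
My plan is to extract the desired $w^N$ directly from the perturbation vector $v^N$ and the exact gradient $\nabla f(x^N)$, by observing that the bound $\tilde{\mathcal{G}}(z^N,v^N)\le\varepsilon$ guaranteed by Theorem~\ref{thm:unbddgen} and Corollary~\ref{cor:unbddgen} is equivalent, via convex conjugacy, to a pair of $\varepsilon$-subgradient inclusions, one of which can be upgraded to a near-exact gradient relation thanks to the $L_f$-Lipschitzness of $\nabla f$.

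First I would unpack the perturbed gap. Writing $v=(v_x,v_y)$ and using $\mathcal{L}(x,y)=f(x)+\langle Kx,y\rangle-h^{\ast}(y)$, the supremum in \eqref{eqn:pgap} separates in $x$ and $y$, and after the cross terms $\pm\langle K\tilde{x},\tilde{y}\rangle$ cancel, one arrives at
\[
\tilde{\mathcal{G}}(\tilde{z},v)=\bigl[f(\tilde{x})+f^{\ast}(v_x-K^{T}\tilde{y})-\langle v_x-K^{T}\tilde{y},\tilde{x}\rangle\bigr]+\bigl[h(K\tilde{x}+v_y)+h^{\ast}(\tilde{y})-\langle K\tilde{x}+v_y,\tilde{y}\rangle\bigr].
\]
Both brackets are non-negative by Fenchel-Young (using $h^{\ast\ast}=h$), and their sum is at most $\varepsilon$, so each is at most $\varepsilon$. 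The Fenchel-Young characterization of $\varepsilon$-subgradients then gives
\[
v_x^N-K^{T}y^N\in\partial_{\varepsilon}f(x^N),\qquad v_y^N+Kx^N\in\partial_{\varepsilon}h^{\ast}(y^N).
\]

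Next I would upgrade the first inclusion using smoothness. For any $g\in\partial_{\varepsilon}f(x)$, combining $f(y)\ge f(x)+\langle g,y-x\rangle-\varepsilon$ with the descent lemma $f(y)\le f(x)+\langle\nabla f(x),y-x\rangle+(L_f/2)\|y-x\|_2^2$ and minimizing over $y$ in the direction of $g-\nabla f(x)$ yields the standard estimate $\|g-\nabla f(x)\|_2\le\sqrt{2L_f\varepsilon}$. Applied to $g=v_x^N-K^{T}y^N$, this gives $\|v_x^N-K^{T}y^N-\nabla f(x^N)\|_2\le\sqrt{2L_f\varepsilon}$. I then set $w_x^N:=\nabla f(x^N)+K^{T}y^N$ and $w_y^N:=v_y^N$; the second inclusion reads $w_y^N+Kx^N\in\partial_{\varepsilon}h^{\ast}(y^N)$, so $w^N\in T_{\varepsilon}(x^N,y^N)$ by the definition of $T_{\varepsilon}$. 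Writing $w_x^N=v_x^N-[v_x^N-K^{T}y^N-\nabla f(x^N)]$ and using $\|v_x^N\|_2\le\|v^N\|_2$,
\[
\|w^N\|_2^2\le\|v_x^N\|_2^2+2\|v_x^N\|_2\sqrt{2L_f\varepsilon}+2L_f\varepsilon+\|v_y^N\|_2^2\le\bigl(\|v^N\|_2+\sqrt{2L_f\varepsilon}\bigr)^2,
\]
so $\|w^N\|_2\le\|v^N\|_2+\sqrt{2L_f\varepsilon}\le\rho+\sqrt{4L\varepsilon}$ with the constant $L:=L_f/2$.

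I do not expect a serious obstacle. The only conceptual step is recognizing the clean decomposition of $\tilde{\mathcal{G}}(\tilde{z},v)$ into two Fenchel-Young defects; once this is in hand, the proof reduces to the textbook smoothness-to-subgradient conversion together with elementary norm manipulations.
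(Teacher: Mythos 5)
Your proof is correct, but it takes a genuinely different route from the paper's. The paper disposes of this proposition in a single line, citing Definition 3.4 and Propositions 3.5, 3.6, and 3.13 of \citet{monteiro2011complexity}: there the pair $(v^N,\epsilon_N)$ is fed into the general machinery of $\varepsilon$-enlargements of maximal monotone operators, with the $(\rho,\varepsilon)$-saddle-point property translated into an $\varepsilon$-subdifferential inclusion for the saddle function and the bound $\sqrt{4L\varepsilon}$ imported from the cited estimate on the $\varepsilon$-enlargement of a Lipschitz map. Your argument reconstitutes exactly those ingredients from scratch. The separation of $\tilde{\mathcal{G}}(z^N,v^N)$ into the two Fenchel--Young defects $f(\tilde x)+f^*(v_x-K^T\tilde y)-\langle v_x-K^T\tilde y,\tilde x\rangle$ and $h(K\tilde x+v_y)+h^*(\tilde y)-\langle K\tilde x+v_y,\tilde y\rangle$ is a correct computation: the cross terms $\pm\langle K\tilde x,\tilde y\rangle$ cancel as you say, and the supremum over $Z=\mathbb{R}^p\times\mathbb{R}^l$ does separate, which is legitimate here since Theorem \ref{thm:unbddgen} and Corollary \ref{cor:unbddgen} concern precisely the unbounded case. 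Each defect is nonnegative, so each is at most $\epsilon_N$, and the Fenchel--Young characterization of the $\varepsilon$-subdifferential (valid since $h$ is closed, proper, convex, so $h^{**}=h$ and $x\in\partial_\varepsilon h^*(y)\iff y\in\partial_\varepsilon h(x)$) yields your two inclusions. The upgrade $\|g-\nabla f(x)\|_2\le\sqrt{2L_f\varepsilon}$ for $g\in\partial_\varepsilon f(x)$ is the standard smoothness estimate and is applied correctly, and the norm bookkeeping $\|w^N\|_2\le\|v^N\|_2+\sqrt{2L_f\varepsilon}$ is right because $\|v^N_x\|_2^2+\|v^N_y\|_2^2=\|v^N\|_2^2$. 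What each approach buys: the paper's citation is short and situates the result inside the hybrid proximal extragradient literature, while your proof is self-contained, avoids the enlargement formalism entirely, and makes the constant explicit---indeed slightly sharper, since taking $L=L_f/2$ gives $\sqrt{4L\varepsilon}=\sqrt{2L_f\varepsilon}$, whereas the statement only asks for some $L>0$.
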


%it can be shown that there is a vector $w^N=(w_x^N,w_y^N)$ with $\|w^N\|\le\rho+\sqrt{4L\varepsilon}$, where $L$ is a constant, such that $w^N \in T_{\epsilon^N}(x^N,y^N)$, where $T_{\varepsilon}$ is the operator $T$ in (14) with the subgradient of $h^*$ replaced by its $\varepsilon$-subgradient. 
The condition $w^N \in T_{\epsilon_N}(x^N,y^N)$ in Proposition \ref{prop:approxopt} can be written as the following two inequalities
\begin{subequations}\label{eqn:epsoptimality}
\begin{align}
     0 & \ge -\langle \nabla f(x^N)+K^T y^N,x-x^N \rangle + \langle w_x^N,x-x^N \rangle - \epsilon_N,\;\;\forall x, \label{eqn:epsoptimality:a}\\
     h^*(y) &\ge h^*(y^N) + \langle K x^N,y-y^N \rangle + \langle w_y^N,y-y^N \rangle - \epsilon_N,\;\;\forall y.\label{eqn:epsoptimality:b} 
\end{align}
\end{subequations}
Comparing with the optimality conditions \eqref{eqn:optimality} for the unperturbed saddle point problem \eqref{eqn:saddlepoint}:
\begin{align*}
     0 &\ge -\langle \nabla f(x^\star)+K^T y^\star,x-x^\star \rangle, \quad\forall x, \\
     h^*(y) &\ge h^*(y^\star) + \langle K x^\star,y-y^\star \rangle, \quad\forall y,
\end{align*}
we see that the sum of the last two terms in each right-hand side of \eqref{eqn:epsoptimality:a} and \eqref{eqn:epsoptimality:b} is the error of the approximate solution $(x^N,y^N)$. Indeed, in the unit ball centered at $(x^N,y^N)$, each error is bounded by $\rho+\sqrt{4L\varepsilon}+\varepsilon$, which can be made arbitrarily small since the choice of $(\rho,\varepsilon)$ is free. In this sense, for large $N$, $(x^N,y^N)$ is a ``nearly optimal'' primal-dual solution.

\section{Stochastic optimal acceleration}\label{sec:stoc}
\subsection{Algorithm}
In large-scale (``big data'') applications, it is often the case that even the first-order information on the objective of \eqref{eqn:primal} or \eqref{eqn:saddlepoint} cannot be obtained exactly. Such settings can be modeled by a stochastic oracle, which provides unbiased estimators of the first-order information. To be precise, at the $k$-th iteration suppose the oracle returns the stochastic gradient $(\hat{\mathcal{F}}(\tilde{x}^k), \hat{\mathcal{K}}_x (\tilde{x}^k), \hat{\mathcal{K}}_y (\tilde{y}^k))$ independently from the previous iteration, such that 
%In this subsection, we present an algorithm to solve Problem \eqref{eqn:saddlepoint} by a class of iterative algorithms similar to \eqref{eqn:accgen} which acquire the approximate solutions via a subsequent calls to a stochastic oracle($\mathcal{SO}$). On the $t$-th call, the oracle returns the stochastic gradient $(\hat{\mathcal{F}} (\tilde{x}^k), \hat{\mathcal{K}}_x (\tilde{x}^k), \hat{\mathcal{K}_y} (\tilde{y}^k)) \equiv (\mathcal{F} (\tilde{x}^k, \xi_{k}), \mathcal{K}_x (\tilde{x}^k, \xi_{k}), \mathcal{K}_y (\tilde{y}^k, \xi_{k}))$, where $\{\xi_k\}_{k \ge 1}$ is a sequence of i.i.d. random variables independent of $(\tilde{x}^k,\tilde{y}^k)$. Further assume that 
\begin{align}\label{eqn:expect}
\begin{split}
\expect[\hat{\mathcal{F}}(\tilde{x}^k)] = \nabla f(\tilde{x}^k), \;\; \expect \left[\begin{pmatrix}- \hat{\mathcal{K}}_x (\tilde{x}^k) \\ \hat{\mathcal{K}}_y (\tilde{y}^k)\end{pmatrix} \right] = \begin{pmatrix}-K\tilde{x}^k \\ K^T \tilde{y}^k \end{pmatrix}, \\
\expect[\hat{\mathcal{A}}(\tilde{x}^k)] = A\tilde{x}^k,\;\; \text{and} \;\; \expect[\hat{\mathcal{B}}(\tilde{y}^k)] = B^T \tilde{y}^k.
\end{split}
\end{align}
We further assume that the variance of these estimators are uniformly bounded, i.e., 
\begin{align}\label{eqn:A1}
\begin{split}
\expect[\|\hat{\mathcal{F}}(\tilde{x}^k) - \nabla f(\tilde{x}^k) \|^2] \le \chi_{x, f}^2, \;\; \expect[\|\hat{\mathcal{K}}_x(\tilde{x}^k)-K \tilde{x}^k\|^2] \le \chi_y^2, \; \expect[\|\hat{\mathcal{K}}_y(\tilde{y}^k) - K^T \tilde{y}^k\|^2] \le \chi_{x, K}^2, \\
\expect[\|\hat{\mathcal{A}}(\tilde{x}^k)-A \tilde{x}^k\|^2] \le \chi_A^2 \text{ and } \expect[\|\hat{\mathcal{B}}(\tilde{y}^k)-B^T \tilde{y}^k\|^2] \le \chi_B^2.
\end{split}
\end{align}
For notational convenience, we define $\chi_x := \sqrt{\chi_{x, f}^2 + \chi_{x, K}^2}$. 

We consider the following stochastic variant of \eqref{eqn:accgen}:
\begin{align}\label{eqn:stocgen}
\begin{split}
\overline{u}_k &= \hat{\mathcal{K}}_x (\tilde{x}^k) - \theta_k \hat{\mathcal{A}}(\tilde{x}^k - \tilde{x}^{k-1}) \\
\overline{v}_k &= \hat{\mathcal{K}}_y (\tilde{y}^k+\frac{\theta_k\tau_{k-1}}{\tau_k}) + \hat{\mathcal{B}}\left(\left(\frac{\tau_{k-1}}{\tau_k}-1 \right) (\tilde{y}^k - \tilde{y}^{k-1})\right)\\
\tilde{x}^k_{md} &= (1-\rho_k)x^k +\rho_k\tilde{x}^k  \\
\tilde{u}^{k+1} &= \overline{u}_k - \tau_k(\hat{\mathcal{K}_x} + \hat{\mathcal{A}}) (\hat{\mathcal{F}}(\tilde{x}^k_{md}) + \overline{v}_{k}) \\
\tilde{y}^{k+1} &= \prox_{\sigma_k h^*} (\tilde{y}^k + \sigma_k \tilde{u}^{k+1}) \\
\tilde{v}^{k+1} &= \hat{\mathcal{K}}_y (\tilde{y}^{k+1}) +\hat{\mathcal{B}}(\tilde{y}^{k+1}-\tilde{y}^k - \theta_k (\tilde{y}^k - \tilde{y}^{k-1})) \\
\tilde{x}^{k+1} &= \tilde{x}^k - \tau_k (\hat{\mathcal{F}}(\tilde{x}^k_{md}) +  \tilde{v}^{k+1}) \\
x^{k+1} &= (1 - \rho_k) x^k + \rho_k\tilde{x}^{k+1} \\
y^{k+1} &= (1 - \rho_k) y^k + \rho_k\tilde{y}^{k+1},
\end{split}
\end{align}
which can be considered a generalization of the stochastic variant of \eqref{eqn:chen} by \citet{chen2014optimal}. 
The optimal rate of convergence of solving \eqref{eqn:saddlepoint} stochastically is known to be $O \left(\frac{L_f}{N^2} + \frac{\|K\|_2}{N} + \frac{\chi_x +\chi_y}{\sqrt{N}} \right)$ in terms of the expected duality gap $\expect[\mathcal{G}^\star]$ \citep{chen2014optimal}.  
In the sequel, we show that Algorithm \eqref{eqn:stocgen} achieves this rate. 

%\paragraph{Stochastic Algorithm}
\subsection{Convergence analysis}

%\textbf{Bounded Case.}
We obtain the following results for Algorithm \eqref{eqn:stocgen} when $Z$ is bounded. %Note that the following summarizes the convergnece properties of \eqref{eqn:stocgen} when $Z$ is bounded. 
Note part \ref{thm:stocbdd:2} of Theorem \ref{thm:stocbdd} is strengthened under the tail assumption
\begin{align}\label{eqn:A2}
\begin{split}
\expect \left[\exp(\|\nabla f(x)-\hat{\mathcal{F}}(x) \|^2/ \chi_{x, f}^2) \right] &\le \exp(1) \\
\expect \left[\exp (\|Kx - \hat{\mathcal{K}}_x (x)\|^2/\chi_y^2) \right] &\le \exp(1) \\
\expect \left[\exp (\|K^T y - \hat{\mathcal{K}}_y (y)\|^2/ \chi_{x, K}^2 )\right] & \le \exp(1).
\end{split}
\end{align}
Observe that \eqref{eqn:A2} implies \eqref{eqn:A1} by Jensen's inequality.

\begin{theorem}\label{thm:stocbdd}
Assume that \eqref{eqn:bdd} holds, for some $\Omega_X$, $\Omega_Y$ $>0$. Also suppose that for all $k \ge 1$, the parameters $\rho_k$, $\theta_k$, $\tau_k$, and $\sigma_k$ in \eqref{eqn:stocgen} satisfy \eqref{eqn:rhocondgen}, \eqref{eqn:thetacondbddgen}, 
\begin{subequations}\label{eqn:stocparam}
\begin{align}
\frac{s-q}{\tau_k} - L_f \rho_k - \frac{\|A\|_2^2 \sigma_k}{r} & \ge 0, \label{eqn:stoccond1}\\ 
\frac{t-r}{ \sigma_k}- \tau_k\left(2 \|K+A\|_2\|K+B\|_2 + \frac{\|B\|_2^2}{q}\right) & \ge 0 \label{eqn:stoccond2}
\end{align}
\end{subequations}
for some $q$, $r$, $s$, $t \in (0,1)$. Then the following holds.
\begin{enumerate}[label=(\roman*)]
\item Under \eqref{eqn:A1}, we have
$\expect [\mathcal{G}^\star(z^{k+1})] \le \mathcal{Q}_0(k)$
for all $k \ge 1$,  
where 
\begin{align}\label{thm:stocbdd:1}
\mathcal{Q}_0 (k) & := \frac{\rho_k}{\gamma_k} \left(\frac{2\gamma_k}{\tau_k} \Omega_X^2 + \frac{2 \gamma_k}{\sigma_k} \Omega_Y^2 \right) \\
& \quad + \frac{\rho_k}{2\gamma_k} \sum_{i=1}^k \left(\frac{(2-s) \tau_i \gamma_i}{1-s} (\chi_x^2 + \chi_B^2) 
+\frac{(2-t) \sigma_i \gamma_i}{1-t}(\chi_y^2 + \chi_A^2 + \tau_k^2 \|K+A\|_2^2 (\chi_x^2 + \chi_B^2)) \right) \nonumber
\end{align}
\item Suppose $A=-K$ and $B=bK$, then under the assumption \eqref{eqn:A2}, 
we have
\begin{align}\label{thm:stocbdd:2} 
\prob (\mathcal{G}^\star(z^{k+1})>\mathcal{Q}'_0(k) + \lambda \mathcal{Q}_1(k)) \le 3 \exp(-\lambda^2/3)+ 3 \exp(-\lambda), 
\end{align}
for all $\lambda>0$ and $t \ge 1$, 
where
\begin{align}
\mathcal{Q}'_0 (k) & := \frac{\rho_k}{\gamma_k} (\frac{2\gamma_k}{\tau_k} \Omega_X^2 + \frac{2 \gamma_k}{\sigma_k} \Omega_Y^2 ) + \frac{\rho_k}{2 \gamma_k} \sum_{i=1}^k (\frac{(2-s) \tau_i \gamma_i}{1-s} \chi_x^2 +\frac{(2-t) \sigma_i \gamma_i}{1-t}\chi_y^2  ), \\
\mathcal{Q}_1 (k) &:= \frac{\rho_k}{\gamma_k} (\sqrt{2}\chi_x \Omega_X+ \chi_y \Omega_Y ) \sqrt{2 \sum_{i=1}^k \gamma_i^2 } + \frac{\rho_k}{2 \gamma_k}\sum_{i=1}^k (\frac{(2-s) \tau_i \gamma_i}{1-s}\chi_x^2  + \frac{(2-t) \sigma_i \gamma_i}{1-t} \chi_y^2).
\end{align}
\end{enumerate}
\end{theorem}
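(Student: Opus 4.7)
The plan is to mirror the deterministic analysis of Theorem \ref{thm:bddgen} while carefully tracking the stochastic perturbations. Introduce the martingale differences
\[
\Delta_k^f := \hat{\mathcal{F}}(\tilde{x}_{md}^k) - \nabla f(\tilde{x}_{md}^k),\quad
\Delta_k^{Kx} := \hat{\mathcal{K}}_x(\tilde{x}^k) - K\tilde{x}^k,\quad
\Delta_k^{Ky} := \hat{\mathcal{K}}_y(\tilde{y}^k) - K^T\tilde{y}^k,
\]
and analogous $\Delta_k^A$, $\Delta_k^B$. Rewriting \eqref{eqn:stocgen} as \eqref{eqn:accgen} plus these zero-mean perturbations, I would run the one-step identity from the proof of Theorem \ref{thm:bddgen} verbatim, which after rearrangement yields an inequality of the form
\[
2\rho_k\,\mathcal{G}(z^{k+1},z) \;\le\; \text{(deterministic telescoping bracket)} \;+\; \underbrace{2\rho_k\langle \Phi_k,\,z^{k+1}-z\rangle}_{\text{linear noise}} \;+\; \underbrace{\rho_k\,\mathcal{R}_k}_{\text{quadratic noise}},
\]
where $\Phi_k$ collects the noise components multiplied by iterates, and $\mathcal{R}_k$ collects the squared noise magnitudes arising from the stochastic analogs of the cross terms $\langle K+A,\cdot\rangle$, $\langle B,\cdot\rangle$ in steps \eqref{eqn:accgen:d}, \eqref{eqn:accgen:f}.

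Next I would apply Young's inequality to every cross term using the \emph{additional} slack that \eqref{eqn:stoccond1}--\eqref{eqn:stoccond2} provide over the deterministic conditions \eqref{eqn:condgen1}--\eqref{eqn:condgen2}: specifically, the gap $s-q$ in $\tau_k^{-1}$ absorbs the squared primal-gradient and $B$-noise terms, while $t-r$ in $\sigma_k^{-1}$ absorbs the $A$-noise and $\hat{\mathcal K}_y$ noise. This is precisely why the two extra free constants $s,t\in(0,1)$ appear in \eqref{eqn:stocparam}. After absorption, the surviving quadratic noise collapses into the coefficient pattern $\tfrac{(2-s)}{1-s}\chi_x^2+\chi_B^2$ and $\tfrac{(2-t)}{1-t}(\chi_y^2+\chi_A^2+\tau_k^2\|K+A\|_2^2(\chi_x^2+\chi_B^2))$ visible in $\mathcal{Q}_0(k)$. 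Multiplying by the weights $\gamma_k$ (defined consistently with $\rho_k$, as in the deterministic case) and telescoping over $i=1,\ldots,k$ gives a pointwise-in-$z$ bound; taking $\sup_{z\in Z}$ and then $\expect$ kills the martingale linear terms by \eqref{eqn:expect} and bounds the quadratic ones by \eqref{eqn:A1}, producing part (i).

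For part (ii), the restriction $A=-K$, $B=bK$ causes $K+A=0$, which kills every $\hat{\mathcal A}$-dependent and every $(K+A)$-coupled noise contribution; this is exactly what reduces $\mathcal{Q}_0$ to the simpler $\mathcal{Q}_0'$ and what keeps only $\chi_x^2$ and $\chi_y^2$ in the residual. The remaining task is a high-probability bound on the stochastic terms. I would split them into two classes: (a) the martingale sum $\sum_{i=1}^k \gamma_i \langle \Delta_i,\, z^\star-z^i\rangle$, which under the sub-Gaussian tails \eqref{eqn:A2} has conditionally sub-Gaussian increments bounded by $\chi_x\Omega_X$ or $\chi_y\Omega_Y$; an Azuma/Bernstein-type inequality for sub-Gaussian martingale differences (e.g., Lemma 2 of Lan--Nemirovski--Shapiro) yields $\prob(\cdot > \lambda\sqrt{2\sum_i\gamma_i^2}\,) \le \exp(-\lambda^2/3)$; and (b) the quadratic residuals $\sum_i \gamma_i \tau_i \|\Delta_i^{f/Ky}\|^2$, whose $\chi^2$-type tails under \eqref{eqn:A2} give a one-sided Markov bound of the form $\prob(\cdot > (1+\lambda)\expect[\cdot]) \le \exp(-\lambda)$. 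Summing the three independent tail events across the gradient, $\hat{\mathcal K}_x$, and $\hat{\mathcal K}_y$ components and using a union bound produces the $3e^{-\lambda^2/3}+3e^{-\lambda}$ of \eqref{thm:stocbdd:2}.

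The main obstacle will be bookkeeping: with five distinct noise streams and the coupled $\bar u_k$, $\bar v_k$, $\tilde u^{k+1}$, $\tilde v^{k+1}$ updates, one must confirm that a single choice of $(q,r,s,t)\in(0,1)^4$ simultaneously renders all Young-absorption coefficients nonnegative (this is why \eqref{eqn:stoccond1}--\eqref{eqn:stoccond2} have the precise coefficients shown) and that the extrapolation-weight identities of Theorem \ref{thm:bddgen} propagate unchanged under the additive noise, so that the telescoping sum still collapses cleanly into $\mathcal{Q}_0(k)$.
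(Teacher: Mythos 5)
There is a genuine gap at the single most delicate step of your part (i). Your absorption mechanism is right---the retained fractions $1-s$ and $1-t$ of the prox quadratics absorb the squared noise via Young's inequality, which is exactly why \eqref{eqn:stoccond1}--\eqref{eqn:stoccond2} carry the extra constants $s,t$---but the claim that ``taking $\sup_{z\in Z}$ and then $\expect$ kills the martingale linear terms by \eqref{eqn:expect}'' is false. After the Young step the surviving linear noise is $\sum_{i=1}^k \gamma_i\langle \Delta^i, z-z^i\rangle$, and since $\mathcal{G}^\star$ is a supremum over $z$, the supremizing $z$ is a random point depending on the entire trajectory; expectation and supremum do not commute, and $\expect\bigl[\sup_{z\in Z}\sum_i \gamma_i\langle \Delta^i, z\rangle\bigr]$ is of order $\Omega\chi\sqrt{\sum_i\gamma_i^2}>0$, not zero. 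The paper closes this hole with a ghost-iterate device (Lemma \ref{lem:delta}, imported from \citet{chen2014optimal}): introduce the auxiliary sequence $z_v^{i+1}$ of \eqref{eqn:vdef}, split $\langle \Delta^i, z-z^i\rangle = \langle \Delta^i, z_v^i - z^i\rangle + \langle -\Delta^i, z_v^i - z\rangle$, note that the first summand is a genuine martingale difference (both $z_v^i$ and $z^i$ are measurable with respect to the past while $\Delta^i$ is independent of it, so its expectation vanishes uniformly in $z$), and bound the second summand \emph{pathwise, uniformly in $z$}, at the cost of an extra $\mathcal{D}_k(z,\tilde z_v^{[k]}) \le \frac{\gamma_k}{\tau_k}\Omega_X^2 + \frac{\gamma_k}{\sigma_k}\Omega_Y^2$ plus further quadratic noise. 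This is precisely the origin of the factor $2$ in the leading term $\frac{2\gamma_k}{\tau_k}\Omega_X^2 + \frac{2\gamma_k}{\sigma_k}\Omega_Y^2$ of \eqref{thm:stocbdd:1}; compare the deterministic bound \eqref{eqn:bddupper}, which has factor $1$. Your sketch would produce factor $1$---a sign the linear noise has been discarded rather than paid for.

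The same gap propagates into your part (ii): an Azuma-type bound on $\sum_i \gamma_i\langle \Delta^i, z^\star - z^i\rangle$ with a \emph{fixed} $z^\star$ does not control the supremum defining $\mathcal{G}^\star$. With the ghost iterate, the sub-Gaussian martingale argument is instead applied to $\sum_i \gamma_i\langle \Delta^i, z_v^i - z^i\rangle$, whose increments are bounded via \eqref{eqn:bdd} since both sequences live in the bounded set; this is what produces the $\sqrt{2\sum_{i=1}^k\gamma_i^2}$ factor in $\mathcal{Q}_1(k)$, after which your Azuma-plus-exponential-Markov-plus-union-bound accounting for $3e^{-\lambda^2/3}+3e^{-\lambda}$ matches the paper (which defers this step to Theorem 3.1 of \citet{chen2014optimal}). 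A smaller imprecision: under $A=-K$, $B=bK$, the disappearance of $\chi_A$, $\chi_B$ and of the $\tau_k^2\|K+A\|_2^2$ inflation is not automatic from $K+A=0$---$\hat{\mathcal{A}}$ is still a noisy oracle for $-K$---but follows from rewriting \eqref{eqn:stocgen} so that each iteration makes a \emph{single} combined call $\hat{\mathcal{K}}_x(\tilde x^k + \theta_k(\tilde x^k - \tilde x^{k-1}))$ and similarly for $\hat{\mathcal{K}}_y$ (Lemma \ref{lem:deltabound}). With the ghost-iterate lemma added and this rearrangement made explicit, your plan becomes essentially the paper's proof.
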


\begin{corollary}\label{cor:stocbdd}
Assume condition \eqref{eqn:bdd} holds. In Algorithm \eqref{eqn:stocgen}, if $N \ge 1$ is given, $A=-K$, $\|B\|_2 \le b\|K\|_2$,   and the parameters are set to 
\begin{align}\label{eqn:stocbddparams}
\rho_k & = \frac{2}{k+1},\;\;\theta_k = \frac{k-1}{k},\;\; \tau_k = \frac{\Omega_X k}{2P L_f \Omega_X + Q \|K\|_2 \Omega_Y (N-1) + \chi_x N \sqrt{N-1}}, \\
\sigma_k &= \frac{\Omega_Y k }{\|K\|_2 \Omega_X (N-1) + \chi_y N \sqrt{N-1}}
\end{align}
where $P$ and $Q$ satisfies 
\begin{align}\label{eqn:stocbddparamcond}
P = \frac{1}{s-q}, \;\; Q \ge \max\left\{\frac{1}{r(s-q)}, \frac{b^2/q}{t-r}\right\}, 
\end{align}
the following holds.

\begin{enumerate}[label=(\roman*)]
\item Under assumption \eqref{eqn:A1}, we have
		$\expect[\mathcal{G}^\star(z^N)] \le \mathcal{C}_0(N)$,
 where 
\begin{align}
\begin{split}\label{eqn:C0}
\mathcal{C}_0(N) &= \frac{8 P L_f \Omega_X^2}{N(N-1)} + \frac{4 \|K\|_2 \Omega_X \Omega_Y (Q+1)}{N} + \frac{4 \chi_x \Omega_X + 4 \chi_y \Omega_Y}{\sqrt{N-1}} 
\\
& \quad 
+ \frac{(2-r) \Omega_X \chi_x}{3 (1-r)  \sqrt{N-1}} + \frac{(2-s) \Omega_Y \chi_y}{3(1-s) \sqrt{N-1}}.
\end{split}
\end{align}
\item Under assumption \eqref{eqn:A2}, then we have
\begin{align}
P(\mathcal{G}^\star(z^N) > \mathcal{C}_0(N) + \lambda \mathcal{C}_1(N)) \le 3 \exp(-\lambda^2/3) + 3 \exp(- \lambda),
\end{align}
for all $\lambda>0$, 
where 
\begin{align}
\begin{split}\label{eqn:C1}
\mathcal{C}_1(N) =\left( \frac{4}{\sqrt{3}} + \frac{2-r}{3(1-r)}\right) \frac{\Omega_X \chi_x}{\sqrt{N-1}} + \left(\frac{2\sqrt{2}}{\sqrt{3}} + \frac{2-s}{3 (1-s)}\right) \frac{\Omega_Y \chi_y}{\sqrt{N-1}}.
\end{split}
\end{align}
\end{enumerate}
\end{corollary}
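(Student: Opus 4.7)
The plan is to derive Corollary \ref{cor:stocbdd} as a direct instantiation of Theorem \ref{thm:stocbdd} under the specified parameter choices \eqref{eqn:stocbddparams}. The proof naturally splits into three stages: (a) verifying that the prescribed sequences $\{\rho_k\}$, $\{\theta_k\}$, $\{\tau_k\}$, $\{\sigma_k\}$ meet every hypothesis of Theorem \ref{thm:stocbdd}; (b) substituting these sequences into $\mathcal{Q}_0(k)$ to obtain $\mathcal{C}_0(N)$; and (c) doing the analogous substitution into $\mathcal{Q}_0'(k)$ and $\mathcal{Q}_1(k)$ to obtain the high-probability bound with $\mathcal{C}_1(N)$.

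For stage (a), I first check \eqref{eqn:rhocondgen}: with $\rho_k=2/(k+1)$, $\rho_{k+1}^{-1}-1=k/2$ and $\rho_k^{-1}\theta_{k+1}=\tfrac{k+1}{2}\cdot\tfrac{k}{k+1}=k/2$, so the identity holds. Next, since both $\tau_k$ and $\sigma_k$ in \eqref{eqn:stocbddparams} are linear in $k$, we have $\tau_{k-1}/\tau_k=\sigma_{k-1}/\sigma_k=(k-1)/k=\theta_k\le 1$, which verifies \eqref{eqn:thetacondbddgen}. Conditions \eqref{eqn:stoccond1}--\eqref{eqn:stoccond2} follow by bounding $\tau_k$ and $\sigma_k$ from above using $\tau_k\le k/(2PL_f+Q\|K\|_2(N-1))$ and $\sigma_k \le k/(\|K\|_2(N-1))$ together with $\rho_k\le 2/k$, then invoking the definitions of $P$ and $Q$ in \eqref{eqn:stocbddparamcond} (which control the three terms $L_f\rho_k$, $\|A\|_2^2\sigma_k/r$ on the $\tau_k$ side and $2\|K+A\|_2\|K+B\|_2+\|B\|_2^2/q$ on the $\sigma_k$ side, using $\|A\|_2=\|K\|_2$ since $A=-K$ and $\|B\|_2\le b\|K\|_2$). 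Here I would pick the natural weight sequence $\gamma_k=k$, which is the standard choice compatible with $\rho_k=2/(k+1)$ in this family of accelerated primal-dual methods.

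For stage (b), I substitute into \eqref{thm:stocbdd:1}. With $\gamma_k=k$, the deterministic part $\tfrac{\rho_k}{\gamma_k}(\tfrac{2\gamma_k}{\tau_k}\Omega_X^2+\tfrac{2\gamma_k}{\sigma_k}\Omega_Y^2)$ evaluated at $k=N-1$ (or $k+1=N$) gives, after plugging in the closed forms for $\tau_k$ and $\sigma_k$, the first two terms of $\mathcal{C}_0(N)$: the $8PL_f\Omega_X^2/[N(N-1)]$ term arising from the $L_f$ contribution and the $4\|K\|_2\Omega_X\Omega_Y(Q+1)/N$ term arising from the $\|K\|_2$ contributions in both $\tau_k^{-1}$ and $\sigma_k^{-1}$. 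For the variance-dependent summation $\tfrac{\rho_k}{2\gamma_k}\sum_{i=1}^k\bigl(\tfrac{(2-s)\tau_i\gamma_i}{1-s}\chi_x^2+\tfrac{(2-t)\sigma_i\gamma_i}{1-t}\chi_y^2\bigr)$, using $\tau_i\gamma_i=\tau_i i\le i^2/(\chi_x N\sqrt{N-1})$ and the elementary identity $\sum_{i=1}^{N-1} i^2 = O(N^3)$, I obtain the $1/\sqrt{N-1}$ terms in \eqref{eqn:C0}. The terms involving $\chi_A$, $\chi_B$, and the product $\tau_k^2\|K+A\|_2^2$ vanish because $A=-K$ forces $\chi_A=0$ and $\|K+A\|_2=0$, while $\chi_B$ is zero in this deterministic-$B$ regime (or is absorbed if we interpret the stochastic oracle as exact on $B$).

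Stage (c) is parallel: the constants in $\mathcal{Q}_1(k)$ behave the same way, and evaluating $\sqrt{2\sum_{i=1}^k\gamma_i^2}=\sqrt{2\sum_{i=1}^k i^2}=O(k^{3/2})$ together with $\rho_k/\gamma_k=2/[k(k+1)]$ produces the $1/\sqrt{N-1}$ scaling in \eqref{eqn:C1}, while the remaining summation contributes the $(2-r)/[3(1-r)]$ and $(2-s)/[3(1-s)]$ coefficients. The high-probability statement then follows immediately from part (ii) of Theorem \ref{thm:stocbdd}. The main obstacle I expect is bookkeeping: carefully matching the constants $P$, $Q$, $r$, $s$, $q$, $t$ so that \eqref{eqn:stoccond1}--\eqref{eqn:stoccond2} hold with equality-like tightness at the chosen $\tau_k,\sigma_k$, and verifying that the dominant-order coefficients in the closed-form $\mathcal{C}_0$, $\mathcal{C}_1$ arise precisely from the worst-case index $k=N-1$ rather than from the cumulative sum, so no extra $\log N$ factors appear.
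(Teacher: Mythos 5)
Your overall route is exactly the paper's: verify \eqref{eqn:rhocondgen}, \eqref{eqn:thetacondbddgen}, and \eqref{eqn:stoccond1}--\eqref{eqn:stoccond2} for the parameters \eqref{eqn:stocbddparams} (the paper does this with the same kind of bound you sketch, e.g.\ $\frac{s-q}{\tau_k}-\rho_k L_f-\frac{\|A\|_2^2\sigma_k}{r}\ge\frac{\|K\|_2\Omega_Y}{\Omega_X}\left((s-q)Q-\frac{1}{r}\right)\ge 0$ via \eqref{eqn:stocbddparamcond}), take $\gamma_k=k$ (which is actually forced by \eqref{eqn:gamma} once $\theta_k=(k-1)/k$, not merely a ``natural choice''), bound the variance sums using $\sum_{i=1}^{N-1}i^2\le N^2(N-1)/3$ --- note this sharp constant, rather than your $O(N^3)$, is what produces the exact coefficients $\frac{2-r}{3(1-r)}$, $\frac{2-s}{3(1-s)}$, $\frac{4}{\sqrt{3}}$, and $\frac{2\sqrt{2}}{\sqrt{3}}$ in \eqref{eqn:C0} and \eqref{eqn:C1} --- and then substitute into $\mathcal{Q}_0$, $\mathcal{Q}_0'$, $\mathcal{Q}_1$ at $k=N-1$, with part (ii) following directly from Theorem \ref{thm:stocbdd}(ii). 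All of this matches the paper's proof.

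The one genuine gap is your disposal of the $\chi_A$, $\chi_B$, and $\tau_k^2\|K+A\|_2^2$ terms in $\mathcal{Q}_0$. The term $\tau_k^2\|K+A\|_2^2(\chi_x^2+\chi_B^2)$ does vanish since $\|K+A\|_2=0$, but $A=-K$ does \emph{not} force $\chi_A=0$: $\chi_A$ bounds the variance of the oracle $\hat{\mathcal{A}}$, which can be noisy regardless of the value of the deterministic matrix $A$; likewise ``$\chi_B$ is zero in this deterministic-$B$ regime'' is an assumption the corollary never makes. The paper's actual mechanism is the second half of Lemma \ref{lem:deltabound}: when $A=-K$ and $B=bK$, the iteration \eqref{eqn:stocgen} can be algebraically rearranged so that each step makes a \emph{single} call to $\hat{\mathcal{K}}_x$ (at the extrapolated point $\tilde{x}^k+\theta_k(\tilde{x}^k-\tilde{x}^{k-1})$) and a single call to $\hat{\mathcal{K}}_y$, yielding $\expect[\|\Delta_{x,K}^i\|^2]\le\chi_{x,K}^2$ and $\expect[\|\Delta_y^i\|^2]\le\chi_y^2$ with no $\chi_A^2$ or $\chi_B^2$ contribution at all, i.e.\ \eqref{eqn:deltabound:simple}. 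Without this call-merging argument, your instantiation of $\mathcal{Q}_0(k)$ retains $\chi_A^2+\chi_B^2$ and does not reduce to the stated $\mathcal{C}_0(N)$. (Incidentally, the merging requires $B=bK$, which is stronger than the corollary's stated hypothesis $\|B\|_2\le b\|K\|_2$; your hedged remark about interpreting the oracle as exact on $B$ is thus probing a real imprecision in the statement, but an extra assumption is not a proof.)
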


%\textbf{Unbounded Case.}

When $Z$ is unbounded, we have the following theorem. 

\begin{theorem}\label{thm:stocunbdd}
Assume that $\{z^k\} = \{(x^k, y^k)\}$ is the sequence generated by \eqref{eqn:stocgen}. Further assume that the parameters $\beta_k$, $\theta_k$, $\tau_k$, and $\sigma_k$ in \eqref{eqn:stocgen} satisfy  \eqref{eqn:rhocondgen}, \eqref{eqn:thetacondunbdd}, and \eqref{eqn:stocparam}.
% \begin{subequations}\label{eqn:stocparam}
% \begin{align}
% \frac{s-q}{\tau_k} - L_f \rho_k - \frac{\|A\|_2^2 \sigma_k}{r} & \ge 0,\label{eqn:stoccond1}\\ 
% \frac{t - r}{ \sigma_k}- \tau_k\left(2 \|K+A\|_2\|K+B\|_2 + \frac{\|B\|_2^2}{q}\right) & \ge 0 \label{eqn:stoccond2}
% \end{align}
% \end{subequations}
for all $k \ge 1$ and some $q$, $r$, $s$, $t \in (0,1)$. Then there is a perturbation vector $v^{k+1}$ satisfying 
\begin{align}
\expect [\tilde{\mathcal{G}}(z^{k+1}, v^{k+1})] \le \frac{\rho_k}{\tau_k} \left[ \left( 6 +\frac{4q}{1-q} +\frac{4(r+1/2)}{1/2-r}\right) R^2 + \left( \frac{5}{2} + \frac{2q}{1-q} + \frac{2(r+1/2)}{1/2-r} \right) S^2\right]
\end{align}
for all $k \ge 1$. Furthermore, 
\begin{align}
\expect[\|v^{k+1}\|] & \le \frac{2 \rho_k \|\hat{x} - x^1\|}{\tau_k}+ \frac{2 \rho_k \|\hat{y}-y^1\|}{\sigma_k} + \sqrt{2R^2+S^2} [\frac{\rho_k(1+\mu)}{\tau_k} +(\nu + \sqrt{\frac{\sigma_1}{\tau_1}}) \frac{\rho_k}{\sigma_k} \\
&+ 2 \rho_k (\|A\|_2\mu + \|B\|_2\nu ) + 2 \tau_k \rho_k \|K+A\|_2\|K+B\|_2 \nu ] =: \epsilon_{k+1}
\end{align}
where $(\hat{x}, \hat{y})$ is a pair of solutions for \eqref{eqn:saddlepoint}, $R$, $\mu$, and $\nu$ are as defined in \eqref{eqn:Rdefgen}, and 
\begin{align}\label{eqn:Sdef}
S &:= \sqrt{\sum_{i=1}^k \frac{(2-s)\tau_i^2 (\chi_x^2+\chi_B^2)}{1-s} + \sum_{i=1}^k \frac{(2-t)\tau_i \sigma_i (\chi_y^2 + \chi_A^2 + \tau_k^2 \|K+A\|_2^2 (\chi_x^2 + \chi_B^2))}{1-t}}.
\end{align}
\end{theorem}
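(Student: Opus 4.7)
The plan is to lift the deterministic argument of Theorem~\ref{thm:unbddgen} to the stochastic setting by tracking the oracle errors at each iteration. Let $\Delta_f^k = \hat{\mathcal{F}}(\tilde{x}^k_{md}) - \nabla f(\tilde{x}^k_{md})$, and define $\Delta_{Kx}^k, \Delta_{Ky}^k, \Delta_A^k, \Delta_B^k$ analogously for $\hat{\mathcal{K}}_x,\hat{\mathcal{K}}_y,\hat{\mathcal{A}},\hat{\mathcal{B}}$. By \eqref{eqn:expect} each $\Delta^k$ is mean-zero conditional on the natural filtration, with second moment controlled by \eqref{eqn:A1}. First I would derive a per-iteration inequality by starting from the prox-optimality of $\tilde{y}^{k+1}$ and using convexity of $f$ and $h^*$ together with the three-point identity for the squared Euclidean distance. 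Schematically,
\begin{align*}
\mathcal{G}(\tilde{z}^{k+1},z)
& \le \tfrac{1}{2\tau_k}\bigl(\|\tilde{x}^k-x\|_2^2 - \|\tilde{x}^{k+1}-x\|_2^2\bigr)
    + \tfrac{1}{2\sigma_k}\bigl(\|\tilde{y}^k-y\|_2^2 - \|\tilde{y}^{k+1}-y\|_2^2\bigr) \\
& \quad + E_k(z) + \langle \eta^{k+1}, \tilde{z}^{k+1}-z\rangle + \Xi^k,
\end{align*}
where $E_k(z)$ is the deterministic extrapolation residual absorbed by \eqref{eqn:stoccond1}--\eqref{eqn:stoccond2}, $\eta^{k+1}$ is a linear combination of $\Delta^k_{\cdot}$ weighted by $\theta_k,\tau_k,\sigma_k$, and $\Xi^k$ collects the quadratic noise contributions. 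The slacker step-size conditions (with $s-q$ and $t-r$ in place of $1-q$ and $1-r$) are exactly what is reserved to absorb the stochastic cross terms via Young's inequality.

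Second, with weights $\gamma_k\propto 1/(\rho_k\tau_k)$ chosen so that \eqref{eqn:rhocondgen} and \eqref{eqn:thetacondunbdd} make $E_k$ telescope, I would multiply the per-iteration inequality by $\gamma_k$, sum over $i=1,\dots,k$, and use convexity of $\mathcal{G}(\cdot,z)$ together with \eqref{eqn:xag}--\eqref{eqn:yag} to replace $\sum\gamma_i\mathcal{G}(\tilde{z}^{i+1},z)$ by $(\sum\gamma_i)\mathcal{G}(z^{k+1},z)$. To avoid an uncontrolled linear term in $(x,y)$ when $Z$ is unbounded, define
$$
v^{k+1} = \Bigl(\sum_{i=1}^k \gamma_i\Bigr)^{-1}\sum_{i=1}^k \gamma_i\,\eta^{i+1},
$$
so the aggregated bound rearranges into $\tilde{\mathcal{G}}(z^{k+1},v^{k+1}) \le (\text{initial distance}) + (\text{aggregated noise})$.

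Third, taking expectations removes the linear martingale contribution $\expect\langle \eta^{i+1},z\rangle$, and $\sum\gamma_i\expect[\Xi^i]$ is bounded using \eqref{eqn:A1}; summing the variance contributions with weights $\tau_i^2$ and $\tau_i\sigma_i$ produces precisely the quantity $S^2$ of \eqref{eqn:Sdef}. The constants $6+4q/(1-q)+4(r+1/2)/(1/2-r)$ on $R^2$ and $5/2+2q/(1-q)+2(r+1/2)/(1/2-r)$ on $S^2$ arise from accumulating the Young-inequality coefficients across the primal, dual, and primal--dual coupling terms involving $K$, $A$, and $B$. For the bound on $\expect\|v^{k+1}\|_2$, I would apply the triangle inequality and Jensen's inequality componentwise to the defining sum of $v^{k+1}$; the terms proportional to $\|\hat x-x^1\|_2$ and $\|\hat y-y^1\|_2$ emerge from specializing the per-iteration inequality at $z=\hat{z}$ and discarding the nonnegative gap, which yields the \emph{a priori} bound $\expect\|\tilde z^k-\hat z\|_2^2 \lesssim R^2 + S^2$ needed to control the remaining stochastic factors.

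The main obstacle is the extrapolation--noise interaction. In \eqref{eqn:stocgen} the auxiliary variables $\overline{u}_k$ and $\overline{v}_k$ mix stochastic estimates of $K\tilde{x}^k, K^T\tilde{y}^k$ with stochastic estimates of $A(\tilde{x}^k-\tilde{x}^{k-1})$ and $B^T(\tilde{y}^k-\tilde{y}^{k-1})$, so oracle errors from iteration $k-1$ re-enter the bound at iteration $k$ and do not directly vanish in conditional expectation. A summation-by-parts rearrangement will be needed to re-pair each stale innovation with its fresh counterpart so that the martingale structure is restored and the telescoping cascade established deterministically in Theorem~\ref{thm:unbddgen} is preserved; the algebra must be carried out carefully enough that the coefficients absorbed into $R^2$ and $S^2$ match the sharp constants stated in the theorem.
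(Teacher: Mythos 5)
Your skeleton (per-iteration gap inequality with slack $s-q$, $t-r$ reserved for noise, weighted summation with $\gamma_k$, moment bounds assembling $S^2$, and the a priori bound at $z=\hat z$) matches the paper's, but the crux of the unbounded-domain argument has a genuine gap: your construction of the perturbation vector. You define $v^{k+1}$ as the $\gamma$-weighted average of the noise vectors $\eta^{i+1}$ alone. This fails on two counts. First, even in the deterministic case (Lemma \ref{lem:unbddgen}), $v^{k+1}$ must contain the \emph{deterministic} linear-in-$z$ leftovers: the telescoped distances $\|x-\tilde{x}^1\|_2^2-\|x-\tilde{x}^{k+1}\|_2^2$ expand into terms linear in $x$ (via \eqref{eqn:normid2}), and the residual coupling terms $-B^T(\tilde{y}^{k+1}-\tilde{y}^k)$, $A(\tilde{x}^{k+1}-\tilde{x}^k)$, and $(K+A)(K+B)^T(\tilde{y}^{k+1}-\tilde{y}^k)$ are likewise linear in $z$; without them in $v^{k+1}$ the supremum defining $\tilde{\mathcal{G}}(z^{k+1},v^{k+1})$ over $Z=\mathbb{R}^p\times\mathbb{R}^l$ is $+\infty$. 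Indeed the claimed bound on $\expect[\|v^{k+1}\|]$ contains $\frac{2\rho_k\|\hat{x}-x^1\|}{\tau_k}$ and the $\|A\|_2$, $\|B\|_2$, $\|K+A\|_2\|K+B\|_2$ terms, which a pure noise average cannot produce. Second, your claim that ``taking expectations removes the linear martingale contribution'' is not valid here: the supremum over $z$ and the expectation do not commute. If you cancel the $z$-dependence of the noise by putting $\sum_i\gamma_i\Delta^i$ into $v^{k+1}$, the residual is $\sum_i\gamma_i\langle\Delta^i,z^{k+1}-z^{i+1}\rangle$, which pairs each $\Delta^i$ with \emph{future-dependent} iterates, so its expectation does not vanish and is not controlled by your argument.

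The paper resolves exactly this with a device absent from your proposal: the auxiliary (virtual) sequence $z_v^{i+1}$ of \eqref{eqn:vdef} and Lemma \ref{lem:delta} (from \citet{chen2014optimal}). Writing $\langle-\Delta^i,z^i-z\rangle=\langle-\Delta^i,z^i-z_v^i\rangle+\langle-\Delta^i,z_v^i-z\rangle$, the first piece has zero expectation because $z^i$ and $z_v^i$ are measurable with respect to the past before $\Delta^i$ is drawn, while the second telescopes into $\mathcal{D}_k(z,\tilde{z}_v^{[k]})$ plus the pure noise quadratics that become $S^2$; the endpoint $\tilde{z}_v^{k+1}$ then enters the perturbation vector, which is why the paper's $v^{k+1}$ has the form $\frac{\rho_k}{\tau_k}(2\tilde{x}^1-\tilde{x}^{k+1}-\tilde{x}_v^{k+1})$ (note the factor $2$, reflected in the $\frac{2\rho_k}{\tau_k}$ and $\frac{2\rho_k}{\sigma_k}$ terms of the theorem). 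Your final worry about stale innovations from $\hat{\mathcal{A}}(\tilde{x}^k-\tilde{x}^{k-1})$ and $\hat{\mathcal{B}}(\cdot)$ is handled in the paper not by summation by parts but by Lemma \ref{lem:deltabound}: independence of separate oracle calls and a rearrangement of \eqref{eqn:stocgen} yield the variance bounds $\chi_x^2+\chi_B^2$ and $\chi_y^2+\chi_A^2+\tau_k^2\|K+A\|_2^2(\chi_x^2+\chi_B^2)$ appearing inside \eqref{eqn:Sdef}. Without the virtual-sequence step your proof cannot close, so the proposal as written does not establish the theorem.
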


\begin{corollary}\label{cor:stocunbdd}
In Algorithm \eqref{eqn:stocgen}, if $N$ is given, $A=-K$, $B=bK$, and the parameters are set to 
\begin{align}\label{eqn:stocunbddparams}
\rho_k = \frac{2}{k+1}, \;\; \theta_k = \frac{k-1}{k}, \;\; \tau_k = \frac{k}{\tau}, \;\; \sigma_k = \frac{k}{\|K\|_2 (N-1) + N \sqrt{N-1} \chi/\tilde{R}},
\end{align}
where 
\begin{align}
\tau = 2 P L_f +  Q \|K\|_2 (N-1) +  N \sqrt{N-1} \chi/\tilde{R}
\end{align}
for some $\tilde{R}>0$, where $\chi$ is defined by $\chi = \sqrt{\frac{2-s}{1-s} \chi_x^2 +  \frac{2-t}{1-t} \chi_y^2}$. 
Then for $P$ and $Q$ satisfying 
\begin{align}\label{eqn:stocunbddparamcond}
P = \frac{1}{s-q},\;\; Q \ge \max \left\{ \frac{1}{r(s - q)}, \frac{b^2}{q(t - r)}, 1\right\},
\end{align}
we have 
\begin{align*}
\expect[\|v^N\|] & \le \left(\frac{4PL_f}{N(N-1)} + \frac{2Q\|K\|_2}{N} + \frac{2 \chi/\tilde{R}}{\sqrt{N-1}} \right) \left(4R + \left(\sqrt{2}R + \frac{\tilde{R}}{\sqrt{3}}\right) (2 + \mu' + \nu')\right)\\ 
& \quad + \frac{2\|K\|_2}{N} (\sqrt{2}R + \tilde{R}/\sqrt{3}) (2 \mu' + 2b \nu'),
\end{align*}
and 
\begin{align*}
\epsilon_N & \le \left(\frac{4PL_f}{N(N-1)} + \frac{2 Q \|K\|_2}{N}+ \frac{2 \chi /\tilde{R}}{\sqrt{N-1}} \right) \left((6 + \frac{4q}{1-q}+\frac{4(r+1/2)}{1/2-r}) R^2 + \frac{(\frac{5}{2}+\frac{2q}{1-q}+\frac{2(r+1/2)}{1/2-r}) \tilde{R}^2}{3}\right). 
\end{align*}
\end{corollary}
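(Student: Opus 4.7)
The plan is to apply Theorem \ref{thm:stocunbdd} with the specific parameter choices \eqref{eqn:stocunbddparams} and show that the resulting bounds specialize to those claimed in the corollary. First I would verify that $\rho_k = 2/(k+1)$, $\theta_k = (k-1)/k$, $\tau_k = k/\tau$, and $\sigma_k = k/(\|K\|_2 (N-1)+N\sqrt{N-1}\chi/\tilde R)$ satisfy the hypotheses \eqref{eqn:rhocondgen}, \eqref{eqn:thetacondunbdd}, and \eqref{eqn:stocparam}. The first two are essentially immediate: $\rho_{k+1}^{-1}-1 = k/2 = \rho_k^{-1}\theta_{k+1}$, and since both $\tau_k$ and $\sigma_k$ are proportional to $k$, we have $\tau_{k-1}/\tau_k = \sigma_{k-1}/\sigma_k = (k-1)/k = \theta_k \le 1$. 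For the two inequalities in \eqref{eqn:stocparam}, after substituting $A=-K$, $\|B\|_2 \le b\|K\|_2$, and the explicit forms of $\tau_k$, $\sigma_k$, $\rho_k$, the inequalities reduce to bounds on the auxiliary scalar $\tau$ that are guaranteed by the definitions of $P$ and $Q$ in \eqref{eqn:stocunbddparamcond}, in direct parallel with the verification carried out for Corollary \ref{cor:unbddgen}.

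Second, I would evaluate the two bounds of Theorem \ref{thm:stocunbdd} at $k = N-1$. The controlling quantities are $\rho_{N-1}/\tau_{N-1}$ and $\rho_{N-1}/\sigma_{N-1}$. Using $\rho_k/\tau_k = 2\tau/(k(k+1))$ together with the definition of $\tau$, substitution at $k = N-1$ yields
\[
 \frac{\rho_{N-1}}{\tau_{N-1}} \;=\; \frac{4PL_f}{N(N-1)} + \frac{2Q\|K\|_2}{N} + \frac{2\chi/\tilde R}{\sqrt{N-1}},
\]
which is already the three-term rate advertised in the corollary. A fully analogous computation for $\rho_{N-1}/\sigma_{N-1}$ isolates the $\|K\|_2/N$ and $\chi/\sqrt{N-1}$ pieces that populate the bound for $\|v^N\|$.

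Third, and this will be the main obstacle, I would evaluate $S^2$ in \eqref{eqn:Sdef}. Since $A=-K$ kills the $\|K+A\|_2^2$ factor, what remains are sums of the shape $\sum_{i=1}^{N-1}\tau_i^2(\chi_x^2+\chi_B^2)$ and $\sum_{i=1}^{N-1}\tau_i\sigma_i(\chi_y^2+\chi_A^2)$. Because $\tau_i$ and $\sigma_i$ are both linear in $i$ with denominators of order $N$, the standard identity $\sum_{i=1}^{N-1} i^2 = O(N^3)$ shows that these sums grow like $(N-1)\cdot\tilde R^2/\chi^2$ (up to constants governed by $q,r,s,t,b$), giving $S = O(\tilde R\sqrt{N-1})$. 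Multiplying by the external factor $\rho_{N-1}/\tau_{N-1}$ in the $\epsilon$-bound then yields contributions of order $\chi\tilde R/\sqrt{N-1}$ and lower, which combine cleanly with the $R^2$ term into the form displayed in the corollary. The role of $\tilde R$ is exactly to balance the magnitude of $\|v^N\|$ against $\epsilon_N$, which is why it appears linearly in the final bounds.

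The principal bookkeeping challenge will be tracking the constants $\mu$, $\nu$ (and the modified $\mu'$, $\nu'$ appearing in the claim) along with the parameters $q,r,s,t,b,P,Q$, and ensuring every $N$-dependent term is absorbed into one of the three rate contributions $L_f/N^2$, $\|K\|_2/N$, or $\chi/\sqrt{N}$. A careful side-by-side comparison with the deterministic unbounded analysis (Corollary \ref{cor:unbddgen}) should serve as the template, with the additional noise terms $\chi_x^2,\chi_y^2,\chi_A^2,\chi_B^2$ producing precisely the extra $\chi/\sqrt{N}$ summand; collecting them yields the stated expressions for $\expect[\|v^N\|]$ and $\epsilon_N$.
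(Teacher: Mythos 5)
Your scaffolding is exactly the paper's route: check \eqref{eqn:rhocondgen}, \eqref{eqn:thetacondunbdd}, and \eqref{eqn:stocparam} (which, with $A=-K$, $\|B\|_2\le b\|K\|_2$, reduce to $(s-q)Q\ge 1/r$ and $Q(t-r)\ge b^2/q$, guaranteed by \eqref{eqn:stocunbddparamcond}); evaluate $\rho_{N-1}/\tau_{N-1}=\frac{2\tau}{N(N-1)}=\frac{4PL_f}{N(N-1)}+\frac{2Q\|K\|_2}{N}+\frac{2\chi/\tilde R}{\sqrt{N-1}}$; and substitute into Theorem \ref{thm:stocunbdd}. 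The genuine gap is your estimate of $S$. You claim the sums in \eqref{eqn:Sdef} grow like $(N-1)\tilde R^2/\chi^2$, ``giving $S=O(\tilde R\sqrt{N-1})$.'' This is wrong: $\tau_i=i/\tau$ with $\tau\ge N\sqrt{N-1}\,\chi/\tilde R$, so $\tau^2\gtrsim N^2(N-1)\chi^2/\tilde R^2$ scales like $N^3$, not $N^2$ (you appear to have dropped the $\sqrt{N-1}$ factor when squaring $\tau$). Correctly,
\[
S^2 \;\le\; \frac{N^2(N-1)}{3\tau^2}\left(\frac{2-s}{1-s}\chi_x^2+\frac{2-t}{1-t}\chi_y^2\right)\;\le\;\frac{\tilde R^2}{3},
\]
so $S\le\tilde R/\sqrt 3$ is a \emph{constant} independent of $N$ --- indeed, making $S$ constant is the entire purpose of inflating the step-size denominators in \eqref{eqn:stocunbddparams} by $N\sqrt{N-1}\chi/\tilde R$. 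Your claimed scaling is also internally inconsistent: multiplying $S^2=\Theta(\tilde R^2(N-1))$ by $\rho_{N-1}/\tau_{N-1}$, whose third term is $2\chi/(\tilde R\sqrt{N-1})$, produces a contribution of order $\chi\tilde R\sqrt{N-1}$, which \emph{diverges}, not the ``$\chi\tilde R/\sqrt{N-1}$ and lower'' you assert; with your $S$, the stated $\epsilon_N$ bound --- in which $\tilde R^2/3$ appears as a constant multiplying the rate factor --- cannot be obtained.

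Two smaller points. First, to get $\chi$ involving only $\chi_x$ and $\chi_y$ you must invoke the $A=-K$, $B=bK$ rearrangement in Lemma \ref{lem:deltabound} (the simplified bounds \eqref{eqn:deltabound:simple}, obtained by combining the oracle calls), which removes $\chi_A^2$ and $\chi_B^2$ from $S$ entirely; the choice $A=-K$ alone kills only the $\|K+A\|_2^2$ term, and the $\chi_A^2,\chi_B^2$ terms you retain would give constants incompatible with the stated definition of $\chi$. Second, the $\expect[\|v^N\|]$ bound needs $\sqrt{2R^2+S^2}\le\sqrt 2R+S\le\sqrt 2R+\tilde R/\sqrt 3$, again relying on the constant bound for $S$; with these corrections your plan goes through exactly as in the paper.
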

Therefore we obtain the desired order for both $\epsilon_N$ and $\expect[\|v_N\|]$.

%\input{sections/stoc}
% \input{sections/distributed}
%\input{sections/related.tex}
% moved to numerical.tex
\section{Numerical experiments}\label{sec:numerical}
In this section, we illustrate the actual convergence behavior of the %studied algorithms. 
algorithms generated by \eqref{eqn:generaliteration} and their accelerated variant \eqref{eqn:accgen}. 
In addition, we demonstrate the scalability of these algorithms by implementing a distributed version of \eqref{eqn:generaliteration}. %of the distributed algorithm in Section \ref{sec:algorithm} on a multi-GPU platform. 
%We implemented all of the studied algorithms in GPU, using CUDA. 
The experiment was conducted on a system with two Intel Xeon CPUs (E5-2680 v2 @2.80GHz) with eight Nvidia GTX 1080 GPUs with 8 GB of RAM each. 
%The distributed version was implemented using TensorFlow v1.2 \citep{tensorflow}

\subsection{Model problems}\label{sec:experiment}
\paragraph{Overlapping group lasso.}
We consider an overlapping group lasso problem with a quadratic loss  
$$
\min_{x} \frac{1}{2}\|b- \mathsf{A} x\|_2^2 + \lambda \sum_{j=1}^R \sqrt{|g_j|}\|x_{g_j}\|_2, 
$$
where $\mathsf{A} = [a_1, \cdots, a_n]^T$ is the data matrix, and $b = (b_1, \cdots, b_n)$ is the response vector. 
We generated a test dataset based on the methods in \citet{Chen:TheAnnalsOfAppliedStatistics:2012}.
We defined $R$ groups of $S$ adjacent variables, with 10 overlaps of adjacent groups. i.e., $g_j = \{90(j-1)+1, \dotsc, 90j+10\}$, thus $p=R(S-10)+10$\deleted{, with $n=5,000$}.
 We set $x_j = (-1)^j \exp (-(j-1)/100)$ for $j = 1, \dotsc, p$. We sampled each element of $\mathsf{A}$ from the standard normal distribution, and added Gaussian noise $\epsilon \sim \mathcal{N}(0,1)$ to $\mathsf{A}x$ to generate $b = \mathsf{A}x + \epsilon$. For the convergence experiments, we used $R=100$ and $S=100$, so that the dimension is given by $p=9010$. For the scalability experiment, we used $S=130$ and $R=1000, 5000, 8000, 10000$ so that the dimensions are $p=120010, 600010, 960010, 1200010$. For all experiments, we set $n=5000$ and $\lambda = R/100$.

\paragraph{Graph-guided fused lasso.}
The graph-guided fused lasso problem we consider is given by 
$$
\min_x \frac{1}{2}\|b-\mathsf{A}x\|_2^2 + \lambda \|D x\|_1,
$$
where $D$ is the difference matrix imposed by the network structure. 
The dataset for the graph-guided fused lasso experiments was generated following the transcription factor (TF) model of \citet{zhu2015augmented}. This is a simple gene network model with $J$ fully connected subnetworks of size $T$, where each subgroup has one TF with $T-1$ regulatory target genes.
Variables corresponding to TFs are sampled independently from $\mathcal{N}(0,1)$.  Variables for target genes are sampled so that each target gene and the corresponding TF has a bivariate normal distribution with correlation 0.7,  and these variables are conditionally independent given the TF. For $j$-th subnetwork, we chose
$$
x_{i} = \begin{cases}
    (-1)^{j+1} \left\lfloor\frac{j+1}{2}\right\rfloor & \text{if $j=1, \dotsc, J_a$} \\
    0 & \text{otherwise}
    \end{cases}, \quad i = (j-1)r+1, \dotsc, jr,
$$
where $J_a$ is the number of active groups. Response $b_i$ is sampled so that $b_i = \mathsf{A}x + \epsilon_i$, with $\epsilon_i \overset{\text{\tiny i.i.d.}}{\sim} \mathcal{N}(0, 100^2)$.
In addition to the edges comprised of fully-connected subnetworks, we added random edges between the active variables and the inactive variables. For each active variable, we added  edges connecting this variable and $J-1$ distinct inactive variables. For the convergence experiments, we used $T=10$, $J_a=20$, $J=1000$ so that the dimension $p$ is 10000. 
For the scalability experiment, we set $T=12$, and $J_a=20$. We selected $J = 10000, 50000, 80000, 100000$ to generate the dataset with $p = 120000, 600000, 960000, 1200000$, respectively. 
For all experiments, we set $n=5000$ and $\lambda = 1$.

\subsection{Convergence behavior}
%We illustrate convergence of the generalized iteration \eqref{eqn:generaliteration} and its accelerated version \eqref{eqn:accgen} for 
We applied the algorithms to the
overlapping group lasso and graph-guided fused lasso with a quadratic loss, as described in Section \ref{sec:experiment}.
%; see Appendix \ref{sec:experiment} for the details. 
%Note  $L_f \gg \|K\|_2$ in this experiment.  
%
For the forward-backward (FB) splitting \eqref{eqn:generaliteration}, we set $C=\kappa K$, $|\kappa |\le 1$. We set $\rho_k= 0.9 \left(2 - \frac{\tau L_f}{2}\frac{1-(1-\kappa^2)\tau\sigma \|K\|_2^2}{1-\tau\sigma\|K\|_2^2} \right)$. 
%This choice can be justified by using arguments similar to the proof of Proposition \ref{prop:Gtildecocoercive} in Appendix \ref{sec:proofs} and Theorem \ref{thm:ergodic}.
Step sizes were chosen as $\tau = 0.9 \frac{2}{L_f}$ and $\sigma = 0.9 \frac{1}{\tau} \frac{1-\tau L_f/2}{1-(1-\kappa^2) \tau L_f/2}$, so that \eqref{eqn:generalparamrange} is satisfied.
For the acceleration \eqref{eqn:accgen}, we tested four cases: Algorithm LV ($A=B=0$), CV ($A=-K$, $B=K$), their ``midpoint'' ($A=-0.5K$, $B=0.5K$), and \citet{chen2014optimal} ($A=-K$, $B=0$). Number of iterations $N$ is set to 10000. For bounded (Corollary \ref{cor:bddgen}) and unbounded (Corollary \ref{cor:unbddgen}) cases, 
we found $(q, r)$ that minimizes $\frac{4 P \Omega_X^2}{k(k-1)}L_f + \frac{2 \Omega_X \Omega_Y (Q+1)}{N} \|K\|_2$ in \eqref{eqn:bddoptimal} and $\left(\frac{4PL_f}{N^2}+\frac{2Q\|K\|_2}{N}\right)\left(2+\frac{q}{1-q}+\frac{r+1/2}{1/2-r}\right)$ in \eqref{eqn:epsfbound}, respectively. Those minimizers were found using sequential least squares programming. 
As a benchmark, we also applied an inertial version of the forward-backward-forward (FBF) algorithm \citep{Combettes2012} 
as described in \citet{Bot2016}:
\begin{align}\label{eqn:fbf}
\begin{split}
\tilde{x}^{k+1} &= x^k - \tau \left(\nabla f(x^k) + K^T y^k\right) + \alpha_1 (x^k-x^{k-1}) \\
\tilde{y}^{k+1} &= \prox_{\tau h^*}(y^k + \tau K x^k + \alpha_1 (y^k - y^{k-1})\\
y^{k+1} &= \tilde{y}^{k+1} +\tau K (\tilde{x}^{k+1}-x^{k}) + \alpha_2 (y^k -y^{k-1}) \\
x^{k+1} &= \tilde{x}^{k+1} - \tau K^T (\tilde{y}^{k+1}-y^k) + \alpha_2 (x^k -x^{k-1}).
\end{split}
\end{align}
With $\alpha_1=\alpha_2=0$, \eqref{eqn:fbf} resembles Algorithm LV, but requires one more step per iteration; its convergence rate has not been established.
 
Figures \ref{fig:conv}(a), \ref{fig:conv}(b), \ref{fig:conv}(d), and \ref{fig:conv}(e) show 
the  convergence of the FB \eqref{eqn:generaliteration} %and the FBF algorithms 
with respect to the averaged sequence $\{(\bar{x}^N,\bar{y}^N)\}$, and the convergence of the accelerated FB algorithms \eqref{eqn:accgen} with respect to $\{(x^N,y^N)\}$. 
We plot the
gap between the primal objective value at $x^k$ and the ``optimal''
objective value versus iteration count $k$. Following \citet{loris2011generalization}, the reference ``optimal'' value was computed by running the accelerated LV algorithm with bounded parameters for 100000 iterations; this obtained the minimal value up to the point that the machine precision allows.
Figures \ref{fig:conv}(a) and \ref{fig:conv}(d) used parameters given by \eqref{eqn:bddparams}, which assumes $x^k$ and $y^k$ are bounded. 
This is true as long as $\|x^k\|_2<\Omega_X/\sqrt{2}$ and $\|y^k\|_2<\Omega_Y/\sqrt{2}$; 
we chose $\Omega_X=12$ and $\Omega_Y = 15$ for group lasso, and $\Omega_X = 141.4$ and $\Omega_Y = 305.9$ for graph-guided fused lasso.
The resulting iterates respected these bounds. Figures \ref{fig:conv}(b) and \ref{fig:conv}(e) used parameters given by \eqref{eqn:unbddparams}, which does not require $\Omega_X$ and $\Omega_Y$. 
%The $y$-axes of these figures represent the difference of $\mathcal{F}(x^k)$ and the optimal value $\mathcal{F}^\star$. 
%$\mathcal{F}^\star$ is computed by running the accelerated LV algorithm with bounded parameters for 100000 iterations. This obtained the minimal optimal value up to the point the machine precision allows. 
The oscillation in the later part of Figures \ref{fig:conv}(a) and \ref{fig:conv}(b) are due to the machine precision of the GPUs.     
Since the reference optimal value was an order of $10^4$, the values in the oscillating region correspond to the 7th or 8th significant decimal digit of the objective value.
%We observe that Theorem \ref{thm:ergodic} faithfully describes the convergence behavior.

We observe that Theorems \ref{thm:ergodic} and \ref{thm:unbddgen} faithfully describes the convergence behavior. %for \eqref{eqn:generaliteration} and \eqref{eqn:accgen}, respectively. 
The convergence rates of the accelerated ones were close to $O(1/N^2)$, 
because in this experiment $L_f \gg \|K\|_2$. 
%This is much faster than $O(1/N)$ of the base FB algorithms. %\eqref{eqn:generaliteration}, 
On the other hand, the base FB algorithms appear very close to the $O(1/N)$ line.
All of the optimal acceleration settings exhibit a very similar convergence behavior, which suggests that we 
have a good degree of freedom in choosing an optimal primal-dual algorithm. % from the continuum of \eqref{eqn:accgen}.
%have successfully expanded the class of optimal primal-dual iteration to the extent of \eqref{eqn:generaliteration}. 
%

Figures \ref{fig:conv}(c) and \ref{fig:conv}(f) compare the non-ergodic convergence with respect to $\{(\tilde{x}^k,\tilde{y}^k)\}$ of the FB and FBF. 
The FB algorithms behave like $O(1/k)$ initially, and then converges faster than $O(1/k^2)$. 
This behavior is much faster than what is predicted by Theorem \ref{thm:nonergodic}.
On the contrary, the FBF algorithm stalls after a few hundred iterations. 
\begin{figure}[h!]
\centering
\begin{subfigure}[b]{0.42\textwidth}
 \includegraphics[width=\textwidth]{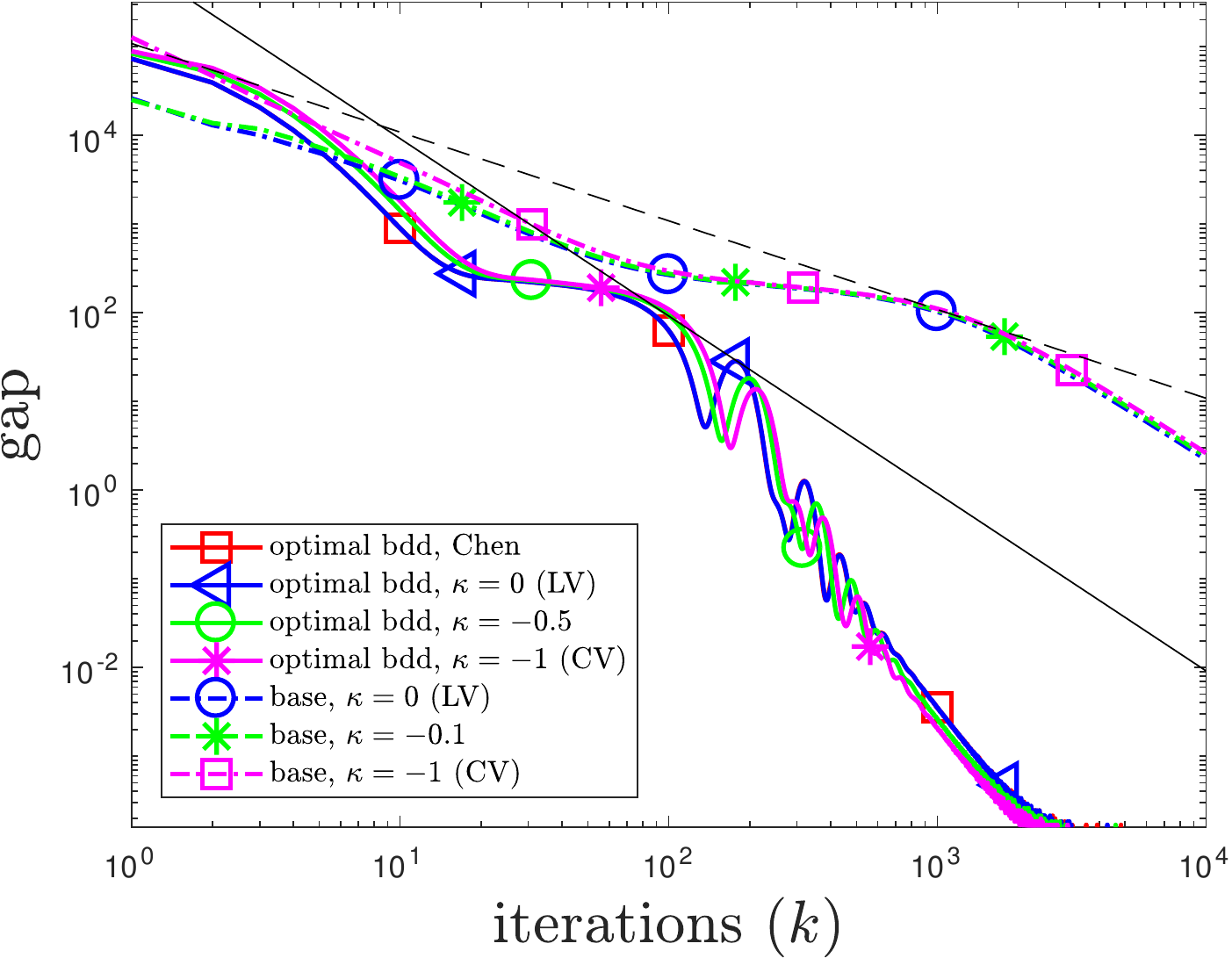}
 \caption{}\label{fig:grpbdd}
 \includegraphics[width=\textwidth]{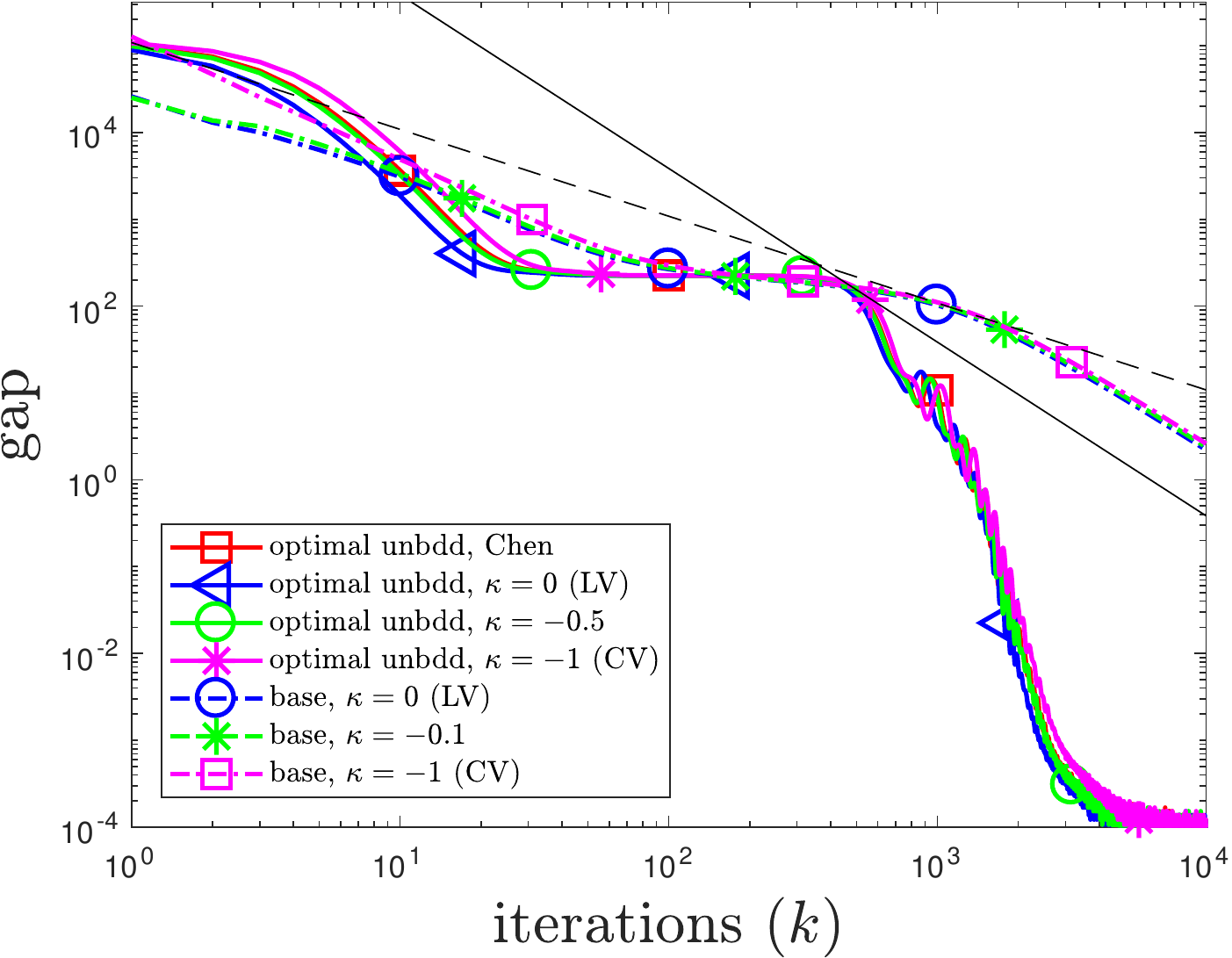}
 \caption{}\label{fig:grpunbdd}
 \includegraphics[width=\textwidth]{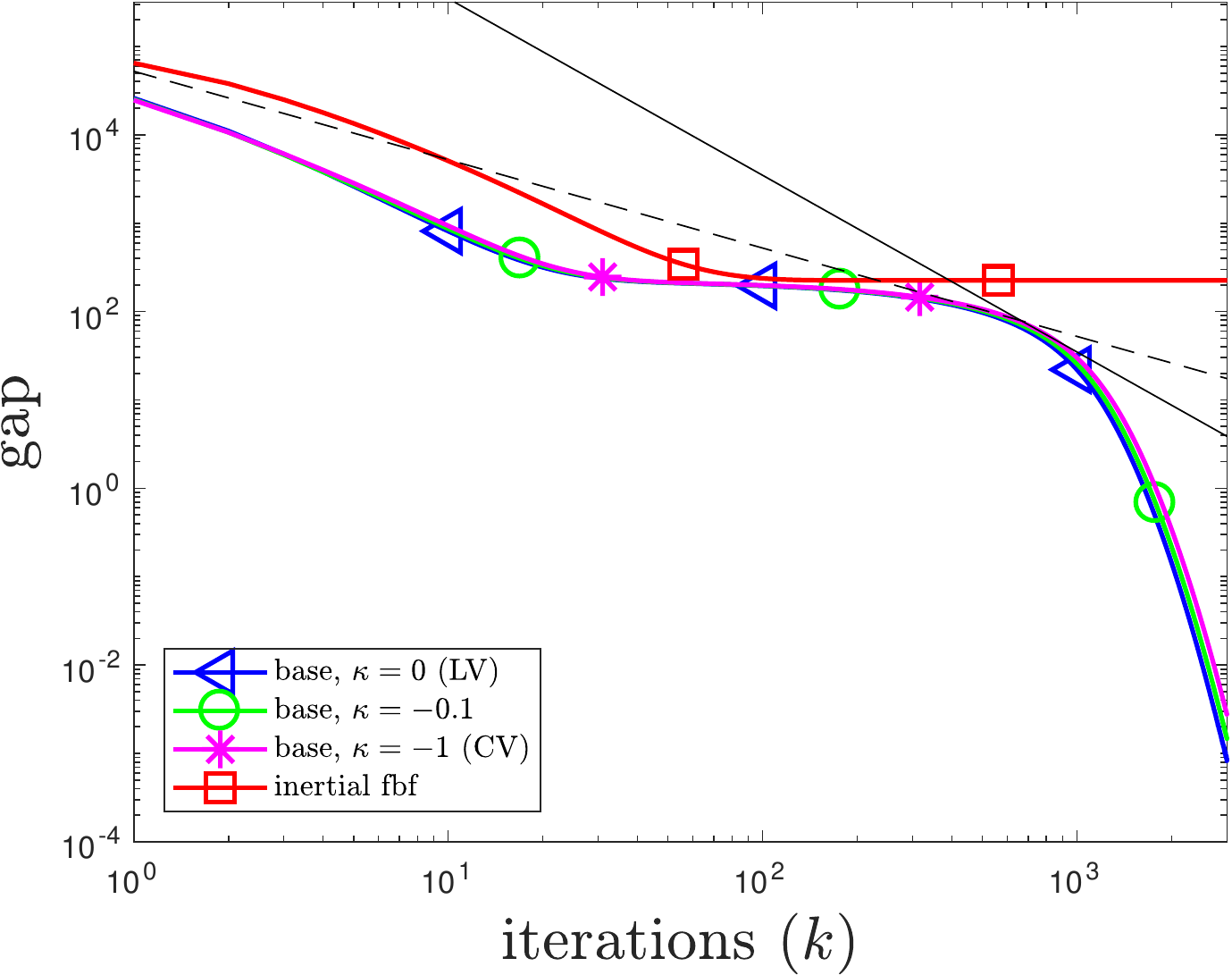}
 \caption{}\label{fig:grpnonerg}
\end{subfigure}
\begin{subfigure}[b]{0.42\textwidth}
 \includegraphics[width=\textwidth]{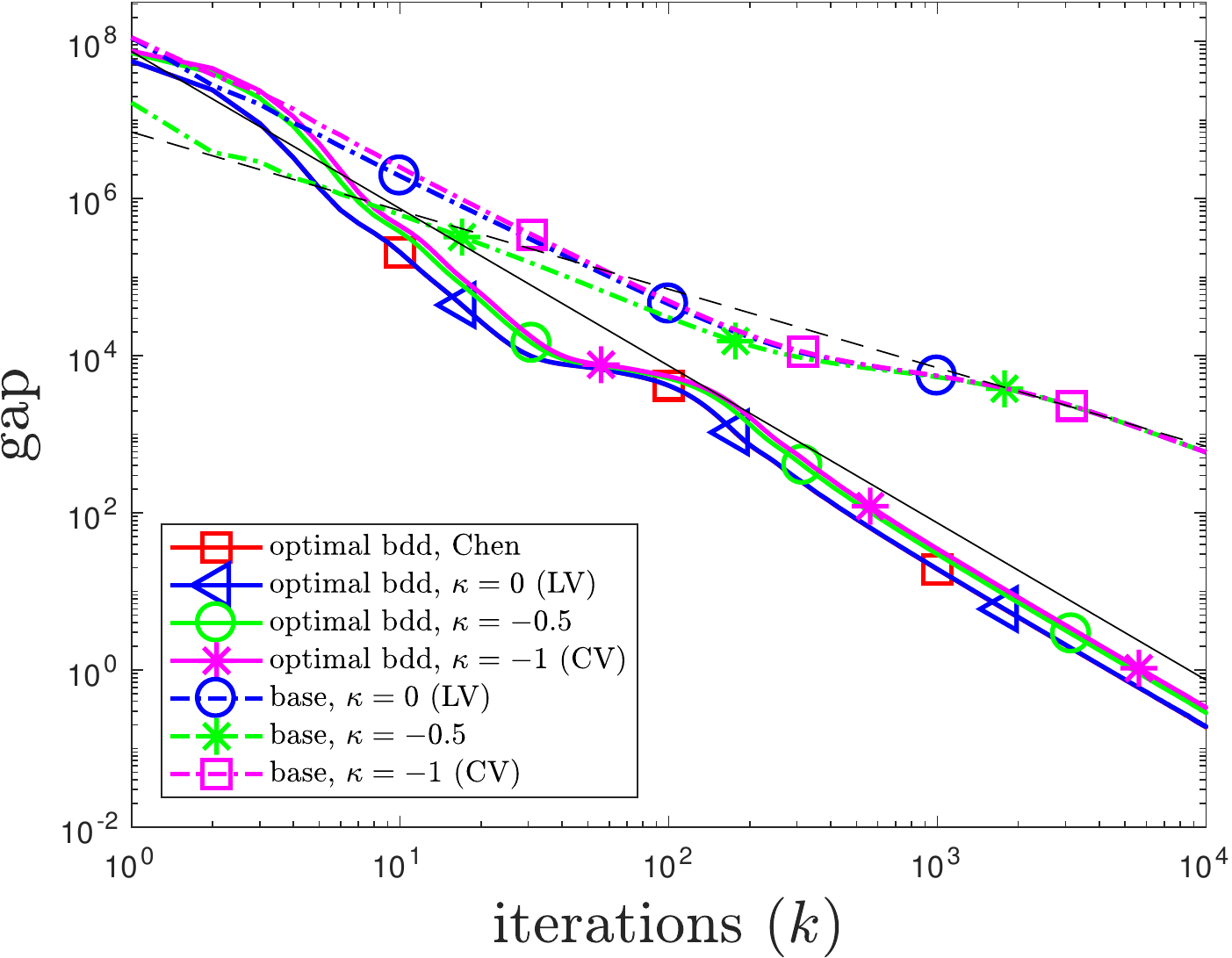}
 \caption{}\label{fig:zhubdd}
 \includegraphics[width=\textwidth]{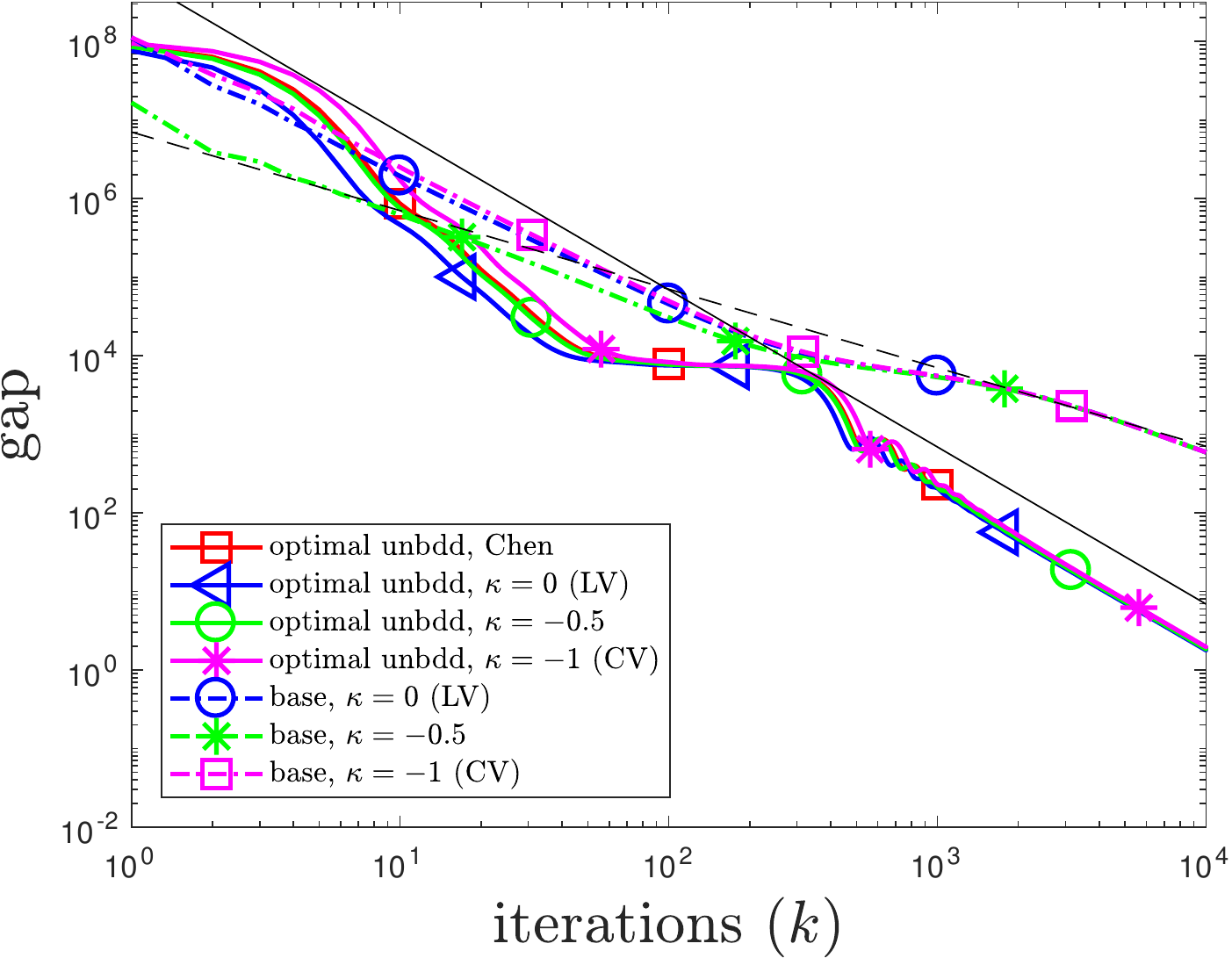}
 \caption{}\label{fig:zhuunbdd}
 \includegraphics[width=\textwidth]{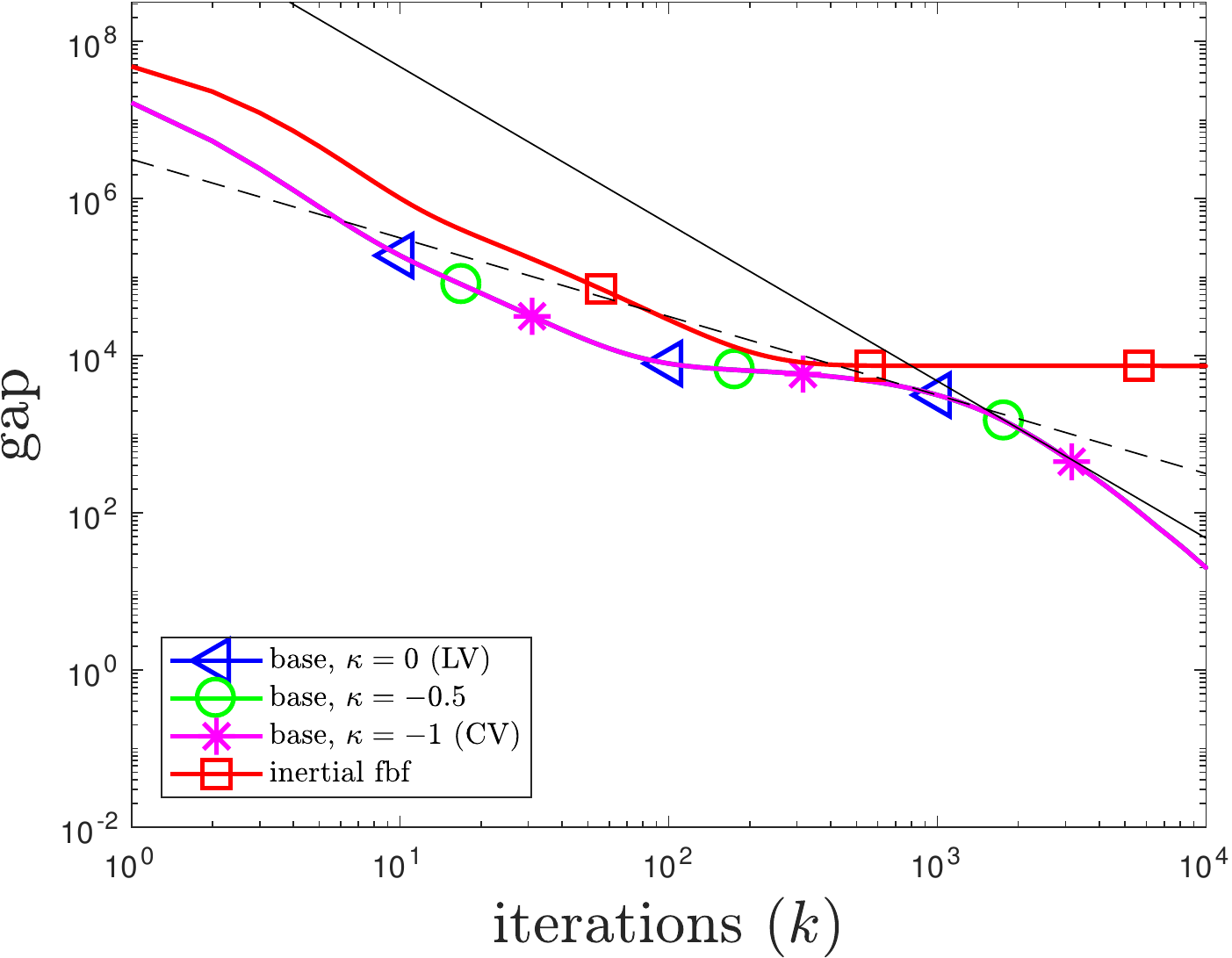}
 \caption{}\label{fig:zhunonerg}
\end{subfigure}
\caption{Convergence of the forward-backward (FB) algorithms generated by \eqref{eqn:generaliteration} and their accelerated variants \eqref{eqn:accgen} for a group lasso model (a-c) and a graph-guided fused lasso model (d-f).  
		 (a), (d), optimal acceleration with bounded parameter setting (``optimal'') with ergodic convergence of the FB algorithm (``base'').
		 (b), (e), optimal acceleration with unbounded parameter setting (``optimal'') with ergodic convergence of the FB algorithm (``base'').
     (c), (f), non-ergodic convergence of the FB (``base'') and inertial FBF (``inertial fbf'') algorithms. 
     Solid black lines represent $O(1/k^2)$ convergence, and dashed black lines represent $O(1/k)$ convergence. 
     }\label{fig:conv}
\end{figure}

\subsection{Scalability}

To test the scalability of the studied algorithms, we consider the scenario that the number of features $p$ is so large that, for each sample, the features do not fit into the memory. In other words, the data matrix
$\mathsf{A} = [\mathsf{A}^{[1]}, \dotsc, \mathsf{A}^{[M]} ]$, where $\mathsf{A}^{[i]} \in \mathbb{R}^{n \times p_i}$, $\quad \sum_{i=1}^m p_i = p$,
is stored distributedly in $M$ devices. In this case, it is desirable  to also split the vectors $x \in \mathbb{R}^p$ conformally and store distributedly, i.e., $x = [x_{[1]}^T, \dotsc, x_{[M]}^T]^T$, $x_{[i]} \in \mathbb{R}^{p_i}$. For many instances of \eqref{eqn:primal} including the generalized lasso and group lasso, $l \gtrsim p$, so it is desirable to partition and store the dual variable $y \in \mathbb{R}^l$ likewise. i.e., $y = [y_{[1]}^T, \dotsc, y_{[M]}^T]^T$, $y_{[i]} \in \mathbb{R}^{l_i}$, $\sum_{i=1}^m l_i = l$.  To compute $K^T y$ and $Kx$ efficiently, it is desirable to also distribute rows and columns of $K$ across the devices, i.e., $K^T = [K_{[1]}^T, \dotsc, K_{[M]}^T]$ and $K = [K^{[1]}, \dotsc, K^{[M]}]$, where $K_{[i]} \in \mathbb{R}^{l_i \times p}$, and $K^{[i]} \in \mathbb{R}^{l \times p_i}$.  Duplicating $K$ does not incur too much cost, as $K$ is typically sparse. Then, we can carry out computation in a distributed fashion as follows. 

Suppose that device $i$ stores $\mathsf{A}_{[i]}$, $K_{[i]}$, $K^{[i]}$, $x_{[i]}$, and $y_{[i]}$. To compute $\mathsf{A} x$, we compute $\mathsf{A}^{[k]}x_{[k]}$ within each device, and aggregate the result in a master device. The communication cost required is $O(n)$. Computing $Kx$ is more complicated. 
%Let us denote $K_{(1+\sum_{i'=1}^{i-1} l_{i'}:\sum_{i'=1}^i l_{i'}), (1+\sum_{j'=1}^{j-1} p_{j'}:\sum_{j'=1}^j p_{j'})}$ by $K_{[i]}^{[j]}$. 
Denote the submatrix made of the row $1+\sum_{i'=1}^{i-1} l_{i'}$ through $\sum_{i'=1}^i l_{i'}$ and the column $1+\sum_{j'=1}^{j-1} p_{j'}$ through $\sum_{j'=1}^j p_{j'}$ of $K$ by $K_{[i]}^{[j]}$. 
First, we compute $K_{[i]}^{[j]} x_{[j]}=:[Kx]_{ij}$.\deleted{, considering the sparse structure of $K$ within device $j$.} Then we transfer nonzero values in each $[Kx]_{ij}$ to device $i$. Finally, within device $i$, we aggregate $[Kx]_{ij}$ over $j$. When the number of nonzero elements in $K$ is $O(p)$, which is the case for both overlapping group lasso and graph-guided fused lasso, the communication cost is $O(M p)$ in the worst case. 
This type of distribution is especially suitable for multi-GPU platforms. 
%Many recent GPU programming packages, including Theano \cite{theano}, TensorFlow \cite{tensorflow}, and PyTorch\footnote{\url{http://pythorch.org}}, deal with inter-GPU communications automatically. 
We solved the model problems using TensorFlow \cite{tensorflow} v1.2, which deals with inter-GPU communications automatically. 

Each experiment was conducted for 1100 iterations with time recorded every 100 iterations. This is repeated three times. We discarded the result for the first 100 iterations, as this figure includes the time elapsed to build computation graphs. We computed average time per 100 iterations and their standard deviations. 
Table \ref{tab:scalability} shows that our distributed implementation is highly scalable across multiple GPUs. 
The algorithm runs faster with more GPUs in general; for the data that do not fit in the memory, % of the GPUs, 
it only requires more GPUs.

\begin{table}[ht]
\caption{Scalability of the distributed version of \eqref{eqn:generaliteration} for graph-guided fused lasso and group lasso models. Time was measured in seconds per 100 iterations. Standard deviations are listed in parentheses. Any cell with missing values indicates that the experiment failed to run due to lack of memory.}
\label{tab:scalability}
\vskip -0.2in
\begin{center}
\fontsize{10}{12}\selectfont
\begin{sc}
%\resizebox{0.8\columnwidth}{!}{%
Graph-guided fused lasso
\vskip 0.1in
\begin{tabular}{rrrrrrrrrr}\hline
         & \#GPUs  & 1     & 2      & 3      & 4      & 5      & 6      & 7      & 8      \\
\#groups & $p$     &       &        &        &        &        &        &        &        \\ \hline
10000    & 120000  & 4.895 & 3.801  & 3.274  & 2.468  & 2.081  & 1.739  & 1.584  & 1.518  \\
         &         & (0.019) & (0.048)  & (0.027)  & (0.021)  & (0.029)  & (0.025)  & (0.023)  & (0.014)  \\
50000    & 600000  &       & 20.631 & 13.779 & 11.962 & 10.124 & 8.568  & 7.699  & 6.520  \\
         &         &       & (0.253)  & (0.309)  & (0.126)  & (0.031)  & (0.058)  & (0.053)  & (0.050)  \\
80000    & 960000  &       &        & 22.695 & 16.957 & 13.712 & 11.559 & 10.343 & 10.828 \\
         &         &       &        & (0.288)  & (0.302)  & (0.140)  & (0.124)  & (0.133)  & (0.056)  \\
100000   & 1200000 &       &        &        & 20.517 & 16.190 & 15.590 & 11.704 & 12.498 \\
         &         &       &        &        & (0.166)  & (0.227)  & (0.170)  & (0.148)  & (0.145) \\ \hline
\end{tabular}
%}
\newline\newline\newline
Overlapping group lasso
\vskip 0.1in
\begin{tabular}{rrrrrrrrrr}\hline
         & \#GPUs  & 1     & 2      & 3      & 4      & 5      & 6      & 7      & 8      \\
\#groups & $p$     &       &        &        &        &        &        &        &        \\ \hline
1000     & 120010  & 4.828 & 4.156  & 2.973  & 2.465  & 2.102  & 1.853  & 1.591  & 1.538  \\
         &         & (0.015) & (0.057)  & (0.034)  & (0.014)  & (0.015)  & (0.012)  & (0.014)  & (0.015)  \\
5000     & 600010  &       & 19.312 & 13.670 & 10.164 & 8.374  & 7.369  & 6.727  & 5.960  \\
         &         &       & (0.075)  & (0.059)  & (0.055)  & (0.029)  & (0.040)  & (0.029)  & (0.038)  \\
8000     & 960010  &       &        & 22.792 & 17.044 & 14.722 & 12.671 & 10.866 & 10.103 \\
         &         &       &        & (0.228)  & (0.101)  & (0.107)  & (0.157)  & (0.110)  & (0.080)  \\
10000    & 1200010 &       &        &        & 22.210 & 16.658 & 15.386 & 14.088 & 11.689 \\
         &         &       &        &        & (0.273)  & (0.049)  & (0.098)  & (0.104) & (0.105)  \\ \hline
\end{tabular}
\end{sc}
\end{center}
\vskip -0.2in
\end{table}

\section{Conclusion}\label{sec:conclusion}
% In this paper, we unified two classes of primal-dual algorithms, namely, Algorithm CV and Algorithm LV for composite convex minimization problems based on monotone operator theory. This unification suggests a continuum of primal-dual algorithms. Analysis on this continuum results closed the gap between the previous analyses on the two algorithms, resulting in $O(1/N)$ convergence with less stringent conditions. Furthermore, we obtainied an theoretically optimal rate acceleration of this whole continuum of algorithms for both deterministic and stochastic versions by generalizing \citet{chen2014optimal}. Their empirical behavior is studied on overlapping group lasso and graph-guided fused lasso problems. As the operations required for the algorithms consist of elementwise opeartions, matrix-vector multiplications, it can be computed in distributed manner as long as the proximity operator $\prox_{\sigma h^*}(\cdot)$ is easy to compute. Scalability of this distribution is examined on a multi-GPU machine. 
% This method has a potential to be applied to a broad range of applications with very high dimesions, including bioinformatics, medical image processing, etc. 
In this paper, we have provided a unified view to Algorithms CV and LV, two classes of primal-dual algorithms for a convex composite minimization problem based on monotone operator theory. 
This unification suggests a continuum of forward-backward operator splitting algorithms for this important optimization problem having many applications in statistics.
It is also this unified understanding that enables us to establish the
$O(L_f /N^2 + \|K\|_2/N)$ optimal accelerations of Algorithms CV and
LV (and those in between), as well as the $O(1/N)$ and $o(1/\sqrt{k})$
convergence rates for the full regions of convergence of their
unaccelerated counterparts. A practical implication of this understanding is that we bring these algorithms to the same arena: as they share the same convergence rate, other factors such as the ability of choosing wider step sizes can be fairly compared in empirical settings.
Thus practitioners now possess more degrees of freedom in choosing from a suite of algorithms with theoretical guarantees.

The simplicity of the algorithms proposed and analyzed here also enables us to implement their distributed multi-GPU version almost painlessly using existing packages. This contrasts to our previous works \citep{yu2015high,lee2017large}, which resort to exploiting the structure of the matrix $K$ in \eqref{eqn:primal}. %We see little difficulties in transdlating to other distributed environments.
\subsubsection*{Supplementary material}
\vspace{-0.2cm}
%The supplementary material contains the derivation of Algorithms LV and CV, as well as the proofs of the theorems, propositions, and lemmas in this article.
The supplementary material contains an exposition of flexibility of formulation \eqref{eqn:primal} (Appendix \ref{sec:examples}), additional numerical experiments for stochastic optimal acceleration and the latent group lasso (Appendix \ref{sec:numerical:appendix}), a brief summary of monotone operator theory (Appendix \ref{sec:theory}), and the proofs of the theorems, propositions, and lemmas (Appendix \ref{sec:proofs}).
%\input{body_2.t}

% Acknowledgements should only appear in the accepted version.
% \section*{Acknowledgements}
% This work was supported by the National Research Foundation of Korea (NRF) grants funded by the Korea government (MSIP) (No. 2014R1A4A1007895).

%\textbf{Do not} include acknowledgements in the initial version of
%the paper submitted for blind review.

%If a paper is accepted, the final camera-ready version can (and
%probably should) include acknowledgements. In this case, please
%place such acknowledgements in an unnumbered section at the
%end of the paper. Typically, this will include thanks to reviewers
%who gave useful comments, to colleagues who contributed to the ideas,
%and to funding agencies and corporate sponsors that provided financial
%support.

% In the unusual situation where you want a paper to appear in the
% references without citing it in the main text, use \nocite
%\nocite{langley00}
%\bibliography{distributed}
%\bibliographystyle{apalike}

%\singlespacing
\spacingset{1}
\small
\AtNextBibliography{\fontsize{11}{12}\selectfont}
\printbibliography
\newpage

\appendix
%\onecolumn
%\input{appendix.tex}
\appendixpagenumbering
\begin{refsection}
\spacingset{1}
\fontsize{11}{12}\selectfont
\numberwithin{equation}{section}
\numberwithin{theorem}{section}
\numberwithin{lemma}{section}
\numberwithin{corollary}{section}
\numberwithin{proposition}{section}
\numberwithin{remark}{section}
\numberwithin{table}{section}
\numberwithin{figure}{section}
\begin{center}
\Large Supplementary Material \\ 
{\bf Easily parallelizable and distributable \\class of algorithms
for structured sparsity,\\ with optimal acceleration}\\
\large by Seyoon Ko, Donghyeon Yu, and Joong-Ho Won
\end{center}
\section{Flexibility of formulation (\ref{eqn:primal})}\label{sec:examples}

We discuss two cases in which the $f$ and $g$ in \eqref{eqn:primal} does not directly match the two terms in \eqref{eqn:objective}.

\paragraph{More than one penalty.}
When \eqref{eqn:objective} involves more than one penalty, the problem can be formulated as \eqref{eqn:primal} by augmenting the dual variable. 
Suppose we solve the following penalized regression problem
\[
	\min_{x \in \mathbb{R}^p} \quad
\sum_{i=1}^n l_i(a_i^T x, b_i) + H_1(D_1x) + H_2(D_2x).
\]
Then we can set 
\[
f(x) = \sum_{i=1}^n l_i(a_i^T x, b_i),
\quad
h(y_1,y_2) = H_1(y_1)+H_2(y_2), 
\quad
K=\begin{bmatrix}D_1 \\ D_2\end{bmatrix},
\quad
y=\begin{bmatrix}y_1 \\ y_2\end{bmatrix}
.
\]
It is easy to verify that $\prox_h(v_1,v_2)=(\prox_{H_1}(v_1), \prox_{H_2}(v_2))^T$ due to separability of $h$.
For example, consider the latent group lasso problem \citep{jacob2009group}.
The latent group lasso selects groups less conservatively than the original group lasso \citep{yuan2006model}, and allows overlaps.
The penalty is defined as
\[ 
H(x) = \inf_{v_{[g]} \in \mathbb{R}^{|[g]|}, D^T v = x} \sum_{g=1}^{\mathcal{G}} \lambda_g \|v_{[g]}\|_q,
\]
where $[g]$ and $D$ are the group index set and the membership matrix as discusses in Section \ref{sec:intro} for the original group lasso.
Thus the latent group lasso problem can be written as
\[
	\min_{x,v} f(x) + h(v) + \delta_{\{0\}}(x-D^Tv),
\]
where 
$h(v)=\sum_{i=1}^{\mathcal{G}}\lambda_g\|v_{[g]}\|_q$
and
$\delta_S$ is the indicator function for set $S$ so that $\delta_S(u)=0$ if $u\in S$ and $\delta_S(u)=+\infty$ otherwise.
Let $z = (x^T, v^T)^T$, $\tilde{f}(z) = f(\begin{bmatrix} I & 0 \end{bmatrix} z)$, $\tilde{h}(y_1, y_2) = h(y_1) + \delta_{\{0\}}(y_2)$, and $K = \begin{bmatrix} 0 & I \\ I & -D^T \end{bmatrix}$. 
We have an equivalent formulation 
$$
\min_z \tilde{f}(z) + \tilde{h}(Kz),
$$
It has the form of \eqref{eqn:primal}. 
Note that both $h$ and $\delta_{\{0\}}$ are proximable.

\paragraph{Nonsmooth losses.}
When the loss function $l_i$ in \eqref{eqn:objective} does not have Lipschitz gradients yet is closed, proper, and convex, 
a split-dual formulation \citep{Nesterov:MathematicalProgramming:2004} can be utilized.
This includes the case where the loss is not differentiable (e.g. hinge loss).
%, and where the loss is differentiable, but is not Lipschitz continuous (e.g. Poisson likelihood). 
To cope with this, we 
exploit the saddle-point representation \eqref{eqn:saddlepoint} of \eqref{eqn:primal},
and \emph{dualize} the loss function in addition to the penalty. 
That is, express
$\sum_{i=1}^n l_i (a_i^T x; b_i) = \sup_{w \in \mathbb{R}^n} \langle \mathsf{A} x, w \rangle - \sum_{i=1}^n l_i^{*} (w_i; b_i)$,
yielding
\begin{equation}\label{eqn:splitdual}
\min_x \max_{y,w}~ \langle Dx, y \rangle + \langle \mathsf{A}x, w \rangle - \left(\sum_{i=1}^n l_i^{*} (w_i; b_i)+ H^{*}(y)\right).
\end{equation}
In terms of \eqref{eqn:saddlepoint}, $f(x) \equiv 0$, $K = [ D^T,  \mathsf{A}^T]^T$, $h^{*} (y,w) = H^{*}(y) + \sum_{i=1}^n l_i^{*} (w_i; b_i)$. %in \eqref{eqn:saddlepoint}.
Because $h^{*}$ is separable in $y$ and $w$, we have $\prox_{\sigma h^{*}}(u, v_1,\dotsc,v_n)=(\prox_{\sigma H^{*}}(u), \prox_{\sigma l_1^{*} (\cdot; b_1)} (v_1),\dotsc,\prox_{\sigma l_n^{*} (\cdot; b_n)} (v_n))$.
The cost is that the number of dual variables increases by $n$.
For example, in the linear support vector machine, the proximity operator for the hinge loss $l_i(\cdot;b_i)=\max(0,1-b_i \cdot)$ is given by
$\prox_{\sigma l_i^{*}}(v_i) = \max(\min(v_i-\sigma b_i,0),-b_i)$.
Thus computation of $\prox_{\sigma h^{*}}$ can be conducted in parallel for each element of $v=(v_1,\dotsc,v_n)$.
%Note this dualization technique can be also applied to the PDHG \eqref{eqn:pdhg}, which in this case coincides with Algorithm CV. 
%\deleted[remark={Reviewer 2, Minor comment 7}]{For smooth losses, e.g., quadratic loss, however, ...}
Note that this formulation 
is not limited to the separable losses in \eqref{eqn:objective}. For example, in the square-root lasso \cite{belloni2011square}, we solve 
\begin{align}
\min_x \|\mathsf{A}x-b\|_2 + H(Dx) = \min_x \max_{y, w: \|w\|_2 \le 1} \langle Dx, y \rangle + \langle \mathsf{A}x, w \rangle - \left(\langle b, w \rangle + H^* (y) \right),
\end{align}
yielding $f(x) \equiv 0$, $K = [D^T, \mathsf{A}^T]^T$, $\prox_{\sigma h^*}(u, v) = \big(\prox_{\sigma H^*}(u), P_{\mathcal{B}_2} (v - 2 \sigma b) \big)$, 
where $P_{\mathcal{B}_2}(\cdot)$ denotes the projection to the unit $\ell_2$-ball.
Note this split-dual technique can be also applied to the PDHG \citep{zhu2008efficient,Esser:SiamJournalOnImagingSciences:2010,chambolle2011first,He:SiamJournalOnImagingSciences:2012,Chambolle:MathematicalProgramming:2015}, whose iteration is given by
\begin{align*}
    x^{k+1}         &= \prox_{\tau f}(x^k - \tau K^T y^k ), \\
    \tilde{x}^{k+1} &= 2 x^{k+1} - x^k, \\
    y^{k+1}         &= \prox_{\sigma h^{*}} (y^k + \sigma K \tilde{x}^{k+1}).
\end{align*}
For the same choices of $f$, $K$, and $h^*$, PDHG coincides with Algorithm CV. 
For losses with Lipschitz-continuous gradients (e.g., $f(x)=\frac{1}{2}\|\mathsf{A}x-b\|_2^2$), however, Algorithms \eqref{eqn:generaliteration} and \eqref{eqn:accgen} can proceed more efficiently (using $\nabla f$)  without dualization.
%, which increases the dimension of the dual variable by $n$. 
To efficiently apply PDHG, on the contrary, one may have to dualize the loss terms as \eqref{eqn:splitdual} unless the proximity operator of $f(x) = \sum_{i=1}^n l_i(a_i^T x;b_i)$ is simple to evaluate. 
%

%Tables \ref{tab:examples} summarize the choices of $f$, $K$, and $h^*$ for losses and penalties discussed in this paper.

%\begin{table}[!htb]
%\caption{Choice of $f, K, h$ for various types of losses and penalties
%with $x = (\bar{x}, \tilde{x})$, $y = (\bar{y}, \tilde{y})$, and $z = (\bar{z},\tilde{z})$.
%``LG'', ``NLG'', ``LGL'', and ``SL'' denote
%``loss with Lipschitz gradient'', ``loss without Lipschitz gradient'',  ``latent $\ell_1-\ell_q$ group lasso'', and
%``scaled lasso'', respectively.}
%\label{tab:examples}
%\medskip
%\centering
%\fontsize{11}{12}\selectfont
%\begin{tabular}{lllll} \hline
%Type & $f(x)$ & $K$ & $h^*(y)$ & ${\bf prox}_{\sigma h^*}(z)$ \\ \hline
%LG & $\displaystyle \sum_{i=1}^n l_i (a_i^T x ; b_i)$ & $D$ & $H^*(y)$ & ${\bf prox}_{\sigma H^*}(z)$ \\
%NLG & 0 & $ [D^T, \mathsf{A}^T]^T$ & $\displaystyle H^*(\bar{y}) + \sum_{i=1}^n l_i^*(\tilde{y}_i; b_i)$ & 
%$\begin{bmatrix}\begin{array}{c}
%{\bf prox}_{\sigma H^*} (\bar{z})\\
%({\bf prox}_{\sigma l^*(\cdot;b_i)} (\tilde{z}_i))_{1\le i \le n}
%\end{array}
%\end{bmatrix}$\\
%LGL & $f(\bar{x})$ & $\begin{bmatrix}0 & I \\ I & -D^T\end{bmatrix}$ & $H^*(\bar{y})$ & $[{\bf prox}_{\sigma H^*} (\bar{z})^T, \tilde{z}^T]^T$
%\\
%SL & 0 & $[D^T, \mathsf{A}^T]^T$ & $H^*(\bar{y}) + \langle b, \tilde{y}\rangle + \delta_{\|\tilde{y}\|_2 \le 1} (\tilde{y})$ & $\big[{\bf prox}_{\sigma H^*}(\bar{z})^T, P_1^{\ell_2}(\tilde{z} - 2 \sigma b)^T \big]^T$\medskip\\ 
% \hline
%\end{tabular}
%\end{table}
%
%

\section{Additional numerical experiments}\label{sec:numerical:appendix}

\subsection{Stochastic optimal acceleration}
We illustrate an actual convergence behavior of the 
optimal stochastic algorithm \eqref{eqn:stocgen} for the group lasso and graph-guided fused lasso model problems in the main text. 
The estimate  $\hat{\mathcal{F}}(x^k)$ is computed by $\nabla f (\mathcal{M} x^k)$, where $\mathcal{M}$ is a diagonal matrix where each diagonal entry is independently chosen as $1/p$ with probability $\pi$, and 0 with probability $1-\pi$. %For each of $\hat{\mathcal{K}}_x$, $\hat{\mathcal{K}}_y$, $\hat{\mathcal{A}}$, and $\hat{\mathcal{B}}$, the similar dropout scheme was employed. 
This strategy meets the assumption \eqref{eqn:expect}. %For the quadratic loss of our model problems, \eqref{eqn:A1} is satisfied with 
% \begin{align*}
% \chi_{x, f}^2 &= \|\mathsf{A}\|_2^4 \max\{1/\pi-1, 1\} \Omega_X^2 \\
% \chi_{x, K}^2 &= \|K\|_2^2 \max\{1/\pi-1, 1\} \Omega_Y^2 \\
% \chi_{y}^2  &= \|K\|_2^2 \max\{1/\pi-1, 1\} \Omega_X^2 \\
% \chi_A^2 &= \|K\|_2^2 \max\{1/\pi-1, 1\} \Omega_X^2 \\
% \chi_B^2 &= \|K\|_2^2 \max\{1/\pi-1, 1\} \Omega_Y^2 
% \end{align*}
% as long as $\|x^k\|_2^2 \le 2 \Omega_X^2$ and $\|y^k\|_2^2 \le 2 \Omega_Y^2$.  

The convergence behavior of the stochastic algorithm is illustrated in Figure \ref{fig:stoc}. Figures \ref{fig:grpstocbdd} and \ref{fig:zhustocbdd} show the result of \eqref{eqn:stocgen} with parameters \eqref{eqn:stocbddparams} for the group lasso and graph-guided fused lasso problems, respectively. Figures \ref{fig:grpstocunbdd} and \ref{fig:zhustocunbdd} show those with parameters given by \eqref{eqn:stocunbddparams}. Note that for the assumption \eqref{eqn:A1} to hold, both cases need estimates of $\Omega_X$ and $\Omega_Y$. We chose $\pi=0.2$.  
For the simplicity of illustration, we used $\chi = 3 \times 10^5$ for the overlapping group lasso and $\chi = 10^7$ for the graph-guided fused lasso. 
In \eqref{eqn:stocunbddparams}, $\tilde{R}$ was set to 10 for overlapping group lasso and 100 for graph-guided fused lasso. 
The horizon $N$ was set to 10000 for all cases. In \eqref{eqn:stocbddparams} and  \eqref{eqn:stocunbddparams}, $q$, $r$, $s$, and $t$ were chosen to minimize the error bounds $\mathcal{C}_0(N)$ in Corollary \ref{cor:stocbdd} and $\frac{4PL_f}{N(N-1)} + \frac{2 Q \|K\|_2}{N}+ \frac{2 \chi /\tilde{R}}{\sqrt{N-1}}$ in Corollary \ref{cor:stocunbdd}, respectively, in a similar fashion to the deterministic counterparts. 
For a comparison, we included cases with parameters chosen for the deterministic setting \eqref{eqn:bddparams} and \eqref{eqn:unbddparams} but with stochastic estimation of gradients. 
In Figure \ref{fig:stoc}, 
the convergence of the stochastic algorithms is slow initially
because the step sizes $\tau_k$ and $\sigma_k$ are very small for small $k$
due to the presence of an $N^{3/2}$ term in their denominators,
but they eventually converge faster than the $O(1/k)$ rate for both bounded and unbounded parameter selections. 
(Also note the log-log scale of the plots.) 
While Corollaries \ref{cor:stocbdd} and \ref{cor:stocunbdd} guarantee the optimal rate for $A=-K$ (corresponding to CV if $B=K$ and \citet{chen2014optimal} if $B=0$), the choice $A = - \kappa K$, $B = \kappa K$ with $0\le\kappa<1$ (corresponding to LV and ``in-between'') also exhibited a similar  convergence behavior. 
On the contrary, for the ``deterministic'' choice of the parameters the algorithm diverged.

\begin{figure}[ht!] 
 %\vskip -0.1in
 \begin{center}
\begin{subfigure}[b]{0.42\textwidth}
  \includegraphics[width=\textwidth]{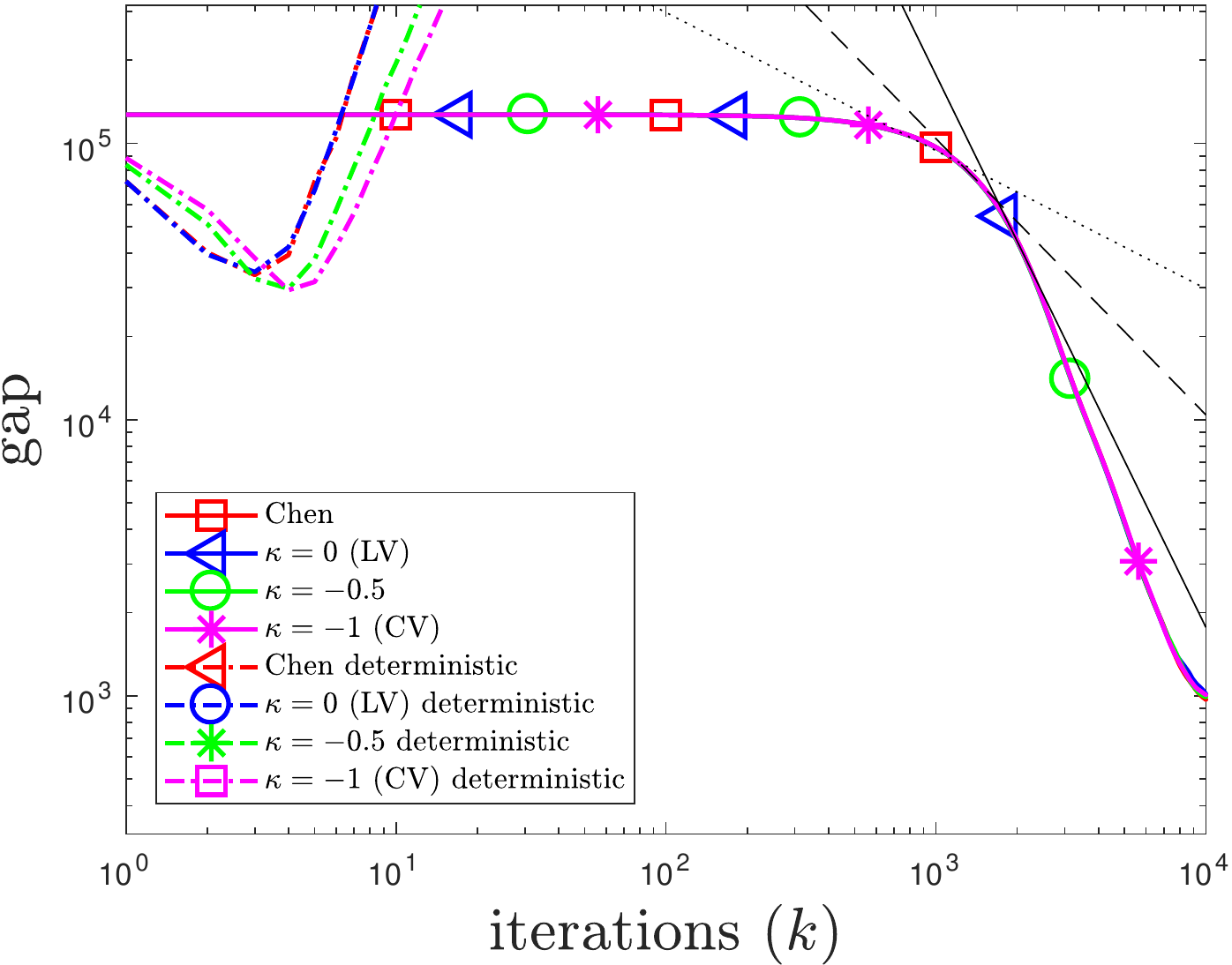}
  \caption{}\label{fig:grpstocbdd}
  \includegraphics[width=\textwidth]{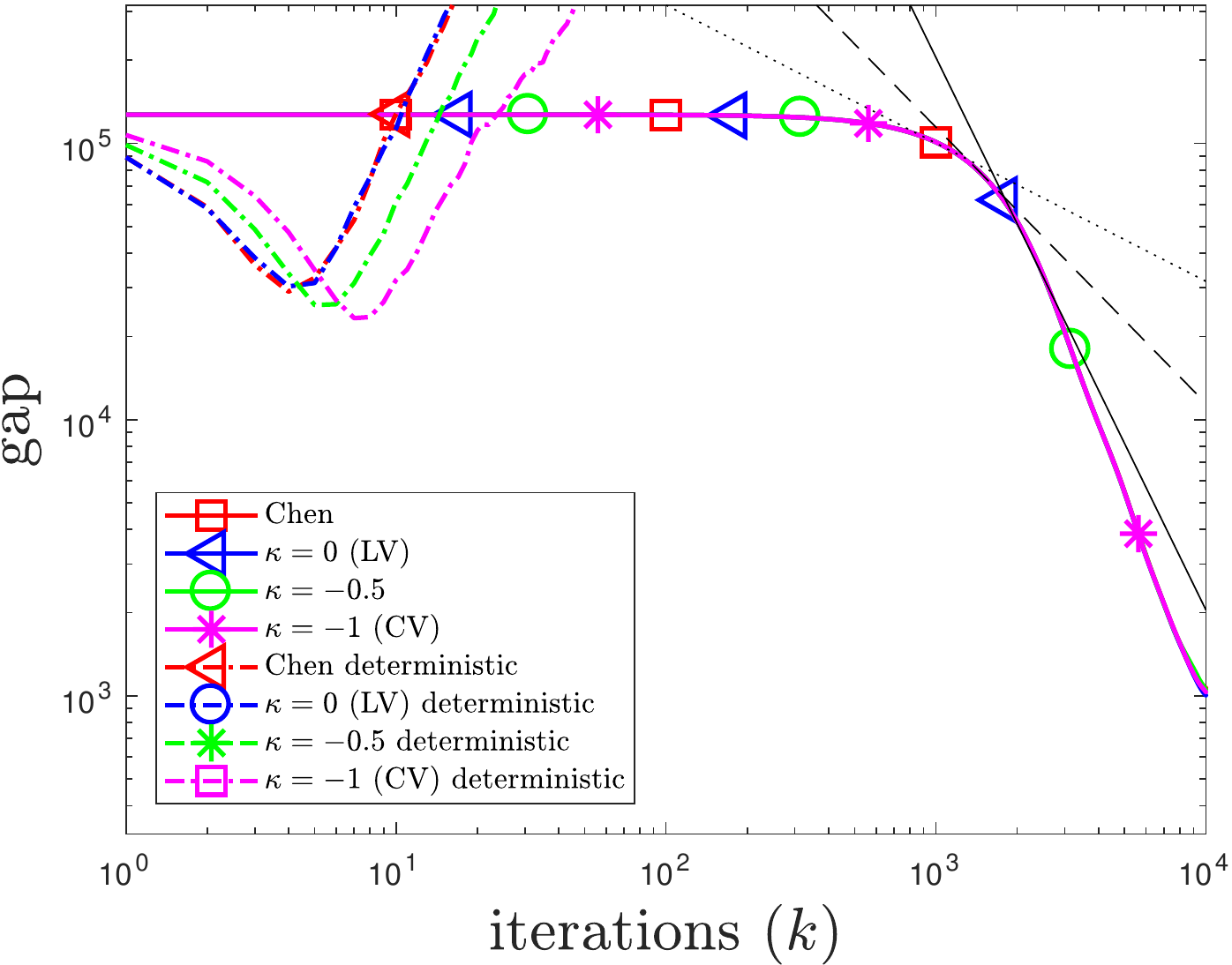}
  \caption{}\label{fig:grpstocunbdd}
\end{subfigure}
\begin{subfigure}[b]{0.42\textwidth}
  \includegraphics[width=\textwidth]{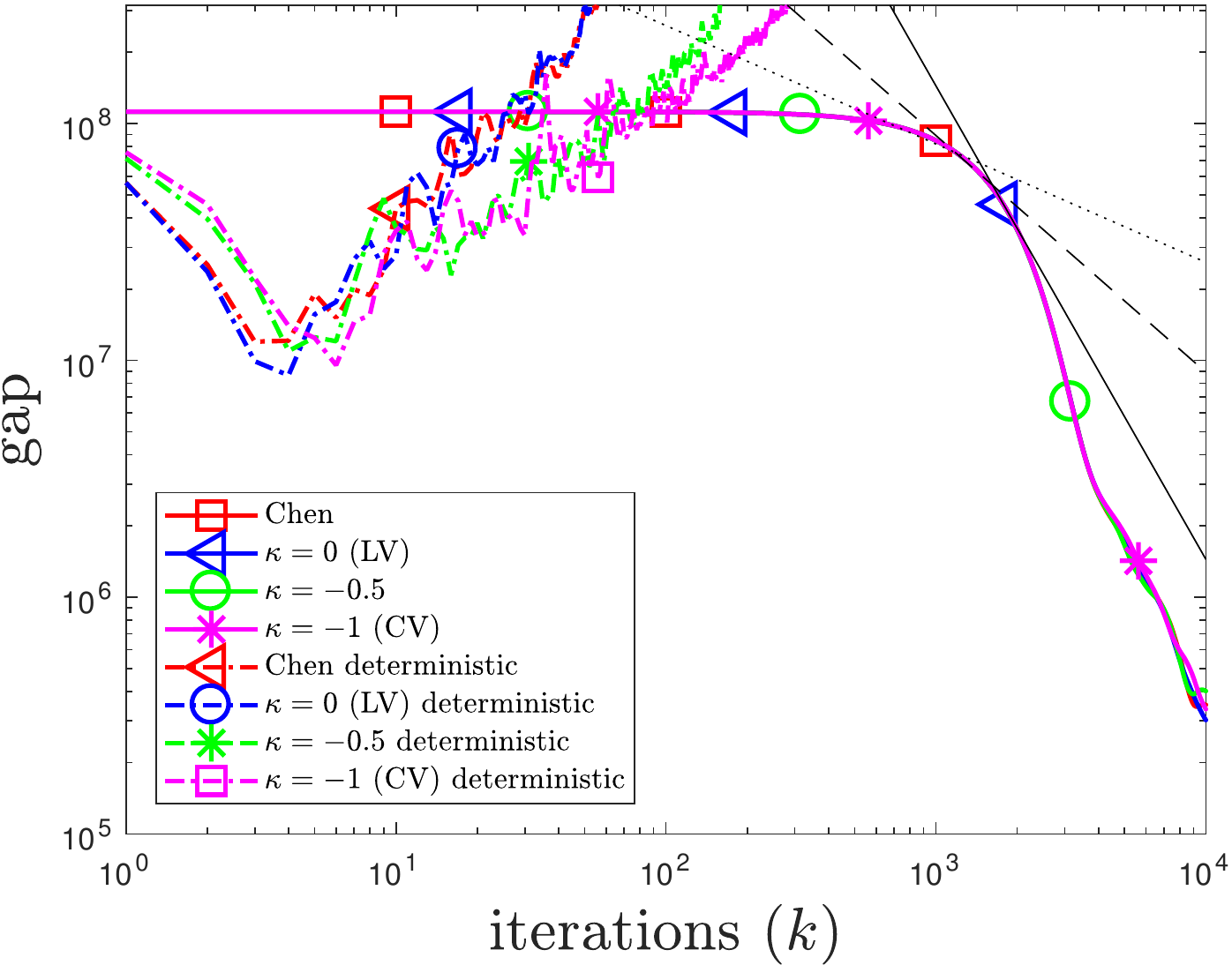}
  \caption{}\label{fig:zhustocbdd}
  \includegraphics[width=\textwidth]{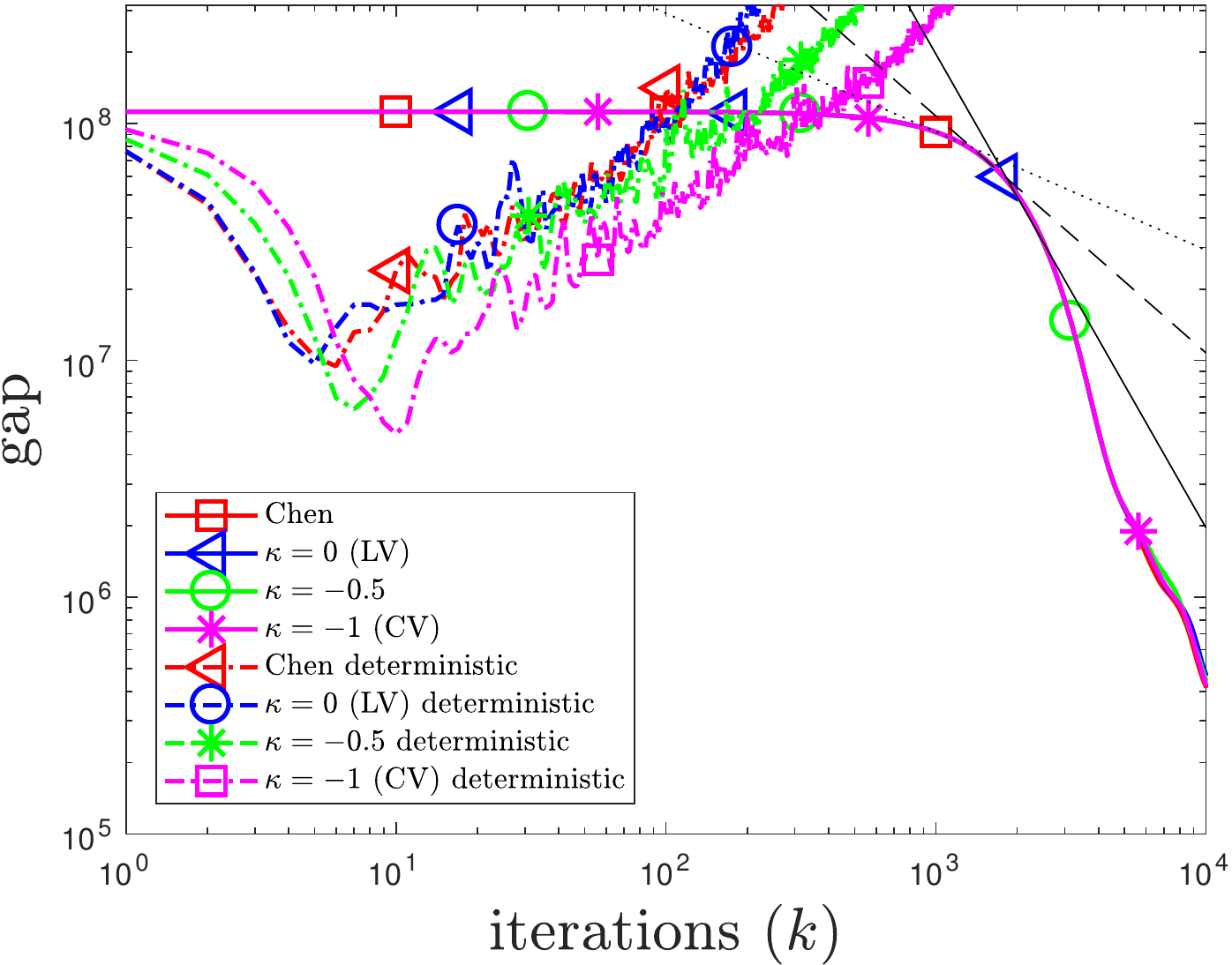}
  \caption{}\label{fig:zhustocunbdd}
\end{subfigure}
 \caption{\fontsize{11}{12}\selectfont Convergence of optimal rate stochastic algorithm for a group lasso model (a-b) and a graph-guided fused lasso model (c-d). (a), (c), optimal rate stochastic algorithm assuming bounded domain \eqref{eqn:stocbddparams} (``optimal'') compared to ergodic convergence of the FB algorithm. (b), (d), optimal rate stochastic algorithm with parameters in \eqref{eqn:stocunbddparams}. The cases labeled ``deterministic'' in the legend denote the deterministic-case parameters given by \eqref{eqn:bddparams} for bounded case and \eqref{eqn:unbddparams} for unbounded case.
 Solid black lines, dashed black lines, and dotted black lines represent $O(1/k^2)$, $O(1/k)$, and $O(1/\sqrt{k})$ convergence, respectively.
 }
		 %Curves inside the boxes are magified from the small boxes on the main curves.
 \label{fig:stoc}
 \end{center}
 \vskip -0.25in
 \end{figure}

\subsection{Latent group lasso}\label{sec:lgl}

Here we present the numerical experiment results for latent group lasso described in Appendix \ref{sec:examples} in deterministic settings. %\cite{jacob2009group,obozinski2011group}.
We used the same dataset as in the overlapping group lasso model in Section \ref{sec:experiment}. The convergence behavior is depicted in Figure \ref{fig:lgl_conv}. Scalability is demonstrated in Table \ref{tab:lgl_scalability}. 
Both results exhibit behaviors similar to those in Section \ref{sec:numerical}: 
convergence rates of the accelerated algorithm were close to $O(1/N^2)$, beating their unaccelerated, base counterparts (forward-backward);
the forward-backward-forward (FBF) algorithm stalls after a few hundred iterations;
there is no essential difference among the continuum of the optimal algorithms, leaving a variety of possibilities for choosing a particular algorithm, etc.

\begin{figure}[H]
\centering
        \begin{subfigure}[b]{0.475\textwidth}
            \centering
            \includegraphics[width=\textwidth]{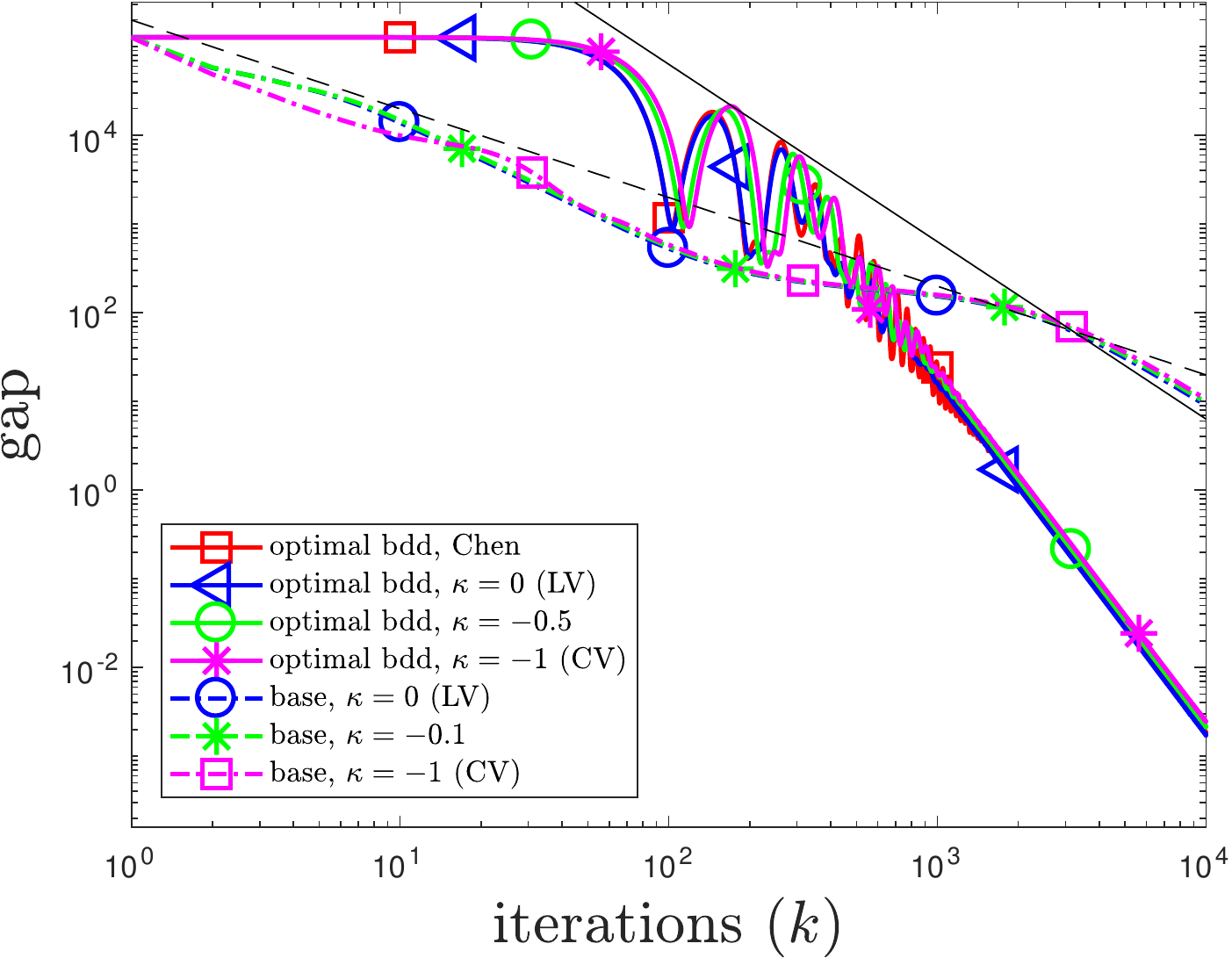}
            \caption[]%
            {}
            \label{fig:lgl_bdd}
        \end{subfigure}
        \hfill
        \begin{subfigure}[b]{0.475\textwidth}
            \centering
            \includegraphics[width=\textwidth]{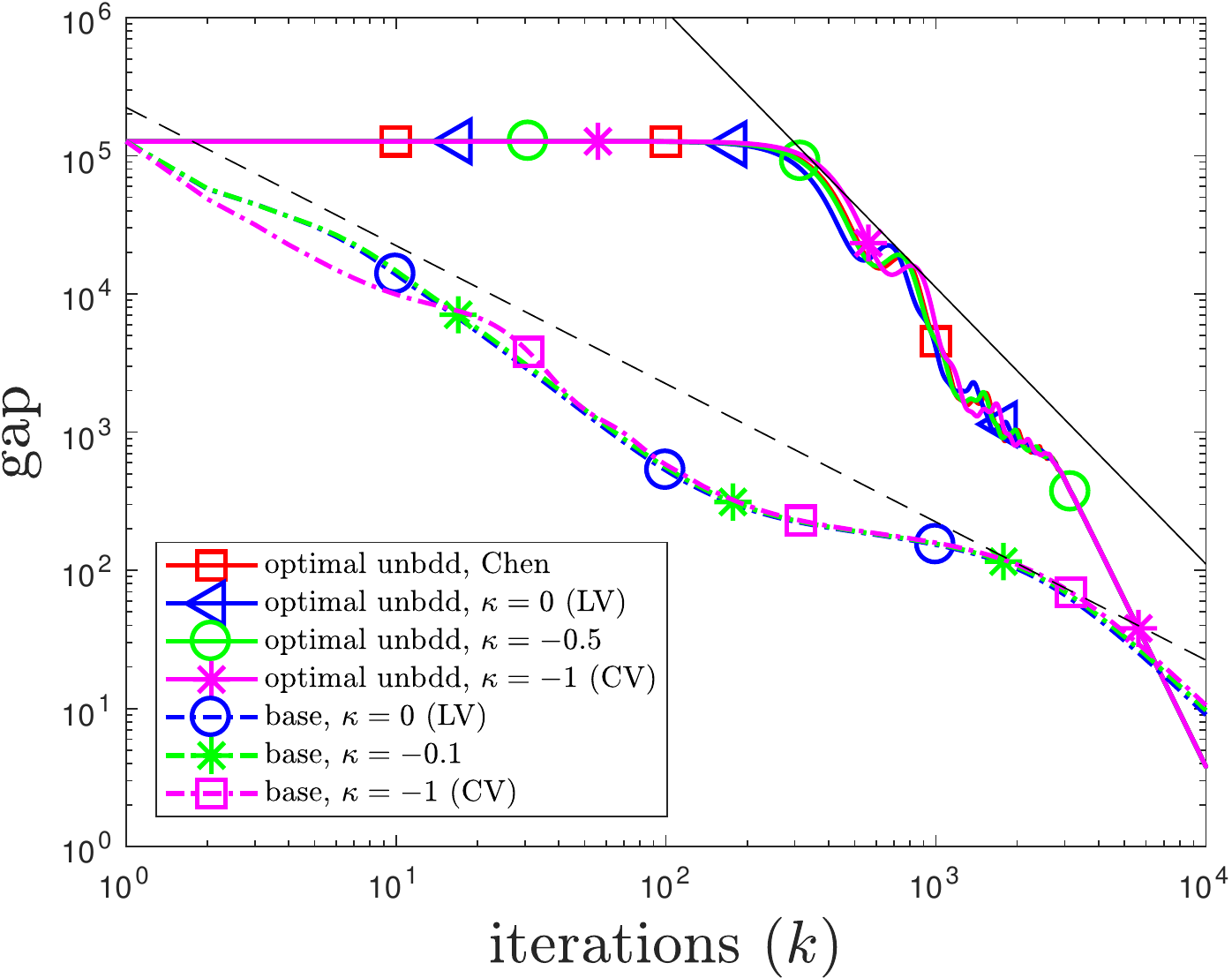}
            \caption[]%
            {}
            \label{fig:lgl_unbdd}
        \end{subfigure}
        \vskip\baselineskip
        \begin{center}
        \begin{subfigure}[b]{0.475\textwidth}
            \centering
            \includegraphics[width=\textwidth]{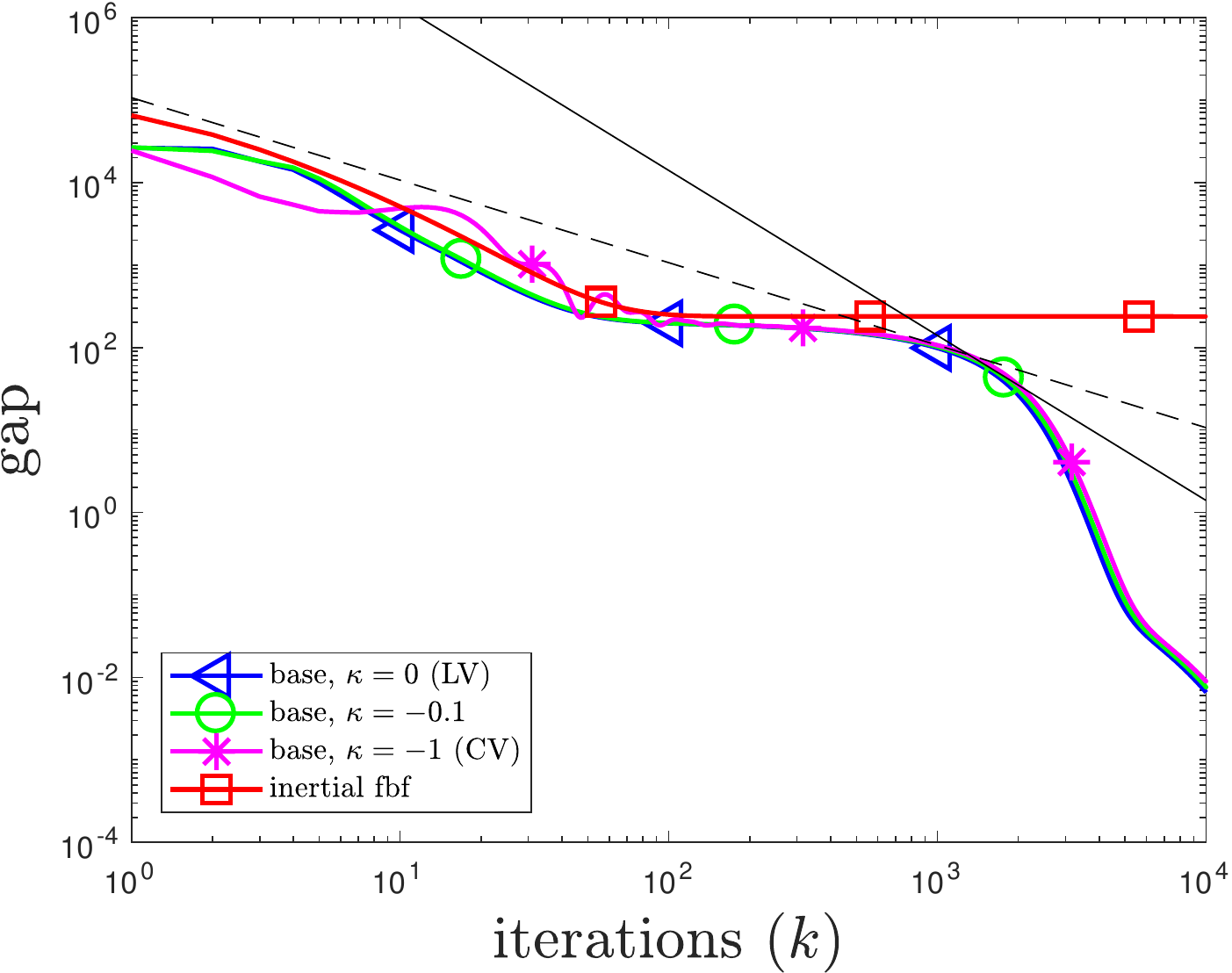}
            \caption[]%
            {}
            \label{fig:lgl_nonergodic}
        \end{subfigure}
        \end{center}
\caption{Convergence of the forward-backward (FB) algorithms generated by \eqref{eqn:generaliteration} and their accelerated variants \eqref{eqn:accgen} for a latent group lasso model.  
		 (a) optimal acceleration with bounded parameter setting (``optimal'') with ergodic convergence of the FB algorithm (``base'').
		 (b) optimal acceleration with unbounded parameter setting (``optimal'') with ergodic convergence of the FB algorithm (``base'').
     (c) non-ergodic convergence of the FB (``base'') and inertial FBF (``inertial fbf'') algorithms.
     Solid black lines represent $O(1/k^2)$ convergence, and dashed black lines represent $O(1/k)$ convergence.}\label{fig:lgl_conv}
\end{figure}

\begin{table}[H]
\caption{Scalability of the distributed version of \eqref{eqn:generaliteration} for latent group lasso. Time was measured in seconds per 100 iterations. Standard deviations are listed in parentheses. Any cell with missing values indicates that the experiment failed to run due to lack of memory.}
\label{tab:lgl_scalability}
\vskip -0.2in
\begin{center}
\fontsize{10}{12}\selectfont
\begin{sc}
%\resizebox{0.8\columnwidth}{!}{%
\vskip 0.1in
\begin{tabular}{rrrrrrrrrr}\hline
         & \#GPUs  & 1     & 2      & 3      & 4      & 5      & 6      & 7      & 8      \\
\#groups & $p$     &       &        &        &        &        &        &        &        \\ \hline
1000     & 120010  & 4.754 & 3.359  & 2.524  & 2.166  & 1.894  & 1.649  & 1.598  & 1.602  \\
         &         & (0.003) & (0.024)  & (0.090)  & (0.068)  & (0.017)  & (0.020)  & (0.053)  & (0.050)  \\
5000     & 600010  &       & 19.133 & 14.378 & 10.888 & 9.299  & 7.883  & 7.386  & 7.251  \\
         &         &       & (0.142)  & (0.083)  & (0.344)  & (0.451)  & (0.042)  & (0.025)  & (0.074)  \\
8000     & 960010  &       &        & 22.023 & 17.825 & 14.236 & 12.141 & 10.964 & 10.133 \\
         &         &       &        & (0.132)  & (0.180)  & (0.150)  & (0.145)  & (0.077)  & (0.057)  \\
10000    & 1200010 &       &        &        & 22.271 & 17.647 & 15.045 & 13.320 & 12.194 \\
         &         &       &        &        & (0.439)  & (0.476)  & (0.165)  & (0.067) & (0.070)  \\ \hline
\end{tabular}
\end{sc}
\end{center}
\vskip -0.2in
\end{table}

\section{Monotone operator theory}\label{sec:theory}
Here we briefly state necessary results from monotone operator theory for the proofs in the subsequent section. 
For more details, see \citet{Bauschke:ConvexAnalysisAndMonotoneOperatorTheoryIn:2011}.

\paragraph{\textit{Set-valued operators.}}
A set-valued operator $T:\mathbb{R}^n \to 2^{\mathbb{R}^n}$
maps a vector $z \in \mathbb{R}^n$ to a set $T(z) \subset \mathbb{R}^n$.
The graph of $T$ is denoted by $\gra T =\{ (z,w) \in \mathbb{R}^n\times\mathbb{R}^n : w \in T(z)\}$.
When $T(z)$ is single-valued, i.e., $T(z)=\{w\}$, $T$ is a function, and we write simply as $T(z)=w$.
We use $I$ to denote the identity operator, i.e, $I(z)=z$.
When no confusion incurs, we also use $Tz$ to mean $T(z)$. In particular, when $T$ is a single-valued linear operator, $Tz$ is identified with a multiplication of the corresponding matrix $T \in \mathbb{R}^{n \times n}$ 
by a vector $z$.
The set of zeros of $T$ is defined as $\zer T = \{ z \in \mathbb{R}^n : 0 \in Tz \}$.
The inverse of $T$ is $T^{-1}:\mathbb{R}^n \to 2^{\mathbb{R}^n}$ such that $T^{-1}(w) = \{ z \in \mathbb{R}^n : w \in Tz \}$, hence $\gra T^{-1} = \{ (w,z) \in \mathbb{R}^n\times\mathbb{R}^n : w \in Tz\}$.
The resolvent of $T$ is $R_T = (I+T)^{-1}$.
Scaling of an operator $T$ by $t\in \mathbb{R}$ is defined by $(tT)(z) = tT(z)$.
Composition of two set-valued operators $T_1: \mathbb{R}^n \to 2^{\mathbb{R}^n}$ and  $T_2: \mathbb{R}^n \to 2^{\mathbb{R}^n}$ is defined by $T_2T_1z = \bigcup_{w \in T_1z} T_2w$.

\paragraph{\textit{Fixed points.}}
%\subsubsection*{Fixed points of an operator}
An operator $T:\mathbb{R}^n \to 2^{\mathbb{R}^n}$ is called nonexpansive if 
$\|u-u'\|_2 \le \|z-z'\|_2$ for all $u\in T(z), u'\in T(z') \in \mathbb{R}^n$;
it is called contractive if the inequality is strict.
Any nonexpansive operator is single-valued.
The set of fixed points of a single-valued operator $T$ is denoted by $\Fix{T}$, i.e., $\Fix{T}=\{z: z=Tz\}$.
For a contractive operator $T$, the fixed point iteration $z^{k+1}=Tz^{k}$ converges to a point in $\Fix{T}$, if $\Fix{T}\neq\emptyset$.

\paragraph{\textit{Averaged operators.}}
An operator $T$ is called $\alpha$-averaged, $0<\alpha<1$, if $T=(1-\alpha)I + \alpha R$ for some nonexpansive operator $R$.
Usually $R$ is defined implicitly. Note that $T$ itself is nonexpansive, and $\Fix{T}=\Fix{R}$.
If $T_1$ is $\alpha_1$-averaged and $T_2$ is $\alpha_2$-averaged, then $T_1T_2$ is $\alpha$-averaged where
$\alpha = (\alpha_1+\alpha_2-2\alpha_1\alpha_2)/(1-\alpha_1\alpha_2)$. %\cite[Thoerem 3]{Ogura:NumericalFunctionalAnalysisAndOptimization:2002}.
An $\alpha$-averaged operator $T$ is nonexpansive but not necessarily contractive, hence the fixed point iteration $z^{k+1}=Tz^{k}$ above may not converge to a fixed point even if $\Fix{T}\neq\emptyset$.
In this case, the Krasnosel'ski\u{i}-Mann (KM) iteration
$z^{k+1} = z^{k} + \rho_k(Tz^{k}-z^{k})$
with a sequence $\{\rho_k\} \subset (0,1/\alpha]$ such that  $\sum_{k=0}^{\infty}\rho_k(1-\alpha\rho_k) = \infty$
ensures convergence. % to a point in $\Fix T$ if nonempty.

\paragraph{\textit{Monotone operators.}}
An operator $T$ is called monotone if $\langle z - z' , w - w' \rangle \ge 0$
for all $z, z' \in \mathbb{R}^n$ and for all $w \in Tz$, $w' \in Tz'$,
and maximally monotone if it is monotone and there is no monotone operator $T'$ such that $T \neq T'$ and $\gra T \subset \gra T'$.
The resolvent of a maximally monotone operator is single-valued; it is $1/2$-averaged.

\paragraph{\textit{Cocoercive operators.}}
A single-valued operator $T$ is called $\gamma$-cocoercive if for some $\gamma > 0$, $\langle z - z', Tz - Tz' \rangle \ge \gamma \| Tz - Tz' \|_2^2$.
A cocoercive operator is maximally monotone.
If an operator $T$ is $\gamma$-cocoercive with $\gamma > 1/2$, then $I-tT$ ($t>0$) is $t/(2\gamma)$-averaged.
A convex, closed, and proper function $\phi$ has $L$-Lipschitz continous gradient $\nabla \phi$ if and only if $\nabla \phi$ is $1/L$-cocoercive. 

\paragraph{\textit{Subdifferential.}}
An important example of a maximally monotone operator is
the subdifferential of a convex closed proper function.
A vector $g\in \mathbb{R}^n$ is a subgradient of a convex function $\phi$ at $z$ if
$
	\phi(z') \ge \phi(z) + \langle g, z'-z \rangle, ~ \forall z' \in \mathbb{R}^n.
$
The subdifferential of $\phi$ at $z$ is the set of subgradients at $z$:
$\partial \phi(z) = \{ g \in \mathbb{R}^n: \phi(z') \ge \phi(z) + \langle g, z'-z \rangle, ~ \forall z' \in \mathbb{R}^n \}$.
When $\phi$ is differentiable, $\partial \phi(z) = \{\nabla \phi(z) \}$.
If $\phi$ is in addition closed and proper, $(\partial \phi)^{-1} = \partial \phi^{*}$ holds, where $\phi^*$ is convex conjugate defined by $\phi^*(w) = \sup_{z \in \mathbb{R}^n} \{ \langle z, w \rangle - \phi(z) \}$.
The resolvent of a maximally monotone subdifferential operator is the proximity operator:
	$R_{\partial \phi} = (I+\partial \phi)^{-1}(z) = \prox_{\phi}(z) = \argmin_{z' \in \mathbb{R}^n} \phi(z') + \frac{1}{2}\| z' - z \|_2^2$.

\paragraph{\textit{Skew-symmetric operators.}}
Another example of a maximally monotone operator is a skew-symmetric matrix. 
The sum of a maximally monotone operator and a skew-symmetric matrix is also maximally monotone. 

\paragraph{\textit{Change of metric.}}
Note that the notion of nonexpansiveness, averagedness, cocoercivity, and monotonicity of an operator requires the inner product
$\langle \cdot, \cdot \rangle$ and its associated norm $\|\cdot\|_2$. 
We can appropriately define these concepts with respect to another inner product and its associated norm as well, say
$\langle \cdot, \cdot \rangle_M$ 
and $\|\cdot\|_M$, 
for $M$ a symmetric, positive definite matrix.
In particular,
averagedness of composition, convergence of the KM iteration, and averagedness of $I-tT$ for cocoercive $T$
hold by substituting the inner products and norms by $\langle \cdot, \cdot \rangle_M$ and $\|\cdot\|_M$, respectively.

\paragraph{\textit{Forward-backward splitting.}}
Some optimization problems can be translated to finding an element of $\zer T$
for an appropriate choice of maximally monotone
operator $T$. Often $T$ can be split into a sum of two maximally monotone operators $F$ and $G$. If $G$ is $\gamma$-cocoercive (hence single-valued), then we see
\begin{align}
	0 \in T(z) & \iff (I + tF)(z) \ni (I - tG)(z)  \nonumber  \\
				& \iff z = R_{tF}(I-tG)(z), \label{eqn:fbsplit_}
\end{align}
for $t > 0$. Equivalence \eqref{eqn:fbsplit_} shows that $\zer{(F+G)} = \Fix(R_{tF}(I-tG))$, 
thus we may solve the problem of finding a zero of $T$ by the following fixed-point iteration
\begin{align}\label{eqn:relaxedforwardbackward_}
		 z^{k+1} = (1-\rho_k)z^k +\rho_k R_{tF}(I-tG)(z^k).
\end{align}
This iteration is a KM iteration
because $R_{tF}(I-tG)$ is a $1/\delta$-averaged operator, 
where $\delta=2-t/(2\gamma)$. 
Thus \eqref{eqn:relaxedforwardbackward_} converges for $t\in(0,2\gamma)$ 
if $\zer(F+G)\neq\emptyset$ and
under the aforementioned condition for $\{\rho_k\}$.
Furthermore, the following hold \citep[proof of Theorems 25.8]{Bauschke:ConvexAnalysisAndMonotoneOperatorTheoryIn:2011}:
\begin{subequations}\label{eqn:FBS}
\begin{align}
	&
	\|z^{k+1}-z\|_2^2 \le \|z^{k}-z\|_2^2, \quad \forall z \in \zer(F+G);
	\label{eqn:FBS:monotone} \\
	&
	\textstyle \sum_{k=0}^{\infty} {\frac{\delta-\rho_k}{\rho_k}}\| z^{k+1}-z^{k} \|_2^2 \le \| z^0 - z\|_2^2, \quad \forall z \in \zer(F+G);
	\label{eqn:FBS:squaresummable}\\
	& \| z^{k+1}-z^{k} \|_2 \to 0.  \label{eqn:FBS:subsequence} \noeqref{eqn:FBS:subsequence}
%	\\
%	%& z^{k} \to z^{\star}$ for some $z^{\star} \in \zer{(F+G)}.
\end{align}
\end{subequations}

\paragraph{\textit{Preconditioning.}}
%In the derivation of \eqref{eqn:fbsplit_}, 
In the forward-backward splitting above,
observe that the identity matrices in the first line can be replaced by an invertible matrix $M$, yielding a \emph{preconditioned} forward-backward splitting algorithm
\begin{align}\label{eqn:preconditionedFBS}
	z^{k+1} &= (1-\rho_k)z^k + \rho_k R_{tM^{-1}F}(I-tM^{-1}G)(z^k).
\end{align}
Preconditioning is useful when evaluating the resolvent $R_{tM^{-1}F}$ is easier than $R_{tF}$.
%If $M^{-1}F$ is maximally monotone and $M^{-1}G$ is cocoercive, Proposition \ref{prop:forwardbackward} applies.
It can be shown that if $M$ is symmetric positive definite,
$M^{-1}F$ is maximally monotone with respect to $\langle \cdot, \cdot \rangle_M$  \cite{Combettes:Optimization:2012},
and $M^{-1}G$ is $\gamma\lambda_{\min}(M)$-cocoercive with respect to $\|\cdot\|_M$ %, where $\lambda_{\min}(M)$ is the smallest eigenvalue of $M$
\citep{Davis:SiamJOptim:2015}.
Therefore 
we can replace
%Proposition \ref{prop:forwardbackward} holds for \eqref{eqn:preconditionedforwardbackward2} with replacing 
$\|\cdot\|_2$ by $\|\cdot\|_M$, and $\gamma$ by $\gamma\lambda_{\min}(M)$
in \eqref{eqn:FBS}.

\fontsize{10}{12}\selectfont
\section{Proofs}\label{sec:proofs}

\subsection{Preconditioned forward-backward splitting}
\begin{proof}[Proof of Lemma \ref{lemma:pregap}]
	Observe that
	\begin{align}\label{eqn:pregap1}
		\|z^{-}-z\|_M^2 &= \|z^{-}-z_{\rho}+z_{\rho}-z\|_M^2 \nonumber\\
						&= \|z^{-}-z_{\rho}\|_M^2 - 2\langle z^{-}-z_{\rho},z-z_{\rho}\rangle_M + \|z_{\rho}-z\|_M^2,
	\end{align}
	and, from \eqref{eqn:fbsplit_},
	\begin{align}\label{eqn:fbinclusion}
	&z^{+} + M^{-1}Fz^{+} \ni z^{-} - M^{-1}Gz^{-} \nonumber \\
	&\iff z^{+} + M^{-1}\begin{bmatrix} 0 & K^T \\ -K & \partial h^{*} \end{bmatrix} \begin{bmatrix} x^{+} \\ y^{+} \end{bmatrix}
		\ni z^{-} - M^{-1} \begin{bmatrix} \nabla f &  \\  & 0 \end{bmatrix}\begin{bmatrix} x^{-} \\ y^{-} \end{bmatrix} \nonumber \\
	&\iff  (1/\rho)(z^{-}-z_{\rho}) = z^{-}-z^{+} \in M^{-1} \begin{bmatrix} \nabla f(x^{-}) + K^Ty^{+} \\ -Kx^{+} + \partial h^{*}(y^{+}) \end{bmatrix}
	\end{align}
	Then,
	\begin{align}\label{eqn:pregap2}
	\langle z^{-}-z_{\rho},z-z_{\rho}\rangle_M &= \langle \rho(z^{-}-z^{+}), z-z_{\rho} \rangle_M
				= \rho \left\langle \begin{bmatrix} \nabla f(x^{-})+K^Ty^{+} \nonumber\\ -Kx^{+} + \partial h^{*}(y^{+}) \end{bmatrix},
									\begin{bmatrix} x-x_{\rho} \\ y-y_{\rho} \end{bmatrix} \right \rangle \nonumber\\
				%&= \rho \langle \nabla f(x^{-})+K^Ty^{+}, x-x_{\rho} \rangle
				%+ \rho \langle -Kx^{+}+\partial h^{*}(y^{+}), y-y_{\rho} \rangle \nonumber\\
				&= \rho \langle \nabla f(x^{-}), x-x_{\rho} \rangle + \rho \langle K^Ty^{+}, x-x_{\rho} \rangle \nonumber\\
				& \quad + \rho \langle -Kx^{+}, y-y_{\rho} \rangle + \rho \langle \partial h^{*}(y^{+}), y-y_{\rho} \rangle \nonumber\\
				&= \rho \langle \nabla f(x^{-}), x^{-}-x_{\rho} \rangle + \rho \langle \nabla f(x^{-}), x-x^{-} \rangle + \rho \langle K^Ty^{+}, x-x_{\rho} \rangle \nonumber\\
				& \quad + \rho \langle -Kx^{+}, y-y_{\rho} \rangle + \rho \langle \partial h^{*}(y^{+}), y^{+}-y_{\rho} \rangle + \rho \langle \partial h^{*}(y^{+}), y-y^{+} \rangle \nonumber\\
				&\le \rho \langle \nabla f(x^{-}), x^{-}-x_{\rho} \rangle + \rho (f(x)-f(x^{-})) + \rho \langle K^Ty^{+}, x-x_{\rho} \rangle \nonumber\\
				& \quad + \rho \langle -Kx^{+}, y-y_{\rho} \rangle + \rho \langle \partial h^{*}(y^{+}), y^{+}-y_{\rho} \rangle + \rho (h^{*}(y)-h^{*}(y^{+}) ),
	\end{align}
	understanding that ``$\partial h^{*}(\cdot)$'' represents a subgradient in the corresponding subdifferential.
	The first and second equalities follow from \eqref{eqn:fbinclusion};
	the last inequality is due to the definition of subgradient.
	By plugging the inequality \eqref{eqn:pregap2} in \eqref{eqn:pregap1} and rearranging terms, we obtain
	\begin{align}\label{eqn:pregap3}
	2\rho(\mathcal{L}(x^{+},y) &-\mathcal{L}(x,y^{+})) - \|z^{-}-z\|_M^2 + \|z_{\rho}-z\|_M^2  \nonumber\\
		& \le - \|z^{-}-z_{\rho}\|_M^2 + 2\rho \langle \nabla f(x^{-}),  x^{-}-x_{\rho} \rangle + 2\rho \langle \partial h^{*}(y^{+}),y^{+}-y_{\rho} \rangle \nonumber\\
		& \quad - 2\rho \langle K^Ty^{+}, x_{\rho} \rangle + 2\rho \langle Kx^{+}, y_{\rho} \rangle + 2\rho (f(x^{+})-f(x^{-}) )
	\end{align}
	Now it suffices to show that the right-hand side of \eqref{eqn:pregap3} is less than or equal to $(1-2/\rho)\|z^{-}-z_{\rho}\|_M^2 + (L_f/\rho) \|x^{-}-x_{\rho}\|_2^2$. %$(L_f/\rho)\|x^{-}-x_{\rho}\|_2^2$.
	To see this,
	% add $0=2\rho(\langle x^{+}, K^Ty^{+} \rangle - \langle y^{+}, Kx^{+}\rangle)$ to have
	\begin{align*}
	\text{(RHS)} &=
		- \|z^{-}-z_{\rho}\|_M^2
		+ 2\rho \langle \nabla f(x^{-}),  x^{-}-x_{\rho} \rangle + 2\rho \langle \partial h^{*}(y^{+}),y^{+}-y_{\rho} \rangle \nonumber\\
		& \quad\quad - 2\rho \langle K^Ty^{+}, x_{\rho}-x^{+} \rangle
		- 2\rho \langle K^Ty^{+}, x^{+} \rangle
		+ 2\rho \langle Kx^{+}, y_{\rho} - y^{+} \rangle
		+ 2\rho \langle Kx^{+}, y^{+} \rangle \\
		& \quad\quad + 2\rho (f(x^{+})-f(x^{-}) ) \\
		&= - \|z^{-}-z_{\rho}\|_M^2  + 2\rho (f(x^{+}) - f(x^{-}) - \langle \nabla f(x^{-}), x^{+}-x^{-}\rangle ) \nonumber\\
		& \quad\quad + 2\rho \langle \nabla f(x^{-}) + K^Ty^{+}, x^{+}-x_{\rho} \rangle + 2\rho \langle -Kx^{+} + \partial h^{*}(y^{+}),y^{+}-y_{\rho} \rangle \nonumber\\
		&= - \|z^{-}-z_{\rho}\|_M^2  + 2\rho (f(x^{+}) - f(x^{-}) - \langle \nabla f(x^{-}), x^{+}-x^{-}\rangle ) \nonumber\\
		& \quad\quad + 2\rho \langle M(z^{-}-z^{+}),z^{+}-z_{\rho} \rangle \nonumber\\
		&= - \|z^{-}-z_{\rho}\|_M^2  + 2\rho (f(x^{+}) - f(x^{-}) - \langle \nabla f(x^{-}), x^{+}-x^{-}\rangle ) \nonumber\\
		& \quad\quad + 2 \langle z^{-}-z_{\rho},z^{+}-z_{\rho} \rangle_M \nonumber\\
		&= - \|z^{-}-z_{\rho}\|_M^2  + 2\rho (f(x^{+}) - f(x^{-}) - \langle \nabla f(x^{-}), x^{+}-x^{-}\rangle ) \nonumber\\
		& \quad\quad + 2(1-1/\rho) \langle z^{-}-z_{\rho},z^{-}-z_{\rho} \rangle_M \nonumber\\
		&\le (1-2/\rho)\|z^{-}-z_{\rho}\|_M^2 + \rho L_f \|x^{-}-x^{+}\|_2^2 \\
		&= (1-2/\rho)\|z^{-}-z_{\rho}\|_M^2 + (L_f/\rho) \|x^{-}-x_{\rho}\|_2^2  %\\
		%&\le  (L_f/\rho_k) \|x^{-}-x_{\rho}\|_2^2,
	\end{align*}
	where the third equality follows from \eqref{eqn:fbinclusion};
	the fourth and fifth equalities are from \eqref{eqn:fbsplit_};
	the first inequality is due to the Lipschitz continuity of $\nabla f$; the final equality is again from \eqref{eqn:fbsplit_}.
\end{proof}

%The following proposition provides a cocoercivity constant for $M^{-1}G$. % larger than mentioned in Section \ref{sec:theory}.
We need the following fact to prove Theorem \ref{thm:ergodic}.

\begin{proposition}\label{prop:preconditionedcocoercivity}
	Let $M$ be a symmetric, positive definite matrix in $\mathbb{R}^{(p+l)\times(p+l)}$ and $G$ as given in \eqref{eqn:operatorsplit}. Then, for $\mu >0$ such that
	\begin{align}\label{eqn:normupperbound}
		\| (x,0) \|_{M^{-1}}^2 \le (1/\mu)\|x\|_2^2, \quad \forall x \in \mathbb{R}^p,
	\end{align}
	operator $M^{-1}G$ is $\mu/L_f$-cocoercive in $\langle\cdot,\cdot\rangle_M$.
\end{proposition}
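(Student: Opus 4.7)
The plan is a short direct calculation exploiting two facts: the block-diagonal structure of $G$ (only the $x$-block is nonzero), and the Baillon--Haddad theorem, which gives that $\nabla f$ is $(1/L_f)$-cocoercive in the standard inner product since $f$ is convex with $L_f$-Lipschitz gradient.

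First, I would take arbitrary $z=(x,y)$, $z'=(x',y')\in\mathbb{R}^{p+l}$, and write $w:=Gz-Gz'=(\nabla f(x)-\nabla f(x'),0)$. Then rewrite the $M$-inner-product expression on the left using $\langle M^{-1}u,v\rangle_M=\langle u,v\rangle$:
\begin{equation*}
\langle M^{-1}Gz-M^{-1}Gz',\,z-z'\rangle_M=\langle w,z-z'\rangle=\langle \nabla f(x)-\nabla f(x'),\,x-x'\rangle,
\end{equation*}
since the second block of $w$ is zero.

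Next, I would expand the $M$-norm of $M^{-1}w$ on the right:
\begin{equation*}
\|M^{-1}Gz-M^{-1}Gz'\|_M^2=\langle M^{-1}w,w\rangle=\|(\nabla f(x)-\nabla f(x'),0)\|_{M^{-1}}^2.
\end{equation*}
At this point the hypothesis \eqref{eqn:normupperbound} applied to $\nabla f(x)-\nabla f(x')\in\mathbb{R}^p$ yields the bound
\begin{equation*}
\|(\nabla f(x)-\nabla f(x'),0)\|_{M^{-1}}^2\le (1/\mu)\|\nabla f(x)-\nabla f(x')\|_2^2,
\end{equation*}
so multiplying by $\mu/L_f$ gives that the right-hand side of the desired cocoercivity inequality is at most $(1/L_f)\|\nabla f(x)-\nabla f(x')\|_2^2$.

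Finally, the Baillon--Haddad theorem asserts that $\nabla f$ is $(1/L_f)$-cocoercive in the Euclidean inner product, i.e.
\begin{equation*}
\langle \nabla f(x)-\nabla f(x'),\,x-x'\rangle\ge (1/L_f)\|\nabla f(x)-\nabla f(x')\|_2^2,
\end{equation*}
which matches the upper bound just established and closes the chain. There is no real obstacle here: the only conceptual point is recognizing that the zero second block of $G$ lets the hypothesis on the $(1,1)$-block of $M^{-1}$ do all of the heavy lifting, converting a preconditioned cocoercivity statement into the scalar cocoercivity of $\nabla f$.
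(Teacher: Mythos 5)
Your proposal is correct and follows essentially the same route as the paper's proof: both reduce the $M$-metric cocoercivity statement to the Euclidean inequality $\langle \nabla f(x)-\nabla f(x'),x-x'\rangle \ge (1/L_f)\|\nabla f(x)-\nabla f(x')\|_2^2$ via the identities $\langle M^{-1}u,v\rangle_M=\langle u,v\rangle$ and $\|M^{-1}w\|_M^2=\|w\|_{M^{-1}}^2$, then invoke hypothesis \eqref{eqn:normupperbound} on the nonzero block of $Gz-Gz'$. The only cosmetic difference is that you cite the Baillon--Haddad theorem explicitly where the paper simply appeals to the $1/L_f$-cocoercivity of $\nabla f$, which is the same fact.
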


\begin{proof}
	\begin{align*}
		\| M^{-1}Gz - M^{-1}Gz' \|_M^2 = \| Gz - Gz' \|_{M^{-1}}^2 &= \| (\nabla f(x) - \nabla f(x'), 0) \|_{M^{-1}}^2 \\
			& \le (1/\mu)\| \nabla f(x) - \nabla f(x') \|_2^2 \\
			& \le (L_f/\mu)\langle \nabla f(x) - \nabla f(x'), x - x' \rangle \\
			&= (L_f/\mu) \langle Gz - Gz', z-z' \rangle \\
			&= (L_f/\mu) \langle M^{-1}Gz - M^{-1}Gz', z-z' \rangle_M.
	\end{align*}
	Note that we used $1/L_f$-cocoercivity of $\nabla f$ in the third line.
\end{proof}
\begin{proof}[Proof of Proposition \ref{prop:Gtildecocoercive}]
Note that $\|L^T\cdot\|_{M_{\mathsf{LV}}}^2 = \langle  M_{\mathsf{LV}} L^T\cdot, L^T\cdot \rangle = \|\cdot\|_{M_{\mathsf{CV}}}^2$ and likewise $\langle L^T\cdot, L^T\cdot \rangle_{M_{\mathsf{LV}}} = \langle \cdot, \cdot \rangle_{M_{\mathsf{CV}}}$.
Then,
\begin{align*}
	\langle M_{\mathsf{LV}}^{-1}&L^{-1}GL^{-T}w - M_{\mathsf{LV}}^{-1}L^{-1}GL^{-T}w', w-w' \rangle_{M_{\mathsf{LV}}}  \\
	&= \langle (L^TM_{\mathsf{CV}}^{-1}L)(L^{-1}GL^{-T})(L^Tz) - (L^TM_{\mathsf{CV}}^{-1}L)(L^{-1}GL^{-T})(L^Tz'), L^Tz-L^Tz' \rangle_{M_{\mathsf{LV}}} \\
	&= \langle L^T(M_{\mathsf{CV}}^{-1}Gz - M_{\mathsf{CV}}^{-1}Gz'), L^T(z-z') \rangle_{M_{\mathsf{LV}}} \\
	&= \langle M_{\mathsf{CV}}^{-1}Gz - M_{\mathsf{CV}}^{-1}Gz', z-z' \rangle_{M_{\mathsf{CV}}} \\
	&\ge (\mu/L_f) \| M_{\mathsf{CV}}^{-1}Gz - M_{\mathsf{CV}}^{-1}Gz' \|_{M_{\mathsf{CV}}} \\
	&=   (\mu/L_f) \| L^T(L^{-T}M_{\mathsf{LV}}^{-1}L^{-1}GL^{-T}(L^Tz)-L^{-T}M_{\mathsf{LV}}^{-1}L^{-1}GL^{-T}(L^Tz') \|_{M_{\mathsf{LV}}}^2 \\
	&=   (\mu/L_f) \| M_{\mathsf{LV}}^{-1}L^{-1}GL^{-T}w - M_{\mathsf{LV}}^{-1}L^{-1}GL^{-T}w' \|_{M_{\mathsf{LV}}}^2,
\end{align*}
where the inequality and $\mu$ come from Proposition \ref{prop:preconditionedcocoercivity}, as follows.
From \eqref{eqn:M1M2}, we see that
\begin{align*}
	M_{\mathsf{CV}}^{-1}
	&= L^{-T} M_{\mathsf{LV}}^{-1} L^{-1}
	= \begin{bmatrix} I &   \tau K \\  & I \end{bmatrix}
	  \begin{bmatrix} \tau I &  \\  & (\frac{1}{\sigma}I-\tau KK^T)^{-1} \end{bmatrix}
	  \begin{bmatrix} I & 0 \\ \tau K  & I \end{bmatrix}  \\
	&= \begin{bmatrix} \tau I + \tau^2 K^T(\frac{1}{\sigma}I-\tau KK^T)^{-1}K & \tau K^T(\frac{1}{\sigma}I-\tau KK^T)^{-1} \\
			\tau(\frac{1}{\sigma}I-\tau KK^T)^{-1}K  & (\frac{1}{\sigma}I-\tau KK^T)^{-1} \end{bmatrix}
\end{align*}
and can choose $\mu=1/\tau-\sigma\|K\|_2^2$, because
$\lambda_{\max}(\tau I + \tau^2 K^T(\frac{1}{\sigma}I-\tau KK^T)^{-1}K) = \tau + \tau^2\|K\|_2^2/(1/\sigma-\tau\|K\|_2^2) = \frac{1}{1/\tau-\sigma\|K\|_2^2}$.
Therefore $M_{\mathsf{LV}}^{-1}L^{-1}GL^{-T}$ is $(1/\tau-\sigma\|K\|_2^2)/L_f$-cocoercive with respect to $\|\cdot\|_{M_{\mathsf{LV}}}$.%, hence
%Proposition \ref{prop:forwardbackward} holds with $t=1$ and $\gamma=(1/\tau-\sigma\|K\|_2^2)/L_f$.
\end{proof}
\begin{proof}[Proof of Proposition \ref{prop:unifiedconvergence}]
For $M$ given by \eqref{eqn:generalM}, its inverse is given by 
\begin{align*}
M^{-1} &= \tilde{L}^{-T} M_{\mathsf{LV}} \tilde{L}^{-1} = 
  \begin{bmatrix} I & - \tau C \\ & I\end{bmatrix}
  \begin{bmatrix} \tau I & \\ & (\frac{1}{\sigma} I - \tau K K^T)^{-1} \end{bmatrix}
  \begin{bmatrix} I & \\ - \tau C & I \end{bmatrix} \\
  &=\begin{bmatrix}\tau I + \tau^2 C^T (\frac{1}{\sigma} I - \tau K K^T)^{-1} C & - \tau C^T (\frac{1}{\sigma} I - \tau K K^T)^{-1} \\
    -\tau (\frac{1}{\sigma} I - \tau K K^T)^{-1} C & (\frac{1}{\sigma} I - \tau K K^T)^{-1} \end{bmatrix}.
\end{align*}
Since $\lambda_{\max}(\tau I + \tau^2 C^T (\frac{1}{\sigma} I - \tau K K^T)^{-1} C) = \tau + \tau^2 \|C\|_2^2/(1/\sigma - \tau \|K\|_2^2)$,
we see that \eqref{eqn:normupperbound} holds with $\mu = \left(\tau + \tau^2 \|C\|_2^2/(1/\sigma - \tau \|K\|_2^2)\right)^{-1} 
= \frac{1/\tau - \sigma \|K\|_2^2}{1 - \sigma \tau (\|K\|_2^2 - \|C\|_2^2)}$. 
This shows that the operator $M^{-1}G$ is $\frac{1/\tau - \sigma \|K\|_2^2}{L_f(1 - \sigma \tau (\|K\|_2^2 - \|C\|_2^2))}$-cocoercive with respect to $\langle \cdot, \cdot \rangle_M$. 
Hence, Algorithm \eqref{eqn:generaliteration} meets the condition for \eqref{eqn:preconditionedFBS} with $t=1$ if
with $\gamma = \frac{1/\tau - \sigma \|K\|_2^2}{L_f(1 - \sigma \tau (\|K\|_2^2 - \|C\|_2^2))}>1/2$.
%Proposition \ref{prop:forwardbackward} holds with respect to $\| \cdot \|_M$ 
%with $\gamma = \frac{1/\tau - \sigma \|K\|_2^2}{L_f(1 - \sigma \tau (\|K\|_2^2 - \|C\|_2^2))}$
%and $t=1$. 
Required positive definiteness of $M$ implies 
$\frac{1}{\tau\sigma} > \|K\|_2^2$.
%and $t \in (0, 2\gamma)$ in the conditions 
%of Proposition \ref{prop:forwardbackward} implies 
%$$\frac{1/\tau - \sigma \|K\|_2^2}{1 - \sigma \tau (\|K\|_2^2 - \|C\|_2^2)} > \frac{L_f}{2}.$$ 
Thus the result \eqref{eqn:generalparamrange} follows.  
\end{proof}

\begin{proof}[Proof of Theorem \ref{thm:ergodic}]
	From the convexity-concavity of $\mathcal{L}(x,y)$, we have
	\begin{align*}
		\mathcal{L}(\bar{x}^N,y)-\mathcal{L}(x,\bar{y}^N) &\le \frac{1}{\sum_{k=0}^N\rho_k}\sum_{k=0}^N\rho_k(\mathcal{L}(\tilde{x}^{k},y)-\mathcal{L}(x,\tilde{y}^{k})) \\
			&\le \frac{1}{2\sum_{k=0}^N \rho_k}\left( \|z^{0}-z\|_M^2 + \sum_{k=0}^N\frac{L_f}{\rho_k}\|x^{k+1}-x^{k}\|_2^2\right),
	\end{align*}
	where the second inequality comes from Lemma \ref{lemma:pregap} by putting $z^{-}=z^{k}$, $z^{+}=\tilde{z}^{k}$, $\rho=\rho_k$, $z_{\rho}=z^{k+1}$, and noting that $1<1/\alpha<2$ by the assumption $\mu>L_f/2$.
	Now by Proposition \ref{prop:preconditionedcocoercivity} we see that $R_{M^{-1}F}(I-M^{-1}G)$ is $\alpha$-averaged with respect to $\|\cdot\|_M$, 
	%thus by Proposition \ref{prop:forwardbackward} (statement \ref{prop:forwardbackward:summable}) we have
	thus by \eqref{eqn:FBS:squaresummable} we have
	\begin{align*}
	\frac{1-\alpha\bar{\rho}}{\alpha}\sum_{k=0}^{\infty}\frac{1}{\rho_k}\|x^{k+1}-x^{k}\|_2^2 &\le \sum_{k=0}^{\infty}\frac{1-\alpha\rho_k}{\alpha\rho_k}\|x^{k+1}-x^{k}\|_2^2  \\
	&\le \sum_{k=0}^{\infty}\frac{1-\alpha\rho_k}{\alpha\rho_k}\|z^{k+1}-z^{k}\|_2^2 \le \frac{1}{\lambda_{\min}(M)}\|z^{0}-z^{\star}\|_M^2.
	\end{align*}
	Therefore
	\begin{align*}
		\mathcal{L}(\bar{x}^N,y)-\mathcal{L}(x,\bar{y}^N)
			&\le \frac{1}{2\sum_{k=0}^N \rho_k}\left( \|z^{0}-z\|_M^2 + \sum_{k=0}^N\frac{L_f}{\rho_k}\|x^{k+1}-x^{k}\|_2^2\right) \\
			& \le \frac{1}{2\sum_{k=0}^N\rho_k}\left( \|z^{0}-z\|_M^2 + \frac{\alpha L_f}{(1-\alpha\bar{\rho})\lambda_{\min}(M)}\|z^{0}-z^{\star}\|_M^2\right).
	\end{align*}
\end{proof}

\begin{proof}[Proof of Corollary \ref{cor:ergodic}]
The proof closely follows that of \citet[Theorem 1]{loris2011generalization}, given for $f$ being quadratic.
Because $z^{k}=(x^{k},y^{k}) \to (x^{\star},y^{\star})=z^{\star} \in \Fix{T}$ where $T=R_{M^{-1}F}(I-M^{-1}G)$,
we see $\tilde{z}^k=Tz^{k} \to z^{\star}$ and thus $\bar{z}^N=(\bar{x}^N,\bar{y}^N)={(\sum_{k=1}^N \rho_k \tilde{z}^{k})}/{(\sum_{k=1}^N \rho_k)} \to z^{\star}$.
Also because $(x^{\star},y^{\star})$ is a saddle-point of $\mathcal{L}(x,y)$, we have $\mathcal{F}^{\star}=\mathcal{F}(x^{\star})=\mathcal{L}(x^{\star},y^{\star}) \ge \mathcal{L}(x^{\star},y)$ for all $y\in\mathbb{R}^l$.
Then
\[
	0 \le \mathcal{F}(\bar{x}^N)-\mathcal{F}^{\star} = \mathcal{F}(\bar{x}^N)-\mathcal{L}(x^{\star},y^{\star})
		\le \mathcal{F}(\bar{x}^N) - \mathcal{F}(x^{\star},\bar{y}^N)
		= \sup_{y\in\mathbb{R}^l} \mathcal{L}(\bar{x}^N,y)-\mathcal{L}(x^{\star},\bar{y}^N).
\]
The $\sup_{y\in\mathbb{R}^l} \mathcal{L}(\bar{x}^N,y) = f(\bar{x}^N) + \sup_{y\in\mathbb{R}^l}\langle K\bar{x}^N,y \rangle - h^{*}(y)$ is attained at a $\hat{y}^N \in \partial h(K\bar{x}^N)$ because under the assumption $\dom h = \mathbb{R}^l$, $h^*$ is 1-coercive, thus $- \langle K\bar{x}^N, \cdot \rangle + h^\star (\cdot)$ is coercive (\citealp[Prop.X.1.3.9]{hiriart1993convex}; \citealp[Proposition 11.14]{,Bauschke:ConvexAnalysisAndMonotoneOperatorTheoryIn:2011}).
As $\bar{x}^N$ converges, $K\bar{x}^N$ is bounded independent of $N$.
Now because $h$ is real-valued, it follows that $h$ is locally Lipschitz in the neighborhood of $K\bar{x}^N$ \citep[see, e.g.,][Proposition 5.4.2]{Bertsekas:ConvexOptimizationTheory:2009}.
Let the local Lipschitz constant be $Q$. It also follows that $\partial h(K\bar{x}^N)$ is bounded by $Q$, i.e. $\|\hat{y}^N\|_2\le Q$. Therefore
\begin{align*}
	0 \le \mathcal{F}(\bar{x}^N)-\mathcal{F}^{\star} &= \mathcal{F}(\bar{x}^N)-\mathcal{L}(x^{\star},y^{\star})
		= \sup_{y\in\mathbb{R}^l} \mathcal{L}(\bar{x}^N,y)-\mathcal{L}(x^{\star},\bar{y}^N) \\
		&= \max_{\|y\|_2 \le Q} \mathcal{L}(\bar{x}^N,y)-\mathcal{L}(x^{\star},\bar{y}^N) \\
		&\le \max_{\|y\|_2 \le Q} \frac{1}{2\sum_{k=0}^N\rho_k}\left( \|(x^0,y^0)-(x^{\star},y)\|_M^2 + \frac{\alpha L_f}{(1-\alpha\bar{\rho})\lambda_{\min}(M)}\|z^0-z^{\star}\|_M^2\right) \\
		&= C_1/(\sum_{k=0}^N\rho_k).
\end{align*}
\end{proof}

We need the following lemma to prove Theorem \ref{thm:nonergodic}.
\begin{lemma}[\citet{Davis:SiamJOptim:2015}, Theorem 4.1]\label{lemma:sequencerate}
	Suppose $\mathcal{T}:\mathbb{R}^n\to\mathbb{R}^n$ is an $\alpha$-averaged operator with respect to $\|\cdot\|_M$, where $0<\alpha<1$ and $M \succ 0$.
	Let $z^{\star} \in \Fix{\mathcal{T}}$ and $z^0 \in \mathbb{R}^n$.
	For $\{\rho_k\} \subset (1,1/\alpha)$, consider a sequence $\{z^k\}$ generated by the KM iteration:
	\[
		z^{k+1} = z^{k} + \rho_k(\mathcal{T}z^{k}-z^{k}).
	\]
	If $\tau = \sup_{k\ge 0}(1-\alpha\rho_k)\rho_k/\alpha > 0$, then we have
	\begin{align}\label{eqn:sequencerate}
		\|\mathcal{T}z^{k}-z^{k}\|_M^2 \le \frac{\|z^{0}-z^{\star}\|_M^2}{\tau(k+1)}
		\quad\text{and}\quad
		\|\mathcal{T}z^{k}-z^{k}\|_M^2 = o\left(\frac{1}{k+1}\right).
	\end{align}
\end{lemma}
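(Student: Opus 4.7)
The plan is to extract three ingredients from the $\alpha$-averaged structure of $\mathcal{T}$: a quasi-Fej\'er decay of $\|z^k-z^\star\|_M$, summability of the residuals $a_k:=\|\mathcal{T}z^k-z^k\|_M^2$, and monotone non-increase of $\{a_k\}$. Together, these deliver the $O(1/(k+1))$ bound, and the $o(1/(k+1))$ refinement then follows from a standard tail-of-series argument. First I would set $T_k z:=(1-\rho_k)z+\rho_k\mathcal{T}z$ and write $\mathcal{T}=(1-\alpha)I+\alpha R$ for some $R$ nonexpansive in $\|\cdot\|_M$, so that $T_k=(1-\alpha\rho_k)I+\alpha\rho_k R$ is $(\alpha\rho_k)$-averaged in $\|\cdot\|_M$ with the same fixed points as $\mathcal{T}$, since $\alpha\rho_k\in(\alpha,1)$ by hypothesis. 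The standard averaged-operator inequality applied to $z^k$ and any $z^\star\in\Fix\mathcal{T}$ then gives
\begin{equation*}
\|z^{k+1}-z^\star\|_M^2 \le \|z^k-z^\star\|_M^2 - \tfrac{1-\alpha\rho_k}{\alpha\rho_k}\|z^{k+1}-z^k\|_M^2 = \|z^k-z^\star\|_M^2 - \tfrac{(1-\alpha\rho_k)\rho_k}{\alpha}\,a_k,
\end{equation*}
which, telescoped and combined with the hypothesis (interpreted as $\tau=\inf_{k\ge 0}(1-\alpha\rho_k)\rho_k/\alpha$, the form under which the bound is meaningful), yields $\sum_{k=0}^{\infty} a_k \le \|z^0-z^\star\|_M^2/\tau$.

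The hard part will be the monotonicity $a_{k+1}\le a_k$. I would extract it from the sharper characterization of $\alpha$-averagedness,
\begin{equation*}
\|\mathcal{T}u-\mathcal{T}v\|_M^2 + \tfrac{1-\alpha}{\alpha}\|(I-\mathcal{T})u-(I-\mathcal{T})v\|_M^2 \le \|u-v\|_M^2,
\end{equation*}
applied to $u=z^{k+1}$ and $v=z^k$. Writing $u^k:=(I-\mathcal{T})z^k$, so that $z^{k+1}-z^k=-\rho_k u^k$, the identity $\mathcal{T}z^{k+1}-\mathcal{T}z^k = -u^{k+1}-(\rho_k-1)u^k$ lets me expand both sides; after collecting terms and multiplying through by $\alpha$, the inequality reduces to
\begin{equation*}
\|u^{k+1}\|_M^2 + 2(\alpha\rho_k-1)\langle u^{k+1},u^k\rangle_M \le (2\alpha\rho_k-1)\|u^k\|_M^2.
\end{equation*}
Since $\alpha\rho_k-1<0$, Cauchy--Schwarz converts this into a quadratic inequality in $\|u^{k+1}\|_M$ whose discriminant simplifies exactly to $4(\alpha\rho_k)^2\|u^k\|_M^2$, so the positive root is $\|u^k\|_M$; thus $\|u^{k+1}\|_M\le\|u^k\|_M$, i.e.\ $a_{k+1}\le a_k$. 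The only delicate bit is the sign bookkeeping when expanding, but the required cancellation is forced by the averagedness constant.

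Finally, monotonicity together with summability yields both conclusions. From $a_k\le a_j$ for all $j\le k$, one gets $(k+1)a_k\le\sum_{j=0}^k a_j \le \|z^0-z^\star\|_M^2/\tau$, which is the $O(1/(k+1))$ bound. For the $o$-refinement, the same monotonicity gives $\lfloor (k+1)/2\rfloor\,a_k\le\sum_{j\ge\lfloor(k+1)/2\rfloor} a_j$, and the right-hand side is a tail of a convergent series, hence tends to $0$ as $k\to\infty$, yielding $a_k=o(1/(k+1))$.
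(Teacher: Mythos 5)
Your proof is correct. Note that the paper itself does not prove this lemma---it is quoted verbatim from \citet{Davis:SiamJOptim:2015}---and your argument reconstructs essentially the same three-step scheme used there: the averaged-operator (quasi-Fej\'er) inequality giving $\sum_k \rho_k(1-\alpha\rho_k)\alpha^{-1}\,a_k \le \|z^0-z^\star\|_M^2$, monotonicity of the residual $a_k=\|\mathcal{T}z^k-z^k\|_M^2$, and the standard ``monotone plus summable implies $O(1/(k+1))$ and $o(1/(k+1))$'' tail argument. I checked your algebra for the monotonicity step: applying the characterization $\|\mathcal{T}u-\mathcal{T}v\|_M^2+\tfrac{1-\alpha}{\alpha}\|(I-\mathcal{T})u-(I-\mathcal{T})v\|_M^2\le\|u-v\|_M^2$ with $u=z^{k+1}$, $v=z^k$ does reduce, after multiplying by $\alpha$, to $\|u^{k+1}\|_M^2+2(\alpha\rho_k-1)\langle u^{k+1},u^k\rangle_M\le(2\alpha\rho_k-1)\|u^k\|_M^2$, and the discriminant of the resulting quadratic is exactly $4(\alpha\rho_k)^2\|u^k\|_M^2$, so $\|u^{k+1}\|_M\le\|u^k\|_M$ as you claim (Davis and Yin instead invoke the known non-increase of the KM fixed-point residual for the underlying nonexpansive $R$, which your expansion proves from scratch---a harmless difference). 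You are also right that the $\sup$ in the statement is a typo for $\inf$: with $\sup$ the weighted summability gives no control on $\sum_k a_k$, and both Davis--Yin and the paper's own Theorem \ref{thm:nonergodic} (where $\tau=\inf_{k\ge0}\rho_k(1-\alpha\rho_k)$) use the infimum.
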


\begin{proof}[Proof of Theorem \ref{thm:nonergodic}]
By condition \eqref{eqn:Mbound},
$\|z'\|_M^2 \ge \nu \|x'\|_2^2 + \epsilon \|y'\|_2^2$ for all $z'=(x',y')$.
%Then in the proof of Lemma \ref{lemma:pregap}, the upper bound for \eqref{eqn:pregap3} can be improved
%from $(L_f/\rho_k)\|x^{k}-x^{k+1}\|_2^2$ to $(\nu-(2\nu-L_f)/\rho_k)\|x^{k}-x^{k+1}\|_2^2$.
%Then, because $1/\alpha \in (1,2)$, by Lemma \ref{lemma:pregap}, we have
Then, in the same manner as the proof of Theorem \ref{thm:ergodic}, we put $z^{-}=z^{k}$, $z^{+}=\tilde{z}^{k}$, $\rho=\rho_k$, $z_{\rho}=z^{k+1}$ in Lemma \ref{lemma:pregap} and note that $1<1/\alpha<2$ by the assumption $\nu>L_f/2$ to have
\begin{align}\label{eqn:pregapimp}
		2\rho_k(\mathcal{L}(\tilde{x}^k,y) - \mathcal{L}(x,\tilde{y}^k) )
		&\le \|z^{k}-z\|_M^2 - \|z^{k+1}-z\|_M^2
		+ \epsilon(1-2/\rho_k)\|y^{k}-y^{k+1}\|_2^2 \nonumber \\
		& \quad + \left(\nu-\textstyle\frac{2\nu-L_f}{\rho_k}\right) \|x^{k}-x^{k+1}\|_2^2 .
		%+ (L_f/\rho_k) \|x^{k}-x^{k+1}\|_2^2 .
		%+ \frac{L_f}{\rho_k}\|x^{k+1}-x^{k}\|_2^2.
\end{align}

The rest of the proof closely follows that of \citet[Theorem 4.2]{Davis:SiamJOptim:2015}.
Note $\nu$ satisfies \eqref{eqn:normupperbound} and hence by Proposition \ref{prop:preconditionedcocoercivity}, $R_{M^{-1}F}(I-M^{-1}G): z^{k}\mapsto\tilde{z}^{k}$ is $\alpha$-averaged with respect to $\|\cdot\|_M$.
Let $z_{\rho} = (1-\rho)z^{k} + \rho\tilde{z}^{k}=:T_{\rho}z^{k}$ for any $\rho \in (0,1/\alpha)$; for $\rho=\rho_k$, we have $z_{\rho}=z^{k+1}$.
Then the map $T_{\rho}:z^{k} \mapsto z_{\rho}$ is $\alpha\rho$-averaged with respect to $\|\cdot\|_M$ and hence $\|z_{\rho}-z^{\star}\|_M \le \|z^{k}-z^{\star}\|_M$.
From \eqref{eqn:FBS:monotone},
%Proposition \ref{prop:forwardbackward} (statement \ref{prop:forwardbackward:descending}), 
we have $\|z_{\rho}-z^{\star}\|_M \le \|z^{0}-z^{\star}\|_M$, thus by the triangle inequality $\|z_{\rho}-z\|_M \le \|z^{0}-z^{\star}\|_M + \|z^{\star}-z\|_M$ for any $z\in\mathbb{R}^{p+l}$.
Then we have
\begin{align}\label{eqn:innerproductbound}
\begin{split}
	(1/\rho)\langle z^{k}-z_{\rho},z_{\rho}-z \rangle_M &= \langle \tilde{z}^{k}-z^{k}, z_{\rho}-z \rangle_M \\
   		&\le \|\tilde{z}^{k}-z^{k}\|_M \|z_{\rho}-z\|_M
		\le \frac{\|z^{0}-z^{\star}\|_M}{\sqrt{\tau(k+1)}}(\|z^{0}-z^{\star}\|_M+\|z^{\star}-z\|_M)\quad
\end{split}
\end{align}
for all $\rho \in (0,1/\alpha)$,
where the last inequality is from Lemma \ref{lemma:sequencerate}.

Note that Lemma \ref{lemma:pregap} (with the improvement \eqref{eqn:pregapimp} above) still holds if $\rho_k$ is replaced by any $\rho \in (0,1/\alpha)$ and $z^{k+1}$ is replaced by $z_{\rho}$.
Therefore we have
\begin{align*}
	\mathcal{L}(&\tilde{x}^k, y) - \mathcal{L}(x,\tilde{y}^k)  \\
	&\le \inf_{0<\rho<1/\alpha} \frac{1}{2\rho}\left(\|z^{k}-z\|_M^2 - \|z_{\rho}-z\|_M^2 - \epsilon(\textstyle\frac{2}{\rho}-1)\|y_{\rho}-y^{k}\|_2^2 + (\nu-\textstyle\frac{2\nu-L_f}{\rho})\|x_{\rho}-x^{k}\|_2^2 \right) \\
	&=   \inf_{0<\rho<1/\alpha} \frac{1}{2\rho}\left(2\langle z^{k}-z_{\rho},z_{\rho}-z \rangle_M + \|z_{\rho}-z^{k}\|_M^2 - \epsilon(\textstyle\frac{2}{\rho}-1)\|y_{\rho}-y^{k}\|_2^2 + (\nu-\textstyle\frac{2\nu-L_f}{\rho})\|x_{\rho}-x^{k}\|_2^2 \right) \\
	&\le \inf_{0<\rho<1/\alpha} \frac{1}{2\rho}\left(2\langle z^{k}-z_{\rho},z_{\rho}-z \rangle_M + (\bar{\lambda}+\epsilon-\textstyle\frac{2\epsilon}{\rho})\|y_{\rho}-y^{k}\|_2^2
					+ (\bar{\lambda}+\nu-\textstyle\frac{2\nu-L_f}{\rho})\|x_{\rho}-x^{k}\|_2^2 \right) \\
	&=   \inf_{0<\rho<1/\alpha} \frac{1}{\rho}\langle z^{k}-z_{\rho},z_{\rho}-z \rangle_M + \frac{1}{2 \rho}\left( (\bar{\lambda}+\epsilon-\textstyle\frac{2\epsilon}{\rho})\|\tilde{y}^{k}-y^{k}\|_2^2 + (\bar{\lambda}+\nu-\textstyle\frac{2\nu-L_f}{\rho})\|\tilde{x}^{k}-x^{k}\|_2^2 \right) \\
	&\le \frac{1}{\tilde\rho}\langle z^{k}-z_{\tilde\rho},z_{\tilde\rho}-z \rangle_M
\end{align*}
by choosing a small $\tilde\rho \in (0,1/\alpha)$ such that $\bar{\lambda}+\epsilon \le 2\epsilon/\tilde{\rho}$ and $\bar{\lambda}+\nu \le (2\nu-L_f)/\tilde{\rho}$,
where $\bar{\lambda}=\lambda_{\max}(M)$.
The first equality uses the cosine rule
\[
	2\langle a-b, c-b \rangle_M = -\|a-c\|_M^2 + \|a-b\|_M^2 + \|c-b\|_M^2
\]
for any $a,b,c \in \mathbb{R}^{p+l}$.
The desired result follows from \eqref{eqn:innerproductbound}.

The $o(1/\sqrt{k+1})$ rate is also from \eqref{eqn:innerproductbound} and Lemma \ref{lemma:sequencerate}.
\end{proof}

\begin{proof}[Proof of Proposition \ref{prop:Mcondition}]
	We first show that Condition \ref{prop:Mcondition:2} is equivalent to
	\begin{align}\label{eqn:Mcondition2equiv}
		0 \prec M^{-1} \prec \begin{bmatrix} \frac{2}{L_f}I &  \\  & \infty \end{bmatrix},
	\end{align}
	or $z^TM^{-1}z < \frac{2}{L_f}\|x\|_2^2 + \delta_{\{0\}}(y)$ for all $z=(x,y)\neq 0$. %, where $\delta_S(u)$ is the indicator function, i.e., $\delta_S(u)=0$ if $u\in S$ and $\delta_S(u)=\infty$ otherwise.
	To see this, let $g_1(z) = (1/2)z^T Mz$ and $g_2(z) = \textstyle \frac{1}{2}z^T\begin{bmatrix} \frac{L_f}{2}I &  \\ & 0 \end{bmatrix}z = \frac{L_f}{4}\|x\|_2^2$.
	Then Condition \ref{prop:Mcondition:2} ensures that $g_1(z) > g_2(z)$ for all $z\neq 0$.
	Take the convex conjugates of $g_1$ and $g_2$. Observe that for $w=(w_1,w_2)$, $g_1^{*}(w)=\sup_z\langle w,z \rangle - (1/2)z^TMz = (1/2)w^TM^{-1}w$ and
	$$
	g_2^{*}(w) = \sup_{z}\langle w,z \rangle - g_2(z) = \sup_{x} \langle w_1,x \rangle - \textstyle\frac{L_f}{4}\|x\|_2^2 + \sup_{y} \langle w_2,y \rangle
		= \begin{cases} \frac{1}{L_f}\|w_1\|_2^2, & \text{if~} w_2=0, \\ \infty, & \text{otherwise}. \end{cases}
	$$
	Conjugacy asserts that $g_1^{*}(w) \le g_2^{*}(w)$, or equivalently
	$$
	0 \prec M^{-1} \preceq \begin{bmatrix} \frac{2}{L_f}I &  \\  & \infty \end{bmatrix}.
	$$
	Now for $w=(w_1,0)$ ($w_1\neq 0$), $f_1^{*}(w) = \langle w,\hat{z} \rangle - (1/2)\hat{z}^TM\hat{z} = (1/2)w_1 \bar{M}_{11}w_1$, where
   	$$
	M^{-1} = \begin{bmatrix} \bar{M}_{11} & \bar{M}_{12} \\ \bar{M}_{12}^T & \bar{M}_{22} \end{bmatrix}, \quad
	\hat{z} = M^{-1}w = \begin{bmatrix} \bar{M}_{11}w_1 \\ \bar{M}_{12}^Tw_1 \end{bmatrix} \neq 0,
	$$
	because $\bar{M}_{11} \succ 0$.
	%On the other hand, $g_2^{*}(w)=\langle w, z^{\dagger} \rangle - g(z^{\dagger}) = (1/L_f)\|w_1\|_2^2$, where $z^{\dagger} = (2/L_f)(w_1,0)$.
	Then
	$$
	\textstyle\frac{1}{2}w_1^T\bar{M}_{11}w_1 = g_1^{*}(w) = \langle w,\hat{z} \rangle - g_1(\hat{z}) < \langle w,\hat{z} \rangle - g_2(\hat{z}) \le \sup_{z} \langle w,z \rangle - g_2(z) = g_2^{*}(w) = \frac{1}{L_f}\|w_1\|_2^2,
    $$
	or $\bar{M}_{11} \prec \frac{2}{L_f}I$. It follows \eqref{eqn:Mcondition2equiv}.
	%\[
	%	0 \prec M^{-1} \prec \begin{bmatrix} \frac{2}{L_f}I &  \\  & \infty \end{bmatrix}.
	%\]
	Because both $g_1$ and $g_2$ are convex, closed, and proper, the same logic applies to $g_1^{*}$ and $g_2^{*}$, meaning that the above matrix inequality implies Condition \ref{prop:Mcondition:2}, establishing the equivalence.

	Now Condition \ref{prop:Mcondition:1} implies $(x,0)^TM^{-1}(x,0) < \frac{2}{L_f}\|x\|_2^2$ for all $x\neq 0$ and $z^TM^{-1}z < \infty$, implying \eqref{eqn:Mcondition2equiv}, thus Condition \ref{prop:Mcondition:2}.
	That Condition \ref{prop:Mcondition:2} implies Condition \ref{prop:Mcondition:1} is straightforward, by choosing $1/\mu \in [\lambda_{\max}(\bar{M}_{11}), 2/L_f)$.

	Condition \ref{prop:Mcondition:3} is equivalent to
	\begin{align}\label{eqn:invMupper}
	0 \prec M^{-1} \preceq \begin{bmatrix} \frac{1}{\nu}I & \\  & \frac{1}{\epsilon}I \end{bmatrix},
	\end{align}
	thus $(x,0)^TM^{-1}(x,0) \le \frac{1}{\nu}\|x\|_2^2$ where $\nu > L_f/2$. This implies Condition \ref{prop:Mcondition:1}.
	Finally, note that
	$$
	z^TM^{-1}z = x^T\bar{M}_{11}x + 2x^T\bar{M}_{12}^Ty + y^T\bar{M}_{22}y
				\le \lambda_{\max}(\bar{M}_{11})\|x\|_2^2 + 2x^T\bar{M}_{12}^Ty + \lambda_{\max}(\bar{M}_{12})\|y\|_2^2,
	$$
	or
	$$
	M^{-1} \preceq \begin{bmatrix} \lambda_{\max}(\bar{M}_{11})I & \bar{M}_{12} \\ \bar{M}_{12}^T & \lambda_{\max}(\bar{M}_{22})I \end{bmatrix}.
	$$
	Both $\lambda_{\max}(\bar{M}_{11})$ and $\lambda_{\max}(\bar{M}_{22})$ are positive because $\bar{M}_{11}, \bar{M}_{22} \succ 0$.
	Then the second inequality in \eqref{eqn:invMupper} holds if and only if either $\frac{1}{\nu} = \lambda_{\max}(\bar{M}_{11})$, $\bar{M}_{12}=0$, $\frac{1}{\epsilon}-\lambda_{\max}(\bar{M}_{22}) \ge 0$ or
	$\frac{1}{\nu} > \lambda_{\max}(\bar{M}_{11})$, $\bar{M}_{12}=0$,
	$\frac{1}{\epsilon}-\lambda_{\max} \ge (\frac{1}{\nu}-\lambda_{\max}(\bar{M}_{11}))^{-1}\|\bar{M}_{12}\|_2^2$
	\citep[][Appendix A]{Boyd:ConvexOptimization:2004}.
	Now because Condition \ref{prop:Mcondition:1} implies $\lambda_{\max}(\bar{M}_{11}) \le \frac{1}{\mu} < \frac{2}{L_f}$, we can choose $\nu$ and $\epsilon$ so that $\frac{1}{\mu} \le \frac{1}{\nu} < \frac{2}{L_f}$ and $\frac{1}{\epsilon} \ge \lambda_{\max}(\bar{M}_{22}) + (\frac{1}{\nu}-\lambda_{\max}(\bar{M}_{11}))^{-1}\|\bar{M}_{12}\|_2^2$. This implies \eqref{eqn:invMupper} and thus Condition \ref{prop:Mcondition:3}.
\end{proof}

\subsection{Optimal acceleration}

The following proposition plays a central role in proving Theorems \ref{thm:bddgen} and \ref{thm:unbddgen}.
\begin{proposition}\label{prop:chen41}
Assume that $\rho_k \le 1$ for any $k$. If $\tilde{z}^{k+1}$ is generated by \eqref{eqn:accgen}, then for any $z = (x,y) \in Z$, 
\begin{align}\label{eqn:prop:chen41}
\begin{split}
\rho_k^{-1} \mathcal{G}(\tilde{z}^{k+1}, z) - (\rho_k^{-1} -1)\mathcal{G}(\tilde{z}^k , z) 
\le & \langle \nabla f(x_{md}^k), \tilde{x}^{k+1} - x \rangle+ \frac{L_f\rho_k}{2} \|\tilde{x}^{k+1} - \tilde{x}^k\|_2^2 \\
&+ h^*(\tilde{y}^{k+1}) - h^*(y) + \langle K \tilde{x}^{k+1}, y \rangle - \langle Kx, \tilde{y}^{k+1} \rangle.
\end{split}
\end{align}
\end{proposition}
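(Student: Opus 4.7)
My plan is a Nesterov-style decomposition at the midpoint $x_{md}^k = (1-\rho_k) x^k + \rho_k \tilde{x}^k$ from \eqref{eqn:mid}. Writing $\mathcal{G}(\tilde{z}, z) = [f(\tilde{x}) - f(x)] + Q_h(\tilde{z}, z)$ with $Q_h(\tilde{z}, z) := \langle K \tilde{x}, y\rangle - \langle Kx, \tilde{y}\rangle + h^*(\tilde{y}) - h^*(y)$, one observes that $Q_h(\tilde{z}^{k+1}, z)$ is exactly the $(h^*, K)$-block on the RHS. The proof thus splits into (i) bounding the weighted $f$-combination by the inner-product and Lipschitz terms on the RHS, and (ii) showing that the weighted $Q_h$-combination on the LHS is absorbed into the single $Q_h(\tilde{z}^{k+1}, z)$ that appears on the RHS.

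For the $f$-combination, I invoke $L_f$-smoothness together with first-order convexity, both anchored at $x_{md}^k$: $f(\tilde{x}^{k+1}) \le f(x_{md}^k) + \langle \nabla f(x_{md}^k), \tilde{x}^{k+1} - x_{md}^k\rangle + \tfrac{L_f}{2}\|\tilde{x}^{k+1} - x_{md}^k\|_2^2$, and $f(\tilde{x}^k), f(x) \ge f(x_{md}^k) + \langle \nabla f(x_{md}^k), \cdot - x_{md}^k\rangle$. Taking the weighted sum with coefficients $(\rho_k^{-1}, -(\rho_k^{-1}-1), -1)$ annihilates the zeroth-order $f(x_{md}^k)$ terms because the coefficients sum to zero; the rearranged definition $\rho_k^{-1} x_{md}^k = (\rho_k^{-1}-1) x^k + \tilde{x}^k$ then collapses the linear residual into $\langle \nabla f(x_{md}^k), \tilde{x}^{k+1} - x\rangle$. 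The Lipschitz quadratic $\rho_k^{-1}\tfrac{L_f}{2}\|\tilde{x}^{k+1} - x_{md}^k\|_2^2$ is reshaped into $\tfrac{L_f \rho_k}{2}\|\tilde{x}^{k+1} - \tilde{x}^k\|_2^2$ by applying smoothness at the averaged iterate $x^{k+1}$ instead—for which $x^{k+1} - x_{md}^k = \rho_k(\tilde{x}^{k+1} - \tilde{x}^k)$ by \eqref{eqn:xag}—and then transferring back through the convexity inequality $f(x^{k+1}) \le (1-\rho_k) f(x^k) + \rho_k f(\tilde{x}^{k+1})$, which is valid precisely because $\rho_k \le 1$; the $\rho_k^{-1}$ weighting converts the resulting $\rho_k^2$ prefactor into the desired $\rho_k$.

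For the $Q_h$-combination, the averaged-update identities $x^{k+1} = (1-\rho_k) x^k + \rho_k \tilde{x}^{k+1}$ and $y^{k+1} = (1-\rho_k) y^k + \rho_k \tilde{y}^{k+1}$ from \eqref{eqn:xag} and \eqref{eqn:yag}, combined with bilinearity of $\langle K\cdot, \cdot\rangle$, furnish the exact equalities $\rho_k^{-1}\langle K x^{k+1}, y\rangle - (\rho_k^{-1}-1)\langle Kx^k, y\rangle = \langle K\tilde{x}^{k+1}, y\rangle$ and its analogue for $\langle Kx, y^{k+1}\rangle$. Convexity of $h^*$ applied to the convex combination $y^{k+1}$ (again requiring $\rho_k \le 1$) yields $\rho_k^{-1} h^*(y^{k+1}) - (\rho_k^{-1}-1) h^*(y^k) \le h^*(\tilde{y}^{k+1})$. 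Read in reverse, these three relations rewrite the weighted $Q_h$-combination of the tilded iterates in terms of the averaged iterates, which then telescope cleanly onto $Q_h(\tilde{z}^{k+1}, z)$.

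The step I anticipate as the main obstacle is the quadratic bookkeeping in the $f$-piece: the Lipschitz bound natively produces $\|\tilde{x}^{k+1} - x_{md}^k\|_2^2$, not $\rho_k \|\tilde{x}^{k+1} - \tilde{x}^k\|_2^2$, so the detour through the averaged iterate $x^{k+1}$ and the $f$-convexity transfer must be threaded carefully to land on the advertised prefactor $\rho_k$ rather than $\rho_k^2$. This is exactly the step in which the hypothesis $\rho_k \le 1$ does its decisive work, both through the convex-combination direction of $f$-convexity in the $f$-piece and through the convex-combination direction of $h^*$-convexity in the $Q_h$-piece.
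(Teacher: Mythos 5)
Your $Q_h$-paragraph is correct and is exactly the mechanism the paper implicitly relies on (the paper's own ``proof'' is a one-line citation to Proposition 4.1 of \citet{chen2014optimal}, whose argument uses only $L_f$-smoothness, convexity of $f$ and $h^*$, and the averaging relations \eqref{eqn:mid}, \eqref{eqn:xag}, \eqref{eqn:yag}). Note, however, that your $Q_h$ computation evaluates the left-hand gap terms at the \emph{averaged} iterates $z^{k+1}=(x^{k+1},y^{k+1})$, $z^k=(x^k,y^k)$ --- which is indeed the correct reading of the statement, matching \citet{chen2014optimal} and the downstream use that culminates in bounds on $\mathcal{G}^\star(z^{k+1})$; the tildes on the left of \eqref{eqn:prop:chen41} are a notational slip. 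Your $f$-paragraph, by contrast, is internally inconsistent and does not assemble into a proof as written. If you anchor smoothness at $\tilde{x}^{k+1}$ and convexity at $\tilde{x}^k$ and $x$ (your opening display), the weighted sum with coefficients $(\rho_k^{-1}, -(\rho_k^{-1}-1), -1)$ leaves the linear residual $\langle \nabla f(x_{md}^k),\, \rho_k^{-1}\tilde{x}^{k+1} - (\rho_k^{-1}-1)\tilde{x}^k - x\rangle$: the $f(x_{md}^k)$ terms cancel because the coefficients sum to zero, so the identity $\rho_k^{-1}x_{md}^k = (\rho_k^{-1}-1)x^k + \tilde{x}^k$ never enters (no aggregated $x^k$ appears anywhere), and the residual is \emph{not} $\langle \nabla f(x_{md}^k), \tilde{x}^{k+1}-x\rangle$. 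Your proposed patch also fails: the transfer inequality $f(x^{k+1}) \le (1-\rho_k)f(x^k) + \rho_k f(\tilde{x}^{k+1})$, rearranged, reads $\rho_k^{-1}f(x^{k+1}) - (\rho_k^{-1}-1)f(x^k) \le f(\tilde{x}^{k+1})$, i.e., it \emph{lower}-bounds $f(\tilde{x}^{k+1})$ by the aggregated combination, which cannot upper-bound a term $\rho_k^{-1}f(\tilde{x}^{k+1})$ on the left.

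The repair is to drop the transfer entirely and run the $f$-piece at the averaged iterates, consistently with your $Q_h$ block: bound $\rho_k^{-1}f(x^{k+1}) - (\rho_k^{-1}-1)f(x^k) - f(x)$ by applying smoothness to $x^{k+1}$ anchored at $x_{md}^k$, using $x^{k+1} - x_{md}^k = \rho_k(\tilde{x}^{k+1}-\tilde{x}^k)$ from \eqref{eqn:xag} and \eqref{eqn:mid}, together with convexity lower bounds at $x^k$ and at $x$, both anchored at $x_{md}^k$. The linear residual is then $(\tilde{x}^{k+1}-\tilde{x}^k) - (\rho_k^{-1}-1)(x^k - x_{md}^k) - (x - x_{md}^k)$, and since $(\rho_k^{-1}-1)(x^k - x_{md}^k) = x_{md}^k - \tilde{x}^k$ --- this is where the rearranged definition of $x_{md}^k$ actually does its work --- it collapses exactly to $\tilde{x}^{k+1}-x$, while the quadratic term becomes $\rho_k^{-1}\cdot\frac{L_f\rho_k^2}{2}\|\tilde{x}^{k+1}-\tilde{x}^k\|_2^2 = \frac{L_f\rho_k}{2}\|\tilde{x}^{k+1}-\tilde{x}^k\|_2^2$ with no detour. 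With this correction your argument coincides with the proof of \citet[Proposition 4.1]{chen2014optimal}; the hypothesis $\rho_k \le 1$ is needed only so that $x_{md}^k$, $x^{k+1}$, and $y^{k+1}$ are genuine convex combinations, the last being what legitimizes the $h^*$-convexity step in your $Q_h$ block.
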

\begin{proof}
The result follows from \citet[Proposition 4.1]{chen2014optimal} as it involves only strong smoothness of $f$, convexity of $f$ and $h^*$, \eqref{eqn:mid}, \eqref{eqn:xag}, and \eqref{eqn:yag}. %not the base algorithm itself. 
\end{proof}

The following lemmas find an upper bound for $\mathcal{G}(\tilde{z}^{k+1}, z)$.

%First, we find an upper bound for proximal operations and gradient descent steps. 
% We will use the following lemmas:
\begin{lemma}[\citet{loris2011generalization}, Lemma 1]\label{lem:prox}
If $y^+ = \prox_{\sigma h^*}(y^- + \sigma \Delta)$, then
\begin{align}\label{eqn:prox}
\langle y - y^+, \Delta \rangle - h^*(y) + h^*(y^+) \le \frac{1}{2 \sigma} \left( \|y - y^- \|_2^2 - \|y - y^+\|_2^2 - \|y^- - y^+\|_2^2 \right)
\end{align}
for any $y$.
\end{lemma}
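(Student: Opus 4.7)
The plan is to derive the inequality from the first-order optimality condition for the proximal minimization, followed by a standard polarization identity. This is a routine prox-inequality, and I expect no real obstacle; the only care needed is to track the sign and the factor of $1/\sigma$.

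First, I would rewrite $y^+ = \prox_{\sigma h^*}(y^- + \sigma\Delta)$ as the minimizer of $h^*(y') + \frac{1}{2\sigma}\|y' - (y^- + \sigma\Delta)\|_2^2$. The subgradient optimality condition then yields
\begin{equation*}
    \Delta + \tfrac{1}{\sigma}(y^- - y^+) \;\in\; \partial h^*(y^+).
\end{equation*}
Applying the subgradient inequality for $h^*$ at $y^+$ against an arbitrary test point $y$, using this specific subgradient, gives
\begin{equation*}
    h^*(y) \;\ge\; h^*(y^+) + \bigl\langle \Delta + \tfrac{1}{\sigma}(y^- - y^+),\; y - y^+\bigr\rangle,
\end{equation*}
which, after rearranging, is exactly
\begin{equation*}
    \langle y - y^+, \Delta\rangle - h^*(y) + h^*(y^+) \;\le\; \tfrac{1}{\sigma}\,\langle y^+ - y^-,\; y - y^+\rangle.
\end{equation*}

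Second, I would invoke the polarization identity $2\langle a - b,\, c - a\rangle = \|c - b\|_2^2 - \|a - b\|_2^2 - \|c - a\|_2^2$ (obtained by expanding $\|(a-b)+(c-a)\|_2^2 = \|c-b\|_2^2$) with the assignment $a = y^+$, $b = y^-$, $c = y$. This rewrites the right-hand side of the previous display as
\begin{equation*}
    \tfrac{1}{2\sigma}\bigl(\|y - y^-\|_2^2 - \|y^- - y^+\|_2^2 - \|y - y^+\|_2^2\bigr),
\end{equation*}
which is precisely the bound claimed in the lemma. Combining the two displays yields \eqref{eqn:prox} for every $y$, completing the sketch.
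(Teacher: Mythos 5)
Your proof is correct: the optimality condition $\Delta + \tfrac{1}{\sigma}(y^- - y^+) \in \partial h^*(y^+)$ for the proximal minimization, combined with the subgradient inequality and the cosine identity $2\langle y^+ - y^-, y - y^+\rangle = \|y-y^-\|_2^2 - \|y-y^+\|_2^2 - \|y^--y^+\|_2^2$, yields exactly \eqref{eqn:prox}. The paper does not prove this lemma itself but imports it from \citet{loris2011generalization}, where Lemma 1 is established by this same standard argument, so your route coincides with the intended one.
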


\begin{lemma}\label{lem:cos}
If $x^+ = x^- + \tau \Delta$, then
\begin{align}\label{eqn:cos}
\langle x - x^+ , \Delta \rangle = \frac{1}{2 \tau} \left( \| x - x^- \|_2^2 - \| x - x^+ \|_2^2 - \| x^+ - x^- \|_2^2 \right)
\end{align}
for any $x$.
\end{lemma}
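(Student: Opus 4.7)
The plan is to recognize Lemma \ref{lem:cos} as a direct application of the polarization identity (or, equivalently, the law of cosines) in a Hilbert space. From the defining relation $x^+ = x^- + \tau \Delta$, I would first solve for the increment, writing $\Delta = (x^+ - x^-)/\tau$, so that the inner product on the left becomes
\[
\langle x - x^+, \Delta \rangle = \frac{1}{\tau}\langle x - x^+, \, x^+ - x^- \rangle.
\]

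Next, I would apply the standard identity $2\langle a-b, c-b\rangle = \|a-b\|_2^2 + \|c-b\|_2^2 - \|a-c\|_2^2$ with the assignment $a = x$, $b = x^+$, $c = x^-$. After rearranging the sign on the term $x^- - x^+$ versus $x^+ - x^-$ (they differ by an overall minus, which propagates linearly through the inner product), this immediately yields
\[
2\langle x - x^+, \, x^+ - x^- \rangle = \|x - x^-\|_2^2 - \|x - x^+\|_2^2 - \|x^+ - x^-\|_2^2.
\]
Dividing through by $2\tau$ gives exactly the claimed identity.

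There is no real obstacle here; the only thing to be careful about is the sign bookkeeping when converting between $x^+ - x^-$ and $x^- - x^+$ in the polarization identity, and to note that the lemma is an \emph{equality} (not an inequality, as in Lemma \ref{lem:prox}) precisely because no proximal/subgradient inequality is invoked — only the linear update rule. This is the deterministic analogue of the three-point identity that is routinely used together with Lemma \ref{lem:prox} to telescope primal-dual gap bounds, and indeed both lemmas are the building blocks that will feed into the proofs of Theorems \ref{thm:bddgen} and \ref{thm:unbddgen} via Proposition \ref{prop:chen41}.
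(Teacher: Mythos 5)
Your proof is correct and is exactly the argument the paper intends: the paper states Lemma \ref{lem:cos} without proof as an immediate consequence of the standard cosine rule $2\langle a-b,c-b\rangle = \|a-b\|_2^2 + \|c-b\|_2^2 - \|a-c\|_2^2$ (the same identity it invokes explicitly, in the $\|\cdot\|_M$ metric, in the proof of Theorem \ref{thm:nonergodic}), applied after substituting $\Delta = (x^+-x^-)/\tau$. Your sign bookkeeping with $a=x$, $b=x^+$, $c=x^-$ checks out, and your remark that the equality (versus the inequality in Lemma \ref{lem:prox}) reflects the absence of any proximal step is accurate.
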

\begin{lemma}\label{lem:Qboundgen}
If $\tilde{z}^{k+1}=(\tilde{x}^{k+1}, \tilde{y}^{k+1})$ is obtained by \eqref{eqn:accgen}, we have the following under the condition \eqref{eqn:bddcondgen}:
\begin{align}\label{eqn:Qboundgen}
\rho_k^{-1} & \gamma_k \mathcal{G}(\tilde{z}^{k+1}, z)  \le  \mathcal{D}_k (z, \tilde{z}^{[k]}) - \gamma_k \langle \tilde{x}^{k+1} - x, B^T(\tilde{y}^{k+1} - \tilde{y}^k) \rangle %\nonumber \\  
  + \gamma_k \langle A (\tilde{x}^{k+1} - \tilde{x}^k), \tilde{y}^{k+1}-y\rangle  \nonumber \\
&  +  \tau_k\gamma_k \langle (K+B)^T (\tilde{y}^{k+1} - \tilde{y}^k), (K+A)^T(\tilde{y}^{k+1}-y) \rangle %\nonumber \\
 - \gamma_k \left(\frac{1 - q}{2 \tau_k}-\frac{L_f\rho_k}{2}\right) \|\tilde{x}^{k+1}-\tilde{x}^k\|_2^2  \nonumber \\
& - \gamma_k \left(\frac{1 - r}{2 \sigma_k}-\frac{\|K+A\|_2\|K+B\|_2 \tau_{k-1}}{2} \right)\|\tilde{y}^{k+1}-\tilde{y}^k\|_2^2, 
\end{align}
where $\gamma_k$ is defined by
\begin{align}\label{eqn:gamma}
\gamma_k = \begin{cases}1 &\text{if $k=1$} \\ \theta_k^{-1} \gamma_{k-1} & \text{if $k \ge 2$} \end{cases},
\end{align}
and $\mathcal{D}_k(z, \tilde{z}^{[k]})$ is defined by 
\begin{align}\label{eqn:Bdef}
\mathcal{D}_k(z, \tilde{z}^{[k]}) := \sum_{i=1}^k \left[ \frac{\gamma_i}{2 \tau_i} (\|x - \tilde{x}^i\|_2^2 - \|x - \tilde{x}^{i+1}\|_2^2) +  \frac{\gamma_i}{2 \sigma_i} (\|y - \tilde{y}^i\|_2^2 - \|y - \tilde{y}^{i+1}\|_2^2) \right].
\end{align}
\end{lemma}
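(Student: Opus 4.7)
The plan is to derive a one-step inequality for $\mathcal{G}(\tilde{z}^{k+1},z)$ by feeding Proposition \ref{prop:chen41} with identities from the proximal step \eqref{eqn:accgen:e} and the gradient step \eqref{eqn:accgen:g}, and then to weight the resulting inequality by $\gamma_k$ so that the telescoping built into the definitions \eqref{eqn:rhocondgen} and \eqref{eqn:gamma} collapses it into the claimed bound.

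First, I would invoke Proposition \ref{prop:chen41} to obtain
\[
\rho_k^{-1}\mathcal{G}(\tilde{z}^{k+1},z) - (\rho_k^{-1}-1)\mathcal{G}(\tilde{z}^k,z) \le \langle \nabla f(x_{md}^k),\tilde{x}^{k+1}-x\rangle + \tfrac{L_f\rho_k}{2}\|\tilde{x}^{k+1}-\tilde{x}^k\|_2^2 + h^*(\tilde{y}^{k+1})-h^*(y) + \langle K\tilde{x}^{k+1},y\rangle - \langle Kx,\tilde{y}^{k+1}\rangle.
\]
Since \eqref{eqn:accgen:g} has the form $\tilde{x}^{k+1}=\tilde{x}^k-\tau_k(\nabla f(x_{md}^k)+\tilde{v}^{k+1})$, Lemma \ref{lem:cos} applied with $\Delta=-(\nabla f(x_{md}^k)+\tilde{v}^{k+1})$ rewrites $\langle \nabla f(x_{md}^k)+\tilde{v}^{k+1},x-\tilde{x}^{k+1}\rangle$ in telescoping quadratic form. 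Likewise \eqref{eqn:accgen:e} is in proximal form, so Lemma \ref{lem:prox} with $\Delta=\tilde{u}^{k+1}$ bounds $\langle y-\tilde{y}^{k+1},\tilde{u}^{k+1}\rangle + h^*(\tilde{y}^{k+1})-h^*(y)$. Inserting the explicit definitions of $\tilde{u}^{k+1}$, $\tilde{v}^{k+1}$, $\bar{u}^k$, and $\bar{v}^k$ from \eqref{eqn:accgen}, the $K$-linear pieces cancel against $\langle K\tilde{x}^{k+1},y\rangle-\langle Kx,\tilde{y}^{k+1}\rangle$ coming from Proposition \ref{prop:chen41}, and the residue decomposes into three parts: (i) clean telescoping quadratics $\|x-\tilde{x}^{k+1}\|_2^2/(2\tau_k)$ and $\|y-\tilde{y}^{k+1}\|_2^2/(2\sigma_k)$ that will contribute to $\mathcal{D}_k$, (ii) extrapolation cross products carrying explicit $\theta_k$ weights and a previous-iterate difference, and (iii) precisely the three cross products involving $B^T(\tilde{y}^{k+1}-\tilde{y}^k)$, $A(\tilde{x}^{k+1}-\tilde{x}^k)$, and $(K+B)^T(\tilde{y}^{k+1}-\tilde{y}^k)$ that appear on the right-hand side of \eqref{eqn:Qboundgen}.

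To obtain the coefficients $(1-q)/(2\tau_k)-L_f\rho_k/2$ and $(1-r)/(2\sigma_k)-\|K+A\|_2\|K+B\|_2\tau_{k-1}/2$ on the negative squared norms, I would use Young's inequality with splitting parameters $q$ and $r$ to absorb the intra-iteration couplings of $\|\tilde{x}^{k+1}-\tilde{x}^k\|_2^2$ and $\|\tilde{y}^{k+1}-\tilde{y}^k\|_2^2$ with their companions; the Cauchy--Schwarz estimate $|\langle (K+A)(\tilde{x}^{k+1}-\tilde{x}^k), \tilde{y}^{k+1}-\tilde{y}^k\rangle| \le \|K+A\|_2\|K+B\|_2\|\tilde{x}^{k+1}-\tilde{x}^k\|_2\|\tilde{y}^{k+1}-\tilde{y}^k\|_2$, combined with a further Young split, is what produces the norm factor in the second coefficient. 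The remaining $A$, $B$, and $(K+A)^T(K+B)$ inner products are left explicit on the right-hand side, to be dealt with later in Theorems \ref{thm:bddgen} and \ref{thm:unbddgen} under the bounded and unbounded regimes respectively.

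Finally, multiplying the resulting one-step inequality by $\gamma_k$ and summing from $i=1$ to $k$, the identity $\gamma_{k+1}(\rho_{k+1}^{-1}-1)=\gamma_k\rho_k^{-1}$ obtained from \eqref{eqn:rhocondgen} and \eqref{eqn:gamma}, together with $\rho_1=1$ (so that the $\gamma_1(\rho_1^{-1}-1)\mathcal{G}(\tilde{z}^1,z)$ term vanishes), telescopes the primal--dual gap to $\gamma_k\rho_k^{-1}\mathcal{G}(\tilde{z}^{k+1},z)$ on the left; the weighted quadratics $\gamma_i/(2\tau_i)$ and $\gamma_i/(2\sigma_i)$ on the right assemble into $\mathcal{D}_k(z,\tilde{z}^{[k]})$; and the relation $\gamma_k\theta_k=\gamma_{k-1}$ makes the extrapolation terms associated with $(\tilde{x}^i-\tilde{x}^{i-1})$ and $(\tilde{y}^i-\tilde{y}^{i-1})$ cancel between consecutive indices, leaving only the boundary $k$-th step residues that match the three cross-product terms in \eqref{eqn:Qboundgen}. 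The main obstacle is this last bookkeeping step: one must keep careful account of which differences pair up under the $\gamma_k\theta_k=\gamma_{k-1}$ rule and verify that the only surviving couplings after summation are those at index $k$, which requires tracking six quantities ($\bar{u}^k$, $\bar{v}^k$, $\tilde{u}^{k+1}$, $\tilde{v}^{k+1}$ and the two extrapolation differences) through the substitution and making sure their $A$, $B$, $K$, $\theta_k$ weights line up across iterations before Young's inequality is applied.
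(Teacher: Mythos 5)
Your plan matches the paper's proof essentially step for step: Proposition \ref{prop:chen41} combined with Lemmas \ref{lem:prox} and \ref{lem:cos}, substitution of the explicit expressions for $\tilde{u}^{k+1}$ and $\tilde{v}^{k+1}$, Young splits with parameters $q$ and $r$ (plus a $\theta_k$-weighted split producing the $\tau_{k-1}$ coefficient), and then $\gamma_k$-weighted telescoping using $\gamma_k\theta_k=\gamma_{k-1}$, $\rho_1=1$, and condition \eqref{eqn:bddcondgen} to discard the interior squared-difference terms, exactly as the paper does. One minor mislabel worth noting: the $\|K+A\|_2\|K+B\|_2$ factor arises from the dual--dual coupling $\langle (K+B)^T(\tilde{y}^k-\tilde{y}^{k-1}),\,(K+A)^T(\tilde{y}^{k+1}-\tilde{y}^k)\rangle$ across consecutive iterations, not from an $x$--$y$ cross term as you wrote, but this slip does not affect the validity of your approach.
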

\begin{proof}%[Proof of Lemma \ref{lem:Qboundgen}] 
For iteration \eqref{eqn:accgen}, the following relation holds by Lemma \ref{lem:prox} and Lemma \ref{lem:cos}:
%\begin{subequations}\label{eqn:proxbd}
\begin{align*}
\langle y - \tilde{y}^{k+1}, \tilde{u}^{k+1} \rangle + h^*(\tilde{y}^{k+1}) - h^*(y) & \le \frac{1}{2 \sigma_k} \left(\|y - \tilde{y}^k\|_2^2 - \|\tilde{y}^{k+1} - \tilde{y}^k\|_2^2 - \|y - \tilde{y}^{k+1}\|_2^2 \right), \\ % \label{eqn:proxbd:y}\\
\langle \tilde{x}^{k+1} - x, \nabla f(x_{md}^k)+\tilde{v}^{k+1} \rangle & = \frac{1}{2 \tau_k} \left(\|x - \tilde{x}^k\|_2^2 - \|\tilde{x}^{k+1} - \tilde{x}^k\|_2^2 - \|x - \tilde{x}^{k+1}\|_2^2 \right). % \label{eqn:proxbd:x} 
\end{align*}
%\end{subequations}
%
Using the above relationship along with Proposition \ref{prop:chen41}, we obtain the following. 
\begin{align}
\begin{split}\label{eqn:Qbound3}
\rho_k^{-1} \mathcal{G}(\tilde{z}^{k+1}, z) & - (\rho_k^{-1} -1) \mathcal{G}(\tilde{z}^k, z)  \\ 
&\le \frac{1}{2 \tau_k}\left(\|x - \tilde{x}^k \|_2^2 - \|x-\tilde{x}^{k+1}\|_2^2 \right) - \left(\frac{1}{2 \tau_k} - \frac{L_f\rho_k}{2}\right) \|\tilde{x}^{k+1} - \tilde{x}^k\|_2^2 \\
& + \frac{1}{2 \sigma_k}\left(\|y-\tilde{y}^k\|_2^2 - \|y-\tilde{y}^{k+1}\|_2^2 \right) - \frac{1}{2 \sigma_k} \|\tilde{y}^{k+1} - \tilde{y}^k \|_2^2 \\
& - \langle \tilde{x}^{k+1} - x,  \tilde{v}^{k+1} \rangle + \langle  \tilde{u}^{k+1}, \tilde{y}^{k+1} - y \rangle + \langle K\tilde{x}^{k+1}, y  \rangle - \langle Kx, \tilde{y}^{k+1} \rangle.
\end{split}
\end{align}
The sum of the four inner products on the last line, namely, $- \langle \tilde{x}^{k+1} - x,  \tilde{v}^{k+1} \rangle + \langle  \tilde{u}^{k+1}, \tilde{y}^{k+1} - y \rangle + \langle K\tilde{x}^{k+1}, y  \rangle - \langle Kx, \tilde{y}^{k+1} \rangle$, multiplied by $\gamma_k$ can be computed as follows. 
\begin{align*}
\gamma_k [- \langle \tilde{x}^{k+1} - x, & \tilde{v}^{k+1} \rangle + \langle  \tilde{u}^{k+1}, \tilde{y}^{k+1} - y \rangle + \langle K\tilde{x}^{k+1}, y  \rangle - \langle Kx, \tilde{y}^{k+1} \rangle ] \\
= & \; \gamma_k [- \left( \langle \tilde{x}^{k+1} - x, B^T(\tilde{y}^{k+1} - \tilde{y}^k) \rangle - \theta_k \langle \tilde{x}^{k+1}-x, B^T(\tilde{y}^k-\tilde{y}^{k-1})  \rangle \right) \\
& +  \left( \langle A(\tilde{x}^{k+1}-\tilde{x}^k), \tilde{y}^{k+1}-y \rangle - \theta_k \langle A(\tilde{x}^k - \tilde{x}^{k-1}), \tilde{y}^{k+1} - y \rangle \right) \\
& +   \tau_k \langle (K+A)(K+B)^T (\tilde{y}^{k+1}-\tilde{y}^k), \tilde{y}^{k+1} - y \rangle \\
& - \tau_{k-1} \theta_k \langle (K+A)(K+B)^T (\tilde{y}^k - \tilde{y}^{k-1}), \tilde{y}^{k+1} - y \rangle] \\
= & - \left( \gamma_k \langle \tilde{x}^{k+1} - x, B^T(\tilde{y}^{k+1} - \tilde{y}^k) \rangle - \gamma_{k-1}\langle \tilde{x}^k - x, B^T(\tilde{y}^k - \tilde{y}^{k-1}) \right) \\
& +  \left( \gamma_k \langle A (\tilde{x}^{k+1} - \tilde{x}^k), \tilde{y}^{k+1}-y\rangle - \gamma_{k-1} \langle A (\tilde{x}^k - \tilde{x}^{k-1}), \tilde{y}^k - y \rangle  \right) \\
& + \tau_k\gamma_k \langle (K+B)^T (\tilde{y}^{k+1} - \tilde{y}^k), (K+A)^T (\tilde{y}^{k+1}-y) \rangle \\
& - \tau_{k-1}\gamma_{k-1} \langle (K+B)^T (\tilde{y}^k - \tilde{y}^{k-1}), (K+A)^T(\tilde{y}^k - y) \rangle   \\
& +\gamma_{k-1} \langle \tilde{x}^{k+1}-\tilde{x}^k, B^T(\tilde{y}^k - \tilde{y}^{k-1}) \rangle -  \gamma_{k-1 }\langle A (\tilde{x}^k - \tilde{x}^{k-1}), \tilde{y}^{k+1} - \tilde{y}^k \rangle \\
& - \gamma_{k-1}\tau_{k-1}\langle (K+B)^T (\tilde{y}^k - \tilde{y}^{k-1}), (K+A)^T (\tilde{y}^{k+1} - \tilde{y}^k) \rangle.
\end{align*}
We used the relation 
\begin{align*}%\label{eqn:tilde}
\tilde{u}^{k+1} &= K\tilde{x}^{k+1} +A(\tilde{x}^{k+1}-\tilde{x}^k) - \theta_k A(\tilde{x}^k - \tilde{x}^{k-1}) \\ %\nonumber
&+ \tau_k (K+A) (K+B)^T (\tilde{y}^{k+1}-\tilde{y}^k) - \theta_k \tau_{k-1} (K+A)(K+B)^T (\tilde{y}^k - \tilde{y}^{k-1}).
\end{align*}
in the first equality. 

By upper bounding the inner product terms, and noting that $\theta_k = \gamma_{k-1}/\gamma_k = \tau_{k-1}/\tau_k=\sigma_{k-1}/\sigma_k$, we have:
%\begin{subequations}\label{eqn:bddinner1}

\begin{align}
\begin{split}\label{eqn:genericbound}
|\gamma_{k-1} \langle \tilde{x}^{k+1} - \tilde{x}^k, B^T(\tilde{y}^k - \tilde{y}^{k-1}) \rangle| & \le  \frac{\gamma_k q}{2 \tau_k} \| \tilde{x}^{k+1} - \tilde{x}^k \|_2^2 + \frac{\|B\|_2 ^2 \gamma_{k-1} \tau_{k-1}}{2q} \|\tilde{y}^k - \tilde{y}^{k-1}\|_2^2 \\
|\gamma_{k-1} \langle \tilde{x}^k - \tilde{x}^{k-1}, A^T(\tilde{y}^{k+1} - \tilde{y}^k) \rangle| & \le \frac{\|A\|_2^2 \gamma_{k-1} \sigma_{k-1}}{2r} \|\tilde{x}^k - \tilde{x}^{k-1}\|_2^2 + \frac{\gamma_k r}{2 \sigma_k} \|\tilde{y}^{k+1}- \tilde{y}^k\|_2^2 \\
%\omit\span\omit\span
|\gamma_{k-1} \tau_{k-1} \langle (K+B)^T (\tilde{y}^k - \tilde{y}^{k-1}),& (K+A)^T (\tilde{y}^{k+1} - \tilde{y}^k) \rangle|  \\
& \le \frac{\|K+A\|_2\|K+B\|_2 \gamma_{k-1} \tau_{k-1} \theta_k}{2} \|\tilde{y}^k- \tilde{y}^{k-1}\|_2^2  \\ %\nonumber\\
& + \frac{\|K+A\|_2\|K+B\|_2 \gamma_{k-1} \tau_{k-1}}{2 \theta_k} \| \tilde{y}^{k+1} - \tilde{y}^k \|_2^2 %\nonumber
\end{split}
\end{align}
%\end{subequations}
for some positive $q$ and $r$.
Thus 
\begin{align*}
\rho_k^{-1} \gamma_k & \mathcal{G}(\tilde{z}^{k+1}, z)  - (\rho_k^{-1} -1) \gamma_k \mathcal{G}(\tilde{z}^k, z) \\
\le & \frac{1}{2 \tau_k}\left(\|x - \tilde{x}^k \|_2^2 - \|x-\tilde{x}^{k+1}\|_2^2 \right)  + \frac{1}{2 \sigma_k}\left(\|y-\tilde{y}^k\|_2^2 - \|y-\tilde{y}^{k+1}\|_2^2 \right) \\
& -\left( \gamma_k \langle \tilde{x}^{k+1} - x, B^T(\tilde{y}^{k+1} - \tilde{y}^k) \rangle - \gamma_{k-1}\langle \tilde{x}^k - x, B^T(\tilde{y}^k - \tilde{y}^{k-1}) \right) \\
& + \left( \gamma_k \langle \tilde{x}^{k+1} - \tilde{x}^k, A^T(\tilde{y}^{k+1}-y)\rangle - \gamma_{k-1} \langle \tilde{x}^k - \tilde{x}^{k-1}, A^T(\tilde{y}^k - y) \rangle  \right) \\
& +  \tau_k\gamma_k \langle (K+B)^T (\tilde{y}^{k+1} - \tilde{y}^k), (K+A)^T (\tilde{y}^{k+1}-y) \rangle \\
& - \tau_{k-1}\gamma_{k-1} \langle (K+B)^T (\tilde{y}^k - \tilde{y}^{k-1}), (K+A)^T(\tilde{y}^k - y) \rangle   \\
& - \gamma_k \left(\frac{1 - q}{2\tau_k} - \frac{L_f\rho_k}{2}\right)\|\tilde{x}^{k+1} - \tilde{x}^k\|_2^2 + \frac{\|A\|_2^2\gamma_{k-1}\sigma_{k-1}}{2 r}\|\tilde{x}^k-\tilde{x}^{k-1}\|_2^2 \\
& - \gamma_k \left(\frac{1 - r}{2\sigma_k} - \frac{\|K+A\|_2\|K+B\|_2 \tau_{k-1}}{2}\right)\|\tilde{y}^{k+1}-\tilde{y}^k\|_2^2 \\
& + \frac{\gamma_{k-1} \tau_{k-1}}{2}\left(\frac{\|B\|_2^2}{q} + \|K+A\|_2\|K+B\|_2  \theta_k \right) \|\tilde{y}^k - \tilde{y}^{k-1}\|_2^2. 
\end{align*}

Recursively applying the above relation, we obtain:
\begin{align*}
\rho_k^{-1} \gamma_k & \mathcal{G}(\tilde{z}^{k+1}, z) \\
\le & \; \mathcal{D}_k (z, \tilde{z}^{[k]}) - \gamma_k (\langle \tilde{x}^{k+1} - x, B^T(\tilde{y}^{k+1} - \tilde{y}^k) \rangle - \langle \tilde{x}^{k+1} - \tilde{x}^k, A^T(\tilde{y}^{k+1}-y)\rangle \\
& - \tau_k \langle (K+B)^T (\tilde{y}^{k+1} - \tilde{y}^k), (K+A)^T(\tilde{y}^{k+1}-y) \rangle  ) \\ 
& - \gamma_k \left(\frac{1 - q}{2 \tau_k}-\frac{L_f\rho_k}{2}\right) \|\tilde{x}^{k+1}-\tilde{x}^k\|_2^2 - \gamma_k \left(\frac{1 - r}{2 \sigma_k}-\frac{\|K+A\|_2\|K+B\|_2 \tau_{k-1}}{2} \right)\|\tilde{y}^{k+1}-\tilde{y}^k\|_2^2\\
& - \sum_{i=1}^{k-1} \gamma_i \left(\frac{1 - q}{2 \tau_i}-\frac{L_f\rho_k}{2}-\frac{\|A\|_2^2 \sigma_i}{2r} \right) \|\tilde{x}^{i+1} - \tilde{x}^i\|_2^2 \\
& - \sum_{i=1}^{k-1} \gamma_i \left(\frac{1 - r}{2\sigma_i}-\frac{\|K+A\|_2\|K+B\|_2\tau_{i-1}}{2} - \frac{\tau_i}{2}\left(\frac{\|B\|_2^2}{q}+ \|K+A\|_2\|K+B\|_2\theta_i\right) \right)\|\tilde{y}^{i+1}-\tilde{y}^i\|_2^2.
\end{align*}
Thus by the conditions \eqref{eqn:bddcondgen}, the desired result holds.
\end{proof}

\iftrue
\begin{proof}[Proof of Theorem \ref{thm:bddgen}]
First we find an upper bound of $\mathcal{D}_k(z, \tilde{z}^{[k]})$. 
\begin{align}
\mathcal{D}_k (z, \tilde{z}^{[k]}) & = \frac{\gamma_1}{2\tau_1} \|x- \tilde{x}^1\|_2^2 - \sum_{i=1}^{k-1}\frac{1}{2} \left(\frac{\gamma_i}{\tau_i}-\frac{\gamma_{i+1}}{\tau_{i+1}}\right) \|x - \tilde{x}^{i+1}\|_2^2 - \frac{\gamma_k}{2\tau_k} \|x - \tilde{x}^{k+1}\|_2^2 \nonumber \\
& \quad + \frac{\gamma_1}{2\sigma_1} \|y- \tilde{y}^1\|_2^2 - \sum_{i=1}^{k-1}\frac{1}{2} \left(\frac{\gamma_i}{\sigma_i}-\frac{\gamma_{i+1}}{\sigma_{i+1}}\right) \|y - \tilde{y}^{i+1}\|_2^2 - \frac{\gamma_k}{2\sigma_k} \|y - \tilde{y}^{k+1}\|_2^2 \nonumber \\
& \le \frac{\gamma_1}{\tau_1} \Omega_X^2 - \sum_{i=1}^{k-1} \left(\frac{\gamma_i}{\tau_i}-\frac{\gamma_{i+1}}{\tau_{i+1}}\right)  \Omega_X^2 - \frac{\gamma_k}{2\tau_k} \|x - \tilde{x}^{k+1}\|_2^2 \nonumber\\
& \quad + \frac{\gamma_1}{\sigma_1} \Omega_Y^2 - \sum_{i=1}^{k-1} \left(\frac{\gamma_i}{\sigma_i}-\frac{\gamma_{i+1}}{\sigma_{i+1}}\right)  \Omega_Y^2 - \frac{\gamma_k}{2\sigma_k} \|y - \tilde{y}^{k+1}\|_2^2 \nonumber\\
& = \frac{\gamma_k}{\tau_k}\Omega_X^2 + \frac{\gamma_k}{\sigma_k}\Omega_Y^2 - \gamma_k \left(\frac{1}{2\tau_k}\|x - \tilde{x}^{k+1}\|_2^2 +  \frac{1}{2 \sigma_k}\|y - \tilde{y}^{k+1}\|_2^2 \right),\label{eqn:Bbound}
\end{align}
where we used \eqref{eqn:bdd} for the inequality.%, and \eqref{eqn:bddcondgen} for the second. 

Consider the following upper bounds of the three inner product terms in \eqref{eqn:Qboundgen}:
\begin{align}\label{eqn:bddinner}
\begin{split}
|\gamma_k   \langle \tilde{x}^{k+1} - x, B^T(\tilde{y}^{k+1} - \tilde{y}^k) \rangle| &\le \frac{\gamma_k q}{2 \tau_k} \|\tilde{x}^{k+1}-x\|_2^2 + \frac{\|B\|_2^2 \gamma_k \tau_k}{2q} \|\tilde{y}^{k+1}-\tilde{y}^k\|_2^2 \\
|\gamma_k \langle \tilde{x}^{k+1} - \tilde{x}^k, A^T(\tilde{y}^{k+1}-y) \rangle| & \le  \frac{\|A\|_2^2 \gamma_k \sigma_k}{2r} \|\tilde{x}^{k+1} - \tilde{x}^k \|_2^2 + \frac{\gamma_k r}{2 \sigma_k} \|\tilde{y}^{k+1}-y \|_2^2 \\
%\omit\span\omit\span 
|\tau_k \langle (K+B)^T(\tilde{y}^{k+1} - \tilde{y}^k), & (K+A)^T(\tilde{y}^{k+1}-y) \rangle|  \\
\le & \frac{\|K+A\|_2\|K+B\|_2 \gamma_k \tau_k}{2}\|\tilde{y}^{k+1} - \tilde{y}^k \|_2^2 \\
& + \frac{\|K+A\|_2\|K+B\|_2 \gamma_k \tau_k}{2} \|\tilde{y}^{k+1}-y\|_2^2.
\end{split}
\end{align}
Then \eqref{eqn:rhocondgen}, \eqref{eqn:Qboundgen}, \eqref{eqn:Bbound}, and \eqref{eqn:bddinner} imply that 
\begin{align*}
\gamma_k \mathcal{G}(\tilde{z}^{k+1}, z)  \le & \;  \frac{\gamma_k}{\tau_k} \Omega_X^2 + \frac{\gamma_k}{\sigma_k} \Omega_Y^2 - \gamma_k \frac{1-q}{2 \tau_k}\|x-\tilde{x}^{k+1}\|_2^2 \\
& - \gamma_k \left(\frac{1-r}{2 \sigma_k} - \frac{\|K+A\|_2\|K+B\|_2 \tau_k}{2}\right) \|y - \tilde{y}^{k+1}\|_2^2 \\
& - \gamma_k \left(\frac{1-q}{2 \tau_k} - \frac{L_f\rho_k}{2} - \frac{\|A\|_2^2 \sigma_k}{2} \right) \|\tilde{x}^{k+1}-\tilde{x}^k \|_2^2 \\
& - \gamma_k \left( \frac{1-r}{2 \sigma_k}-\frac{\tau_k}{2} \left( 2\|K+A\|_2\|K+B\|_2 + \|B\|_2^2\right)\right) \|\tilde{y}^{k+1}-\tilde{y}^k \|_2^2 \\
 \le & \;  \frac{\gamma_k}{\tau_k} \Omega_X^2 + \frac{\gamma_k}{\sigma_k} \Omega_Y^2.
\end{align*}
That is, \eqref{eqn:bddupper}.
\end{proof}
\begin{proof}[Proof of Corollary \ref{cor:bddgen}]
First check \eqref{eqn:bddparams} and
\eqref{eqn:bddparamcond} satisfy \eqref{eqn:bddcondgen}:
\begin{align*}
\frac{1-q}{\tau_k} - L_f \rho_k - \frac{\|A\|_2^2 \sigma_k}{r} 
& \ge  \left((1-q) Q - \frac{a^2}{r} \right)\frac{\Omega_X \|K\|_2}{\Omega_Y} \ge 0, \\
\frac{1-r}{\sigma_k} - \tau_k\left(2\|K+A\|_2\|K+B\|_2 + \frac{\|B\|_2^2}{q} \right) 
& \ge  \left(1-r - \frac{2cd + b^2/q}{Q}\right) \frac{\Omega_X \|K\|_2}{\Omega_Y} \ge 0,
\end{align*}
Then by \eqref{eqn:bddupper}, we have 
\begin{align*}
\mathcal{G}^\star (\tilde{z}^k) & \ge  \frac{\rho_{k-1}}{\tau_{k-1}} \Omega_X^2 + \frac{\rho_{k-1}}{\sigma_{k-1}} \Omega_Y^2 \\
& =  \frac{4 P L_f + 2 Q (k-1) \|K\|_2 \Omega_Y/\Omega_X}{k(k-1)} \Omega_X^2 + \frac{2 \|K\|_2 \Omega_X /\Omega_Y}{k} \Omega_Y^2 \\
&= \frac{4P \Omega_X^2}{k(k-1)}L_f + \frac{2\Omega_X \Omega_Y(Q+1)}{k}\|K\|_2.
\end{align*}
\end{proof}
\fi

We need the following lemma to prove Theorem \ref{thm:unbddgen}.
\begin{lemma}\label{lem:unbddgen}
Consider a saddle point $\hat{z} = (\hat{x}, \hat{y})$ of the problem \eqref{eqn:saddlepoint}, and the parameters $\rho_k$, $\theta_k$, $\tau_k$, and $\sigma_k$ satisfying the conditions for Theorem \ref{thm:unbddgen}. Then 
\begin{align}\label{eqn:lem:unbddgen:cond1}
\|x - \tilde{x}^1\|_2^2 + \frac{\tau_k}{\sigma_k}\|y-\tilde{y}^1\|_2^2 & \ge (1 - q)\|x-\tilde{x}^{k+1}\|_2^2 + \frac{\tau_k}{\sigma_k} \left(\frac{1}{2}-r\right) \|y - \tilde{y}^{k+1}\|_2^2 
\end{align} 
and 
\begin{align}\label{eqn:lem:unbddgen:cond2}
\tilde{\mathcal{G}}(\tilde{z}^{k+1}, v^{k+1}) & \le \frac{\rho_k}{2 \tau_k}\|x^{k+1}-\tilde{x}^1\|_2^2 + \frac{\rho_k}{2 \sigma_k} \|y^{k+1}-\tilde{y}^1\|_2^2=:\delta_{k+1}
\end{align}
for all $t \ge 1$, where $\tilde{\mathcal{G}}$ is defined in \eqref{eqn:pgap}, and 
\begin{align}\label{eqn:vdefgen}
v^{k+1} &=  \left( \frac{\rho_k}{\tau_k}(\tilde{x}^1 - \tilde{x}^{k+1}) - B^T (\tilde{y}^{k+1}-\tilde{y}^k), \right.  \nonumber\\
& \quad\quad \left. \frac{\rho_k}{\sigma_k}(\tilde{y}^1 - \tilde{y}^{k+1})+ A(\tilde{x}^{k+1}-\tilde{x}^k) + (K+A)(K+B)^T (\tilde{y}^{k+1}-\tilde{y}^k) \right)
\end{align}

\end{lemma}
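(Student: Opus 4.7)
The plan is to build both \eqref{eqn:lem:unbddgen:cond1} and \eqref{eqn:lem:unbddgen:cond2} on top of Lemma~\ref{lem:Qboundgen}, exploiting two structural features peculiar to the unbounded case: the telescoping collapse of $\mathcal{D}_k$ under the equal-ratio condition \eqref{eqn:thetacondunbdd}, and the cancellation produced by the specific $v^{k+1}$ in \eqref{eqn:vdefgen}.

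First I would establish the telescoping identity. Condition \eqref{eqn:thetacondunbdd} reads $\theta_i = \tau_{i-1}/\tau_i = \sigma_{i-1}/\sigma_i$, while \eqref{eqn:gamma} gives $\gamma_i = \gamma_{i-1}/\theta_i$; together these force $\gamma_i/\tau_i \equiv 1/\tau_1$ and $\gamma_i/\sigma_i \equiv 1/\sigma_1$ for all $i$. Every sum in \eqref{eqn:Bdef} then collapses to its first and last term only, leaving $\mathcal{D}_k(z,\tilde{z}^{[k]}) = \frac{\gamma_k}{2\tau_k}(\|x-\tilde{x}^1\|_2^2 - \|x-\tilde{x}^{k+1}\|_2^2) + \frac{\gamma_k}{2\sigma_k}(\|y-\tilde{y}^1\|_2^2 - \|y-\tilde{y}^{k+1}\|_2^2)$.

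For \eqref{eqn:lem:unbddgen:cond1}, I would specialize Lemma~\ref{lem:Qboundgen} at $(x,y)=(\hat{x},\hat{y})$. Since $\hat{z}$ is a saddle point of \eqref{eqn:saddlepoint}, $\mathcal{G}(\tilde{z}^{k+1},\hat{z})\ge 0$ and the LHS drops. I would bound the three cross inner-product terms on the RHS of \eqref{eqn:Qboundgen} by Young's inequality with weights $q$, $r$, and $1$, as in \eqref{eqn:bddinner}; conditions \eqref{eqn:condgen1}--\eqref{eqn:condgen2} then render the coefficients of $\|\tilde{x}^{k+1}-\tilde{x}^k\|_2^2$ and $\|\tilde{y}^{k+1}-\tilde{y}^k\|_2^2$ non-positive, so those terms can be discarded. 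After dividing by $\gamma_k/(2\tau_k)$ and invoking \eqref{eqn:condgen2} a \emph{second} time in the form $\tau_k\|K+A\|_2\|K+B\|_2 \le (1-r)/(2\sigma_k)$ to dominate the residual cross-quadratic in $\|\hat{y}-\tilde{y}^{k+1}\|_2^2$, its coefficient on the right reduces from $(1-r)$ to $2(1/2-r)$ in the $\tau_1/\sigma_1$-scaled units, which is exactly \eqref{eqn:lem:unbddgen:cond1}. This absorption is where the restriction $r<1/2$ originates.

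For \eqref{eqn:lem:unbddgen:cond2}, I would divide \eqref{eqn:Qboundgen} by $\gamma_k/\rho_k$ so that the telescoped $\mathcal{D}_k$-part acquires the prefactors $\rho_k/(2\tau_k)$ and $\rho_k/(2\sigma_k)$, and then subtract $\langle v^{k+1},\tilde{z}^{k+1}-z\rangle$ from both sides. The cosine-rule identity $\frac{1}{2}\|x-a\|_2^2-\frac{1}{2}\|x-b\|_2^2-\langle a-b,b-x\rangle = \frac{1}{2}\|a-b\|_2^2$, applied with $(a,b)=(\tilde{x}^1,\tilde{x}^{k+1})$ and its $y$-analogue, is precisely what the first summand in each block of $v^{k+1}$ is designed to match: it kills the $z$-dependence of the telescoped quadratic and leaves $\frac{\rho_k}{2\tau_k}\|\tilde{x}^{k+1}-\tilde{x}^1\|_2^2+\frac{\rho_k}{2\sigma_k}\|\tilde{y}^{k+1}-\tilde{y}^1\|_2^2$. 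The remaining summands of $v^{k+1}$, namely the $B^T$-, $A$- and $(K+A)(K+B)^T$-pieces, are chosen so that $\langle v^{k+1},\tilde{z}^{k+1}-z\rangle$ cancels the $z$-dependence of the three cross inner products in \eqref{eqn:Qboundgen}. After these cancellations the upper bound is independent of $z$, so the supremum defining $\tilde{\mathcal{G}}(\tilde{z}^{k+1},v^{k+1})$ is trivial. Finally, using $\rho_1=1$ (which gives $\tilde{x}^1=x^1,\ \tilde{y}^1=y^1$) together with \eqref{eqn:xag}--\eqref{eqn:yag} lets me recognize the right-hand side as $\delta_{k+1}$.

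The hard part will be the careful bookkeeping of the scalar factors: the cross inner products in \eqref{eqn:Qboundgen} carry weights $\gamma_k$ and $\tau_k\gamma_k$, whereas the matching summands of $v^{k+1}$ carry no such prefactors, so the cancellation only closes after the division by $\gamma_k/\rho_k$ and the constant-ratio substitutions $\gamma_k/\tau_k=1/\tau_1$, $\gamma_k/\sigma_k=1/\sigma_1$ produced by \eqref{eqn:thetacondunbdd}. A parallel subtlety on the (cond1) side is the need to keep enough of the \eqref{eqn:condgen2} budget in reserve to be re-spent on the $\|\hat{y}-\tilde{y}^{k+1}\|_2^2$ cross term; doing so is exactly what converts the naive factor $(1-r)/2$ into the sharper $1/2-r$.
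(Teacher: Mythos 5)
Your treatment of \eqref{eqn:lem:unbddgen:cond1} is essentially the paper's: the constant ratios $\gamma_i/\tau_i \equiv 1/\tau_1$, $\gamma_i/\sigma_i \equiv 1/\sigma_1$ forced by \eqref{eqn:thetacondunbdd} collapse $\mathcal{D}_k$, and setting $z=\hat{z}$ with $\mathcal{G}(\tilde{z}^{k+1},\hat{z})\ge 0$ plus Young bounds on the three cross terms gives the claim. One numerical slip: with your even split of the $(K+A)(K+B)^T$ term followed by $\tau_k\|K+A\|_2\|K+B\|_2\le (1-r)/(2\sigma_k)$, the surviving coefficient on $\|\hat{y}-\tilde{y}^{k+1}\|_2^2$ is $(1-r)/2$, not $2(1/2-r)$; since $(1-r)/2\ge 1/2-r$ the stated inequality still follows, so this half stands. (The paper instead charges the dual budget asymmetrically, taking only $\frac{1}{4\sigma_k}\|\tilde{y}^{k+1}-y\|_2^2$ from that cross term as in \eqref{eqn:unbddinner}, which yields exactly $1/2-r$; in both routes $r<1/2$ is really needed downstream, for positivity of the constant $\nu$ in \eqref{eqn:Rdefgen}, rather than for this absorption.)

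The genuine gap is the final step of \eqref{eqn:lem:unbddgen:cond2}. Pairing the perturbation with $\tilde{z}^{k+1}-z$ and applying your cosine-rule identity does yield the clean $z$-free residual $\frac{\rho_k}{2\tau_k}\|\tilde{x}^{k+1}-\tilde{x}^1\|_2^2+\frac{\rho_k}{2\sigma_k}\|\tilde{y}^{k+1}-\tilde{y}^1\|_2^2$, but this is \emph{not} $\delta_{k+1}$: the lemma's $\delta_{k+1}$ is centered at the averaged iterates $x^{k+1}=(1-\rho_k)x^k+\rho_k\tilde{x}^{k+1}$ and $y^{k+1}$, and since $\rho_k=2/(k+1)<1$ for $k\ge 2$, the appeal to $\rho_1=1$ together with \eqref{eqn:xag}--\eqref{eqn:yag} gives only $x^2=\tilde{x}^2$, never $x^{k+1}=\tilde{x}^{k+1}$. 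The averaged centering is not cosmetic: in the proof of Theorem \ref{thm:unbddgen}, $\delta_{k+1}$ is bounded by expanding $x^{k+1}=\frac{\rho_k}{\gamma_k}\sum_{i=1}^k\gamma_i\tilde{x}^{i+1}$ and applying \eqref{eqn:lem:unbddgen:cond1} termwise, and the theorem's $(\rho,\varepsilon)$-saddle-point conclusion concerns $z^{k+1}$ itself. The paper therefore re-centers the telescoped quadratics using the identity \eqref{eqn:normid2}, which introduces the extra negative terms $-\|x^{k+1}-\tilde{x}^{k+1}\|_2^2$ and $-\|y^{k+1}-\tilde{y}^{k+1}\|_2^2$; because the $B^T$-, $A$-, and $(K+A)(K+B)^T$-components of $v^{k+1}$ then pair only against $x^{k+1}-x$ and $y^{k+1}-y$, the cross terms no longer cancel exactly and leave residuals such as $\langle \tilde{x}^{k+1}-x^{k+1},B^T(\tilde{y}^{k+1}-\tilde{y}^k)\rangle$, which are absorbed by Young's inequality into precisely those reserve negative quadratics, invoking \eqref{eqn:condgen1}--\eqref{eqn:condgen2} once more. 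Your exact-cancellation scheme keeps no such reserve, so what it proves is a perturbed bound at $\tilde{z}^{k+1}$ — a different statement — and it cannot be closed to the lemma as stated without redoing this re-centering. (Your bookkeeping worry about the missing $\rho_k$ and $\tau_k$ prefactors on the $A$-, $B$-components of \eqref{eqn:vdefgen} is, incidentally, well-founded: the displayed $v^{k+1}$ is inconsistent with the paper's own proof and with the norm bound \eqref{eqn:vbound}, which carry those factors.)
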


\begin{proof}%[Proof of Lemma \ref{lem:unbddgen}]
First, let us prove \eqref{eqn:lem:unbddgen:cond1}.
The conditions for Lemma \ref{lem:Qboundgen} clearly holds. Note that 
\begin{align}\label{eqn:Bexpand}
\begin{split}
\mathcal{D}_k (z, \tilde{z}^{[k]}) &= \frac{\gamma_1}{2\tau_1} \|x - \tilde{x}^1\|_2^2 - \sum_{i=1}^{k-1} \left(\frac{\gamma_i}{2 \tau_i} - \frac{\gamma_{i+1}}{2 \tau_{i+1}} \right) \|x - \tilde{x}^{k+1}\|_2^2 - \frac{\gamma_k}{2 \tau_k} \|x - \tilde{x}^{k+1}\|_2^2 \\
& \quad  + \frac{\gamma_1}{2\sigma_1} \|y - \tilde{y}^1\|_2^2 - \sum_{i=1}^{k-1} \left(\frac{\gamma_i}{2 \sigma_i} - \frac{\gamma_{i+1}}{2 \sigma_{i+1}} \right) \|y - \tilde{y}^{k+1}\|_2^2 - \frac{\gamma_k}{2 \sigma_k} \|y - \tilde{y}^{k+1}\|_2^2.
\end{split}
\end{align}
By \eqref{eqn:thetacondunbdd}, one may see that 
\begin{align*}
\mathcal{D}_k (z, z^{[k]}) &= \frac{1}{2 \tau_k}\|x-\tilde{x}^1\|_2^2 - \frac{1}{2 \tau_k}\|x - \tilde{x}^{k+1}\|_2^2 + \frac{1}{2 \sigma_k}\| y - \tilde{y}^1 \|_2^2 - \frac{1}{2 \sigma_k}\|y - \tilde{y}^{k+1}\|_2^2.
\end{align*}
Thus \eqref{eqn:Qboundgen} is equivalent to 
\begin{align*}
\rho_k^{-1}  \mathcal{G}(\tilde{z}^{k+1}, z) \le & \; \frac{1}{2 \tau_k}\|x-\tilde{x}^1\|_2^2 - \frac{1}{2 \tau_k}\|x - \tilde{x}^{k+1}\|_2^2 + \frac{1}{2 \sigma_k}\| y - \tilde{y}^1 \|_2^2 - \frac{1}{2 \sigma_k}\|y - \tilde{y}^{k+1}\|_2^2  \\
& - \gamma_k \langle \tilde{x}^{k+1} - x, B^T(\tilde{y}^{k+1} - \tilde{y}^k) \rangle + \gamma_k \langle A (\tilde{x}^{k+1} - \tilde{x}^k), \tilde{y}^{k+1}-y\rangle\\  
&   + \tau_k\gamma_k \langle (K+B)^T (\tilde{y}^{k+1} - \tilde{y}^k), (K+A)^T(\tilde{y}^{k+1}-y) \rangle \\
& - \gamma_k \left(\frac{1 - q}{2 \tau_k}-\frac{L_f\rho_k}{2}\right) \|\tilde{x}^{k+1}-\tilde{x}^k\|_2^2 \\
& - \gamma_k \left(\frac{1 - r}{2 \sigma_k}-\frac{\|K+A\|_2\|K+B\|_2 \tau_{k-1}}{2} \right)\|\tilde{y}^{k+1}-\tilde{y}^k\|_2^2.
\end{align*}

Note that 
\begin{align}\label{eqn:unbddinner}
\begin{split}
\left| \langle A(\tilde{x}^{k+1}-\tilde{x}^k), \tilde{y}^{k+1}-y \rangle \right| &\le \frac{\|A\|_2^2 \sigma_k}{2r} \|\tilde{x}^{k+1}-\tilde{x}^k\|_2^2 + \frac{r}{2\sigma_k} \|\tilde{y}^{k+1}-y\|_2^2 \\
%\omit\span\omit\span 
| \tau_k \langle (K+B)^T (\tilde{y}^{k+1}- \tilde{y}^k), & \; (K +A)^T(\tilde{y}^{k+1} -y) \rangle |  \\
&\le \tau_k^2 \sigma_k \|K+A\|_2^2 \|K+B\|_2^2 \|\tilde{y}^{k+1}-\tilde{y}^k\|_2^2  + \frac{1}{4\sigma_k}\|\tilde{y}^{k+1}-y\|_2^2 \quad\quad \\
\left| \langle \tilde{x}^{k+1}-x, B^T(\tilde{y}^{k+1}-\tilde{y}^k) \rangle \right| &\le \frac{q}{2 \tau_k} \|\tilde{x}^{k+1}-x\|_2^2 + \frac{\|B\|_2^2 \tau_k}{2q} \| \tilde{y}^{k+1}-\tilde{y}^k \|_2^2.
\end{split}
\end{align}
Thus 
\begin{align*}
\rho_k^{-1} & \mathcal{G}(\tilde{z}^{k+1}, z) 
\le  \frac{1}{2 \tau_k} \|x-\tilde{x}^1\|_2^2 - \frac{1-q}{2 \tau_k} \|x - \tilde{x}^{k+1}\|_2^2 \\
& + \frac{1}{2 \sigma_k} \|y - \tilde{y}^1\|_2^2 - \frac{1}{2 \sigma_k}\left( 1-r - \frac{1}{2} \right) \|y - \tilde{y}^{k+1}\|_2^2 \\
& - \left( \frac{1-q}{2 \tau_k}-\frac{L_f\rho_k}{2} - \frac{\|A\|_2^2\sigma_k}{2r}  \right)\|\tilde{x}^{k+1}-\tilde{x}^k\|_2^2 \\
& - \left(\frac{1-r}{2 \sigma_k} - \frac{\|K+A\|_2\|K+B\|_2\tau_{k-1}}{2} - \frac{\|B\|_2^2 \tau_k}{2q} - \|K+A\|_2^2\|K+B\|_2^2 \tau_k^2 \sigma_k  \right)\|\tilde{y}^{k+1}-\tilde{y}^k\|_2^2.
\end{align*}
It can be easily seen that  
\begin{align*}
\frac{1-r}{2 \sigma_k} - \frac{\|K+A\|_2\|K+B\|_2\tau_{k-1}}{2} &- \frac{\|B\|_2^2 \tau_k}{2q} - \|K+A\|_2^2 \|K+B\|_2^2 \tau_k^2 \sigma_k  \\
& \ge \frac{1-r}{2 \sigma_k} - \frac{\tau_k \|B\|_2^2}{2q} - \tau_k \|K+A\|_2\|K+B\|_2\ge 0.
\end{align*}
Hence 
%\begin{equation}\label{eqn:Qboundunbddgen}
\[
\rho_k^{-1} \mathcal{G}(\tilde{z}^{k+1}, z) \le \frac{1}{2 \tau_k} \|x-\tilde{x}^1\|_2^2 - \frac{1-q}{2 \tau_k}\|x - \tilde{x}^{k+1}\|_2^2 + \frac{1}{2 \sigma_k} \|y-\tilde{y}^1\|_2^2 - \frac{1/2-r}{2 \sigma_k}\|y - \tilde{y}^{k+1}\|_2^2.
\]
%\end{equation}
Since $\mathcal{G}(\tilde{z}^{k+1}, \hat{z}) \ge 0$, we obtain 
%\begin{align}
\[
\|x - \tilde{x}^1\|_2^2 + \frac{\tau_k}{\sigma_k}\|y-\tilde{y}^1\|_2^2 \ge (1 - q)\|x-\tilde{x}^{k+1}\|_2^2 + \frac{\tau_k}{\sigma_k} (1/2-r) \|y - \tilde{y}^{k+1}\|_2^2. 
\]
%\end{align} 
Next, we prove \eqref{eqn:lem:unbddgen:cond2}. Note that 
%\begin{align}\label{eqn:cos2}
%\begin{split}
\begin{align}
\begin{split}\label{eqn:normid2}
\|x - \tilde{x}^1 \|_2^2 - \|x - \tilde{x}^{k+1}\|_2^2 & = 2 \langle \tilde{x}^{k+1} - \tilde{x}^1, x - x^{k+1} \rangle + \|x^{k+1} - \tilde{x}^1 \|_2^2 - \| x^{k+1} - \tilde{x}^{k+1} \|_2^2 \\
\|y - \tilde{y}^1 \|_2^2 - \|y - \tilde{y}^{k+1}\|_2^2 & = 2 \langle \tilde{y}^{k+1} - \tilde{y}^1, y - y^{k+1} \rangle + \|y^{k+1} - \tilde{y}^1 \|_2^2 - \| y^{k+1} - \tilde{y}^{k+1} \|_2^2.
\end{split}
\end{align}
%\end{split}
%\end{align}
From this, we have: 
\begin{align*}
& \rho_k^{-1} \mathcal{G}(\tilde{z}^{k+1}, z) - \frac{1}{\tau_k} \langle \tilde{x}^1 - \tilde{x}^{k+1}, x^{k+1} -x \rangle - \frac{1}{\sigma_k} \langle \tilde{y}^1-\tilde{y}^{k+1}, y^{k+1}-y  \rangle \\
& \quad - \langle x - x^{k+1}, B^T(\tilde{y}^{k+1}-\tilde{y}^k) \rangle +  \langle A(\tilde{x}^{k+1}-\tilde{x}^k), y-y^{k+1}\rangle \\
& \quad +  \tau_k \langle (K+B)^T(\tilde{y}^{k+1}-\tilde{y}^k), (K+A)^T(y-y^{k+1})\rangle \\
& \le  \frac{1}{2 \tau_k} \left( \|x^{k+1} - \tilde{x}^1 \|_2^2 - \| x^{k+1}-\tilde{x}^{k+1}\|_2^2 \right) + \frac{1}{2 \sigma_k} \left(\|y^{k+1} - \tilde{y}^1\|_2^2 - \|y^{k+1}-\tilde{y}^{k+1}\|_2^2 \right) \\
& \quad - \left(\frac{1-q}{2 \tau_k}-\frac{L_f\rho_k}{2} \right) \|\tilde{x}^{k+1}-\tilde{x}^k\|_2^2 \\
& \quad - \left(\frac{1-r}{2 \sigma_k}-\frac{\|K+A\|_2\|K+B\|_2 \tau_{k-1}}{2} \right) \|\tilde{y}^{k+1}-\tilde{y}^k\|_2^2 \\
& \quad -  \langle \tilde{x}^{k+1}-x^{k+1}, B^T(\tilde{y}^{k+1}-\tilde{y}^k) \rangle +  \langle A(\tilde{x}^{k+1}-\tilde{x}^k), \tilde{y}^{k+1} - y^{k+1}\rangle \\
& \quad +  \tau_k \langle  (K+B)^T (\tilde{y}^{k+1}-\tilde{y}^k), (K+A)^T(\tilde{y}^{k+1}-y^{k+1}) \rangle \\
& \le  \frac{1}{2 \tau_k} \|x^{k+1}-\tilde{x}^k\|_2^2 + \frac{1}{2 \sigma_k} \|y^{k+1}-\tilde{y}^1\|_2^2 \\
& \quad - \frac{1-q}{2 \tau_k} \|x^{k+1}-\tilde{x}^{k+1}\|_2^2 - \frac{1/2-r}{2 \sigma_k} \|y^{k+1}-\tilde{y}^{k+1}\|_2^2 \\
& \quad - \left( \frac{1-q}{2 \tau_k} - \frac{L_f\rho_k}{2} - \frac{\|A\|_2^2 \sigma_k}{2r}\right)\|\tilde{x}^{k+1} - \tilde{x}^k\|_2^2 \\
& \quad - \left(\frac{1-r}{2 \sigma_k}-\frac{\|K+A\|_2\|K+B\|_2\tau_{k-1}}{2}-\frac{\|B\|_2^2\tau_k}{2q}-\|K+A\|_2^2 \|K+B\|_2^2 \tau_k^2 \sigma_k \right) \|\tilde{y}^{k+1}-\tilde{y}^k\|_2^2\\
& \le \frac{1}{2 \tau_k} \|x^{k+1} -\tilde{x}^1\|_2^2 + \frac{1}{2 \sigma_k} \|y^{k+1}-\tilde{y}^1\|_2^2.
\end{align*}
In the penultimate inequality, the upper bound for inner product terms similar to  \eqref{eqn:unbddinner} was used.
\end{proof}

\begin{proof}[Proof of Theorem \ref{thm:unbddgen}]
It is sufficient to find upper bounds of $\|v^{k+1}\|_2$ and $\delta_{k+1}$. From the definition of $R$ and \eqref{eqn:lem:unbddgen:cond1}, we have
%\begin{subequations}\label{eqn:normbdD}
%\begin{align*}
$\|\hat{x}-\tilde{x}^{k+1}\|_2 \le \mu R$ and
$\|\hat{y}-\tilde{y}^{k+1}\|_2 \le \sqrt{\frac{\sigma_k}{\tau_k}}\nu R$. 
%\end{align*}
%\end{subequations}
For $v^{k+1}$ defined in \eqref{eqn:vdefgen}, 
\begin{align*}
\|v^{k+1}\|_2 & \le  \rho_k(\frac{1}{\tau_k} \|\tilde{x}^1 - \tilde{x}^{k+1}\|_2 + \|B\|_2 \|\tilde{y}^{k+1}-\tilde{y}^k\|_2 \\
& \quad + \frac{1}{\sigma_k}\|\tilde{y}^1 - \tilde{y}^{k+1}\|_2 +  \|A\|_2\|\tilde{x}^{k+1}-\tilde{x}^k\|_2 + \|K+A\|_2\|K+B\|_2 \tau_k \|\tilde{y}^{k+1}-\tilde{y}^k\|_2) \\
& \le  \rho_k( \frac{1}{\tau_k}(\|\hat{x}-\tilde{x}^1\|_2+\|\hat{x}-\tilde{x}^{k+1}\|_2)+\frac{1}{\sigma_k}(\|\hat{y} - \tilde{y}^1\|_2 + \|\hat{y}-\tilde{y}^{k+1}\|_2) \\
& \quad + \|A\|_2(\|\hat{x}-\tilde{x}^{k+1}\|_2 + \|\hat{x}-\tilde{x}^k\|_2) + (\|B\|_2 + \|K+A\|_2\|K+B\|_2 \tau_k)(\|\hat{y} - \tilde{y}^{k+1}\|_2 + \|\hat{y}-\tilde{y}^k\|_2) \\
& \le  \frac{\rho_k}{ \tau_k}\|\hat{x}-\tilde{x}^1\|_2 + \frac{\rho_k}{ \sigma_k} \|\hat{y}-\tilde{y}^1\|_2 \\
& \quad + \rho_k\left(\frac{1}{\tau_k}+2\|A\|_2\right) \mu R + \rho_k\left( \frac{1}{\sigma_k} + 2 \|B\|_2 + 2\|K+A\|_2\|K+B\|_2 \tau_k\right) \nu R \\
&= \frac{\rho_k}{ \tau_k}\|\hat{x}-\tilde{x}^1\|_2 + \frac{\rho_k}{ \sigma_k} \|\hat{y}-\tilde{y}^1\|_2 \\
& \quad + R \left[\frac{\rho_k}{ \tau_k}\left(\mu  + \frac{\tau_1}{\sigma_1} \nu \right) + 2\rho_k \left( \|A\|_2\mu  + \|B\|_2 \nu \right) + 2 \tau_k\rho_k\|K+A\|_2\|K+B\|_2 \nu \right],  
\end{align*}
i.e., \eqref{eqn:vbound}. In the last equality, we used 
\[
\frac{1}{\sigma_k} = \frac{\tau_k}{\sigma_k}\frac{1}{\tau_k} = \frac{\tau_1}{\sigma_1}\frac{1}{\tau_k}.
\]

Now, we find an upper bound for $\delta_{k+1}$ defined in \eqref{eqn:lem:unbddgen:cond2}. 
\begin{align*}
\delta_{k+1} & = \frac{\rho_k}{2 \tau_k} \|x^{k+1} - \tilde{x}^1\|_2^2 + \frac{\rho_k}{2 \sigma_k} \|y^{k+1}- \tilde{y}^1\|_2^2 \\
& \le  \frac{\rho_k}{ \tau_k} \left( \|\hat{x}-x^{k+1}\|_2^2 + \|\hat{x}-\tilde{x}^1\|_2^2 \right) + \frac{\rho_k}{ \sigma_k}\left( \|\hat{y}-y^{k+1}\|_2^2 + \|\hat{y}-\tilde{y}^1\|_2^2 \right) \\
&= \frac{1}{\tau_k} (R^2 + (1-q) \|\hat{x}-x^{k+1}\|_2^2 +\frac{\tau_k}{\sigma_k} (1/2-r) \|\hat{y}-y^{k+1}\|_2^2 \\
& \quad + q \|\hat{x}-x^{k+1}\|_2^2 + \frac{\tau_k}{\sigma_k}(r + 1/2) \|\hat{y}-y^{k+1}\|_2^2 ) \\
&\le \frac{\rho_k}{ \tau_k}[ R^2 + \frac{\rho_k}{ \gamma_k} \sum_{i=1}^k \gamma_i [(1-q) \|\hat{x}-\tilde{x}^{i+1}\|_2^2 + \frac{\tau_k}{\sigma_k} (1/2-r)\|\hat{y}-\tilde{y}^{i+1}\|_2^2 \\
& \quad + q\|\hat{x}-\tilde{x}^{i+1}\|_2^2 +\frac{\tau_k}{\sigma_k}(r+1/2)\|\hat{y}-\tilde{y}^{i+1}\|_2^2]] \\
& \le  \frac{\rho_k}{ \tau_k}[R^2 + \frac{\rho_k}{ \gamma_k} \sum_{i=1}^k \gamma_i [R^2 + q \|\hat{x}-\tilde{x}^{i+1}\|_2^2 + \frac{\tau_k}{\sigma_k} (r+1/2)\|\hat{y}-\tilde{y}^{i+1}\|_2^2]] \\
&\le \frac{\rho_k}{ \tau_k} \left[2 + q \mu ^2 + (r+1/2) \nu ^2 \right]R^2 \\
&= \frac{\rho_k}{ \tau_k} \left[2 + \frac{q}{1-q} + \frac{r+1/2}{1/2-r}\right]R^2,
\end{align*}
i.e., \eqref{eqn:unbddupper}. In the second inequality, we used 
\[
x^{k+1} = \frac{\rho_k}{ \gamma_k} \sum_{i=1}^k \gamma_i \tilde{x}^{i+1},\;\;\;y^{k+1} = \frac{\rho_k}{ \gamma_k} \sum_{i=1}^k \gamma_i \tilde{y}^{i+1},\;\;\;\text{and} \quad \frac{\rho_k}{ \gamma_k} \sum_{i=1}^k \gamma_i = 1.
\]
\end{proof}
\begin{proof}[Proof of Corollary \ref{cor:unbddgen}]
%optimal convergence of the iteration, under the parameters given by \eqref{eqn:unbddparam}.
First check if \eqref{eqn:unbddparams} and \eqref{eqn:unbddparamcond} satisfy \eqref{eqn:bddcondgen} and \eqref{eqn:thetacondunbdd}. Conditions \eqref{eqn:thetacondunbdd} and \eqref{eqn:rhocondgen} are trivial to see. To prove \eqref{eqn:condgen1} and \eqref{eqn:condgen2}: 
\begin{align*}
\frac{1-q}{\tau_k} - L_f\rho_k - \frac{\|A\|_2^2 \sigma_k}{r} 
& \ge  \|K\|_2 \left( (1-q) Q \frac{N}{k} - \frac{a^2 k}{rN}\right) \\
& \ge  \|K\|_2 \left((1-q)Q - \frac{a^2}{2r}\right) \ge 0,
\end{align*}
and
\begin{align*}
\frac{1-r}{\sigma_k} - & \tau_k\left(2\|K+A\|_2\|K+B\|_2 + \frac{\|B\|_2^2}{q} \right) \\
& \ge  \left(\frac{(1-r)QN}{k} - \frac{(2cd+b^2/q)k}{N}\right) \|K\|_2 \ge \left((1-r)Q - (2cd+b^2/q)\right) \|K\|_2.
\end{align*}
Condition \eqref{eqn:unbddparamcond} also implies that $\tau_k \le \sigma_k$.

Note that 
\begin{align}\label{eqn:rhobounds}
\begin{split}
\frac{\rho_N}{\tau_N} & \le  \frac{4PL_f}{N^2} + \frac{2 Q \|K\|_2}{N} \\
\|K\|_2^2 \rho_N \tau_N & \le \frac{2N\|K\|_2^2}{(2PL_f + QN\|K\|_2)(N+1)} \le \frac{2\|K\|_2}{QN} \\ 
\rho_N \|K\|_2 & \le \frac{2\|K\|_2}{N}.
\end{split}
\end{align}
When we put $\|A\|_2 \le a\|K\|_2$, $\|B\|_2 \le b\|K\|_2$, $\|K+A\|_2 \le c\|K\|_2$, and $\|K+B\|_2 \le d\|K\|_2$, $\|v^{k+1}\|_2$ is bounded above by 
\begin{align*}
\|v^{k+1}\|_2 & \le \frac{\rho_k}{\tau_k} \left( \|\hat{x} - \tilde{x}^1\|_2 + \|\hat{y} - \tilde{y}^1\|_2\right) \\
& \quad + R \left[ \frac{\rho_k}{\tau_k} \left(\mu + \frac{\tau_1}{\sigma_1} \right) + 2 \rho_k \|K\|_2(a\mu + b\nu) + 2 \tau_k \rho_k \|K\|_2^2 cd\nu \right].
\end{align*}
Thus by \eqref{eqn:rhobounds}, we have
\begin{align*}
\epsilon_{N+1} \le \delta_{N+1} \le \left(\frac{4PL_f}{N^2} + \frac{2Q\|K\|_2}{N}\right) \left[2+\frac{q}{1-q} + \frac{r+1/2}{1/2-r}\right] R^2,
\end{align*}
which is \eqref{eqn:epsfbound}, and
\begin{align*}
\|v^{N+1}\|_2 & \le \frac{4PL_f}{N^2} \left[\left( \|\hat{x} - \tilde{x}^1\|_2 + \|\hat{y} - \tilde{y}^1\|_2\right) + R\left(\mu + \frac{\tau_1}{\sigma_1} \nu\right) \right] \\
&\quad + \frac{\|K\|_2}{N} \left[2Q\left(\left( \|\hat{x} - \tilde{x}^1\|_2 + \|\hat{y} - \tilde{y}^1\|_2\right) + R\left(\mu +\frac{\tau_1}{\sigma_1}\nu\right)\right) + 4R(a\mu + b\nu) + \frac{4Rcd\nu}{Q} \right],
\end{align*}
which is \eqref{eqn:vfbound}.
\end{proof}

\begin{proof}[Proof of Proposition \ref{prop:approxopt}]
The result follows directly from Proposition 3.13, Definition 3.4, Proposition 3.5, and Proposition 3.6 of \citet{monteiro2011complexity}. 
\end{proof}
\subsection{Stochastic optimal acceleration}
We obtain a bound similar to Lemma \ref{lem:Qboundgen} first.  
The following lemma provides an upper bound on $\rho_k^{-1} \gamma_k \mathcal{G}(z^k, z)$. 

\begin{lemma}\label{lem:stocQbound}
Assume that $z^k = (x^k, y^k)$ is the iterates generated by the iteration \eqref{eqn:stocgen}. Also assume that the parameters satisfy
\eqref{eqn:rhocondgen}
\eqref{eqn:thetacondunbdd},  and \eqref{eqn:stocparam}. Then for any $z \in Z$, we have 

\begin{align}\label{eqn:stocQbound}
\begin{split}
\rho_k^{-1} \gamma_k \mathcal{G}(z^{k+1}, z) & \le  \mathcal{D}_k (z, \tilde{z}^{[k]}) - \gamma_k \langle \tilde{x}^{k+1} - x, B^T(\tilde{y}^{k+1} - \tilde{y}^k) \rangle  \\  
& \quad +  \gamma_k \langle A (\tilde{x}^{k+1} - \tilde{x}^k), \tilde{y}^{k+1}-y\rangle  \\
& \quad +  \tau_k\gamma_k \langle (K+B)^T (\tilde{y}^{k+1} - \tilde{y}^k), (K+A)^T(\tilde{y}^{k+1}-y) \rangle  \\
& \quad - \gamma_k \left(\frac{s - q}{2 \tau_k}-\frac{\rho_k L_f}{2}\right) \|\tilde{x}^{k+1}-\tilde{x}^k\|^2  \\ 
& \quad - \gamma_k \left(\frac{t - r}{2 \sigma_k}-\frac{\|K+A\|_2\|K+B\|_2 \tau_{k-1}}{2} \right)\|\tilde{y}^{k+1}-\tilde{y}^k\|^2 \\
& \quad + \sum_{i=1}^k \Lambda_i(z),
\end{split}
\end{align}
where $\gamma_k$ and $\mathcal{D}(z, \tilde{z}^{[k]})$ are defined in \eqref{eqn:gamma} and \eqref{eqn:Bdef}, respectively, and 
\begin{align}\label{eqn:lambdadef}
\Lambda_i(z) & := -\frac{(1-s) \gamma_i}{2 \tau_i}\|\tilde{x}^{i+1}-z^i\|^2 - \frac{(1-t) \gamma_i}{2 \sigma_i} \|\tilde{y}^{i+1}-y^i\|^2 - \gamma_i \langle \Delta^i, z^{i+1}-z \rangle. 
\end{align}
\end{lemma}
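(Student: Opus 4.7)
The plan is to mirror the deterministic proof of Lemma \ref{lem:Qboundgen}, treating the stochastic oracle errors as additive perturbations that are absorbed into the new term $\Lambda_i(z)$. First I would define the aggregate stochastic error at iteration $i$ as
\[
\Delta^i := \begin{pmatrix} \hat{\mathcal{F}}(\tilde x_{md}^i) - \nabla f(\tilde x_{md}^i) + [\hat{\mathcal{K}}_y - K^T](\tilde y^{i+1}) + \text{(error terms from }\hat{\mathcal{B}}\text{)} \\ [\hat{\mathcal{K}}_x - K](\tilde x^i) + \text{(error terms from }\hat{\mathcal{A}}, \hat{\mathcal{K}}_x + \hat{\mathcal{A}}\text{)} \end{pmatrix},
\]
grouping every discrepancy between the stochastic operator and its deterministic counterpart appearing in $\overline u^i, \overline v^i, \tilde u^{i+1}, \tilde v^{i+1}$ into a single $(p+l)$-vector. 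This choice ensures that substituting $\hat{\mathcal{F}}, \hat{\mathcal{K}}, \hat{\mathcal{A}}, \hat{\mathcal{B}}$ for their population values in the proof of Lemma \ref{lem:Qboundgen} produces exactly one extra inner-product $-\gamma_i\langle \Delta^i, z^{i+1}-z\rangle$ per iteration, which is the last summand in \eqref{eqn:lambdadef}.

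Next I would reproduce the per-iteration argument of Lemma \ref{lem:Qboundgen}: apply the stochastic counterparts of Proposition \ref{prop:chen41}, Lemma \ref{lem:prox}, and Lemma \ref{lem:cos} to obtain
\[
\rho_k^{-1}\mathcal{G}(\tilde z^{k+1},z) - (\rho_k^{-1}-1)\mathcal{G}(\tilde z^k,z)
\]
bounded by the same telescoping squared-norm differences as in \eqref{eqn:Qbound3}, plus the four inner products involving $A,B,K$, plus $-\langle\Delta^k,z^{k+1}-z\rangle$. The four inner products are then bounded exactly as in \eqref{eqn:genericbound}, producing the terms with coefficients $q/(2\tau_k)$, $r/(2\sigma_k)$, etc. The crucial deviation from the deterministic proof is the replacement of $(1-q)/(2\tau_k)$ and $(1-r)/(2\sigma_k)$ by $(s-q)/(2\tau_k)$ and $(t-r)/(2\sigma_k)$: this is achieved by reserving the slack $(1-s)/(2\tau_k)\|\tilde x^{k+1}-\tilde x^k\|^2$ and $(1-t)/(2\sigma_k)\|\tilde y^{k+1}-\tilde y^k\|^2$ from the coefficients $1/(2\tau_k)$ and $1/(2\sigma_k)$ produced by Lemmas \ref{lem:prox}--\ref{lem:cos}, and using the identity $\|\tilde x^{k+1}-\tilde x^k\|^2 \ge \tfrac12\|\tilde x^{k+1}-x^k\|^2 - \|x^k-\tilde x^k\|^2$ (and a refined variant with equality up to a cross-term) to convert these into the negative $-(1-s)/(2\tau_k)\|\tilde x^{k+1}-x^k\|^2$ and $-(1-t)/(2\sigma_k)\|\tilde y^{k+1}-y^k\|^2$ terms appearing in $\Lambda_k(z)$. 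Conditions \eqref{eqn:stoccond1}--\eqref{eqn:stoccond2} are precisely what is needed to keep the residual coefficients of $\|\tilde x^{i+1}-\tilde x^i\|^2$ and $\|\tilde y^{i+1}-\tilde y^i\|^2$ nonnegative after this reallocation.

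The final step multiplies by $\gamma_k$ and sums recursively over $k$, which under \eqref{eqn:rhocondgen} and \eqref{eqn:thetacondunbdd} telescopes the squared-norm terms into $\mathcal{D}_k(z,\tilde z^{[k]})$ and leaves the boundary inner products $-\gamma_k\langle \tilde x^{k+1}-x, B^T(\tilde y^{k+1}-\tilde y^k)\rangle$, $\gamma_k\langle A(\tilde x^{k+1}-\tilde x^k), \tilde y^{k+1}-y\rangle$, and the $\tau_k\gamma_k$ term on the right-hand side of \eqref{eqn:stocQbound}, while the sum of perturbation contributions assembles into $\sum_{i=1}^k \Lambda_i(z)$. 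The main obstacle I anticipate is the careful bookkeeping in the second step: verifying that the extra inner-product terms generated by $\overline v^k$ carrying the factor $\theta_k\tau_{k-1}/\tau_k$ (as opposed to the $\tau_k^{-1}\tau_{k-1}(K+B)^T - B^T$ structure of the deterministic algorithm \eqref{eqn:accgen}) still telescope correctly and that none of the stochastic cross-terms escape the aggregation into $\Delta^i$. Once this accounting is confirmed, the inequality \eqref{eqn:stocQbound} follows mechanically.
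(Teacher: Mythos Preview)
Your approach is essentially the paper's: the paper's proof is literally ``Analogous to the proof of Lemma \ref{lem:Qboundgen},'' starting from the stochastic versions of the prox/cosine identities and otherwise following the same telescoping and Young-inequality bounds. You correctly identify the mechanism for the $(s-q)$ and $(t-r)$ coefficients as a direct split of the $\tfrac{1}{2\tau_k}\|\tilde x^{k+1}-\tilde x^k\|^2$ and $\tfrac{1}{2\sigma_k}\|\tilde y^{k+1}-\tilde y^k\|^2$ terms, reserving $(1-s)/(2\tau_k)$ and $(1-t)/(2\sigma_k)$ for $\Lambda_k(z)$.

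The one unnecessary complication in your sketch is the norm-conversion inequality $\|\tilde x^{k+1}-\tilde x^k\|^2 \ge \tfrac12\|\tilde x^{k+1}-x^k\|^2 - \|x^k-\tilde x^k\|^2$. You introduce this because the stated $\Lambda_i(z)$ appears to contain $\|\tilde x^{i+1}-z^i\|^2$ and $\|\tilde y^{i+1}-y^i\|^2$, suggesting a shift from $\tilde x^i$ to $x^i$. But these are typos in the paper (note $\tilde x^{i+1}-z^i$ is not even dimensionally consistent): the intended quantities are $\|\tilde x^{i+1}-\tilde x^i\|^2$ and $\|\tilde y^{i+1}-\tilde y^i\|^2$, exactly the terms already produced by Lemmas \ref{lem:prox}--\ref{lem:cos}. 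This is confirmed by the subsequent use of $\Lambda_i$ in the proof of Theorem \ref{thm:stocbdd}, where Young's inequality must pair the negative squared norm with $\gamma_i\langle\Delta^i,\tilde z^i-\tilde z^{i+1}\rangle$ to cancel. So drop the conversion step: the reserved $(1-s)$ and $(1-t)$ portions go straight into $\Lambda_k(z)$ with no further manipulation.
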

\begin{proof}
Analogous to the proof of Lemma \ref{lem:Qboundgen}, except for that we start with 
\begin{align*}
 \langle - \tilde{u}_{k+1}, \tilde{y}^{k+1}-y \rangle + h^*(\tilde{y}^{k+1}) - h^*(y) 
\le \frac{1}{2\sigma_k} \|y - \tilde{y}^k\|^2 - \frac{1}{2 \sigma_k} \|\tilde{y}^{k+1}-\tilde{y}^k\|^2 - \frac{1}{2 \sigma_k} \|y-\tilde{y}^{k+1}\|^2 
\end{align*}
\begin{align*}
 \langle \hat{\mathcal{F}}(x^k_{md}), \tilde{x}^{k+1}-x \rangle + \langle \tilde{x}^{k+1} - x, \tilde{v}_{k+1} \rangle 
\le \frac{1}{2 \tau_k} \|x - \tilde{x}^k \|^2 - \frac{1}{2 \tau_k} \|\tilde{x}^{k+1}-\tilde{x}^k \|^2 - \frac{1}{2 \tau_k} \|x-\tilde{x}^{k+1}\|^2.
\end{align*}
\end{proof}

Now we define $\Delta_{x, f}^k := \hat{\mathcal{F}}(x^k_{md})-\nabla f(x^k_{md})$, $\Delta_{x, K}^k:= \tilde{v}_{k+1}-\tilde{v}_{k+1, o}$, $\Delta_y^k := - \tilde{u}_{k+1} + \tilde{u}_{k+1, o}$, and $\Delta^k := (\Delta_x^k, \Delta_y^k)$, where $\tilde{u}_{k+1, o}$ and $\tilde{v}_{k+1, o}$ is the result from \eqref{eqn:accgen} calculated with the recent iterates $(\tilde{x}^{k+1}, \tilde{y}^{k+1})$, $(\tilde{x}^k, \tilde{y}^k)$ and $(\tilde{x}^{k-1}, \tilde{y}^{k-1})$ from \eqref{eqn:stocgen}.  

We need the following lemmas.
\begin{lemma}[Lemma 4.5, Chen et al., 2011]\label{lem:delta}
Let $\tau_i$, $\sigma_i$, and $\gamma_i >0$. For any $\tilde{x}^1 \in Z$, define $\tilde{x}^1_v = \tilde{x}^1$ and 
\begin{align}\label{eqn:vdef}
z^{i+1}_v = \argmin_{z =(x,y) \in Z} - \tau_i \langle \Delta_x^i, x \rangle - \sigma_i \langle \Delta_y^i, y \rangle + \frac{1}{2}\|z - z^i_v\|^2,
\end{align}
then 
\begin{align}
\sum_{i=1}^k \gamma_i \langle -\Delta_i, z^i_v - z \rangle \le \mathcal{D}_k (z, \tilde{z}^{[k]}_v) + \sum_{i=1}^k \frac{\tau_i \gamma_i}{2} \|\Delta_x^i\|^2 + \sum_{i=1}^k \frac{\sigma_i \gamma_i}{2}\|\Delta_y^i\|^2,
\end{align}
where $\tilde{z}^{[k]}_v := \{z^i_v\}_{i=1}^k$. 
\end{lemma}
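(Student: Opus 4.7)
The plan is to derive the bound by combining the first-order optimality of the proximal subproblem defining $z^{i+1}_v$ with a telescoping decomposition of $\langle \Delta^i, z - z^i_v\rangle$ and Young's inequality for the ``innovation'' term. Throughout, I take $Z = X \times Y$ so that the defining problem separates into an $x$-subproblem with step $\tau_i$ and a $y$-subproblem with step $\sigma_i$; writing out the first-order optimality in variational-inequality form and invoking the three-point identity $\langle a-b, c-a \rangle = \tfrac{1}{2}(\|c-b\|^2 - \|c-a\|^2 - \|a-b\|^2)$ then yields
\[
\tau_i \langle \Delta_x^i,\, x - x^{i+1}_v \rangle
\le \tfrac{1}{2}\bigl(\|x - x^i_v\|^2 - \|x - x^{i+1}_v\|^2 - \|x^{i+1}_v - x^i_v\|^2\bigr)
\]
for every $x \in X$, together with the analogous bound in the $y$-coordinate.

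Next, I would split $\langle -\Delta_x^i, x^i_v - x \rangle = \langle \Delta_x^i, x - x^{i+1}_v \rangle + \langle \Delta_x^i, x^{i+1}_v - x^i_v \rangle$. The first summand is bounded by dividing the three-point inequality above by $\tau_i$, while the second is controlled by Young's inequality, $\langle \Delta_x^i, x^{i+1}_v - x^i_v \rangle \le \tfrac{\tau_i}{2}\|\Delta_x^i\|^2 + \tfrac{1}{2\tau_i}\|x^{i+1}_v - x^i_v\|^2$. A clean cancellation then occurs: the $-\tfrac{1}{2\tau_i}\|x^{i+1}_v - x^i_v\|^2$ contributed by the three-point bound cancels exactly with the $+\tfrac{1}{2\tau_i}\|x^{i+1}_v - x^i_v\|^2$ from Young's, leaving
\[
\langle -\Delta_x^i, x^i_v - x \rangle \le \tfrac{1}{2\tau_i}\bigl(\|x - x^i_v\|^2 - \|x - x^{i+1}_v\|^2\bigr) + \tfrac{\tau_i}{2}\|\Delta_x^i\|^2,
\]
and an identical argument in the $y$-coordinate produces the companion bound with $\sigma_i$ in place of $\tau_i$.

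Finally, I would multiply each of the two coordinate-wise bounds by $\gamma_i$, add them, and sum from $i=1$ to $k$. The resulting sum of squared-norm differences matches the definition of $\mathcal{D}_k(z, \tilde{z}^{[k]}_v)$ in \eqref{eqn:Bdef} term by term, while the Young's remainders accumulate into $\sum_i \tfrac{\tau_i \gamma_i}{2}\|\Delta_x^i\|^2 + \sum_i \tfrac{\sigma_i \gamma_i}{2}\|\Delta_y^i\|^2$, delivering the claim. There is no genuinely hard step here; the only mild subtleties are (i) using the variational-inequality form of optimality (rather than a gradient equation) so that the constraint $z^{i+1}_v \in Z$ gives an inequality in the three-point step, and (ii) tracking the step-size weighting ($\tau_i$ versus $\sigma_i$) through the decomposition so that the cancellation yields precisely the weights $\gamma_i/\tau_i$ and $\gamma_i/\sigma_i$ appearing in $\mathcal{D}_k$. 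Note that the argument is entirely deterministic in $\Delta^i$, which is what later allows $\Delta^i$ to be instantiated as the stochastic error in the analysis of \eqref{eqn:stocgen}.
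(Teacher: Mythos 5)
Your proof is correct, and it is essentially the argument behind the cited result: the paper itself gives no proof of Lemma \ref{lem:delta}, deferring to \citet[Lemma 4.5]{chen2014optimal}, whose proof is exactly your combination of the variational-inequality optimality of the projection step $z^{i+1}_v = P_Z(z^i_v + (\tau_i\Delta_x^i, \sigma_i\Delta_y^i))$, the three-point identity, and Young's inequality with the matching step-size weights so the $\|z^{i+1}_v - z^i_v\|^2$ terms cancel. The coordinate separation you invoke is legitimate since $Z$ is a product set ($X\times Y$ in the bounded case, $\mathbb{R}^p\times\mathbb{R}^l$ in the unbounded case), and the telescoped weighted sums match $\mathcal{D}_k(z,\tilde{z}^{[k]}_v)$ in \eqref{eqn:Bdef} term by term, so no gaps remain.
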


\begin{lemma}\label{lem:deltabound}
The following holds for $\expect[\|\Delta_{x, f}^i\|^2]$, $\expect[\|\Delta_{x, K}^i\|^2]$, and $\expect[\|\Delta_{y}^i\|^2]$. 
\begin{subequations}\label{eqn:deltabound}
\begin{align}
\expect[\|\Delta_{x, f}^i\|^2] &\le \chi_{x, f}^2 \label{eqn:deltabound:xf}\\
\expect[\|\Delta_{x, K}^i\|^2] &\le \chi_{x, K}^2 + \chi_B^2 \label{eqn:deltabound:xK}\\
\expect[\|\Delta_{y}^i\|^2] &\le \chi_{y}^2 +\chi_A^2 +  \tau_i^2 \|K+A\|_2^2 (\chi_x^2 + \chi_B^2). \label{eqn:deltabound:y}\noeqref{eqn:deltabound:y}
\end{align}
\end{subequations}
If $A=-K$ and $B=bK$, after rearranging terms in \eqref{eqn:stocgen}, we have 
\begin{align}\label{eqn:deltabound:simple}
\begin{split}
\expect[\|\Delta_{x, f}^i\|^2] &\le \chi_{x, f}^2 \\
\expect[\|\Delta_{x, K}^i\|^2] &\le \chi_{x, K}^2 \\
\expect[\|\Delta_{y}^i\|^2] &\le \chi_{y}^2.
\end{split}
\end{align}
\end{lemma}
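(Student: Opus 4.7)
The plan is to directly expand each of the three error terms $\Delta_{x,f}^i$, $\Delta_{x,K}^i$, and $\Delta_y^i$ using the explicit formulas of iteration \eqref{eqn:stocgen} and its deterministic counterpart \eqref{eqn:accgen}, then apply the variance bound \eqref{eqn:A1} componentwise. Throughout, I will use that all five stochastic estimators $\hat{\mathcal{F}}$, $\hat{\mathcal{K}}_x$, $\hat{\mathcal{K}}_y$, $\hat{\mathcal{A}}$, $\hat{\mathcal{B}}$ returned at iteration $k$ are independent of each other (given the history) and unbiased, so that cross products in the expanded squared norm vanish under expectation.

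First I would handle the two easy bounds. The estimate \eqref{eqn:deltabound:xf} for $\Delta_{x,f}^i=\hat{\mathcal{F}}(x_{md}^i)-\nabla f(x_{md}^i)$ is exactly the first inequality of \eqref{eqn:A1} applied to $x=x_{md}^i$ (conditioning on the past to make $x_{md}^i$ deterministic, then taking total expectation). For \eqref{eqn:deltabound:xK}, subtract the deterministic $\tilde{v}_{k+1,o}$ obtained from \eqref{eqn:accgen:f} with identical iterates from the stochastic $\tilde{v}^{k+1}$ in \eqref{eqn:stocgen}, yielding
\[
\Delta_{x,K}^i=(\hat{\mathcal{K}}_y-K^T)(\tilde{y}^{i+1})+(\hat{\mathcal{B}}-B^T)\bigl(\tilde{y}^{i+1}-\tilde{y}^i-\theta_i(\tilde{y}^i-\tilde{y}^{i-1})\bigr).
\]
The two summands involve different, independent estimators, so $\expect\|\Delta_{x,K}^i\|^2$ splits as a sum of variances; bound each one by the matching entry of \eqref{eqn:A1} to get $\chi_{x,K}^2+\chi_B^2$.

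The main obstacle is \eqref{eqn:deltabound:y}, since $\tilde{u}^{k+1}$ involves a nested application of stochastic operators. I would first subtract the deterministic $\tilde{u}_{k+1,o}$ term by term to isolate each source of error. Writing $\overline{u}_k-\overline{u}_{k,o}$ and $\overline{v}_k-\overline{v}_{k,o}$ using the errors of $\hat{\mathcal{K}}_x,\hat{\mathcal{A}}$ and $\hat{\mathcal{K}}_y,\hat{\mathcal{B}}$ respectively, and noting that the outer application $\tau_k(\hat{\mathcal{K}}_x+\hat{\mathcal{A}})(\cdot)$ can be split as $\tau_k(K+A)(\cdot)+\tau_k(\hat{\mathcal{K}}_x+\hat{\mathcal{A}}-K-A)(\cdot)$, I would organize the expansion so that all independent-error pieces appear additively. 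The contribution from the ``outer'' error $\hat{\mathcal{K}}_x+\hat{\mathcal{A}}-K-A$ acting on $\hat{\mathcal{F}}(\tilde{x}_{md}^k)+\overline{v}_k$ is bounded using the trivial operator inequality $\|(\hat{\mathcal{K}}_x+\hat{\mathcal{A}})w\|\le\|K+A\|_2\|w\|+\text{error}$; by conditioning sequentially and invoking \eqref{eqn:A1}, this produces exactly the $\tau_i^2\|K+A\|_2^2(\chi_x^2+\chi_B^2)$ term. The remaining pieces contribute $\chi_y^2$ and $\chi_A^2$ from the inner errors in $\hat{\mathcal{K}}_y$ and $\hat{\mathcal{A}}$.

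Finally, for the simplified bound \eqref{eqn:deltabound:simple}, I would exploit the special structure $A=-K$, $B=bK$. Substituting $\hat{\mathcal{A}}=-\hat{\mathcal{K}}_x$ makes $\hat{\mathcal{K}}_x+\hat{\mathcal{A}}=0$, so the quadratic-in-$\tau_i$ term vanishes and $\chi_A^2$ drops out; likewise $\hat{\mathcal{B}}=b\hat{\mathcal{K}}_y$ allows combining the two $\hat{\mathcal{K}}_y$-based pieces in $\tilde{v}^{k+1}$ into a single application of $\hat{\mathcal{K}}_y$ to a merged argument, reducing $\Delta_{x,K}^i$ to a single $\hat{\mathcal{K}}_y$-error bounded by $\chi_{x,K}^2$; the same cancellation applied to $\Delta_y^i$ leaves only the $\chi_y^2$ term. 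The only care needed here is to verify that, after rearranging, each stochastic estimator is evaluated at a deterministic (conditional) argument so that \eqref{eqn:A1} applies as stated.
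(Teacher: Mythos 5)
Your handling of \eqref{eqn:deltabound:xf}, of \eqref{eqn:deltabound:xK}, and of the simplified case \eqref{eqn:deltabound:simple} coincides with the paper's proof. The gap is in the general bound \eqref{eqn:deltabound:y}. As \eqref{eqn:stocgen} is written, $\hat{\mathcal{K}}_x$ (and likewise $\hat{\mathcal{A}}$) is invoked \emph{twice} in the $u$-update: once inside $\overline{u}_k$ and once in the outer term $\tau_k(\hat{\mathcal{K}}_x+\hat{\mathcal{A}})(\hat{\mathcal{F}}(\tilde{x}^k_{md})+\overline{v}_k)$. Your decomposition keeps these calls separate: you retain the inner errors from $\overline{u}_k-\overline{u}_{k,o}$ (these come from $\hat{\mathcal{K}}_x$ and $\hat{\mathcal{A}}$, incidentally, not $\hat{\mathcal{K}}_y$ as you write) and additionally split the outer call as $\tau_k(K+A)(\cdot)+\tau_k(\hat{\mathcal{K}}_x+\hat{\mathcal{A}}-K-A)(\cdot)$. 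Under that accounting the outer-call errors $\tau_k(\hat{\mathcal{K}}_x-K)(w)$ and $\tau_k(\hat{\mathcal{A}}-A)(w)$ are independent of the inner errors and contribute their own variances of order $\tau_i^2(\chi_y^2+\chi_A^2)$, so what your expansion actually yields is a bound of the form $(1+\tau_i^2)\chi_y^2+(1+\tau_i^2)\chi_A^2+\tau_i^2\|K+A\|_2^2(\chi_x^2+\chi_B^2)$, strictly weaker than \eqref{eqn:deltabound:y} (same order downstream, but not the stated inequality). Your prose also misattributes the term $\tau_i^2\|K+A\|_2^2(\chi_x^2+\chi_B^2)$ to the outer error; in the correct accounting it arises from the \emph{deterministic} operator $\tau_i(K+A)$ acting on the inner error $\Delta_v^i:=\hat{\mathcal{F}}(\tilde{x}^k_{md})-\nabla f(\tilde{x}^k_{md})+\overline{v}_k-\overline{v}_{k,o}$, whose variance is at most $\chi_x^2+\chi_B^2$, while the estimator errors themselves are bounded by $\chi_y^2$ and $\chi_A^2$ uniformly in their arguments, with no $\|K+A\|_2$ factor.

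The missing idea --- which the paper uses throughout, and which you apply only in the simplified case --- is to \emph{merge} the two applications of each estimator into a single oracle call on a combined argument, i.e., rearrange the update as $\tilde{u}^{k+1}=\hat{\mathcal{K}}_x\bigl(\tilde{x}^k-\tau_k(\nabla f(\tilde{x}^k_{md})+\overline{v}_{k,o}+\Delta_v^i)\bigr)-\hat{\mathcal{A}}\bigl(\theta_k(\tilde{x}^k-\tilde{x}^{k-1})+\tau_k(\nabla f(\tilde{x}^k_{md})+\overline{v}_{k,o}+\Delta_v^i)\bigr)$. Then each of $\hat{\mathcal{K}}_x$ and $\hat{\mathcal{A}}$ contributes exactly one error term, bounded by $\chi_y^2$ and $\chi_A^2$ via \eqref{eqn:A1} (conditioning on $\Delta_v^i$, which is generated by the earlier calls within the same iteration; the argument need not be deterministic, only the variance bound uniform), the residual $-\tau_k(K+A)\Delta_v^i$ yields exactly $\tau_i^2\|K+A\|_2^2(\chi_x^2+\chi_B^2)$, and cross terms vanish by conditional unbiasedness and independence of the calls. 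Since with genuinely separate, independent calls the exact constant in \eqref{eqn:deltabound:y} is unattainable, this rearrangement is not cosmetic: you should state it explicitly for the general case, just as you did when combining the $\hat{\mathcal{K}}_y$-based pieces under $A=-K$, $B=bK$.
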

\begin{proof}
\eqref{eqn:deltabound:xf} is trivial, by \eqref{eqn:A1}. Note that
\begin{align*}
\Delta_{x,K}^i = \hat{\mathcal{K}}_y (\tilde{y}^{k+1})-K^T \tilde{y}^{k+1} + \hat{\mathcal{B}}(\tilde{y}^{k+1}-\tilde{y}^k - \theta_k(\tilde{y}^k - \tilde{y}^{k-1})) - B^T(\tilde{y}^{k+1}-\tilde{y}^k - \theta_k(\tilde{y}^k - \tilde{y}^{k-1})), 
\end{align*}
and as separate calls for the stochastic oracle are independent, we obtain \eqref{eqn:deltabound:xK}. 
If we define 
\begin{align*}
\Delta_v^i := \hat{\mathcal{F}} (\tilde{x}^k) - \nabla f(\tilde{x}^k) + \bar{v}_k - \bar{v}_{k,o}, 
\end{align*}
then one may easily check that
\begin{align*}
\expect[\|\Delta_v^i\|^2] \le \chi_{x, f}^2 + \chi_{x, K}^2 + \chi_B^2.
\end{align*}
Then we have:
\begin{align*}
\Delta_y^i & = \hat{\mathcal{K}}_x(\tilde{x}^k - \tau_k (\nabla f(\tilde{x}^k) + \bar{v}_{k,0} + \Delta_v^i)) - \hat{\mathcal{A}}(\theta_k(\tilde{x}^k - \tilde{x}^{k-1}) +\tau_k (\nabla f(\tilde{x}^k) + \bar{v}_{k,0} + \Delta_v^i)) \\
& - K(\tilde{x}^k - \tau_k (\nabla f(\tilde{x}^k) + \bar{v}_{k,0}+ \Delta_v^i)) + A(\theta_k(\tilde{x}^k - \tilde{x}^{k-1}) + \tau_k (\nabla f(\tilde{x}^k) + \bar{v}_{k,0}+ \Delta_v^i)) \\
& -\tau_k (K+A)\Delta_v^i,
\end{align*}
thus 
\begin{align*}
\expect[\|\Delta_{y}^i\|^2] \le \chi_y^2 +\chi_A^2 + \tau_k^2 \|K+A\|_2^2 (\chi_x^2 + \chi_B^2).
\end{align*}

When $A=-K$ and $B=bK$, we may rearrange \eqref{eqn:stocgen} to include only one call to either $\hat{\mathcal{K}}_x$ or $\hat{\mathcal{K}}_y$, as
\begin{align*}
\tilde{u}_{k+1} &= \hat{\mathcal{K}}_x (\tilde{x}^k + \theta_k (\tilde{x}^k - \tilde{x}^{k-1})) \\
\tilde{v}_{k+1} &= \hat{\mathcal{K}}_y (\tilde{y}^{k+1} + b((\tilde{y}^{k+1}-\tilde{y}^k) - \theta_k (\tilde{y}^k - \tilde{y}^{k-1}))).
\end{align*}
Then using the approach similar to above, we may obtain \eqref{eqn:deltabound:simple}.
\end{proof}

%%%%%%%%%%%%%%%%%%%BDD Stochastic
\iftrue
\begin{proof}[Proof of Theorem \ref{thm:stocbdd}]
First we use the bound in \eqref{eqn:bddinner} to obtain
\[
\rho_k^{-1} \gamma_k \mathcal{G}(z^{k+1}, z)\le \frac{\gamma_k}{\tau_k} \Omega_X^2 + \frac{\gamma_k}{\sigma_k} \Omega_Y^2 + \sum_{i=1}^k \Lambda_i(z).
\]
Then by the definition of $\Lambda_i(z)$, we have 
\begin{align*}
\Lambda_i(z) &= - \frac{(1-s) \gamma_i}{2 \tau_i} \|\tilde{x}^{i+1}-z^i\|^2 - \frac{(1-t) \gamma_i}{2 \sigma_i} \|\tilde{y}^{i+1}-y^i\|^2 + \gamma_i \langle \Delta^i, z-z^{i+1} \rangle \\
&= - \frac{(1-s) \gamma_i}{2 \tau_i} \|\tilde{x}^{i+1}-z^i\|^2 - \frac{(1-t) \gamma_i}{2 \sigma_i} \|\tilde{y}^{i+1}-y^i\|^2 + \gamma_i \langle \Delta^i, z^i - z^{i+1} \rangle + \gamma_i \langle \Delta^i, z-z^i \rangle \\
& \le  \frac{\tau_i \gamma_i}{2(1-s)} \|\Delta_x^i\|^2 + \frac{\sigma_i \gamma_i}{2(1-t)} \|\Delta_y^i\|^2 + \gamma_i \langle \Delta^i, z-z^i \rangle,
\end{align*}
where the last line is due to Young's inequality. By this result and Lemma \ref{lem:delta}, we have 
\begin{align}\label{eqn:lambdabdd}
\begin{split}
\sum_{i=1}^k \Lambda_i(z) & \le  \sum_{i=1}^k\left[\frac{\tau_i \gamma_i}{2(1-s)} \|\Delta_x^i\|^2 + \frac{\sigma_i \gamma_i}{2(1-t)}\|\Delta_y^i\|^2 + \gamma_i \langle \Delta^i, z^i_v - z^i \rangle + \gamma_i \langle -\Delta^i, z^i_v-z \rangle \right]  \\
& \le  \mathcal{D}_k(z, \tilde{z}^{[k]}_v) + \frac{1}{2}\sum_{i=1}^k\left[\frac{(2-s) \tau_i \gamma_i}{1-s} \|\Delta_x^i\|^2 + \frac{(2-t) \sigma_i \gamma_i}{1-t} \|\Delta_y^i\|^2 + \gamma_i \langle \Delta^i, z^i_v-z^i \rangle \right].
\end{split}
\end{align}
Let us define $U_k$ as
\begin{align}\label{eqn:udef}
U_k := \frac{1}{2}\sum_{i=1}^k\left[\frac{(2-s) \tau_i \gamma_i}{1-s} \|\Delta_x^i\|^2 + \frac{(2-t) \sigma_i \gamma_i}{1-t} \|\Delta_y^i\|^2 + \gamma_i \langle \Delta^i, z^i_v-z^i \rangle \right]
\end{align}
for later use.

Note that $\Delta^i$ and $z^i$ are independent by the assumptions of stochastic oracle. 
By this fact and Lemma \ref{lem:deltabound}, 
\begin{align}\label{eqn:Ubdd}
\expect[U_k] \le \frac{1}{2} \sum_{i=1}^k \left[\frac{(2-s)\tau_i \gamma_i (\chi_x^2 + \chi_B^2)}{1-s} + \frac{(2-t)\sigma_i \gamma_i (\chi_y^2+\chi_A^2 + \tau_k^2 \|K+A\|_2^2(\chi_x^2+ \chi_B^2))}{1-t}\right].\quad \quad
\end{align}

Similar to \eqref{eqn:Bbound}, $\mathcal{D}_k(z, \tilde{z}^{[k]}_v) \le \frac{\Omega_X^2 \gamma_k}{\tau_k}+\frac{\Omega_Y^2 \gamma_k}{\sigma_k}$. Thus we have:
\begin{align*}
\expect[\rho_k^{-1} \gamma_k \mathcal{G}^\star                                   (z^{k+1})] & \le  \frac{2 \gamma_k}{\tau_k} \Omega_X^2 + \frac{2 \gamma_k}{\sigma_k} \Omega_Y^2 + \expect [U_k].
\end{align*}

The above relation along with \eqref{eqn:Ubdd} implies the condition (a).

Proof of part (b) is analogous to the proof of Theorem 3.1 in \citet{chen2014optimal}. This uses a large-deviation theorem for martingale-difference sequence.   
\end{proof}

\begin{proof}[Proof of Corollary \ref{cor:stocbdd}]
First we check \eqref{eqn:stoccond1} and \eqref{eqn:stoccond2}.
\begin{align*}
\frac{s-q}{\tau_k} - \rho_k L_f - \frac{\|A\|_2^2 \sigma_k}{r} \ge \frac{\|K\|_2 \Omega_Y}{\Omega_X} \left((s-q)Q - \frac{1}{r}\right) \ge 0
\end{align*}
\begin{align*}
\frac{t-r}{\sigma_k} -\tau_k\left(2\|K+A\|_2\|K+B\|_2 + \frac{\|B\|_2^2}{p} \right) \ge \left( (t-r) R - \frac{b^2/q}{Q}\right) \frac{\Omega_X}{\Omega_Y} \ge 0,
\end{align*}
by \eqref{eqn:stocbddparamcond}. 
Note that $\gamma_k = t$, $\sum_{i=1}^{N-1} i^2 \le \frac{N^2(N-1)}{3}$, so 
\begin{align*}
\frac{1}{\gamma_{N-1}} \sum_{i=1}^{N-1} \tau_i \gamma_i \le \frac{\Omega_X}{(N-1)^{3/2} N \chi_x} \sum_{i=1}^{N-1} i^2 \le \frac{\Omega_X N}{3  \chi_x \sqrt{N-1}} \\
\frac{1}{\gamma_{N-1}} \sum_{i=1}^{N-1} \sigma_i \gamma_i \le \frac{\Omega_Y}{(N-1)^{3/2} N \chi_y} \sum_{i=1}^{N-1} i^2 \le \frac{\Omega_Y N}{3  \chi_y \sqrt{N-1}}.
\end{align*}
The above implies 
\begin{align*}
\mathcal{C}_0(N-1) &\le \frac{2}{N}[ \frac{2 (2PL_f \Omega_X + Q\|K\|_2 \Omega_Y (N-1) +  \chi_x N \sqrt{N-1}}{\Omega_X (N-1)} \Omega_X^2 \\
&\quad + \frac{2 ( \|K\|_2 \Omega_X (N-1) +  \chi_y N \sqrt{N-1}}{\Omega_Y (N-1)}\Omega_Y^2 \\
&\quad + \frac{(2-r) \Omega_X N \chi_x^2}{(1-r) 6  \chi_x \sqrt{N-1}} + \frac{(2-s) \Omega_Y N \chi_y^2}{(1-s) 6  \chi_y \sqrt{N-1}} \\
& \le  \frac{8 P L_f \Omega_X^2}{N(N-1)} + \frac{4 \|K\|_2 \Omega_X \Omega_Y (Q+1)}{N} + \frac{4 \chi_x \Omega_X + 4 \chi_y \Omega_Y}{\sqrt{N-1}} \\
&\quad + \frac{(2-r) \Omega_X \chi_x}{3 (1-r)  \sqrt{N-1}} + \frac{(2-s) \Omega_Y \chi_y}{3(1-s) \sqrt{N-1}}
\end{align*}
and 
\begin{align*}
\mathcal{C}_1(N-1) & \le  \frac{2}{N(N-1)} \left( \sqrt{2} \chi_x \Omega_X + \chi_y \Omega_Y \right) \sqrt{\frac{2(N-1)N^2}{3}} \\ 
&\quad + \frac{1}{N} \frac{(2-r) \Omega_X N \chi_x}{(1-r) 3 \sqrt{N-1}} + \frac{1}{N} \frac{(2-s) \Omega_Y \chi_y}{(1-s) 3   \sqrt{N-1}} \\
&= \left( \frac{4}{\sqrt{3}} + \frac{2-r}{3(1-r)}\right) \frac{\Omega_X \chi_x}{\sqrt{N-1}} + \left(\frac{2\sqrt{2}}{\sqrt{3}} + \frac{2-s}{3 (1-s)}\right) \frac{\Omega_Y \chi_y}{\sqrt{N-1}}.
\end{align*}
\end{proof}

\fi
%%%%%%%%%%%%%%%%% BDD Stochastic 

\begin{lemma}\label{lem:stocunbdd}
For a saddle point  $\hat{z} = (\hat{x}, \hat{y})$ of \eqref{eqn:saddlepoint}, and the parameters $\rho_k$, $\theta_k$, $\tau_k$, and $\sigma_k$ satisfy \eqref{eqn:rhocondgen}, \eqref{eqn:thetacondunbdd},  and \eqref{eqn:stocparam}, then 
\begin{align}
& \quad (1-q)\|\hat{x} - \tilde{x}^{k+1}\|^2 + \|\hat{x} - \tilde{x}^{k+1}_v\|^2 + \frac{\tau_k(1/2-r)}{\sigma_k} \|\hat{y} - \tilde{y}^{k+1}\|^2+ \frac{\tau_k}{\sigma_k} \|\hat{y} - \tilde{y}^{k+1}_v \|^2 \nonumber \\ 
&\le  2 \|\hat{x} - \tilde{x}^1\|^2 + \frac{2 \tau_k}{\sigma_k} \|\hat{y} - \tilde{y}^1\|^2 + \frac{2 \tau_k}{\sigma_k} U_k, \label{eqn:diffbdd2}
\end{align}
where $(\tilde{x}^{k+1}_v, \tilde{y}^{k+1}_v)$ is defined in \eqref{eqn:vdef}, and $U_k$ is defined by \eqref{eqn:udef}.
% \begin{align}\label{eqn:udef}
% U_k := \frac{1}{2}\sum_{i=1}^k\left[\frac{(2-s) \tau_i \gamma_i}{1-s} \|\Delta_x^i\|^2 + \frac{(2-t) \sigma_i \gamma_i}{1-t} \|\Delta_y^i\|^2 + \gamma_i \langle \Delta^i, z^i_v-z^i \rangle \right].
% \end{align}

Furthermore, 
\begin{align}\label{eqn:lem7b}
\tilde{\mathcal{G}}(z^{k+1}, v^{k+1}) \le \frac{\rho_k}{ \tau_k} \|x^{k+1} - \tilde{x}^1\|^2 +\frac{\rho_k}{ \sigma_k} \|y^{k+1}-\tilde{y}^1\|^2 + \frac{\rho_k}{ \gamma_k} U_k := \delta_{k+1}, 
\end{align}
for $k \ge 1$, where 
\begin{align*}
v_{k+1} &= \rho_k ( \frac{1}{\tau_k}(2 \tilde{x}^1 - \tilde{x}^{k+1} - \tilde{x}^{k+1}_v) - B^T (\tilde{y}^{k+1}-\tilde{y}^k), \\
&\quad \frac{1}{\sigma_k} (2\tilde{y}^1 - \tilde{y}^{k+1}-\tilde{y}^{k+1}_v) + A(\tilde{x}^{k+1} - \tilde{x}^k)+  \tau_k (K+A)(K+B)^T (\tilde{y}^{k+1}-\tilde{y}^k))
\end{align*}
\end{lemma}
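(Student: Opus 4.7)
The plan is to follow the same strategy used for Lemma \ref{lem:unbddgen}, but carrying the additional stochastic error term $U_k$ and the auxiliary sequence $\{\tilde{z}^i_v\}$ introduced by Lemma \ref{lem:delta}. The extra difficulty compared to the deterministic proof is that Lemma \ref{lem:stocQbound} bounds $\rho_k^{-1}\gamma_k\mathcal{G}(z^{k+1},z)$ in terms of \emph{two} terms of the form $\mathcal{D}_k$, one evaluated at the actual iterates and one at the auxiliary iterates $\tilde{z}^{[k]}_v$, which both must be telescoped simultaneously.

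For inequality \eqref{eqn:diffbdd2}, I would start from \eqref{eqn:stocQbound} and use \eqref{eqn:lambdabdd} to replace $\sum_i\Lambda_i(z)$ by $\mathcal{D}_k(z,\tilde{z}^{[k]}_v)+U_k$. Under \eqref{eqn:thetacondunbdd} the ratios $\gamma_k/\tau_k$ and $\gamma_k/\sigma_k$ are constant in $k$, so each $\mathcal{D}_k$ term telescopes to
\[
\mathcal{D}_k(z,\tilde{z}^{[k]})=\frac{1}{2\tau_k}\bigl(\|x-\tilde{x}^1\|^2-\|x-\tilde{x}^{k+1}\|^2\bigr)+\frac{1}{2\sigma_k}\bigl(\|y-\tilde{y}^1\|^2-\|y-\tilde{y}^{k+1}\|^2\bigr),
\]
and similarly for $\mathcal{D}_k(z,\tilde{z}^{[k]}_v)$. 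Next I would dispose of the three cross-inner-products $\langle\tilde{x}^{k+1}-x,B^T(\cdot)\rangle$, $\langle A(\cdot),\tilde{y}^{k+1}-y\rangle$, and $\tau_k\langle(K+B)^T(\cdot),(K+A)^T(\tilde{y}^{k+1}-y)\rangle$ by the same Young-type bounds as in \eqref{eqn:unbddinner}, with the constants $q,r$ chosen to match \eqref{eqn:stocparam}; the coefficients in front of $\|\tilde{x}^{k+1}-\tilde{x}^k\|^2$ and $\|\tilde{y}^{k+1}-\tilde{y}^k\|^2$ become nonnegative exactly under \eqref{eqn:stocparam} (with $0<r<1/2$), so those squared-difference terms drop. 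What remains is an inequality of the form
\[
\rho_k^{-1}\gamma_k\mathcal{G}(z^{k+1},z)\le\tfrac{1}{2\tau_k}\bigl(\|x-\tilde{x}^1\|^2+\|x-\tilde{x}^1_v\|^2\bigr)+\tfrac{1}{2\sigma_k}(\cdots)-\tfrac{1-q}{2\tau_k}\|x-\tilde{x}^{k+1}\|^2-\tfrac{1}{2\tau_k}\|x-\tilde{x}^{k+1}_v\|^2-\cdots+U_k.
\]
Setting $z=\hat{z}$ and using $\mathcal{G}(z^{k+1},\hat{z})\ge 0$, together with $\tilde{x}^1_v=\tilde{x}^1$ and $\tilde{y}^1_v=\tilde{y}^1$, yields \eqref{eqn:diffbdd2} after multiplying through by $2\tau_k/\gamma_k$.

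For \eqref{eqn:lem7b}, I would take the same intermediate inequality above and replace the differences $\|x-\tilde{x}^1\|^2-\|x-\tilde{x}^{k+1}\|^2$ and $\|x-\tilde{x}^1_v\|^2-\|x-\tilde{x}^{k+1}_v\|^2$ using the identity \eqref{eqn:normid2} in two parallel ways:
\begin{align*}
\|x-\tilde{x}^1\|^2-\|x-\tilde{x}^{k+1}\|^2 &=2\langle\tilde{x}^{k+1}-\tilde{x}^1,x-x^{k+1}\rangle+\|x^{k+1}-\tilde{x}^1\|^2-\|x^{k+1}-\tilde{x}^{k+1}\|^2,\\
\|x-\tilde{x}^1_v\|^2-\|x-\tilde{x}^{k+1}_v\|^2 &=2\langle\tilde{x}^{k+1}_v-\tilde{x}^1,x-x^{k+1}\rangle+\|x^{k+1}-\tilde{x}^1\|^2-\|x^{k+1}-\tilde{x}^{k+1}_v\|^2,
\end{align*}
and analogously in $y$. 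Collecting all the inner products against $(x-x^{k+1},y-y^{k+1})$ and identifying them with $\langle v^{k+1},z^{k+1}-z\rangle$ yields exactly the $v^{k+1}$ specified in the statement (whose $x$-component contains $2\tilde{x}^1-\tilde{x}^{k+1}-\tilde{x}^{k+1}_v$, and analogously in $y$, plus the same $A,B,(K+A)(K+B)$ correction terms as in \eqref{eqn:vdefgen}). The remaining non-negative quadratic terms $\|x^{k+1}-\tilde{x}^{k+1}\|^2$, $\|x^{k+1}-\tilde{x}^{k+1}_v\|^2$, and their $y$-analogues are discarded by the same Young bounds used in \eqref{eqn:unbddinner} (precisely the role of the margin between $s$ and $q$, and between $t$ and $r$, in \eqref{eqn:stocparam}), leaving
\[
\rho_k^{-1}\gamma_k\mathcal{G}(z^{k+1},z)-\gamma_k\langle v^{k+1}/\rho_k,z^{k+1}-z\rangle\le\tfrac{\gamma_k}{\tau_k}\|x^{k+1}-\tilde{x}^1\|^2+\tfrac{\gamma_k}{\sigma_k}\|y^{k+1}-\tilde{y}^1\|^2+U_k,
\]
and taking $\sup_{z\in Z}$ on the left gives \eqref{eqn:lem7b}.

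The main obstacle will be the careful bookkeeping to show that every squared-difference term $\|\tilde{x}^{i+1}-\tilde{x}^i\|^2$, $\|\tilde{y}^{i+1}-\tilde{y}^i\|^2$, $\|x^{k+1}-\tilde{x}^{k+1}\|^2$, and $\|x^{k+1}-\tilde{x}^{k+1}_v\|^2$ produced along the way has a nonnegative coefficient under \eqref{eqn:stocparam}, so that they may be discarded; in particular, verifying that $s-q$ and $t-r$ (the stochastic analogues of $1-q$ and $1-r$) are exactly the right margins to absorb both the Young bounds from the inner products \emph{and} the $(1-s),(1-t)$ factors extracted from $\Lambda_i$ inside $U_k$. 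Once this accounting is done, the two claims of the lemma follow mechanically from the telescoping identity and Lemma \ref{lem:delta}.
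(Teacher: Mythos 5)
Your proposal is correct and follows essentially the same route as the paper's own proof: both start from Lemma \ref{lem:stocQbound}, absorb $\sum_{i}\Lambda_i(z)$ into $\mathcal{D}_k(z,\tilde{z}^{[k]}_v)+U_k$ via \eqref{eqn:lambdabdd}, telescope both $\mathcal{D}_k$ terms under \eqref{eqn:thetacondunbdd}, apply the Young bounds \eqref{eqn:unbddinner} and set $z=\hat{z}$ with $\mathcal{G}(z^{k+1},\hat{z})\ge 0$ to get \eqref{eqn:diffbdd2}, and then keep the inner products explicit, apply \eqref{eqn:normid2} in the doubled (actual and auxiliary) form, and follow the steps of Lemma \ref{lem:unbddgen} to identify $v^{k+1}$ and obtain \eqref{eqn:lem7b}. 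The coefficient bookkeeping you flag as the main obstacle is precisely the role \eqref{eqn:stocparam} plays in the paper, so no gap remains.
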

\begin{proof}
% First, note that 
% \begin{align*}
% \Lambda_i(z) &= - \frac{(1-s) \gamma_i}{2 \tau_i} \|\tilde{x}^{i+1}-z^i\|^2 - \frac{(1-t) \gamma_i}{2 \sigma_i} \|\tilde{y}^{i+1}-y^i\|^2 + \gamma_i \langle \Delta^i, z-z^{i+1} \rangle \\
% &= - \frac{(1-s) \gamma_i}{2 \tau_i} \|\tilde{x}^{i+1}-z^i\|^2 - \frac{(1-t) \gamma_i}{2 \sigma_i} \|\tilde{y}^{i+1}-y^i\|^2 + \gamma_i \langle \Delta^i, z^i - z^{i+1} \rangle + \gamma_i \langle \Delta^i, z-z^i \rangle \\
% & \le  \frac{\tau_i \gamma_i}{2(1-s)} \|\Delta_x^i\|^2 + \frac{\sigma_i \gamma_i}{2(1-t)} \|\Delta_y^i\|^2 + \gamma_i \langle \Delta^i, z-z^i \rangle,
% \end{align*}
% where the last line is due to Young's inequality. By this result and Lemma \ref{lem:delta}, we have
% \begin{align}\label{eqn:lambdabdd}
% \begin{split}
% \sum_{i=1}^k \Lambda_i(z) & \le  \left[\frac{\tau_i \gamma_i}{2(1-s)} \|\Delta_x^i\|^2 + \frac{\sigma_i \gamma_i}{2(1-q)}\|\Delta_y^i\|^2 + \gamma_i \langle \Delta^i, z^i_v - z^i \rangle + \gamma_i \langle -\Delta^i, z^i_v-z \rangle \right]  \\
% & \le  \mathcal{D}_k(z, \tilde{z}^{[k]}_v) + \frac{1}{2}\sum_{i=1}^k\left[\frac{(2-s) \tau_i \gamma_i}{1-s} \|\Delta_x^i\|^2 + \frac{(2-t) \sigma_i \gamma_i}{1-t} \|\Delta_y^i\|^2 + \gamma_i \langle \Delta^i, z^i_v-z^i \rangle \right].
% \end{split}
% \end{align}
By applying the bound \eqref{eqn:unbddinner} and \eqref{eqn:lambdabdd} to \eqref{eqn:stocQbound}, we obtain:
\begin{align*}
\rho_k^{-1} \gamma_k \mathcal{G}(z^{k+1}, z) & \le \bar{\mathcal{D}}_k(z, \tilde{z}^{[k]}) + \frac{q \gamma_k}{2\tau_k}\|x-\tilde{x}^{k+1}\|^2 + \frac{(r+1/2) \gamma_k}{2 \sigma_k}\|y - \tilde{y}^{k+1}\|^2 + \bar{\mathcal{D}}_k(z, \tilde{z}^{[k]}_v) + U_k,
\end{align*}
where 
\begin{align*}
\bar{\mathcal{D}}_k(z, \tilde{z}^{[k]}) & = \frac{\gamma_k}{2 \tau_k}(\|x-\tilde{x}_1\|^2 - \|x - \tilde{x}_{k+1}\|^2 + \frac{\gamma_k}{2 \sigma_k}(\|y - \tilde{y}_1\|^2 - \|y - \tilde{y}_{k+1}\|^2). 
\end{align*}
Letting $z= \hat{z}$ and using $\mathcal{G}(z^{k+1}, \hat{z}) \ge 0$ leads to \eqref{eqn:diffbdd2}.
If we only use \eqref{eqn:lambdabdd} on \eqref{eqn:stocQbound}, we get:
\begin{align*}
\rho_k^{-1} \gamma_k \mathcal{G}(z^{k+1}, z) & \le \bar{\mathcal{D}}_k(z, \tilde{z}^{[k]}) - \gamma_k \langle \tilde{x}^{k+1} - x, B^T (\tilde{y}^{k+1}-\tilde{y}^k) \rangle \\
& + \gamma_k \langle A(\tilde{x}^{k+1}-\tilde{x}^k), \tilde{y}^{k+1}-y \rangle \\
& + \tau_k \gamma_k \langle (K+B)^T (\tilde{y}^{k+1}-\tilde{y}^k), (K+A)^T (\tilde{y}^{k+1}-y) \rangle + \bar{\mathcal{D}}_k(z, \tilde{z}_v^{[k]}) + U_k.
\end{align*}
Applying \eqref{eqn:normid2} and following the steps of Lemma \ref{lem:unbddgen} results in \eqref{eqn:lem7b}. 
\end{proof}

\begin{proof}[Proof of Theorem \ref{thm:stocunbdd}]
Note that \eqref{eqn:Ubdd} holds by Lemma \ref{lem:deltabound}.
% \begin{align}\label{eqn:Ubdd}
% \expect[U_k] \le \frac{1}{2} \sum_{i=1}^k \left[\frac{(1-r)\tau_i \gamma_i (\chi_x^2 + \chi_B^2)}{1-r} + \frac{(2-q)\sigma_i \gamma_i (\chi_y^2+\chi_A^2 + \tau_k^2 \|K+A\|_2^2(\chi_x^2+ \chi_B^2))}{1-q}\right].\quad \quad
% \end{align}
By the definition of $S$ in \eqref{eqn:Sdef} and \eqref{eqn:Ubdd}, we have % Ubdd to be introduced
\[
\expect [U_k] \le \frac{\gamma_k}{2 \tau_k} S^2. 
\]
By the above, \eqref{eqn:diffbdd2}, and \eqref{eqn:Rdefgen}, we have 
\[
\expect [\|\hat{x} - \tilde{x}^{k+1}\|^2] \le \frac{2R^2 + S^2}{1-q} \text{ and } \expect[\|\hat{y} - \tilde{y}^{k+1}\|^2] \le \frac{(2R^2+S^2) \sigma_1}{\tau_1 (1/2-r)}.
\]
By Jensen's inequality, this leads to 
\[
\expect [\|\hat{x} - \tilde{x}^{k+1}\|] \le \sqrt{\frac{2R^2 + S^2}{1-q}} \text{ and } \expect[\|\hat{y} - \tilde{y}^{k+1}\|] \le \sqrt{\frac{(2R^2+S^2) \sigma_1}{\tau_1 (1/2-r)}}.
\]
Similarly, we have 
\[
\expect [\|\hat{x} - \tilde{x}^{k+1}_v\|] \le \sqrt{2R^2 + S^2} \text{ and } \expect[\|\hat{y} - \tilde{y}^{k+1}_v\|] \le \sqrt{\frac{(2R^2+S^2) \sigma_1}{\tau_1}}.
\]
Thus 
\begin{align*}
\expect[\|v^{k+1}\|] & \le  \rho_k \expect[\frac{1}{\tau_k} (2 \|\hat{x}-\tilde{x}^1\| + \|\hat{x}-\tilde{x}^{k+1}\| + \|\hat{x} - \tilde{x}^{k+1}_v \|)\\
&\quad + \frac{1}{\sigma_k} (2 \|\hat{y} - \tilde{y}^1\| + \|\hat{y} - \tilde{y}^{k+1}\| + \|\hat{y} - \tilde{y}^{k+1}_v\|) \\
&\quad + \|A\|_2 (\|\hat{x} - \tilde{x}^{k+1}\| + \|\hat{x} - \tilde{x}^k\|) \\
&\quad + (\|B\|_2+ \|K+A\|_2\|K+B\|_2 \tau_k) (\|\hat{y} - \tilde{y}^{k+1}\| + \|\hat{y} - \tilde{y}^k\|)] \\
&\le  \frac{2 \rho_k \|\hat{x} - \tilde{x}^1\|}{ \tau_k} + \frac{2 \rho_k \|\hat{y} - \tilde{y}^1\|}{ \sigma_k}\\ 
&\quad+ \sqrt{2R^2 + S^2} [\frac{\rho_k}{ \tau_k} (1 + \mu') + \frac{\rho_k}{ \sigma_k}\sqrt{\frac{\sigma_1}{\tau_1}} (1 + \nu') \\
&\quad + \rho_k (2 \|A\|_2 \mu'+ 2\|B\|_2\nu' \sqrt{\frac{\sigma_1}{\tau_1}}) \\
&\quad + 2 \rho_k \tau_k \|K+A\|_2\|K+B\|_2 \nu' \sqrt{\frac{\sigma_1}{\tau_1}}]
\end{align*}

where $\mu'^{-1} = \sqrt{1-q}$ and $\nu'^{-1} = \sqrt{1/2-r}$.  Now we find an upper bound of $\expect [\delta_{k+1}]$. 
\begin{align*}
\expect[\delta_{k+1}] & \le  \expect[ \frac{2 \rho_k}{ \tau_k} (\|\hat{x} - x^{k+1}\|^2 + \|\hat{x} - \tilde{x}^1\|^2) + \frac{2 \rho_k}{ \sigma_k} (\|\hat{y} - y^{k+1}\|^2 + \|\hat{y} - \tilde{y}^1\|^2) + \frac{\rho_k}{2 \tau_k}S^2] \\
&= \expect [ \frac{\rho_k}{ \tau_k}(2R^2 + 2(1-q) \|\hat{x} - x^{k+1}\|^2 + \frac{2 \tau_k (r-1/2)}{\sigma_k}\|\hat{y} - y^{k+1}\|^2) \\
&\quad+ 2 q \|\hat{x} - x^{k+1}\|^2 + \frac{2 \tau_k (r+1/2)}{\sigma_k} \|\hat{y} - y^{k+1}\|^2 + \frac{\rho_k}{2  \tau_k}S^2 \\
& \le \frac{\rho_k}{ \tau_k} 2 R^2 +\frac{2 \rho_k}{ \tau_k}\sum_{i=1}^k \gamma_i[(2R^2 + S^2) + q \mu'^2 (2R^2+S^2) + (r+1/2)\nu'^2 (2R^2 + S^2)]+ \frac{S^2}{2}] \\
&= \frac{\rho_k}{ \tau_k} [6R^2 + \frac{5}{2}S^2 +\frac{2q}{1-q} (2R^2 + S^2) +\frac{2(r+1/2)}{1/2-r} (2R^2 + S^2)] \\
&= \frac{\rho_k}{ \tau_k} \left[\left(6 + \frac{4q}{1-q}+\frac{4(r+1/2)}{1/2-r}\right) R^2 + \left(\frac{5}{2} + \frac{2q}{1-q}+\frac{2(r+1/2)}{1/2-r}\right) S^2\right]
\end{align*}
\end{proof}

\begin{proof}[Proof of Corollary \ref{cor:stocunbdd}]
First we check \eqref{eqn:stoccond1} and \eqref{eqn:stoccond2}.
\begin{align*}
\frac{s-q}{\tau_k} - \rho_k L_f - \frac{\|A\|_2^2 \sigma_k}{r} 
& \ge  \|K\|_2 \left( (s-q) Q - \frac{1}{r} \right) \ge 0, 
\end{align*}
\begin{align*}
\frac{t-r}{\sigma_k} - \tau_k\frac{b^2\|K\|_2^2}{q} 
& \ge  \|K\|_2 \left( Q(t-r) - \frac{b^2}{q} \right) \ge 0,
\end{align*}
by \eqref{eqn:stocunbddparamcond}. 

Now note that 
\begin{align}\label{eqn:cbound}
\begin{split}
S &= \sqrt{\sum_{i=1}^{N-1} \frac{(2-s)\chi_x^2 i^2}{(1-s) \tau^2} + \sum_{i=1}^{N-1} \frac{(2-t) \chi_y^2 i^2}{(1-t) \tau^2}} \\
& \le  \sqrt{\frac{N^2(N-1)}{3 \tau^2} \left(\frac{(2-s) \chi_x^2}{1-s} + \frac{(2-t)\chi_y^2}{1-t}\right)} = \frac{\chi N \sqrt{N-1}}{\sqrt{3}\tau} \\
& \le  \frac{\chi N \sqrt{N-1}}{\sqrt{3}  N \sqrt{N-1} \chi/\tilde{R}} = \frac{\tilde{R}}{\sqrt{3}}.
\end{split}
\end{align}

Thus $\epsilon_N$ is bounded above by 
\begin{align*}
\epsilon_N &\le \frac{\rho_{N-1}}{ \tau_{N-1}} (\zeta R^2 + \xi S^2)  \le  \frac{\rho_{N-1}}{\tau_{N-1}} (\zeta R^2 + \xi \frac{\tilde{R}^2}{3} ),
\end{align*}
where $\zeta = 6 + \frac{4q}{1-q} + \frac{4(r+1/2)}{1/2-r}$ and $\xi = \frac{5}{2} + \frac{2q}{1-q} + \frac{2(r+1/2)}{1/2-r}$. 

Note that 
\begin{align*}
\frac{\rho_{N-1}}{ \tau_{N-1}} \|\hat{x} - \tilde{x}^1\| & \le \frac{\rho_{N-1}}{\tau_{N-1}}R, \;\; \frac{\rho_{N-1}}{ \sigma_{N-1}} \|\hat{y} - \tilde{y}^1\| \le \frac{\rho_{N-1}}{\tau_{N-1}}R
\end{align*}
and that 
\begin{align}\label{eqn:stocparambound}
\begin{split}
\rho_{N-1} \|K\|_2 &\le \frac{2 \|K\|_2}{N}, \\
\frac{\rho_{N-1}}{\tau_{N-1}} & \le \frac{2 \tau}{N(N-1)} = \frac{4P L_f + 2 Q \|K\|_2 (N-1) + 2 N \sqrt{N-1} \chi/\tilde{R}}{N(N-1)} \\
& = \frac{4PL_f}{N(N-1)} + \frac{2Q\|K\|_2}{N} + \frac{2 \chi/\tilde{R}}{\sqrt{N-1}}
\end{split}
\end{align}
Thus 
\begin{align*}
\epsilon_N & \le \frac{\rho_{N-1}}{\tau_{N-1}} (\zeta R^2 + \xi S^2) \\
& \le \left(\frac{4PL_f}{N(N-1)} + \frac{2 Q \|K\|_2}{N}+ \frac{2 \chi /\tilde{R}}{\sqrt{N-1}} \right) \left(\zeta R^2 + \frac{\xi \tilde{R}^2}{3}\right). 
\end{align*}
Now note that $\sqrt{2R^2 + S^2} \le \sqrt{2}R + S$. 
\begin{align*}
\expect[\|v^N\|]& \le \frac{2 \rho_{N-1}}{\tau_{N-1}} 2R + (\sqrt{2}R + S) \left[\frac{\rho_{N-1}}{\tau_{N-1}} (2+\mu' +\nu') + \rho_{N-1} \|K\|_2 (2 \mu' + 2 b \nu')\right] \\
&= \frac{\rho_{N-1}}{\tau_{N-1}} \left( 4R + (\sqrt{2}R + S) (2 + \mu' + \nu')\right) \\
& \quad + \rho_{N-1} \|K\|_2 (\sqrt{2}R +S) (2 \mu' + 2 b \nu') \\
& \le \left(\frac{4PL_f}{N(N-1)} + \frac{2Q\|K\|_2}{N} + \frac{2 \chi/\tilde{R}}{\sqrt{N-1}} \right) \left(4R + \left(\sqrt{2}R + \frac{\tilde{R}}{\sqrt{3}}\right) (2 + \mu' + \nu')\right)\\ 
& \quad + \frac{2\|K\|_2}{N} (\sqrt{2}R + \tilde{R}/\sqrt{3}) (2 \mu' + 2b \nu')
\end{align*}
we obtain the desired order for both $\epsilon_N$ and $\expect[\|v_N\|]$. 
\end{proof}

\AtNextBibliography{\fontsize{9}{12}\selectfont}
\printbibliography[heading=subbibliography]
\end{refsection}

%\listofchanges
% \bibliographystyle{Chicago}
% 
% \bibliography{Bibliography-MM-MC}
\end{document}